\documentclass[10pt]{article}

% if you need to pass options to natbib, use, e.g.:
%     \PassOptionsToPackage{numbers, compress}{natbib}
% before loading neurips_2023

% ready for submission
%\usepackage{neurips_2023}

% to compile a preprint version, e.g., for submission to arXiv, add add the
% [preprint] option:
%     \usepackage[preprint]{neurips_2023}

% to compile a camera-ready version, add the [final] option, e.g.:
%     \usepackage[final]{neurips_2023}

% to avoid loading the natbib package, add option nonatbib:
%    \usepackage[nonatbib]{neurips_2023}

\usepackage[utf8]{inputenc} % allow utf-8 input
\usepackage[T1]{fontenc}    % use 8-bit T1 fonts
\usepackage{hyperref}       % hyperlinks
\usepackage{url}            % simple URL typesetting
\usepackage{booktabs}       % professional-quality tables
\usepackage{amsfonts}       % blackboard math symbols
\usepackage{nicefrac}       % compact symbols for 1/2, etc.
\usepackage{microtype}      % microtypography
\usepackage{xcolor}         % colors

\usepackage{natbib}

\usepackage{geometry}

\title{Provably Efficient UCB-type Algorithms For Learning Predictive State Representations}

\author{
  Ruiquan Huang\thanks{
  The Pennsylvania State University,
  State College, PA 16801},\quad 
   Yingbin Liang\thanks{The Ohio State University, Columbus, OH 43210},\quad 
   Jing Yang$^*$
}
\date{}

% The \author macro works with any number of authors. There are two commands
% used to separate the names and addresses of multiple authors: \And and \AND.
%
% Using \And between authors leaves it to \LaTeX{} to determine where to break
% the lines. Using \AND forces a linebreak at that point. So, if \LaTeX{}
% puts 3 of 4 authors names on the first line, and the last on the second
% line, try using \AND instead of \And before the third author name.

\usepackage{amsmath}
\usepackage{amssymb}
\usepackage{algorithm}
\usepackage{algpseudocode}
\usepackage{fontenc}
\usepackage{graphicx}

\usepackage{cleveref}

\usepackage{mathtools}
\usepackage{bbm}

\newtheorem{lemma}{Lemma}
\newtheorem{proposition}{Proposition}
\newtheorem{theorem}{Theorem}
\newtheorem{corollary}{Corollary}
\newtheorem{definition}{Definition}
\newtheorem{remark}{Remark}
\newtheorem{assumption}{Assumption}

\allowdisplaybreaks

\newenvironment{proof}[1]{\medskip\par\noindent
	{\bf Proof:\,}\,#1}{{\mbox{\,$\blacksquare$}\par}}

\newcommand{\Eb}{\mathbb{E}}
\newcommand{\Pb}{\mathbb{P}}
\newcommand{\Qb}{\mathbb{Q}}
\newcommand{\Dc}{\mathcal{D}}
\newcommand{\Ec}{\mathcal{E}}

\newcommand{\mbf}{\mathbf{m}}
\newcommand{\Mbf}{\mathbf{M}}

\newcommand{\TV}{\mathtt{TV}}

\newcommand{\jing}[1]{{\color{magenta}(JY: #1)}}

\newcommand{\hrq}[1]{{\color{blue}#1}}

%\iclrfinalcopy % Uncomment for camera-ready version, but NOT for submission.
\begin{document}

\maketitle

\begin{abstract}
The general sequential decision-making problem, which includes Markov decision processes (MDPs) and partially observable MDPs (POMDPs) as special cases, aims at maximizing a cumulative reward by making a sequence of decisions based on a history of observations and actions over time. Recent studies have shown that the sequential decision-making problem is statistically learnable if it admits a low-rank structure modeled by predictive state representations (PSRs). Despite these advancements, existing approaches typically involve oracles or steps that are computationally intractable. On the other hand, the upper confidence bound (UCB) based approaches, which have served successfully as computationally efficient methods in bandits and MDPs, have not been investigated for more general PSRs, due to the difficulty of optimistic bonus design in these more challenging settings. This paper proposes the first known UCB-type approach for PSRs, featuring a novel bonus term that upper bounds the total variation distance between the estimated and true models. We further characterize the sample complexity bounds for our designed UCB-type algorithms for both online and offline PSRs. In contrast to existing approaches for PSRs, our UCB-type algorithms enjoy computational tractability, last-iterate guaranteed near-optimal policy, and guaranteed model accuracy.

\end{abstract}

\section{Introduction}\label{sec:intro}

As a general framework of reinforcement learning (RL), the sequential decision-making problem aims at maximizing a cumulative reward by making a sequence of decisions based on a history of observations and actions over time.
This framework is powerful to include and generalize Markov Decision Processes (MDPs) and Partially Observable Markov Decision Processes (POMDPs), and captures a wide range of real-world applications such as recommender systems \citep{li2010contextual,wu2021partially},  business management \citep{de2009inventory}, economic simulation \citep{zheng2020ai}, robotics \citep{akkaya2019solving}, strategic games \citep{brown2018superhuman,vinyals2019grandmaster}, and medical diagnostic systems \citep{hauskrecht2000planning}. 

However, tackling POMDPs alone presents significant challenges, not to mention general sequential decision-making problems. Many hardness results have been developed \citep{mossel2005learning, mundhenk2000complexity, papadimitriou1987complexity,vlassis2012computational,krishnamurthy2016pac}, showing that learning POMDPs is already computationally and statistically intractable in the worst case. The reason is that the non-Markovian property of these problems implies that the sufficient statistics or belief about the current environmental state encompasses all observations and actions from past interactions with the environment. This dramatically increases the computational burden and statistical complexity, since even for a finite observation space and action space, the possibilities of the beliefs of the environment state are exponentially large in terms of the number of observations and actions.
%\jing{decision-making -> decision-making; low rank -> low-rank; ucb type-> ucb-type}

In order to tackle these challenges, recent research has introduced various structural conditions for POMDPs and general sequential decision-making problems, such as reactiveness \citep{jiang2017contextual}, decodability \citep{du2019provably}, revealing conditions \citep{liu2022partially},  hindsight observability \citep{lee2023learning}, and low-rank representations with regularization conditions \citep{zhan2022pac,liu2022optimistic,chen2022partially}.  These conditions have opened up new possibilities for achieving polynomial sample complexities in general sequential decision-making problems. Among them, predictive state representations (PSRs) \citep{littman2001predictive} has been proved to capture and generalize a rich subclass of sequential decision-making problems such as MDPs and observable POMDPs. 
%\yl{have not introduced PSR yet; need to introduce it somewhere because we need to use it below} \hrqc{how about now?} 
Yet, most existing solutions for PSRs involve oracles that might not be computationally tractable. For instance, Optimistic MLE (OMLE) \citep{liu2022optimistic} involves a step that maximizes the optimal value function over a confidence set of models. Typically, such a confidence set does not exhibit advantageous structures, resulting in a potentially combinatorial search within the set. Another popular posterior sampling based approach  \citep{agarwal2022model,zhong2022posterior} requires to maintain a distribution over the entire set of models, which is highly memory inefficient. In addition, most existing results lack a last-iterate guarantee and produce only mixture policies, which often exhibit a very large variance in practical applications. On the other hand, the upper confidence bound (UCB) based approach has been proved to be computationally efficient and provide last-iterate guarantee in many decision-making problems such as bandits \citep{auer2002finite} and MDPs \citep{menard2021fast}. However, due to the non-Markovian property of POMDPs and general sequential decision-making problems, designing an explicit UCB is extremely challenging. To the best of our knowledge, such a design has been seldomly explored in POMDPs and beyond. Thus, we are motivated to address the following important open question:

%\begin{center}
    { \it Q1: Can we design a UCB-type algorithm for learning PSRs that (a) is both computationally tractable and statistically efficient, and (b) enjoys the last-iterate guarantee?}
%\end{center}

Another important research direction in RL is offline learning \citep{lange2012batch}, where the learning agent has access to a pre-collected dataset and aims to design a favorable policy without any interaction with the environment. While offline MDPs have been extensively studied \citep{jin2021pessimism,xiong2022nearly,xie2021bellman}, there exist very limited studies of offline POMDPs from the theoretical perspective \citep{guo2022provably,lu2022pessimism}.  To our best knowledge, offline learning for a more general model of PSRs has never been explored. Thus, we will further address the following research question:

% \begin{center}
    { \it Q2: Can we design a UCB-type algorithm for {\bf \em offline} learning of  PSRs with guaranteed policy performance and sample efficiency?}
%\end{center}

%\yl{The motivation part needs significant expansion: (a) For several highly relevant papers, discuss which step of the algorithm is not computationably efficient; (b) point out another weakness of their algorithms: provides only mixture policy guarantee; (c) point out UCB-type algorithms have not been developed; what is the potential challenge; (d) point out offline PSR have not been studied }

\textbf{Main Contributions:}
We provide affirmative answers to both aforementioned questions by making the following contributions. 
\begin{list}{$\bullet$}{\topsep=0.ex \leftmargin=0.15in \rightmargin=0.1in \itemsep =0.01in}
\item We introduce the first known UCB-type approach to learning PSRs with only a regularization assumption, characterized by a novel bonus term that upper bounds the total variation distance between the estimated and true models. The bonus term is designed based on a new confidence bound induced by a new model estimation guarantee for PSRs, and is computationally {tractable}.%This bonus term is designed by exploiting the physical meaning of the predictive state representations. 
%\yl{mention a little bit details about based on what the bonus is designed}
%\jing{physical meaning is still a bit vague. can we say something like: The bonus term is designed based on a new confidence bound induced by a new model estimation guarantee for PSRs, and is computationally efficient.} \hrqc{this is true}
\item We theoretically characterize the performance of our UCB-type algorithm for learning PSRs online, called PSR-UCB. In contrast to existing approaches, PSR-UCB is computationally tractable with only supervised learning oracles, guarantees a near-optimal policy in the last iteration, and ensures model accuracy. When the rank of the PSR is small, our sample complexity matches the best known upper bound in terms of the rank and the accuracy level. 
%\jing{Can we highlight the order of the bound? }
\item We further extend our UCB-type approach to the offline setting, 
%where a pre-collected dataset using a behavior policy is accessible but no interaction allowed during the learning process. We 
and propose the PSR-LCB algorithm. We then develop an upper bound on the performance difference between the output policy of PSR-LCB and any policy covered by the behavior policy. The performance difference scales in $O(C_{\infty}/\sqrt{K})$, where $C_{\infty}$ is the coverage coefficient and $K$ is the size of the offline dataset. This is the first known sample complexity result on offline PSRs. %, indicating the near-optimality of the output policy if the optimal policy is covered by the behavior policy and $K$ is sufficiently large.  \jing{can we say anything about it is the first result on offline PSRs?} yes

\item  Technically, we develop two key properties for PSRs to establish the sample complexity guarantees: (a) a new estimation guarantee on the distribution of future observations conditioned on empirical samples, enabled by the {\it stable} model estimation step, and (b) a new relationship between the empirical UCB and the ground-truth UCB. We believe these insights advance the current understanding of PSRs, and will benefit future studies on this topic. %UCB-type algorithms with estimated features.  %\jing{say something like those insights will be beneficial for the study of PSRs in general}

\end{list}

\section{Related Work}
%\vspace{-0.05in}

{\bf Learning MDPs and POMDPs.}
The MDP is a basic model in RL that assumes Markovian property in the model dynamics, i.e. the distribution of the future states only depends on the current system state. Researchers show that learning tabular MDPs (with finite state and action spaces) is both computationally and statistically efficient in both online setting \citep{auer2008near,azar2017minimax,dann2017unifying,agrawal2017optimistic,jin2018q,li2021breaking} and offline setting \citep{jin2021pessimism,rashidinejad2021bridging,yin2021near,chang2021mitigating}. Learning MDPs with function approximations is also well-studied by establishing favorable statistical complexity and computation efficiency \citep{jin2020provably,wagenmaker2022instance,zanette2020learning,zhou2021nearly,agarwal2020flambe,uehara2021representation,du2021bilinear,foster2021statistical,jin2021bellman,jiang2017contextual,wang2020reinforcement,jin2021pessimism,xiong2022nearly,xie2021bellman}. Notably, several algorithms designed for learning MDPs with general function approximations can be extended to solve a subclass of POMDPs. In particular, OLIVE \citep{jiang2017contextual} and GOLF \citep{jin2021bellman}, which are originally designed for MDPs with low Bellman rank and low Bellman-Eluder dimension,  respectively, can efficiently learn reactive POMDPs, where the optimal policy only depends on the current observation. Besides,  \citet{du2019provably,efroni2022provable} study decodable RL where the observations determine the underlying states, and \citet{kwon2021rl} investigate latent MDPs where there are multiple MDPs determined by some latent variables.

%To tackle the curse of large state space, many structural conditions and complexity measures have been proposed to enable efficient online learning, such as linearity \citep{jin2020provably,wagenmaker2022instance,zanette2020learning,zhou2021nearly}, low-rank structure \citep{agarwal2020flambe,uehara2021representation}, bilinear structure \citep{du2021bilinear}, decision estimation coefficient \citep{foster2021statistical}, and bellman-eluder dimension \citep{jin2021bellman} which unifies bellman rank \citep{jiang2017contextual} and eluder dimension \citep{russo2013eluder,wang2020reinforcement}. For the offline setting, \citep{jin2021pessimism,xiong2022nearly,xie2021bellman} \jing{tbc?}

To directly address the partial observability in POMDPs, some works assume exploratory data or reachability property and provide polynomial sample complexity for learning these POMDPs \citep{guo2016pac,azizzadenesheli2016reinforcement,xiong2022sublinear}. Others tackle the challenge of exploration and exploitation tradeoff in POMDPs by considering various sub-classes of POMDPs such as low-rank POMDPs \citep{wang2022embed}, observable POMDPs \citep{golowich2022learning,golowich2022planning}, hindsight observability \citep{lee2023learning}, and weakly-revealing POMDPs \citep{liu2022sample,liu2022partially}. Furthermore, 
\citet{liu2022sample,uehara2022computationally} propose computationally efficient algorithms for POMDPs with deterministic latent transitions.
Notably, \cite{golowich2022learning} propose a provably efficient algorithm for learning observable tabular POMDPs without computationally intractable oracles. Finally, \citet{lu2022pessimism} study the offline POMDPs in the presence of confounders, and  \citet{guo2022provably} provide provably efficient algorithm for offline linear POMDPs. After the initial submission of this work, we notice that a UCB-type algorithm has been studied by \citet{guo2023provably} under low-rank $L$-step decodable POMDPs, which is a subclass of the PSRs considered here.

{\bf Learning PSRs and general sequential decision-making problems.}
The PSR is first introduced by \citet{littman2001predictive,singh2012predictive} and considered as a general representation to model dynamic systems. A line of research \citep{boots2011closing,hefny2015supervised,jiang2018completing,zhang2022reinforcement} obtain polynomial sample complexity with observability assumption and spectral techniques. Later, \citet{zhan2022pac} demonstrate that learning regular PSRs is sample efficient and can avoid poly$(|\mathcal{O}|^m)$ in the sample complexity. \citet{uehara2022provably} propose a PO-bilinear class that captures a rich class of tractable RL problems with partial observations, including weakly revealing POMDPs and PSRs and design an actor-critic style algorithm.
For the works most closely related to ours, \citet{liu2022optimistic} propose a universal algorithm known as OMLE, which is capable of learning PSRs and its generalizations under certain conditions; \citet{chen2022partially} enhance the sample complexity upper bounds for three distinct algorithms, including OMLE, the model-based posterior sampling, and the estimation-to-decision type algorithm \citep{foster2021statistical}; \citet{zhong2022posterior} address the general sequential decision-making problem by posterior sampling under a newly proposed low generalized Eluder coefficient. However, as elaborated in \Cref{sec:intro}, those approaches are not efficient in terms of computational complexity or memory.

\section{Preliminaries}\label{sec:pre}
%\vspace{-0.1in}
%\subsection{Problem Setting}
%\vspace{-0.05in}
{\bf Problem Setting.} We consider a finite horizon episodic sequential decision-making problem, defined by a tuple $\mathtt{P} = (\mathcal{O},  \mathcal{A}, H, \Pb, R)$, where $\mathcal{O}$ represents the observation space, $\mathcal{A}$ is a finite action space, $H$ is the number of time steps within an episode, $\Pb = \{\Pb_h\}$ determines the model dynamics, i.e., $\Pb_h(o_h|o_1,\ldots,o_{h-1},a_1,\ldots,a_{h-1})$, where $o_t\in\mathcal{O}$ is the observation at time step $t$, and $a_t\in\mathcal{A}$ is the action taken by the agent at time step $t$ for all $t\in\{1,\ldots,h\}$,  and $R: (\mathcal{O}\times\mathcal{A})^H \rightarrow [0,1]$ is the reward function defined on trajectories of one episode. We denote a historical trajectory at time step $h$ as $\tau_h:=(o_1,a_1,\ldots,o_h,a_h)$, and denote a future trajectory as $\omega_h:=(o_{h+1},a_{h+1},\ldots, o_H,a_H)$. The set of all $\tau_h$ is denoted by $\mathcal{H}_h = (\mathcal{O}\times\mathcal{A})^{h}$ and the set of all future trajectories is denoted by $\Omega_h = (\mathcal{O}\times\mathcal{A})^{H-h}$.
%\yl{finish this definition} 
In addition, let $\omega_h^o = (o_{h+1},\ldots,o_H)$ and $\omega_h^a = (a_{h+1},\ldots,a_H)$ be the observation sequence and the action sequence contained in $\omega_h$, respectively. Similarly, for a history $\tau_h$, we denote $\tau_h^o$ and $\tau_h^a$ as the observation and action sequences in $\tau_h$, respectively. Notably, the general framework of sequential decision-making problem subsume not only fully observable MDPs but  also POMDPs as special cases, because in MDPs, $\Pb_h(o_h|\tau_{h-1}) = \Pb_h(o_h|o_{h-1},a_{h-1})$ and in POMDPs, $\Pb_h(o_h|\tau_{h-1})$ can be factorized as $\Pb_h(o_h|\tau_{h-1}) = \sum_s\Pb_h(o_h|s)\Pb_h(s|\tau_{h-1})$, where $s$ represents unobserved states. %\yl{mention MDP and POMDP are special cases of such a decision-making problem}

The interaction between an agent and $\mathtt{P}$ proceeds as follows. At the beginning of each episode, the environment initializes a fixed observation $o_1$ at time step 1. After observing $o_1$, the agent takes action $a_1$, and the environment transits to $o_2$, which is sampled according to the distribution $\Pb_1(o_2|o_1,a_1)$. Then, at any time step $h\geq 2$, due to the non-Markovian nature of the problem, the agent takes action $a_h$ based on all past information $(\tau_{h-1},o_h)$, and the environment transits to $o_{h+1}$, sampled from $\Pb_h(o_{h+1}|\tau_{h})$. The interaction terminates after time step $H$.

The policy $\pi=\{\pi_h\}$ of the agent is a collection of $H$ distributions where $\pi_h(a_h|\tau_{h-1},o_h)$ is the probability of choosing action $a_h$ at time step $h$ given the history $\tau_{h-1}$ and the current observation $o_h$. For simplicity, we use $\pi(\tau_h) = \pi(a_h|o_h,\tau_{h-1})\cdots\pi(a_1|o_1)$ %\yl{should $\pi(\tau_h)$ be $\pi(\tau_h^a)$?}\hrqc{because the policy also depends on observations} 
to denote the probability of the sequence of actions $\tau_h^a$ given the observations $\tau_h^o$. We denote $\Pb^{\pi}$ as the distribution of trajectories induced by policy $\pi$ under dynamics $\Pb$.  The value  %\jing{is this a function? or simply value?} 
of a policy $\pi$ under $\Pb$ and the reward $R$ is denoted by $V_{\Pb,R}^{\pi} = \Eb_{\tau_H\sim\Pb^{\pi}}[R(\tau_H)]$.

The goal of the agent is to find an $\epsilon$-optimal policy $\hat{\pi}$ that satisfies $\max_{\pi}V_{\Pb,R}^{\pi} - V_{\Pb,R}^{\hat{\pi}} \leq \epsilon$.
Since finding a near-optimal policy for a general decision-making problem incurs exponentially large sample complexity in the worst case, in this paper we follow the line of research in \cite{zhan2022pac,chen2022partially,zhong2022posterior} and focus on the {\it low-rank} class of problems. To define a low-rank problem, we introduce the dynamic matrix $\mathbb{D}_h \in \mathbb{R}^{|\mathcal{H}_h|\times |\Omega_h|}$ for each $h$, where the entry at the $\tau_h$-th row and $\omega_h$-th column of $\mathbb{D}_h$ is $\Pb(\omega_h^o, \tau_h^o | \tau_h^a, \omega_h^a)$.

\begin{definition}[Rank-$r$ sequential decision-making problem]
    A sequential decision-making problem is rank $r$ if for any $h$, the model dynamic matrix $\mathbb{D}_h$ has rank $r$.
\end{definition}

{\bf Predictive State Representation (PSR).} To exploit the low-rank structure, we assume that for each $h$, there exists a set of future trajectories, namely, core tests (known to the agent) $\mathcal{Q}_h = \{\mathbf{q}_{h}^{1},\ldots, \mathbf{q}_h^{d_h}\} \subset \Omega_h$, %\yl{did you define what are the $\mathbf{q}_{h}^{1}, \ldots$?}\hrqc{I think the confusion comes from that it's a different notation for these core tests. rather than using $\omega_h$.} 
such that the submatrix restricted to these tests $\mathbb{D}_h[\mathcal{Q}_h]$ has rank $r$, where $d_h\geq r$ is a positive integer. This special set $\mathcal{Q}_h$ allows the system dynamics to be factorized as $\Pb(\omega_h^o,\tau_h^o | \tau_h^a,\omega_h^a) =  \mbf(\omega_h)^{\top}\psi(\tau_h)$, where $\mathbf{m}(\omega_h), \psi(\tau_h)\in\mathbb{R}^{d_h}$ and the $\ell$-th coordinate of $\psi(\tau_h)$ is the joint probability of $\tau_h$ and the $\ell$-th core test $\mathbf{q}_h^{\ell}$. Mathematically, if we use $\mathbf{o}_h^{\ell}$ and $\mathbf{a}_h^{\ell}$ to denote the observation sequence and the action sequence of $\mathbf{q}_h^{\ell}$, respectively, then $ \Pb(\mathbf{o}_h^{\ell}, \tau_h^o | \tau_h^a, \mathbf{a}_h^{\ell}) =  [\psi(\tau_h)]_{\ell}$. By Theorem C.1 in \cite{liu2022optimistic}, any low-rank decision-making problem admits a (self-consistent) predictive state representation $\theta = \{\phi_h,\Mbf_h\}_{h=1}^H$ given core tests $\{\mathcal{Q}_h\}_{h=0}^{H-1}$, such that for any $\tau_h\in\mathcal{H}_h, \omega_h\in\Omega_h$,
\begin{align*}
    &\psi(\tau_h) = \Mbf_h(o_h,a_h)\cdots\Mbf_1(o_1,a_1)\psi_0,\quad \mbf(\omega_h)^{\top} = \phi_H^{\top}\Mbf_H(o_H,a_H)\cdots\Mbf_{h+1}(o_{h+1},a_{h+1})\\
    & \sum_{o_{h+1}}\phi_{h+1}^{\top}\Mbf_{h+1}(o_{h+1},a_{h+1}) = \phi_h^{\top},\quad \Pb(o_h,\ldots,o_1 | a_1,\dots, a_h) = \phi_h^{\top} \psi(\tau_h),
\end{align*}
where $\Mbf_h: \mathcal{O}\times\mathcal{A}\rightarrow\mathbb{R}^{d_h\times d_{h-1}}$, $\phi_h\in\mathbb{R}^{d_h}$, and $\psi_0\in\mathbb{R}^{d_0}$. For ease of presentation, we assume $\psi_0$ is known to the agent\footnote{The sample complexity of learning $\psi_0$ if it is unknown is relatively small compared with learning other parameters.}. Notably, the normalized version of $\psi(\tau_h)$ with respect to $\phi_h^{\top}\psi(\tau_h)$, denoted as $\bar{\psi}(\tau_h) = \psi(\tau_h)/\phi_h^{\top}\psi(\tau_h)$, is known as the prediction vector \citep{littman2001predictive} or prediction feature of $\tau_h$, since $[\bar{\psi}(\tau_h)]_{\ell} = \Pb(\mathbf{o}_h^{\ell} | \tau_h, \mathbf{a}_h^{\ell})$. As illustrated in \Cref{sec:online PSR,sec:offline PSR}, the prediction feature plays an important role in our algorithm design.

In the following context, we use $\Pb_{\theta}$ to indicate the model determined by the representation $\theta = \{\phi_h,\Mbf_h\}_{h=1}^H$. For simplicity, we denote $V_{\Pb_{\theta}, R}^{\pi} = V_{\theta,R}^{\pi}$. Moreover, let $\mathcal{Q}_h^A = \{\mathbf{a}_h^{\ell}\}_{\ell=1}^{d_h}$ be the set of action sequences that are part of core tests, constructed by eliminating any repeated action sequence. $\mathcal{Q}_h^A$, known as the set of core action sequences, plays a crucial role during the exploration process. Selecting from these sequences is sufficient to sample core tests, leading to accurate estimate of $\theta$.

We further assume that the PSRs studied in this paper are well-conditioned, as specified in \Cref{assmp:well-condition}. Such an assumption and its variants are commonly adopted in the study of PSRs \citep{liu2022optimistic,chen2022partially,zhong2022posterior}.

\begin{assumption}[$\gamma$-well-conditioned PSR]\label{assmp:well-condition}
%\yl{you need a sentence here}
A PSR $\theta$ is said to be $\gamma$-well-conditioned if
\begin{align}
    \forall h,~~ \max_{x\in\mathbb{R}^{d_h}:\|x\|_1\leq 1} \max_{\pi} {\max_{\tau_h}}\sum_{\omega_h}\pi(\omega_h|{\tau_h})|\mbf(\omega_h)^{\top}x|\leq \frac{1}{\gamma}.
\end{align}
\end{assumption}
\Cref{assmp:well-condition} requires that the error of estimating $\theta$ does not significantly blow up when the estimation error $x$ of estimating the probability of core tests is small.

%\subsection{Notations}
{\bf Notations.} We denote the complete set of model parameters as $\Theta$ and the true model parameter as $\theta^*$. For a vector $x$, $\|x\|_A$ stands for $\sqrt{x^{\top}Ax}$, and the $i$-th coordinate of $x$ is represented as $[x]_i$. For functions $\mathbb{P}$ and $\mathbb{Q}$ (not necessarily probability measures) over a set $\mathcal{X}$, the total variation distance between them is $\mathtt{D}_{\TV}\left(\Pb(x),\Qb(x)\right) = \sum_x|\Pb(x)-\Qb(x)|$, while the hellinger-squared distance is defined as $\mathtt{D}_{\mathtt{H}}^2 (\Pb(x),\Qb(x)) = \frac{1}{2}\sum_x (\sqrt{\Pb(x)} - \sqrt{\Qb(x)} )^2$. Note that our definition of the total variation distance is slightly different from convention by a constant factor. We define $d = \max_h d_h$ and $Q_A = \max_h|\mathcal{Q}_h^A|$. %\jing{$Q^A_h$ is not defined?} 
We use $\nu_h(\pi,\pi')$  to denote the policy that takes $\pi$ at the initial $h-1$ steps and switches to $\pi'$ from the $h$-th step. Lastly, $\mathtt{u}_{\mathcal{X}}$ represents the uniform distribution over the set $\mathcal{X}$.

\section{Online Learning for Predivtive State Representations}\label{sec:online PSR}
In this section, we propose a model-based algorithm PSR-UCB, 
which features three main novel designs: (a) a stable model estimation step controlling the quality of the estimated model such that its prediction features of the empirical data are useful in the design of UCB, 
(b) an upper confidence bound that captures the uncertainty of the estimated model,
and (c) a termination condition that guarantees the last-iterate model is near-accurate and the corresponding greedy policy is near-optimal.  

\subsection{Algorithm}
%\vspace{-0.05in}
The pseudo-code for the PSR-UCB algorithm is presented in \Cref{alg:computable PSR}. We highlight the key idea of PSR-UCB: in contrast to the design of UCB-type algorithms in MDPs that leverage low-rank structures, we exploit the physical meaning of the prediction feature $\bar{\psi}(\tau_h)$ to design bonus terms that enable efficient exploration.  In particular,  {``physical meaning'' refers to that each coordinate of a prediction feature of $\tau_h$ represents the probability of visiting a core test conditioned on $\tau_h$ and taking the corresponding core action sequence.} Therefore, it suffices to explore a set of $\tau_h$ whose prediction features can span the entire feature space, and use core action sequences to learn those ``base'' features.  
%\yl{you need to mention what is the physical meaning of $\bar{\psi}(\tau_h)$, and why using it to design bonus term will result in efficiency} 
We next elaborate the main steps of the algorithm in greater detail as follows.

{\bf Exploration.} %\yl{Anything specicial here compared to UCB for MDP?} 
At each iteration $k$, PSR-UCB constructs a greedy policy $\pi^{k-1}$ based on a previous dataset $\Dc^{k-1} = \{\Dc_{h}^{k-1}\}_{h=0}^{H-1}$, together with an estimated model $\hat{\theta}^{k-1}$. Intuitively, to enable efficient exploration, $\pi^{k-1}$ is expected to sample $\tau_h$ that ``differs'' the most from previous collected samples $\tau_h\in\Dc_h^{k-1}$. How to quantify such differences forms the foundation of our algorithm design and will be elaborated later.

Then, for each $h\in[H]$, PSR-UCB first uses $\pi^{k-1}$ to get a sample $\tau_{h-1}^{k,h}$, then follows the policy $\mathtt{u}_{\mathcal{Q}_{h-1}^{\exp}}$ to get $\omega_{h-1}^{k,h}$, where $\mathcal{Q}_{h-1}^{\exp} = (\mathcal{A}\times \mathcal{Q}_h^A) \cup \mathcal{Q}_{h-1}^A$.  {Here, superscripts $k,h$ represent the index of episodes.} In other words, PSR-UCB adopts the policy $\nu_h(\pi^{k-1}, \mathtt{u}_{\mathcal{Q}_{h-1}^{\exp}})$ to collect a sample trajectory $(\tau_{h-1}^{k,h}, \omega_{h-1}^{k,h})$, which, together with $\Dc_{h-1}^{k-1}$, forms the dataset $\Dc_{h-1}^k$. 

The importance of uniformly selecting actions from $\mathcal{Q}_{h-1}^{\exp}$ can be explained as follows. First, the actions in $\mathcal{Q}_{h-1}^A$ assist us to learn the prediction feature $\bar{\psi}^*(\tau_{h-1}^{k,h})$ as its $\ell$-th coordiate equals to $\Pb_{\theta^*}(\mathbf{o}_{h-1}^{\ell} | \tau_{h-1}^{k,h}, \mathbf{a}_{h-1}^{\ell})$. Second, the action sequence $(a_h,\mathbf{a}_h^{\ell}) \in \mathcal{A}\times \mathcal{Q}_h^A$ helps us to estimate $\Mbf_h^*(o_h,a_h)\bar{\psi}^*(\tau_{h-1}^{k,h})$ because its $\ell$-th coordinate represents $\Pb_{\theta^*}( \mathbf{o}_h^{\ell},o_h |\tau_{h-1}^{k,h}, a_h, \mathbf{a}_h^{\ell} )$. Therefore, by uniform exploration on $\mathcal{Q}_{h-1}^{\exp}$ given that $\tau_{h-1}^{k,h}$ differs from previous dataset $\Dc_{h-1}^{k-1}$, PSR-UCB collects the most informative samples for estimating the true model $\theta^*$.

{\bf Stable model estimation.} With the updated dataset $\Dc^k = \{\Dc_h^{k}\}$, we estimate the model by maximizing the log-likelihood functions with constraints. Specifically, PSR-UCB extracts any model $\hat{\theta}^k$ from $\mathcal{B}^k$ defined as:
\begin{align}
    &\Theta_{\min}^k = \left\{\theta: \forall h, (\tau_h,\pi)\in\Dc_h^k,~~ \Pb_{\theta}^{\pi}(\tau_h) \geq    p_{\min} \right\},\nonumber\\
    &\mathcal{B}^k = \left\{\theta \in \Theta_{\min}^k:  \sum_{(\tau_H,\pi)\in\Dc^k} \log\Pb_{\theta}^{\pi}(\tau_H) \geq  \max_{\theta'\in\Theta_{\min}^k}\sum_{(\tau_H,\pi)\in\Dc^k} \log\Pb_{\theta'}^{\pi}(\tau_H) - \beta \right\}, \label{eqn: confidence set}
\end{align}
where the estimation margin $\beta$ is a pre-defined constant. Here the threshold probability $p_{\min}$ is sufficiently small to guarantee that, with high probability, $\theta^*\in\Theta_{\min}^k$. Note that compared to existing vanilla maximum likelihood estimators \citep{liu2022optimistic,liu2022partially,chen2022partially}, PSR-UCB has an additional constraint $\Theta_{\min}^k$. This is a crucial condition, as $\tau_h^{k,h+1}$ is sampled to infer the conditional probability $\Pb_{\theta^*}(\omega_h^o|\tau_h^{k,h+1},\omega_h^a)$ or the prediction feature $\bar{\psi}^*(\tau_h^{k,h+1})$, and learning this feature is useless or even harmful  {if $\Pb_{\hat{\theta}^k}^{\pi^{k-1}}(\tau_h^{k,h+1})$ is too small, as it is proved that $\theta^*\in\Theta_{\min}^k$ (\Cref{proposition: propobability of empirical data is high}). %On the other hand, this constraint is easy to meet, as the model inducing the maximum likelihood of $\Dc^k$ must satisfy this condition.
}

{\bf Design of UCB with prediction features.} From the discussion of the previous two steps, we see that the prediction features are vital since (a) actions in $\mathcal{Q}_h^{\exp}$ can efficiently explore the coordinates of these features, and (b) constraint $\Theta_{\min}^k$ ensures the significance of the learned features of the collected samples. Therefore, in the next round of exploration, our objective is to sample $\tau_h$ whose prediction feature exhibits the greatest dissimilarity compared with those of the previously collected samples $\tau_h\in\Dc_h^{k}$. Towards that, PSR-UCB constructs an upper confidence bound $V_{\hat{\theta}^k,\hat{b}^k}^{\pi}$ for $\mathtt{D}_{\TV}  (\Pb_{\hat{\theta}^k}^{\pi} (\tau_H) , \Pb_{\theta }^{\pi}(\tau_H)  )$, where the bonus $\hat{b}^k$ is defined as:
\begin{align}\label{eqn: bonus}
    &\hat{U}_{h}^k = \lambda I + \sum_{\tau_h'\in\Dc_h^{k}} \bar{\hat{\psi}}^k ({\tau_h'}) \bar{\hat{\psi}}^k ({\tau_h'})^{\top},\quad \hat{b}^k(\tau_H) = \min\left\{ \alpha \sqrt{ \sum_{h=0}^{H-1} \left\| \bar{\hat{\psi}}^k(\tau_h) \right\|^2_{(\hat{U}_{h}^k)^{-1}}}, 1 \right\},
\end{align}
with pre-specified regularizer $\lambda$ and UCB coefficient $\alpha$. Note that large $\hat{b}^k(\tau_H)$ indicates that $\tau_H$ is ``perceived'' to be under explored since the {\it estimated} prediction feature $\bar{\hat{\psi}}^k(\tau_h)$ is significantly different from $\{\bar{\hat{\psi}}^k(\tau_h)\}_{ \tau_h\in\Dc_h^k}$.     
%\yl{emphasize the novelty in this step, and make the title of this step reflect such a special design}

\begin{algorithm}[ht]
\caption{Learning Predictive State Representation with Upper Confidence Bound (PSR-UCB)}\label{alg:computable PSR}
\begin{algorithmic}[1]
\State {\bf Input:} threshold probability $p_{\min}$, estimation margin $\beta$, regularizer $\lambda$, UCB coefficient $\alpha$.

\For{$k=1,\ldots, K$}

\For{$h=1,..., H$}
 
\State {\small Use $\nu(\pi^{k-1},\mathtt{u}_{\mathcal{Q}_{h-1}^{\exp} }  )$ to collect data  $\tau_{H}^{k,h} = (\omega_{h-1}^{k,h}, \tau_{h-1}^{k,h}) $.}

\State  {\small $\Dc_{h-1}^{k}\leftarrow \Dc_{h-1}^{ k-1 } \cup \{ ( \tau_H^{k,h} ,  \nu_h(\pi^{k-1}, \mathtt{u}_{\mathcal{Q}_{h-1}^{\exp} } ) ) \}$. } 
\EndFor

\State $\Dc^k = \{\Dc_h^k\}_{h=0}^{H-1}$. Extract any $\hat{\theta}^k\in\mathcal{B}^k$ according to \Cref{eqn: confidence set}.

\State Define bonus function $\hat{b}^{k}(\tau_H) $ according to \Cref{eqn: bonus}.

\State Solve $\pi^{k} = \arg\max_{\pi} V_{\hat{\theta}^{k},\hat{b}^k}^{\pi}$. 
\If{$ V_{\hat{\theta}^{k},\hat{b}^k}^{\pi^{k}} \leq \epsilon/2 $}
\State $ \theta^{\epsilon} = \hat{\theta}^{k}$, {\bf break.} 
\EndIf

\EndFor

\State 
{\bf Output:}  $\bar{\pi}=\arg\max_{\pi} V_{ \theta^{\epsilon},R}^\pi $.

\end{algorithmic}
\end{algorithm}
\vspace{-0.05in}

{\bf  Design of the greedy policy and last-iterate guaranteed termination.} The construction of UCB implies that $\hat{\theta}^k$ is highly uncertain on the trajectories $\tau_H$ with large bonuses. Thus, PSR-UCB finds a greedy policy $\pi^k = \arg\max_{\pi} V_{\hat{\theta}^k,\hat{b}^k}^{\pi}$ and terminates if $V_{\hat{\theta}^k,\hat{b}^k}^{\pi}$ is sufficiently small, indicating that the estimated model $\hat{\theta}^k$ is sufficiently accurate on any trajectory. Otherwise, $\pi^k$ serves for the next iteration $k+1$, as it tries to sample the most dissimilar $\bar{\hat{\psi}}^k(\tau_h)$ compared with the previous samples $\bar{\hat{\psi}}^k(\tau_h^{k,h+1})$ for efficient exploration. We remark that the termination condition favors the last-iterate guarantee, where a single model and a greedy policy are identified. Compared with algorithms that output a mixture policy (e.g. uniform selection in a large policy set), this guarantee may present lower variance in practical applications.

%\yl{Is termination something special here compared to other studies of PSR, such as ``provides the last iterate guarantee"? If so, call the title of this step as ``Design of exploration policy and last-iterate guaranteed termination"; and add this to the beginning of the paragraph as the third novelty}

\begin{remark}[Computation]
Our algorithm only calls the MLE oracle $H$ times in each episode to construct the confidence model class in \Cref{eqn: confidence set}. Similar to \cite{guo2023provably}, our UCB based planning avoids searching for an optimistic policy that maximizes the optimal value over a large model class, thus is more amendable to practical implementation. 
\end{remark}

\begin{remark}[Reward-free PSRs]
    Our algorithm naturally handles reward-free PSRs since during the exploration, no reward information is needed. 
    In the reward-free setting, \Cref{alg:computable PSR} exhibits greater advantage compared with existing reward-free algorithms \citep{liu2022optimistic,chen2022partially} for PSRs, which require a potentially combinatorial optimization oracle over {\em a pair} of models in the (non-convex) confidence set and examination of the total variance distance\footnote{Computation complexity of calculating total variation distance is still an open question \citep{bhattacharyya2022approximating} and only some specific forms can be approximated in polynomial time.}, and are computationally intractable in general \citep{golowich2022learning}. %whose computational complexity is likely exponential. 
\end{remark}

\subsection{Theoretical Results}%\vspace{-0.05in}
In this section, we present the theoretical results for PSR-UCB. To make a general statement, we first introduce the notion of the optimistic net of the parameter space. 

\begin{definition}[Optimistic net]\label{def:optimistic net}
    Consider two bounded functions $\Pb$ and $\bar{\Pb}$ over a set $\mathcal{X}$. Then, $\bar{\Pb}$ is $\varepsilon$-optimistic over $\Pb$ if (a) $\bar{\Pb}(x)\geq \Pb(x), \forall x\in\mathcal{X}$, and (b) $\mathtt{D}_{\TV}(\Pb,\bar{\Pb})\leq \varepsilon$. The $\varepsilon$-optimistic net of $\Theta$ is a smallest finite space $\bar{\Theta}_{\varepsilon}$ so that for all $\theta\in\Theta$, there exists $\bar{\theta}\in\bar{\Theta}_{\varepsilon}$ when $\Pb_{\bar{\theta}}$ is $\varepsilon$-optimistic over $\Pb_{\theta}$.
\end{definition}

Note that if $\Theta$ is the parameter space for tabular PSRs (including finite observation and action spaces) with rank $r$, we have $|\bar{\Theta}_{\varepsilon}|\leq r^2|\mathcal{O}||\mathcal{A}|H^2\log\frac{H|\mathcal{O}||\mathcal{A}|}{\varepsilon}$ (see \Cref{prop: optimistic net of tabular PSR} or Theorem 4.7 in \citet{liu2022optimistic}). Now, we are ready to present the main theorem for PSR-UCB.

\begin{theorem}\label{thm:online PSR}
    Suppose \Cref{assmp:well-condition} holds. Let $p_{\min} = O(\frac{\delta}{KH|\mathcal{O}|^H|\mathcal{A}|^H })$\footnote{$|\mathcal{O}|$ can be the cardinality of $\mathcal{O}$ if it is finite, or the measure of $\mathcal{O}$ if it is a measurable set with positive and bounded measure.}, $\beta = O(\log|\bar{\Theta}_{\varepsilon}|)$, where $\varepsilon=O(\frac{p_{\min}}{KH})$, $\lambda = \frac{\gamma |\mathcal{A}|^2Q_A \beta  \max\{ \sqrt{r}, Q_A\sqrt{ H}/\gamma \}   }{  \sqrt{d H} }$, and $\alpha = O\left( \frac{Q_A\sqrt{ H d}}{\gamma^2} \sqrt{\lambda } + \frac{|\mathcal{A}|Q_A\sqrt{\beta}}{\gamma}\right)$. Then, with probability at least $1-\delta$, PSR-UCB~ outputs a model $\theta^{\epsilon}$ and a policy $\bar{\pi}$ that satisfy
    \begin{align}
        V_{\theta^*,R}^{\pi^*} - V_{\theta^*,R}^{\bar{\pi}} \leq \epsilon, \text{ and }~~ \forall \pi,~~\mathtt{D}_{\TV} \left( \Pb_{ \theta^{\epsilon} }^{\pi}(\tau_H), \Pb_{\theta^*}^{\pi}(\tau_H)\right) \leq \epsilon.
    \end{align}
    In addition, PSR-UCB terminates with a sample complexity of 
    \[ \tilde{O}\left( \left(  r + \frac{Q_A^2 H}{\gamma^2}\right) \frac{r d H^3 |\mathcal{A}|^2 Q_A^4 \beta}{\gamma^4\epsilon^2} \right).\]
\end{theorem}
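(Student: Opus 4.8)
The plan is to follow an optimism-based template in three stages: (i) prove that the bonus in \Cref{eqn: bonus} is a high-probability upper confidence bound on the per-policy total variation error, (ii) convert the termination condition into the two accuracy guarantees, and (iii) bound the number of iterations before termination by a potential-function argument. The heart of the proof is the \emph{optimism lemma}: with probability at least $1-\delta$, the extracted model $\hat\theta^k\in\mathcal B^k$ satisfies $V^{\pi}_{\hat\theta^k,\hat b^k}\ge \mathtt D_{\TV}\!\left(\Pb^{\pi}_{\hat\theta^k}(\tau_H),\Pb^{\pi}_{\theta^*}(\tau_H)\right)$ simultaneously for all $\pi$ and all $k$. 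I would establish this from four ingredients: (a) an MLE/Hellinger concentration over the optimistic net $\bar\Theta_\varepsilon$, giving that the aggregate squared Hellinger error of $\hat\theta^k$ on the data-generating distributions is controlled by $\beta=O(\log|\bar\Theta_\varepsilon|)$; (b) the stable-estimation guarantee ensuring $\theta^*\in\Theta^k_{\min}$ and that the empirical prediction features $\bar{\hat{\psi}}^k$ are well-defined and informative (this is exactly why the extra constraint $\Theta^k_{\min}$ is imposed in \Cref{eqn: confidence set}); (c) a PSR-specific performance-difference decomposition writing $\mathtt D_{\TV}(\Pb^{\pi}_{\hat\theta^k},\Pb^{\pi}_{\theta^*})$ as a sum over $h$ of operator-mismatch terms for $\Mbf_h$ acting on prediction features, each of which, by \Cref{assmp:well-condition}, blows up by at most $1/\gamma$; and (d) contraction of these terms into the elliptical norms $\|\bar{\hat{\psi}}^k(\tau_h)\|_{(\hat U_h^k)^{-1}}$, so that the resulting confidence radius matches the coefficient $\alpha$. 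Combining (a)--(d) and tracking constants pins down the stated $\lambda$ and $\alpha$.

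Given the optimism lemma, the two accuracy claims follow quickly. At termination, $\max_\pi V^{\pi}_{\hat\theta^k,\hat b^k}\le\epsilon/2$, so the lemma yields $\mathtt D_{\TV}(\Pb^{\pi}_{\theta^\epsilon},\Pb^{\pi}_{\theta^*})\le\epsilon/2\le\epsilon$ uniformly in $\pi$, which is the model-accuracy guarantee. Since $R$ maps trajectories into $[0,1]$, this total variation bound gives $|V^{\pi}_{\theta^\epsilon,R}-V^{\pi}_{\theta^*,R}|\le\epsilon/2$ for every $\pi$. Writing $\bar\pi=\arg\max_\pi V^{\pi}_{\theta^\epsilon,R}$ and chaining through $\pi^*$ then gives $V^{\pi^*}_{\theta^*,R}-V^{\bar\pi}_{\theta^*,R}\le V^{\pi^*}_{\theta^\epsilon,R}-V^{\bar\pi}_{\theta^\epsilon,R}+\epsilon\le\epsilon$, using that $\bar\pi$ is greedy under $\theta^\epsilon$; this is the near-optimality guarantee.

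The sample-complexity bound is the technical crux, obtained by bounding the number of non-terminated iterations via an elliptical-potential argument. The difficulty is that the bonus uses the \emph{estimated} features $\bar{\hat{\psi}}^k$, which drift with $k$, whereas the log-determinant potential telescopes only for a fixed feature map. To handle this I would first establish the \emph{empirical-versus-ground-truth UCB relationship}: using the stable-estimation guarantee, the empirical norms $\|\bar{\hat{\psi}}^k(\tau_h)\|_{(\hat U_h^k)^{-1}}$ are, up to constant factors, comparable to the ground-truth norms $\|\bar\psi^*(\tau_h)\|_{(U_h^{*,k})^{-1}}$ built from the true features $\bar\psi^*$ and $U_h^{*,k}=\lambda I+\sum_{\tau_h'\in\Dc_h^k}\bar\psi^*(\tau_h')\bar\psi^*(\tau_h')^{\top}$. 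On the fixed map $\bar\psi^*$ the standard potential lemma gives $\sum_k\sum_h\|\bar\psi^*(\tau_h^{k,h+1})\|^2_{(U_h^{*,k})^{-1}}=\tilde O(dH)$. Because every non-terminated iteration collects, under $\pi^{k-1}$, trajectories whose bonus exceeds $\epsilon/2$, a Cauchy--Schwarz/pigeonhole step converts this potential bound into $K=\tilde O(\alpha^2 dH/\epsilon^2)$; substituting the chosen $\lambda,\alpha,\beta$ and multiplying by the $H$ episodes collected per iteration reproduces the claimed complexity.

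I expect the main obstacles to be ingredients (c)--(d) of the optimism lemma and the empirical-versus-ground-truth comparison. Both must respect the non-Markovian structure of the PSR while simultaneously accounting for the fact that the features appearing in the bonus are themselves estimated rather than exact; in particular, controlling the operator-mismatch decomposition through \Cref{assmp:well-condition} and certifying that the estimated features track the true ones closely enough to swap $\hat U_h^k$ for $U_h^{*,k}$ in the potential argument are the two places where the stable-estimation step is indispensable.
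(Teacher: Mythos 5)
Your three-stage plan mirrors the paper's proof: the paper likewise (1) establishes a valid UCB on $\mathtt{D}_{\TV}(\Pb_{\hat\theta^k}^\pi,\Pb_{\theta^*}^\pi)$ via the operator-mismatch decomposition of \Cref{prop: TV distance less than estimation error} together with \Cref{assmp:well-condition}, (2) reads both accuracy guarantees off the termination test exactly as you do, and (3) bounds $K$ by an elliptical-potential argument on the fixed features $\bar\psi^*$ after converting the empirical bonus into a ground-truth one. Stage (ii) of your argument is complete and matches the paper's.

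Two of your steps, however, are under-specified at precisely the points that carry the technical content. First, in the optimism lemma your ingredient (a) only yields aggregate Hellinger control of $\hat\theta^k$ on the joint data-generating distributions; what the bonus actually requires is $\sum_h\sum_{(\tau_h,\pi)\in\Dc_h^k}\mathtt{D}_{\TV}^2\left(\Pb_{\hat\theta^k}^\pi(\omega_h|\tau_h),\Pb_{\theta^*}^\pi(\omega_h|\tau_h)\right)\le O(\beta)$, i.e., control of the \emph{conditional} laws of future trajectories given the specific empirical histories (\Cref{proposiiton: empirical distance less than log likelihood difference}). This is the new MLE guarantee enabled by $\Theta_{\min}^k$: the conditional TV is bounded by the joint TV divided by $\Pb_\theta^\pi(\tau_h)\ge p_{\min}$, so without the constraint the bound degenerates; saying the features are ``well-defined and informative'' does not substitute for deriving this. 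Second, your claim that $\|\bar{\hat\psi}^k(\tau_h)\|_{(\hat U_h^k)^{-1}}$ is comparable to $\|\bar\psi^*(\tau_h)\|_{(U_h^{*,k})^{-1}}$ ``up to constant factors'' is not what can be proved: the actual comparison (\Cref{lemma:empirical bouns less than true bonus}) carries an additive correction of order $\frac{HQ_A}{\sqrt\lambda}\,\mathtt{D}_{\TV}(\Pb_{\theta^*}^\pi,\Pb_{\hat\theta^k}^\pi)$ arising from the change of measure between $\hat\theta^k$ and $\theta^*$, and the cumulative sum $\sum_k\mathtt{D}_{\TV}(\Pb_{\hat\theta^k}^{\pi^k},\Pb_{\theta^*}^{\pi^k})$ must then be shown to be $O(\sqrt K)$ by a separate $\ell_2$-Eluder argument on the true features (\Cref{lemma:summation of TV is sublinear}). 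Your sketch has no counterpart of that lemma, and without it the pigeonhole step does not close. A minor quantitative point: the potential sum over $\bar\psi^*$ is $\tilde O(rH)$ because the true features have rank $r$, not $\tilde O(dH)$; using $d$ there would fail to recover the stated $rd$ dependence.
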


The following remarks highlight a few insights conveyed by \Cref{thm:online PSR}. {\bf First,} \Cref{thm:online PSR} explicitly states that PSR-UCB features last-iterate guarantee, i.e., the guaranteed performance is on the last output of the algorithm. This is in contrast to the previous studies \citep{liu2022partially,liu2022optimistic,chen2022partially} on POMDP and/or PSRs, where the performance guarantee is on a mixture of policies obtained over the entire execution of algorithms. Such policies often have a large variance. {\bf Second,}
in the regime with a low PSR rank such that $r<\frac{ Q_A^2H}{\gamma^2}$, {our result matches the best known sample complexity \citep{chen2022partially} in terms of rank $r$ and $\epsilon$.} {\bf Third,} %in terms of computation efficiency, 
thanks to the explicitly constructed bonus function $\hat{b}$, whose computation complexity is polynomial in terms of the iteration number and the dimension $d$, PSR-UCB is computationally tractable, provided that there exist oracles for planning in POMDPs (e.g. Line 9 in PSR-UCB) \citep{papadimitriou1987complexity,guo2023provably} and maximum likelihood estimation. 

{\it Proof Sketch of \Cref{thm:online PSR}.} The proof relies on {three} main steps  corresponding to three new technical developments, respectively. {\bf (a) A new MLE guarantee.} Due to the novel {\it stable} model estimation design, we establish a new MLE guarantee (which has not been developed in the previous studies) that the total variation distance between the conditional distributions of future trajectories conditioned on $(\tau_h,\pi)\in\Dc_h^k$ is small. Mathematically, $\sum_{ (\tau_h,\pi)\in\Dc_h^k }\mathtt{D}_{\TV}  (\Pb_{\hat{\theta}^k}^{\pi} (\omega_h|\tau_h) , \Pb_{\theta^* }^{\pi}( \omega_h | \tau_h) )$ is upper bounded by a constant. {\bf (b) A new confidence bound.} Based on (a) and the observation that the estimation error of the prediction feature $\bar{\hat{\psi}}^k(\tau_h)$ is upper bounded by the total variation distance $\mathtt{D}_{\TV}  (\Pb_{\hat{\theta}^k}^{\pi} (\omega_h|\tau_h) , \Pb_{\theta^* }^{\pi}( \omega_h | \tau_h) )$, the estimation error of other prediction features is captured by the function $\|\bar{\hat{\psi}}^k(\tau_h)\|_{(\hat{U}_h^k)^{-1}}$ up to some constant, leading to the valid UCB design. 
%We aim to show that $V_{\hat{\theta}^k,\hat{b}^k}^{\pi} \geq \mathtt{D}_{\TV}  (\Pb_{\hat{\theta}^k}^{\pi} (\tau_H) , \Pb_{\theta }^{\pi}(\tau_H)  )$, with high probability. Note that we only use empirical samples $\tau_h\in\Dc_h^k$ to design the bonus $\hat{b}^k$, characterizing $\mathtt{D}_{\TV}  (\Pb_{\hat{\theta}^k}^{\pi} (\omega_h|\tau_h) , \Pb_{\theta }^{\pi}( \omega_h | \tau_h) )$ for $(\tau_h,\pi)\in\Dc_h^k$ is of great importance.\yl{the previous sentence doesn't make sense to me} %which has not been developed for the MLE method in previous studies.   
{\bf (c) A new relationship between the empirical bonus and the ground-truth bonus.}  To characterize the sample complexity, we need to show that $V_{\hat{\theta}^k,\hat{b}^k}^{\pi^k}$ can be small for some $k$. One approach is to prove that %the summation 
$\sum_{k}V_{\hat{\theta}^k,\hat{b}^k}^{\pi^k}$ is sub-linear. We validate this sub-linearity by establishing $\hat{b}^k \leq O\left(\sum_h \|\bar{\psi}^*(\tau_h)\|_{(U_h^k)^{-1}}\right)$, where $U_h^k = \lambda I + \sum_{\tau_h\in\Dc_h^k} \bar{\psi}^*(\tau_h)\bar{\psi}^*(\tau_h)^{\top}$. This inequality bridges the empirical and the ground-truth bonuses, and yields the final result when combined with the elliptical potential lemma \citep{carpentier2020elliptical}. The complete proof can be found in \Cref{sec:online PSR}.

When PSR is specialized to $m$-step decodable POMDPs \citep{liu2022optimistic}, as elaborated in \Cref{sec:POMDP}, the sample complexity of PSR-UCB does not depend on $d$ with slight modification of the algorithm. %The proof can be found in Appx.~\ref{sec:POMDP}.

\begin{corollary}\label{coro:POMDP}
  When PSR is specialized to $m$-step decodable POMDPs, for each $k$, there exists a set of matrices $\{\hat{\mathbf{G}}_h^k\}_{h=1}^H \subset \mathbb{R}^{d\times r}$ such that if we replace $\bar{\hat{\psi}}^k(\tau_h)$ by $(\hat{\mathbf{G}}_{h+1}^k)^{\dagger}\bar{\hat{\psi}}^k(\tau_h)$  in \Cref{eqn: bonus} for any $\tau_h$, then PSR-UCB terminates with a sample complexity of poly$(r,1/\gamma, Q_A, H, |\mathcal{A}|,\beta)/\epsilon^2$.
  
\end{corollary}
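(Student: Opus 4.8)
The plan is to exploit the additional low-dimensional structure that $m$-step decodability imposes on the prediction features, and to show that projecting the estimated features onto an $r$-dimensional subspace via $(\hat{\mathbf{G}}_{h+1}^k)^{\dagger}$ leaves the three technical pillars of \Cref{thm:online PSR} intact while replacing every appearance of the ambient dimension $d$ by the rank $r$. The re-tuning of $\lambda$ and $\alpha$ to the reduced dimension is the ``slight modification'' of the algorithm referred to in the statement.

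First I would establish the geometric fact underlying the whole argument: since $\mathbb{D}_h[\mathcal{Q}_h]$ has rank $r$, the true prediction features $\{\bar{\psi}^*(\tau_h):\tau_h\in\mathcal{H}_h\}$ all lie in a common $r$-dimensional subspace of $\mathbb{R}^{d_h}$, and the same holds for $\{\bar{\hat{\psi}}^k(\tau_h)\}$ with respect to $\hat{\theta}^k$. For an $m$-step decodable POMDP this subspace admits an explicit description: because the latent state $s_{h+1}$ is a deterministic function of the last $m$ observations, the conditional probabilities constituting $\bar{\psi}^*(\tau_h)$ depend on $\tau_h$ only through $s_{h+1}$, so $\bar{\psi}^*(\tau_h)=\mathbf{G}_{h+1}^* v(s_{h+1})$ for a fixed $\mathbf{G}_{h+1}^*\in\mathbb{R}^{d\times r}$ and an $r$-dimensional latent-state encoding $v(\cdot)$. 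I would then define $\hat{\mathbf{G}}_{h+1}^k$ analogously from the decoder and operators of $\hat{\theta}^k$, so that $(\hat{\mathbf{G}}_{h+1}^k)^{\dagger}\bar{\hat{\psi}}^k(\tau_h)$ is an $r$-dimensional representation of the decoded state and the bonus matrix in \Cref{eqn: bonus} becomes an $r\times r$ (essentially count-based) matrix after the substitution.

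Next I would verify that the substitution does not damage the analysis. The MLE guarantee (step (a)) is a statement about models, not features, so it is untouched. For the confidence bound (step (b)) I would show that the estimation error of the projected feature is still controlled by $\mathtt{D}_{\TV}(\Pb_{\hat{\theta}^k}^{\pi}(\omega_h|\tau_h),\Pb_{\theta^*}^{\pi}(\omega_h|\tau_h))$, which requires bounding the operator norm of $(\hat{\mathbf{G}}_{h+1}^k)^{\dagger}$. For the bridging step (c) I would re-derive $\hat{b}^k\le O(\sum_h\|(\hat{\mathbf{G}}_{h+1}^k)^{\dagger}\bar{\psi}^*(\tau_h)\|_{(\tilde{U}_h^k)^{-1}})$ with the projected true features and the $r\times r$ matrices $\tilde{U}_h^k$, and then invoke the elliptical potential lemma \citep{carpentier2020elliptical}, whose $\log\det$ factor now scales with $r$ rather than $d$. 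Combining these yields the claimed poly$(r,1/\gamma,Q_A,H,|\mathcal{A}|,\beta)/\epsilon^2$ bound.

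The main obstacle I anticipate is step (b): controlling $\|(\hat{\mathbf{G}}_{h+1}^k)^{\dagger}\|$ uniformly over $k$. The projection must not amplify the feature-estimation error, yet because $\hat{\mathbf{G}}_{h+1}^k$ is built from a merely \emph{estimated} model its column space can be misaligned with the true subspace. This is exactly where decodability is essential: it pins both $\mathbf{G}_{h+1}^*$ and $\hat{\mathbf{G}}_{h+1}^k$ to the same latent-state coordinate system, so that projecting the estimated feature recovers (approximately) the one-hot latent-state encoding shared with the true model. Establishing that this alignment guarantees a well-conditioned pseudo-inverse is the crux that lets the $d$-free elliptical potential bound go through.
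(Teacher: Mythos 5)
Your plan follows the same route as the paper: take $\hat{\mathbf{G}}_{h+1}^k$ to be the $m$-step emission--action matrix of the \emph{estimated} model, project the prediction features into $\mathbb{R}^r$, and then redo exactly the two lemmas you identify (the confidence-bound lemma, where the regularizer term $\sqrt{I_2}$ drops from $Q_A\sqrt{d\lambda}/\gamma^2$ to $Q_A\sqrt{r\lambda}/\gamma^2$, and the empirical-to-ground-truth bonus comparison). One small correction on bookkeeping: the elliptical potential lemma in the paper is already stated in terms of the rank $r$ of the feature sequence, so the $d$ you are eliminating enters through the $\lambda I$ regularization term (the $\ell_2\to\ell_1$ conversion $\max_{\|x\|_2=1}\|x\|_1\le\sqrt{d}$), not through the $\log\det$ factor.

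The genuine gap is the step you yourself flag as the crux and leave open: controlling $(\hat{\mathbf{G}}_{h+1}^k)^{\dagger}$ and the misalignment between the estimated and true column spaces. Your proposed resolution leans on the literal reading of decodability (state a deterministic function of the last $m$ observations, hence a shared ``one-hot'' coordinate system), but the paper's \emph{definition} of $m$-step decodability is a singular-value lower bound $\sigma_{|\mathcal{S}|}(\mathbf{G}_h)\ge\alpha$, under which $\bar{\psi}^*(\tau_h)=\mathbf{G}_{h+1}^*\,b(\tau_h)$ with $b(\tau_h)$ a \emph{belief} over states, not a one-hot vector; so the ``shared coordinate system'' intuition does not directly close the argument. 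The paper instead (i) works with the regularized pseudo-inverse $(\hat{\mathbf{G}}_{h+1}^k)^{\dagger}+\hat{\mathbf{Y}}_{h+1}^k$ with $\hat{\mathbf{Y}}_{h+1}^k\hat{\mathbf{G}}_{h+1}^k=0$, whose $\ell_1$ operator norm is bounded by $1/\gamma$ via the well-conditioning, and (ii) splits the error into a feature-estimation term (bounded by the conditional TV distance, exactly as in \Cref{lemma:empirical bouns less than true bonus}) plus a subspace-misalignment term
\[
\Bigl\|\bigl((\hat{\mathbf{G}}_{h+1}^k)^{\dagger}+\hat{\mathbf{Y}}_{h+1}^k\bigr)\bigl(\mathbf{G}_{h+1}^*-\hat{\mathbf{G}}_{h+1}^k\bigr)(\mathbf{G}_{h+1}^*)^{\dagger}\bar{\psi}^*(\tau_h)\Bigr\|_2 ,
\]
which is \emph{not} driven to zero but is absorbed by Cauchy--Schwarz against the (expected) Gram matrix $\bar{U}_h^k$ into a multiple of $\|(\mathbf{G}_{h+1}^*)^{\dagger}\bar{\psi}^*(\tau_h)\|_{(U_h^k)^{-1}}$, with the multiplier controlled by the MLE/Hellinger guarantee because $(\mathbf{G}_{h+1}^*-\hat{\mathbf{G}}_{h+1}^k)$ acting on the belief is a difference of core-test probabilities conditioned on the state. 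Without this decomposition and the accompanying norm bound, your step (b) does not go through, so the proposal as written is an accurate outline of the paper's strategy but does not constitute a proof.
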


\section{Offline Learning for Predictive State Representations}\label{sec:offline PSR}
%\vspace{-0.05in}

In this section, we develop a computationally tractable algorithm PSR-LCB to learn PSRs in the offline setting. In offline PSRs, a dataset $\Dc$ contains $K$ pre-collected trajectories that are independently sampled from a behavior policy $\pi^b$. We slightly generalize the learning goal to be finding a policy that can compete with any target policy whose coverage coefficient is finite. We highlight that PSR-LCB is the first offline algorithm for learning PSRs. When specialized to POMDPs, our proposed algorithm enjoys better computational complexity than confidence region based algorithms \citep{guo2022provably}. 
%\yl{compete with what type of target policy?}

\hrq{}
%\vspace{-0.05in}
\subsection{Algorithm}
%\vspace{-0.05in}

We design a PSR-LCB algorithm for offline PSRs, whose pseudo-code is presented in \Cref{alg:offline PSR}.  In the offline setting, we also leverage the prediction feature $\bar{\psi}(\tau_h)$ to design lower confidence bound (LCB) of the true value function $V_{\theta^*,r}^{\pi}$.  We explain the main steps of the algorithm in detail as follows.
%\yl{You mentioned too many times that LCB is similar to UCB. Remove some of these wording; it is even better to just say "In offline PSRs, we also leverage ....." than "Similarly to online PSRs, we leverage...". You can say this a couple of times, but throughout this subsection}

%\yl{Comment this is the first offline algorithms for PSRs. Also comment that compared to offline algorithms for POMDP, this LCB type algorithm enjoys better computational complexity.}

{\bf stable model estimation.} Inspired by PSR-UCB, where the prediction features $\bar{\psi}(\tau_h)$ at each step $h$ are learned through separate datasets, PSR-LCB first randomly and evenly divides $\Dc$ into $H$ datasets $\Dc_0,\ldots, \Dc_{H-1}$. The goal of this division is to separately learn $\bar{\psi}^*(\tau_h)$ for each $h\in\{0,1,\ldots,H-1\}$.  Then, PSR-LCB extracts a model $\hat{\theta}$ from $\mathcal{B}_{\Dc}$ defined as:
\begin{align}
    & \Theta_{\min} = \left\{\theta: \forall h,  \tau_h \in \Dc_h,~~ \Pb_{\theta}^{\pi^b}(\tau_h) \geq    p_{\min} \right\}, \nonumber \\
    &\mathcal{B}_{\Dc} = \left\{\theta \in \Theta_{\min}:  \sum_{\tau_H \in\Dc} \log\Pb_{\theta}^{\pi^b}(\tau_H) \geq  \max_{\theta'\in\Theta_{\min}}\sum_{ \tau_H  \in\Dc} \log\Pb_{\theta'}^{\pi^b} (\tau_H) - \hat{\beta} \right\}, \label{eqn:offline confidence set}
\end{align}
where the estimation margin $\hat{\beta}$ is a pre-specified constant, and $p_{\min}$ guarantees that, with high probability, $\theta^*\in\Theta_{\min}$. 

Following similar reasons of the design of PSR-UCB, the constraint $\Theta_{\min}$ controls the quality of the estimated model such that the prediction features of the behavior samples are non-negligible and useful for learning.

{\bf Design of LCB with prediction features.} Given the estimated model $\hat{\theta}$, PSR-LCB constructs an upper confidence bound of  $\mathtt{D}_{\TV} \left(\Pb_{\hat{\theta} }^{\pi} (\tau_H) , \Pb_{\theta }^{\pi}(\tau_H) \right)$ in the form of $V_{\hat{\theta} ,\hat{b} }^{\pi}$, where $\hat{b}(\tau_H)$ is defined as: 
\begin{align}\label{eqn: LCB bonus}
    &\hat{U}_{h}  = \hat{\lambda} I + \sum_{\tau_h\in\Dc_h } \bar{\hat{\psi}} (\tau_h) \bar{\hat{\psi}} (\tau_h)^{\top},\quad \hat{b} (\tau_H) = \min\left\{ \hat{\alpha} \sqrt{ \sum_h \left\| \bar{\hat{\psi}} (\tau_h) \right\|^2_{(\hat{U}_{h} )^{-1}}}, 1 \right\},
\end{align}
with pre-defined regularizer $\hat{\lambda}$ and LCB coefficient $\hat{\alpha}$. Note that we refer to $\hat{\alpha}$ as the LCB coefficient, as we adopt the pessimism principle in offline learning. Differently from PSR-UCB, we aim to select a policy where the estimated model exhibits the least uncertainty. Therefore, the output policy should allocate a high probability to $\tau_H$ if $\hat{b}(\tau_H)$ is small.

{\bf Output policy design.} Building on the discussion above, PSR-LCB outputs a policy $\bar{\pi} = \arg\max_{\pi}  V_{ \hat{\theta}, R  }^{\pi} - V_{ \hat{\theta}, \hat{b} }^{\pi}$, which maximizes a {\it lower confidence bound} of $V_{\theta^*,R}^{\pi}$.

\vspace{-0.05in}
\begin{algorithm}[ht]
\caption{Offline Predictive State Representations with Lower Confidence Bound (PSR-LCB)}\label{alg:offline PSR}
\begin{algorithmic}[1]
\State {\bf Input:} Offline dataset $\Dc$, threshold probability $p_{\min}$, estimation margin $\hat{\beta}$, regularizer $\hat{\lambda}$, LCB coefficient $\hat{\alpha}$.

\State Estimate model  $\hat{\theta}\in \mathcal{B}$ according to \Cref{eqn:offline confidence set}.

\State Construct $\hat{b}$ according to \Cref{eqn: LCB bonus}.

\State Output
    $\hat{\pi} = \arg\max_{\pi}  V_{ \hat{\theta}, R }^{\pi} - V_{ \hat{\theta}, \hat{b} }^{\pi}$.

\end{algorithmic}
\end{algorithm}
%\vspace{-0.05in}

\subsection{Theoretical Results}
In this section, we develop the theoretical guarantee for PSR-LCB. To capture the distribution shift in offline PSRs, we require that the $\ell_{\infty}$ norm of the ratio between the probabilities over the entire trajectory under the target policy and under the behavior policy is finite. Mathematically, if $\pi$ is the target policy, then,
   $ C_{\pi^b,\infty}^{\pi} :=    \max_h \max_{\tau_h}\frac{ \Pb_{\theta^*}^{\pi}(\tau_h) }{\Pb_{\theta^*}^{\pi^b}(\tau_h)}  < \infty.$ 
%\jing{shall we put this in another assumption?}\hrqc{I feel like a formal assumption is too much. }
We note that, for general sequential decision-making problems, no Markovian property or hidden state is assumed. Therefore, defining the coverage assumption on a single observation-action pair as in \cite{xie2021bellman} for offline MDP, or on hidden states as in \cite{guo2022provably} for offline POMDP, is not suitable for PSRs.

\begin{theorem}\label{thm:offline PSR}
Suppose \Cref{assmp:well-condition} holds. Let $\iota = \min_{\mathbf{a}_h\in\mathcal{Q}_h^{\exp}} \pi^b(\mathbf{a}_h) $, $p_{\min} = O(\frac{\delta}{KH(|\mathcal{O}||\mathcal{A}|)^H})$, $\varepsilon=O(\frac{p_{\min}}{KH})$, $\hat{\beta} = O(\log|\bar{\Theta}_{\varepsilon}|)$, $ \hat{\lambda}  = \frac{\gamma C_{\pi^b,\infty}^{\pi}\hat{\beta} \max\{\sqrt{r}, Q_A\sqrt{H}/\gamma\}}{\iota^2Q_A\sqrt{dH}}$, and $\hat{\alpha} = O\left(\frac{  Q_A\sqrt{dH} }{\gamma^2}\sqrt{\lambda} + \frac{  \sqrt{\beta} }{ \iota \gamma} \right)$. Then, with probability at least $1-\delta$, the output $\bar{\pi}$ of \Cref{alg:offline PSR} satisfies that
   \begin{align}
       \forall \pi,~~ V_{\theta^*, R }^{\pi} - V_{\theta^*,R}^{\bar{\pi}} \leq \tilde{O}\left( \left(\sqrt{r} + \frac{ Q_A\sqrt{H}}{\gamma}\right)\frac{ C_{\pi^b,\infty}^{\pi}  Q_A H^2\sqrt{d}  }{  \iota  \gamma^2 }\sqrt{\frac{r\hat{\beta}}{K}} \right).
   \end{align}
\end{theorem}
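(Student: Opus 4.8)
The plan is to invoke the pessimism principle to reduce the suboptimality gap to an expected bonus, and then bound that bonus by a change of measure through the coverage coefficient combined with an elliptical-potential/trace argument.

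\textbf{Step 1 (pessimism decomposition).} First I would establish that, with probability at least $1-\delta$, the bonus-augmented value $V_{\hat\theta,\hat b}^\pi$ is a valid upper confidence bound for $\mathtt{D}_{\TV}(\Pb_{\hat\theta}^\pi(\tau_H),\Pb_{\theta^*}^\pi(\tau_H))$ uniformly over all policies $\pi$. This is the offline counterpart of technical developments (a) and (b) behind \Cref{thm:online PSR}: using $\theta^*\in\Theta_{\min}$ (the analogue of \Cref{proposition: propobability of empirical data is high}) and $\hat\theta\in\mathcal{B}_{\Dc}$ from \Cref{eqn:offline confidence set}, a stable-MLE guarantee controls $\sum_{\tau_h\in\Dc_h}\mathtt{D}_{\TV}(\Pb_{\hat\theta}^{\pi^b}(\omega_h|\tau_h),\Pb_{\theta^*}^{\pi^b}(\omega_h|\tau_h))$, which bounds the per-step estimation error of the prediction features by $\|\bar{\hat{\psi}}(\tau_h)\|_{(\hat U_h)^{-1}}$ up to constants, yielding the UCB property for the bonus in \Cref{eqn: LCB bonus}. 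Since $R\in[0,1]$, the TV distance dominates value gaps, so $|V_{\hat\theta,R}^\pi-V_{\theta^*,R}^\pi|\le V_{\hat\theta,\hat b}^\pi$. Combining this with the optimality of $\bar\pi$ for the lower-confidence objective $V_{\hat\theta,R}^\pi-V_{\hat\theta,\hat b}^\pi$ gives, for every target policy $\pi$, the chain $V_{\theta^*,R}^{\bar\pi}\ge V_{\hat\theta,R}^{\bar\pi}-V_{\hat\theta,\hat b}^{\bar\pi}\ge V_{\hat\theta,R}^\pi-V_{\hat\theta,\hat b}^\pi\ge V_{\theta^*,R}^\pi-2V_{\hat\theta,\hat b}^\pi$, hence $V_{\theta^*,R}^\pi-V_{\theta^*,R}^{\bar\pi}\le 2V_{\hat\theta,\hat b}^\pi$.

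\textbf{Step 2 (reduce the bonus to feature norms under the behavior policy).} Next I would bound $V_{\hat\theta,\hat b}^\pi=\Eb_{\tau_H\sim\Pb_{\hat\theta}^\pi}[\hat b(\tau_H)]$. Applying Jensen/Cauchy--Schwarz across the $H$ steps pulls the square root out, reducing to $\sum_h\Eb_{\Pb_{\hat\theta}^\pi}[\|\bar{\hat{\psi}}(\tau_h)\|_{(\hat U_h)^{-1}}^2]$. I would then translate the empirical features and Gram matrices into their ground-truth analogues $\bar{\psi}^*$ and $U_h=\hat\lambda I+\sum_{\tau_h\in\Dc_h}\bar{\psi}^*(\tau_h)\bar{\psi}^*(\tau_h)^\top$ via the feature-closeness guarantee of Step 1 (this is where the minimum behavior probability $\iota$ on core action sequences enters, inflating the feature error by $1/\iota$), and switch the trajectory distribution from $\hat\theta$ to $\theta^*$. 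The key inequality is then a change of measure from the target policy to the behavior policy: $\Eb_{\Pb_{\theta^*}^\pi}[\|\bar{\psi}^*(\tau_h)\|_{U_h^{-1}}^2]\le C_{\pi^b,\infty}^\pi\,\Eb_{\Pb_{\theta^*}^{\pi^b}}[\|\bar{\psi}^*(\tau_h)\|_{U_h^{-1}}^2]$, which is precisely what the $\ell_\infty$ coverage coefficient was introduced to enable.

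\textbf{Step 3 (trace/elliptical-potential bound and bookkeeping).} Since the per-step sub-dataset $\Dc_h$ consists of i.i.d. trajectories drawn under $\pi^b$ from $\theta^*$, independent of a fresh test draw, I would use matrix concentration to compare $U_h$ with $|\Dc_h|\,\Sigma_h+\hat\lambda I$, where $\Sigma_h=\Eb_{\Pb_{\theta^*}^{\pi^b}}[\bar{\psi}^*(\tau_h)\bar{\psi}^*(\tau_h)^\top]$, and then bound $\Eb_{\Pb_{\theta^*}^{\pi^b}}[\|\bar{\psi}^*(\tau_h)\|_{U_h^{-1}}^2]=\mathrm{tr}(\Sigma_h U_h^{-1})\lesssim rH/K$, using that $\Sigma_h$ has rank at most $r$ and $|\Dc_h|=K/H$. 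Summing over $h$, taking the square root, and substituting the prescribed $\hat\lambda$, $\hat\alpha$, and $\hat\beta=O(\log|\bar{\Theta}_{\varepsilon}|)$ collects the factors $\sqrt r$, $\sqrt d$, the conditioning terms $Q_A\sqrt H/\gamma$ and $1/\gamma^2$, and the coverage/exploration factors $C_{\pi^b,\infty}^\pi/\iota$ (with one factor of $\sqrt{C_{\pi^b,\infty}^\pi}$ from the change of measure and a second from $\sqrt{\hat\lambda}$ inside $\hat\alpha$, since $\hat\lambda\propto C_{\pi^b,\infty}^\pi$), matching the stated bound. A union bound over the $\varepsilon$-optimistic net $\bar{\Theta}_{\varepsilon}$ (\Cref{def:optimistic net}) is what makes all of the above hold uniformly in the data-dependent $\hat\theta$.

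\textbf{Main obstacle.} The hardest part is Step 2: in the offline setting the per-step datasets $\Dc_h$ are fixed and covered only by $\pi^b$, so I cannot actively sample core tests as PSR-UCB does. Controlling the discrepancy between the empirical prediction features $\bar{\hat{\psi}}$ and the true features $\bar{\psi}^*$ therefore rests entirely on the stable-MLE constraint $\Theta_{\min}$ and the coverage $\iota$, and the $\ell_\infty$ ratio $C_{\pi^b,\infty}^\pi$ must be transferred from $\theta^*$ to $\hat\theta$ while keeping the change of measure and the Gram-matrix concentration simultaneously valid. Making these three approximations (feature estimation, model switch, and coverage transfer) mutually compatible, so that no factor blows up, is the crux of the proof.
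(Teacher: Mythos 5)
Your proposal follows essentially the same route as the paper's proof: a stable-MLE guarantee over the behavior data, the prediction-feature UCB on the total variation distance, the pessimism decomposition reducing the suboptimality to the bonus, and the change of measure via $C_{\pi^b,\infty}^{\pi}$ with the $1/\iota$ inflation of feature errors. The only cosmetic difference is in your Step 3, where you propose matrix concentration to relate $U_h$ to its population counterpart, whereas the paper handles the empirical Gram matrix directly via a scalar Azuma--Hoeffding argument on $\|\bar{\psi}^*(\tau_h)\|_{(U_h)^{-1}}$ followed by the elliptical potential lemma (and uses the population matrix $\Lambda_{h-1}$ only inside the separate TV-distance bound, where a direct trace computation suffices).
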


\Cref{thm:offline PSR} states that for any target policy with finite coverage coefficient $C_{\pi^b,\infty}^{\pi}$, the performance degradation of the output policy $\bar{\pi}$ with respect to the target policy $\pi$ is at most $\tilde{O}\big( \frac{C_{\pi^b,\infty}^{\pi}}{\sqrt{K}} \big)$, which is negligible if the size of the offline dataset $K$ is sufficiently large. Thee full proof is in ~\Cref{sec:offline PSR}.

\if{0}
{\it Proof Sketch of \Cref{thm:offline PSR}.} Similar to the online setting, the proof first shows that $V_{\hat{\theta}, \hat{b}}^{\pi}$ serves as a valid upper confidence bound of the total variation distance $\mathtt{D}_{\TV}\left(\Pb_{\hat{\theta}}^{\pi}, \Pb_{\theta^*}^{\pi}\right)$ for any policy $\pi$. We emphasize that the validity is again due to the development of the estimation guarantee of the distributions of future trajectories conditioned on empirical samples $\tau_h\in\Dc_h$. Then, we translate the empirical bonus term $\hat{b}$ to $\sum_h\|\bar{\psi}^*(\tau_h)\|_{U_h^{-1}}$, where $U_h = \lambda I + \sum_{\tau_h\in\Dc_h}\bar{\psi}^*(\tau_h)\bar{\psi}^*(\tau_h)^{\top}$. Since $\bar{\psi}^*$ is a fixed representation function, we leverage the coverage assumption to change the distribution induced by the target policy $\pi$ to the distribution induced by the behavior policy $\pi^b$, and then apply Azuma-Hoeffding's inequality to obtain the final result. %\yl{starting where is your second step? Also anything different from online proof that is worth emphasizing here?}
\fi

%\vspace{-0.05in}
\section{Conclusion}%\vspace{-0.05in}
We studied learning predictive state representations (PSRs) for low-rank sequential decision-making problems. We developed a novel upper confidence bound for the total variation distance of the estimated model and the true model that enables both computationally  {tractable with MLE oracles} and statistically efficient learning for PSRs with only supervised learning oracles. Specifically, we proposed PSR-UCB for online learning for PSRs with last-iterate guarantee, i.e. producing not only a near-optimal policy, but also a near-accurate model. The statistical efficiency was validated by a polynomial sample complexity in terms of the model parameters. In addition, we extended this UCB-type approach to the offline setting and proposed PSR-LCB. Our theoretical result offers an initial perspective on offline PSRs by demonstrating that PSR-LCB outputs a policy that can compete with any policy with finite coverage coefficient.

\if{0}
\subsubsection*{Author Contributions}
If you'd like to, you may include  a section for author contributions as is done
in many journals. This is optional and at the discretion of the authors.

\subsubsection*{Acknowledgments}
Use unnumbered third level headings for the acknowledgments. All
acknowledgments, including those to funding agencies, go at the end of the paper.
\fi

\bibliography{PSR}
\bibliographystyle{apalike}

\newpage
\appendix

\linewidth\hsize {\centering
	{\Large\bfseries Provably Efficient UCB-type Algorithms For Learning PSRs:  Supplementary Materials \par}}

\section{Properties of PSRs}

In this section, we present a few important properties of PSRs, which will be intensively used in the algorithm analysis. 

First, for any model $\theta = \{\phi_h, \Mbf_h(o_h,a_h)\}$, we have the following identity
\begin{align}
    &\Mbf_h(o_h,a_h) \bar{\psi}(\tau_{h-1}) = \Pb_{\theta}(o_h|\tau_{h-1})\bar{\psi}(\tau_h).\label{eqn:Mpsi is conditional probability}
\end{align}

The following proposition is directly adapted from Lemma C.3 in \citet{liu2022optimistic}. Note that $\psi_0$ is known to the agent.
\begin{proposition}[TV-distance $\leq$ Estimation error]\label{prop: TV distance less than estimation error}
Consider two $\gamma$-well-conditioned PSRs $\theta,\hat{\theta}\in\Theta$. We have
    \begin{align*}
        &\mathtt{D}_{\TV} \left( \Pb_{\hat{\theta}}^{\pi} , \Pb_{\theta}^{\pi} \right) \leq     \sum_{h=1}^H \sum_{\tau_H} \left| \hat{\mathbf{m}}(\omega_h)^{\top} \left( \hat{\Mbf}(o_h,a_h) - \Mbf(o_h,a_h) \right) \psi(\tau_{h-1}) \right| \pi(\tau_H),  \\
        &\mathtt{D}_{\TV} \left( \Pb_{\hat{\theta}}^{\pi} , \Pb_{\theta}^{\pi} \right) \leq   \sum_{h=1}^H \sum_{\tau_H}\left| \mathbf{m}(\omega_h)^{\top} \left( \hat{\Mbf}(o_h,a_h) - \Mbf(o_h,a_h) \right) \hat{\psi}(\tau_{h-1}) \right| \pi(\tau_H) .
    \end{align*}
\end{proposition}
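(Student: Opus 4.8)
The plan is to reduce both inequalities to a single purely algebraic telescoping identity on the matrix-product form of the trajectory probability, followed by the triangle inequality; the $\gamma$-well-conditioning hypothesis is not actually needed here (it is used only downstream), so the argument is exact algebra.

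First I would rewrite the total variation distance in a form that isolates the observation probabilities. Since $\Pb_{\theta}^{\pi}(\tau_H) = \pi(\tau_H)\,\Pb_{\theta}(\tau_H^o\mid\tau_H^a)$ and the action weights $\pi(\tau_H)$ are identical for both models,
\[ \mathtt{D}_{\TV}\bigl(\Pb_{\hat{\theta}}^{\pi},\Pb_{\theta}^{\pi}\bigr) = \sum_{\tau_H}\pi(\tau_H)\,\bigl|\Pb_{\hat{\theta}}(\tau_H^o\mid\tau_H^a) - \Pb_{\theta}(\tau_H^o\mid\tau_H^a)\bigr|. \]
By the PSR factorization stated in the preliminaries, each conditional probability is the matrix product $\Pb_{\theta}(\tau_H^o\mid\tau_H^a) = \phi_H^{\top}\Mbf(o_H,a_H)\cdots\Mbf(o_1,a_1)\psi_0$, and likewise for $\hat{\theta}$ with hatted parameters; both models share the known initial vector $\psi_0$ and the terminal vector $\phi_H$, which is fixed by the common core-test structure.

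Next I would telescope the difference of the two products. For the first bound, introduce the hybrid quantities $T_h := \hat{\mathbf{m}}(\omega_h)^{\top}\psi(\tau_h)$ (hatted factors to the left of step $h$, true factors to the right), so that $T_0 = \Pb_{\hat{\theta}}(\tau_H^o\mid\tau_H^a)$ using the shared $\psi_0$, while $T_H = \hat{\phi}_H^{\top}\psi(\tau_H) = \phi_H^{\top}\psi(\tau_H) = \Pb_{\theta}(\tau_H^o\mid\tau_H^a)$ using the shared terminal vector. The crucial step is that the two recursions $\hat{\mathbf{m}}(\omega_{h-1})^{\top} = \hat{\mathbf{m}}(\omega_h)^{\top}\hat{\Mbf}(o_h,a_h)$ and $\psi(\tau_h) = \Mbf(o_h,a_h)\psi(\tau_{h-1})$ collapse every consecutive difference into one matrix-difference term,
\[ T_{h-1} - T_h = \hat{\mathbf{m}}(\omega_h)^{\top}\bigl(\hat{\Mbf}(o_h,a_h) - \Mbf(o_h,a_h)\bigr)\psi(\tau_{h-1}). \]
Summing the telescope $\Pb_{\hat{\theta}}(\tau_H^o\mid\tau_H^a) - \Pb_{\theta}(\tau_H^o\mid\tau_H^a) = T_0 - T_H = \sum_{h=1}^H (T_{h-1}-T_h)$ gives the exact difference; applying the triangle inequality over $h$ and then summing against $\pi(\tau_H)$ over $\tau_H$ yields the first inequality. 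For the second inequality I would run the mirror-image telescoping with true factors on the left and hatted factors on the right, $S_h := \mathbf{m}(\omega_h)^{\top}\hat{\psi}(\tau_h)$, whose consecutive differences are $\mathbf{m}(\omega_h)^{\top}(\hat{\Mbf}(o_h,a_h) - \Mbf(o_h,a_h))\hat{\psi}(\tau_{h-1})$ by the same two recursions.

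The only genuinely delicate point—and hence the main thing to get right—is the boundary bookkeeping: I must check that the telescope endpoints reproduce the two full-trajectory probabilities exactly, which hinges on the two models sharing both $\psi_0$ (for $T_0$, $S_0$) and the terminal vector $\phi_H$ (for $T_H$, $S_H$); were the terminal vectors distinct, a residual term $(\hat{\phi}_H-\phi_H)^{\top}\psi(\tau_H)$ would survive and the stated bound would pick up an extra summand. I would also verify the index ranges of $\mathbf{m}(\omega_h)$ and $\psi(\tau_{h-1})$ so that each factor $\hat{\Mbf}(o_h,a_h)-\Mbf(o_h,a_h)$, $h=1,\dots,H$, appears exactly once in the telescope. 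Everything beyond this is routine.
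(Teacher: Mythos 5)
Your proof is correct and is exactly the standard argument: the paper gives no proof of this proposition, citing instead Lemma~C.3 of \citet{liu2022optimistic}, whose proof is precisely this hybrid matrix-product telescoping ($T_h=\hat{\mbf}(\omega_h)^{\top}\psi(\tau_h)$, resp.\ $S_h=\mbf(\omega_h)^{\top}\hat{\psi}(\tau_h)$) followed by the triangle inequality, after factoring out the common action weights $\pi(\tau_H)$. The boundary concern you raise is resolved by the paper's conventions: $\psi_0$ is assumed known (hence shared), and at step $H$ the only core test is the empty test, so $\phi_H$ is the canonical vector with $\phi_H^{\top}\psi(\tau_H)=\Pb_{\theta}(\tau_H^o\mid\tau_H^a)$ and is common to both models, which makes $T_H$ and $S_H$ equal to the required full-trajectory probabilities with no residual term. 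You are also right that the $\gamma$-well-conditioning hypothesis is not used in this identity; it enters only in the downstream lemmas that bound these estimation-error sums.
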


The next proposition characterizes well-conditioned PSRs \citep{zhong2022posterior}, which is obtained by noting that $\mbf(\mathbf{q}_h^{\ell})^{\top} $ is the $\ell$-th row of $ \Mbf_{h+1}(o_{h+1},a_{h+1})$.
\begin{proposition}\label{prop: well-condition PSR M}
    For well-conditioned  (self-consistent) PSR $\theta$, we have
    \begin{align*}
        \max_{x\in\mathbb{R}^{d_{h-1}}: \|x\|_1=1} \max_{\pi} \sum_{\omega_h}\pi(\omega_h) \left\| \Mbf_{h+1}(o_{h+1},a_{h+1})x \right\|_1 \leq \frac{Q_A}{\gamma}.
    \end{align*}
\end{proposition}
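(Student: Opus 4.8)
The plan is to reduce the policy-averaged quantity to a \emph{pointwise} bound on $\|\Mbf_{h+1}(o_{h+1},a_{h+1})x\|_1$ that holds uniformly over every leading observation-action pair $(o_{h+1},a_{h+1})$ and every $x$ with $\|x\|_1\le 1$, and then note that averaging over $\omega_h$ under any policy cannot inflate a uniform bound. First I would unpack the $\ell_1$-norm coordinate-wise. Letting $e_\ell$ denote the $\ell$-th standard basis vector of $\mathbb{R}^{d_{h+1}}$, the $\ell$-th entry of $\Mbf_{h+1}(o_{h+1},a_{h+1})x$ is $e_\ell^\top\Mbf_{h+1}(o_{h+1},a_{h+1})x$. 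Combining the telescoping identity $\mbf(\omega_{h+1})^\top\Mbf_{h+1}(o_{h+1},a_{h+1})=\mbf(\omega_h)^\top$ for $\omega_h=(o_{h+1},a_{h+1},\omega_{h+1})$ with the fact that the $\mbf$-vector of a core test is a standard basis vector, the hint gives that the $\ell$-th row of $\Mbf_{h+1}(o_{h+1},a_{h+1})$ equals $\mbf(\tilde\omega_h^\ell)^\top$, where $\tilde\omega_h^\ell:=(o_{h+1},a_{h+1},\mathbf{q}_{h+1}^\ell)$ is the step-$h$ future trajectory obtained by prepending $(o_{h+1},a_{h+1})$ to the $\ell$-th core test $\mathbf{q}_{h+1}^\ell$ at step $h+1$. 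Consequently $\|\Mbf_{h+1}(o_{h+1},a_{h+1})x\|_1=\sum_{\ell=1}^{d_{h+1}}|\mbf(\tilde\omega_h^\ell)^\top x|$.

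The main step is to bound this sum by $Q_A/\gamma$ via \Cref{assmp:well-condition}. The crucial observation is that the action sequence of $\tilde\omega_h^\ell$ is $(a_{h+1},\mathbf{a}_{h+1}^\ell)$ with $\mathbf{a}_{h+1}^\ell\in\mathcal{Q}_{h+1}^A$, so it ranges over at most $|\mathcal{Q}_{h+1}^A|\le Q_A$ distinct values as $\ell$ varies. I would therefore group the sum by distinct core action sequences: for each $\mathbf{a}\in\mathcal{Q}_{h+1}^A$, instantiate \Cref{assmp:well-condition} with the open-loop policy $\pi'$ that deterministically plays $(a_{h+1},\mathbf{a})$ irrespective of the realized observations, which yields $\sum_{\omega_h:\,\omega_h^a=(a_{h+1},\mathbf{a})}|\mbf(\omega_h)^\top x|\le 1/\gamma$. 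The extended tests $\{\tilde\omega_h^\ell:\mathbf{a}_{h+1}^\ell=\mathbf{a}\}$ all share this action sequence and carry distinct observation sequences $(o_{h+1},\mathbf{o}_{h+1}^\ell)$ (distinct core tests with a common action sequence must differ in their observations), so their contribution is a sub-collection of the nonnegative terms on the left-hand side and is therefore $\le 1/\gamma$. Summing over the at most $Q_A$ groups gives $\sum_\ell|\mbf(\tilde\omega_h^\ell)^\top x|\le Q_A/\gamma$, i.e.\ the pointwise bound $\|\Mbf_{h+1}(o_{h+1},a_{h+1})x\|_1\le Q_A/\gamma$.

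To conclude, since this bound is uniform in $(o_{h+1},a_{h+1})$ and the summand in the proposition depends on $\omega_h$ only through its leading observation-action pair, the policy average cannot exceed it, giving $\max_{x:\|x\|_1=1}\max_\pi\sum_{\omega_h}\pi(\omega_h)\|\Mbf_{h+1}(o_{h+1},a_{h+1})x\|_1\le Q_A/\gamma$. I expect the only real obstacle to be organizational rather than analytical: faithfully translating the rows of $\Mbf_{h+1}$ into $\mbf$-values of one-step-extended core tests via the hint, and then tracking that the factor $Q_A$ emerges precisely because repeated action sequences among the core tests are collapsed into $\mathcal{Q}_{h+1}^A$, so that each of the $\le Q_A$ deterministic open-loop policies contributes exactly one $1/\gamma$ term through \Cref{assmp:well-condition}.
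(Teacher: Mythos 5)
Your identification of the rows of $\Mbf_{h+1}(o_{h+1},a_{h+1})$ as the $\mbf$-vectors of the one-step-extended core tests, and your plan of grouping the $d_{h+1}$ coordinates by the at most $Q_A$ distinct core action sequences and spending one application of \Cref{assmp:well-condition} per group, are exactly the right ingredients (the paper itself only gives a one-line pointer to this row identity and to \citet{zhong2022posterior}). The gap is in your final reduction. You establish the pointwise bound $\|\Mbf_{h+1}(o_{h+1},a_{h+1})x\|_1\leq Q_A/\gamma$ for each \emph{fixed} $(o_{h+1},a_{h+1})$ and then assert that a policy average of a uniformly bounded quantity cannot exceed the uniform bound. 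But in this paper $\pi(\omega_h)$ denotes only the product of the \emph{action} probabilities along $\omega_h$ (see the definition of $\pi(\tau_h)$ in \Cref{sec:pre}); observations are summed with unit weight. Hence the weights do not sum to one: even restricted to the leading pair, $\sum_{o_{h+1}}\sum_{a_{h+1}}\pi(a_{h+1}|o_{h+1})=|\mathcal{O}|$, so your last step only delivers $|\mathcal{O}|\,Q_A/\gamma$. This loss is not cosmetic — the proposition is invoked precisely in the form $\sum_{o_h,a_h}\pi(a_h|o_h,\tau_{h-1})\|\Mbf_h(o_h,a_h)x\|_1\leq (Q_A/\gamma)\|x\|_1$ in the proof of \Cref{lemma:Estimation error of M less than feature score} (and its offline analogue), and an extra $|\mathcal{O}|$ there would propagate into the sample complexity.

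The repair is to not fix $o_{h+1}$ before invoking the assumption. For each $\mathbf{a}\in\mathcal{Q}_{h+1}^A$, apply \Cref{assmp:well-condition} at step $h$ with the composite policy that samples $a_{h+1}\sim\pi(\cdot\,|\,o_{h+1})$ at step $h+1$ and then plays $\mathbf{a}$ deterministically. Its weight on $\omega_h=(o_{h+1},a_{h+1},\omega_{h+1})$ is $\pi(a_{h+1}|o_{h+1})\mathbbm{1}\{\omega_{h+1}^a=\mathbf{a}\}$, so the assumption yields
\begin{align*}
\sum_{o_{h+1},a_{h+1}}\pi(a_{h+1}|o_{h+1})\sum_{\omega_{h+1}^o}\left|\mbf\big((o_{h+1},a_{h+1},\omega_{h+1}^o,\mathbf{a})\big)^{\top}x\right|\leq\frac{1}{\gamma},
\end{align*}
and the extended tests $\{(o_{h+1},a_{h+1},\mathbf{q}_{h+1}^{\ell}):\mathbf{a}_{h+1}^{\ell}=\mathbf{a}\}$ form a nonnegative sub-collection of the left-hand side \emph{simultaneously for all} $(o_{h+1},a_{h+1})$. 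Summing over the at most $Q_A$ groups then gives the bound with the policy weights already in place. (The same normalization issue is arguably present in the proposition's literal statement, whose summand depends on $\omega_h$ only through $(o_{h+1},a_{h+1})$; the form actually used downstream, and the one this argument proves, is the sum over the leading observation-action pair weighted by $\pi(a_{h+1}|o_{h+1})$.)
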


The following proposition characterizes the log-cardinality of the minimal optimistic net of the rank-$r$ PSRs. We note that this notion is closely related to the bracketing number, a typical complexity measure in the maximum likelihood estimation (MLE) anslysis \citep{geer2000empirical}. The proof follows directly from Theorem C.9 in \citet{liu2022optimistic}.
\begin{proposition}[Optimistic net for tabular PSRs]\label{prop: optimistic net of tabular PSR}
    Given any $\epsilon$, there exists a finite parameter space $\bar{\Theta}_{\epsilon}$ satisfying the following property: for any $\theta\in\Theta$, we can find a $\bar{\theta}\in\bar{\Theta}_{\epsilon}$ associated with a measure  $    \Pb_{\bar{\theta}}$ such that
    \begin{align*}
        &\forall \pi,h,~~ \Pb_{\bar{\theta}}^{\pi}(\tau_h) \geq \Pb_{\theta}^{\pi}(\tau_h),\\
        &\forall \pi, h,~~  \sum_{\tau_h}\left| \Pb_{\bar{\theta}}^{\pi}(\tau_h) -  \Pb_{\theta}^{\pi}(\tau_h)\right|\leq \epsilon.
    \end{align*}
    Moreover, $\log|\bar{\Theta}_{\epsilon}|\leq 2r^2OAH^2\log\frac{OA}{\epsilon}$.
\end{proposition}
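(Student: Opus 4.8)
The plan is to reduce the statement to a bracketing-number bound for the class of rank-$r$ tabular PSRs and then exhibit an explicit bracket construction, taking the \emph{upper} brackets as the members of $\bar{\Theta}_{\epsilon}$. Recall from Section 3 that for any $\theta=\{\phi_h,\Mbf_h\}$ the trajectory probability factorizes as $\Pb_{\theta}(o_{1:h}\mid a_{1:h})=\phi_h^{\top}\Mbf_h(o_h,a_h)\cdots\Mbf_1(o_1,a_1)\psi_0$, so $\Pb_{\theta}^{\pi}(\tau_h)$ is a fixed multilinear function of the operators $\{\Mbf_h(o,a)\}$ and the vectors $\{\phi_h\}$. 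Since the setting is tabular with rank $r$, the essential free parameters are the $H\,|\mathcal{O}||\mathcal{A}|$ operator matrices, each carrying at most $r\times r$ entries, together with the $H$ vectors $\phi_h\in\mathbb{R}^r$, giving on the order of $r^2\,|\mathcal{O}||\mathcal{A}|\,H$ real parameters in total.

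First I would establish a perturbation bound showing that the map from operator entries to trajectory distributions is Lipschitz. Using the same telescoping decomposition that underlies Proposition 1, the difference $\sum_{\tau_h}\lvert\Pb_{\theta}^{\pi}(\tau_h)-\Pb_{\theta'}^{\pi}(\tau_h)\rvert$ splits into a sum over the $H$ steps in which a single operator difference $\Mbf_h(o_h,a_h)-\Mbf'_h(o_h,a_h)$ is flanked by the partial-product factors of the two models. Because the net must cover the entire parameter space $\Theta$ and not merely the well-conditioned models of Assumption 1, these flanking partial products cannot be treated as bounded by an absolute constant, so the effective Lipschitz constant of the $H$-fold product grows geometrically in $H$. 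Consequently the per-entry resolution $\rho$ must be chosen exponentially small in $H$, on the order of $(\epsilon/\mathrm{poly})^{H}$, to force the total variation below $\epsilon$; this is exactly what produces the extra factor of $H$ once one takes logarithms in the counting step.

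Next I would upgrade the covering to an \emph{optimistic} one. Since the definition of the optimistic net only requires $\bar{\Pb}$ to be a bounded dominating function rather than a normalized measure, I would inflate each net point at the operator level: replacing every $\Mbf_h(o,a)$ by $(1+\epsilon_0)\Mbf_h(o,a)$ multiplies each trajectory probability by $(1+\epsilon_0)^{h}$, which preserves nonnegativity and hence dominates the original probabilities pointwise, while contributing only $(1+\epsilon_0)^{H}-1\approx H\epsilon_0$ to the total variation. Choosing $\epsilon_0=\Theta(\epsilon/H)$, slightly over-inflated so that one inflated net point dominates the whole $\rho$-ball around it, the set of inflated net points serves as $\bar{\Theta}_{\epsilon}$ and meets both requirements. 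Counting then gives $\log|\bar{\Theta}_{\epsilon}|\lesssim(\text{number of parameters})\times\log(1/\rho)\lesssim r^2|\mathcal{O}||\mathcal{A}|H\cdot H\log\tfrac{|\mathcal{O}||\mathcal{A}|}{\epsilon}$, matching the claimed $2r^2 OAH^2\log\tfrac{OA}{\epsilon}$.

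The main obstacle I expect is reconciling optimism with closeness while keeping the net finite: a single net point must dominate every nearby $\theta$ \emph{pointwise} over a trajectory space of exponential size $(|\mathcal{O}||\mathcal{A}|)^{h}$, yet the inflation used to guarantee this domination must not blow up the total variation beyond $\epsilon$. The resolution is precisely the multiplicative structure of the factorization $\phi_h^{\top}\Mbf_h\cdots\Mbf_1\psi_0$: both the pointwise domination and the TV increment propagate cleanly through the $H$-fold product, so a uniform operator-level inflation handles the exponentially many trajectories at once. The price of this same product structure—its geometric sensitivity in $H$ absent a conditioning assumption—is what pins the resolution at an exponential-in-$H$ scale and thereby accounts for the $H^2$ in the final log-cardinality.
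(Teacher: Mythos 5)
Your discretization-and-counting skeleton matches the route the paper takes (the paper defers to Theorem C.9 of \citet{liu2022optimistic}; its construction is an $\ell_\infty$-grid on the $r^2|\mathcal{O}||\mathcal{A}|H$ operator entries at resolution exponentially small in $H$, which is exactly where your $H^2$ comes from), and your Lipschitz/telescoping argument and the final count are sound. The genuine gap is in the mechanism you propose for optimism. Replacing each $\Mbf_h(o,a)$ by $(1+\epsilon_0)\Mbf_h(o,a)$ gives $\Pb_{\bar{\theta}}^{\pi}(\tau_h)=(1+\epsilon_0)^h\,\Pb_{\theta}^{\pi}(\tau_h)$, which indeed dominates the net point $\theta$ itself (a nonnegative number scaled by a factor $\geq 1$). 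But the net point must dominate \emph{every} model in its $\rho$-ball, and the perturbation $\Pb_{\theta'}^{\pi}(\tau_h)-\Pb_{\theta}^{\pi}(\tau_h)$ is additive, of order $L\rho$, and can be strictly positive on trajectories where $\Pb_{\theta}^{\pi}(\tau_h)=0$ (deterministic emissions, unreachable observations). A multiplicative inflation supplies zero slack at exactly those trajectories: $(1+\epsilon_0)^h\cdot 0=0<\Pb_{\theta'}^{\pi}(\tau_h)$. So ``slightly over-inflated so that one inflated net point dominates the whole $\rho$-ball'' does not hold, no matter how $\epsilon_0$ is chosen.

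The fix is to make the bracket additive rather than multiplicative. Since \Cref{def:optimistic net} only requires $\Pb_{\bar{\theta}}$ to be a bounded dominating function, not a PSR, you can take for each net point $\theta_i$ the pointwise supremum $\Pb_{\bar{\theta}_i}^{\pi}(\tau_h):=\sup_{\theta'\in B(\theta_i,\rho)}\Pb_{\theta'}^{\pi}(\tau_h)$; domination over the ball is then automatic, and your Lipschitz bound gives $\sum_{\tau_h}\bigl|\Pb_{\bar{\theta}_i}^{\pi}(\tau_h)-\Pb_{\theta'}^{\pi}(\tau_h)\bigr|\leq (|\mathcal{O}||\mathcal{A}|)^{h}\cdot L\rho$, which your exponentially small choice of $\rho$ already controls. (The paper's own sketch achieves the same thing by rounding each operator entry \emph{up} on the grid.) With this replacement the rest of your argument, including the cardinality bound, goes through unchanged.
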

%\begin{proof}  
%This can be  directly obtained from Theorem C.9 in \citet{liu2022optimistic}.

\if{0}
\begin{align}
    &\Pb_{\theta}^{\pi}(\tau_h) = \mathbf{v}_{h}^{\top}\mathbf{B}(o_{h},a_{h})\cdots\mathbf{B}(o_1,a_1)\psi_0 \pi(\tau_h)
\end{align}

Now let $\bar{\Theta}$ be a $\epsilon_0$ cover of the the parameters with respect to the $\ell_{\infty}$-norm, since all parameters are bounded.

\begin{align}
    \big[\bar{\mathbf{B}}(o_h,a_h) \big]_{i,j} = \bigg\lceil \frac{\big[ \mathbf{B}(o_h,a_h) \big]_{i,j}}{\epsilon_0} \bigg\rceil \epsilon_0.
\end{align}

Then, we have, for any $\theta$, there exists a $\bar{\theta}$ such that

\begin{align}
    &\forall h,\tau_h,  0\leq \Pb_{ \theta}^{\pi}(\tau_h) - \Pb_{\bar{\theta}}^{\pi}(\tau_h)   \leq \epsilon_0 (OA)^{H}
\end{align}

Let $\epsilon_0 = \frac{\epsilon}{(OA)^{H}}$, we have 
\begin{align}
        &\forall \pi, h,~~  \sum_{\tau_h}\left| \Pb_{\bar{\theta}}^{\pi}(\tau_h) -  \Pb_{\theta}^{\pi}(\tau_h)\right|\leq \epsilon
    \end{align}

The covering number is 
\begin{align}
    |\bar{\Theta}| = \left(\frac{(OA)^H}{\epsilon_0}\right)^{r^2OAH} = \left(\frac{OA}{\epsilon}\right)^{2r^2OAH^2}
\end{align}

\fi
%\end{proof}

\section{General MLE Analysis}\label{sec:general MLE analysis}

In this section, we present four general propositions that characterize the performance of MLE.

We start with a proposition that states that the log-likelihood of the true model is relatively high compared to any model. 
\begin{proposition}\label{proposition: log likelihood of true model is large}
    Fix $\varepsilon<\frac{1}{KH}$.  With probability at least $1-\delta$, for any $\bar{\theta}\in\bar{\Theta}_{\varepsilon}$ and any $k\in[K]$, the following two inequalities hold:
    \[
    \begin{aligned}
        &\forall \bar{\theta}\in\bar{\Theta}_{\varepsilon},~~  \sum_h\sum_{(\tau_h,\pi)\in\Dc_h}\log \Pb_{\bar{\theta}}^{\pi}(\tau_h)  - 3\log\frac{K|\bar{\Theta}_{\varepsilon}|}{\delta}\leq \sum_{h}\sum_{(\tau_h,\pi)\in\Dc_h^k} \log \Pb_{\theta^*}^{\pi}(\tau_h), \\
        &\forall \bar{\theta}\in\bar{\Theta}_{\varepsilon},~~ \sum_{(\tau_H,\pi)\in\Dc^k}\log \Pb_{\bar{\theta}}^{\pi}(\tau_H)  - 3\log\frac{K|\bar{\Theta}_{\varepsilon}|}{\delta}\leq \sum_{(\tau_H,\pi)\in\Dc}\log \Pb_{\theta^*}^{\pi}(\tau_H).   
    \end{aligned}
    \]
\end{proposition}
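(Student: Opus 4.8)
The plan is to derive both inequalities from a single exponential-supermartingale argument, applied to the adaptively collected data, followed by a union bound over the finite net $\bar\Theta_\varepsilon$ and over $k\in[K]$. Index the collected samples by $i=1,\dots,N$ with $N\le KH$, where sample $i$ is a trajectory drawn under a policy $\pi_i$ that is measurable with respect to the $\sigma$-field $\mathcal{F}_{i-1}$ generated by all earlier data (recall $\pi^{k-1}$ is built from $\Dc^{k-1}$, so the samples are adapted but \emph{not} i.i.d.). Write $x_i$ for the observed trajectory used in the relevant likelihood: for the second inequality $x_i=\tau_H^i$ and the likelihood is the full-trajectory $\Pb_\theta^{\pi_i}(\tau_H^i)$, while for the first inequality $x_i=\tau_{h_i}^i$ is the partial trajectory at the level $h_i$ attached to sample $i$ and the likelihood is the marginal $\Pb_\theta^{\pi_i}(\tau_{h_i}^i)$. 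The two claims are then the same statement for two choices of $x_i$.

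Fix $\bar\theta\in\bar\Theta_\varepsilon$ and define the likelihood-ratio product $M_t=\prod_{i\le t}\sqrt{\Pb_{\bar\theta}^{\pi_i}(x_i)/\Pb_{\theta^*}^{\pi_i}(x_i)}$. Conditioning on $\mathcal{F}_{t-1}$, the sample $x_t$ is drawn from $\Pb_{\theta^*}^{\pi_t}$, so
\[
\Eb\!\left[\tfrac{M_t}{M_{t-1}}\,\middle|\,\mathcal{F}_{t-1}\right]=\sum_{x}\sqrt{\Pb_{\bar\theta}^{\pi_t}(x)\,\Pb_{\theta^*}^{\pi_t}(x)}\le\sqrt{\sum_{x}\Pb_{\bar\theta}^{\pi_t}(x)}\le\sqrt{1+\varepsilon}\le e^{\varepsilon/2},
\]
where the first bound is Cauchy–Schwarz and the key middle bound is optimism: by \Cref{prop: optimistic net of tabular PSR} each net member $\bar\theta$ dominates an actual model $\theta$ with $\ell_1$ gap at most $\varepsilon$ at every level and for every policy, hence $\sum_x\Pb_{\bar\theta}^{\pi}(x)\le 1+\varepsilon$. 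Thus $\Eb[M_t\mid\mathcal{F}_{t-1}]\le e^{\varepsilon/2}M_{t-1}$, so $M_t e^{-t\varepsilon/2}$ is a supermartingale and $\Eb[M_N]\le e^{N\varepsilon/2}$.

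I then apply a Chernoff bound: $\Pb(\log M_N>u)\le e^{-u}\Eb[M_N]\le e^{-u+N\varepsilon/2}$. Setting the right side equal to $\delta/(K|\bar\Theta_\varepsilon|)$ gives $u=N\varepsilon/2+\log\frac{K|\bar\Theta_\varepsilon|}{\delta}$, so with probability at least $1-\delta/(K|\bar\Theta_\varepsilon|)$ one has $\sum_{i\le N}\log\frac{\Pb_{\bar\theta}^{\pi_i}(x_i)}{\Pb_{\theta^*}^{\pi_i}(x_i)}=2\log M_N\le N\varepsilon+2\log\frac{K|\bar\Theta_\varepsilon|}{\delta}$. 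Since $\varepsilon<\frac{1}{KH}$ and $N\le KH$, the drift satisfies $N\varepsilon\le 1\le\log\frac{K|\bar\Theta_\varepsilon|}{\delta}$, so the whole bound collapses to $3\log\frac{K|\bar\Theta_\varepsilon|}{\delta}$, which is exactly the claimed slack; rearranging yields the inequality for this fixed $\bar\theta$ and this fixed horizon $N$ (equivalently, this fixed $k$). A union bound over the $|\bar\Theta_\varepsilon|$ net members and the $K$ values of $k$ costs total probability $K|\bar\Theta_\varepsilon|\cdot\delta/(K|\bar\Theta_\varepsilon|)=\delta$, giving the uniform conclusion for both inequalities simultaneously.

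The main obstacle is the adaptivity of the data-collection policies $\pi_i$, which rules out an i.i.d.\ Chernoff bound and forces the supermartingale formulation with the square-root (Hellinger) test function. The two supporting technical points are the conditional-expectation computation above—where the square root converts the ratio into a $\sum\sqrt{\Pb_{\bar\theta}\Pb_{\theta^*}}$ that Cauchy–Schwarz tames—and the use of optimism to bound $\sum_x\Pb_{\bar\theta}^{\pi}(x)$ by $1+\varepsilon$ rather than exactly $1$, which is precisely what makes the deterministic drift $e^{\varepsilon/2}$ small enough to be absorbed into the logarithmic term.
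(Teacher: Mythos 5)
Your proof is correct and follows essentially the same route as the paper: an exponential (super)martingale built from the likelihood ratio of $\bar{\theta}$ against $\theta^*$, with the one-step conditional expectation controlled by $\sum_x \Pb_{\bar{\theta}}^{\pi}(x)\leq 1+\varepsilon$ via the optimistic-net property, followed by a Chernoff bound and a union bound over $\bar{\Theta}_{\varepsilon}$ and $k\in[K]$. The only deviation is your use of the square-root (Hellinger-type) test function $M_t$ where the paper uses the plain ratio; this needs an extra Cauchy--Schwarz step and costs a factor of $2$ when converting $\log M_N$ back to the log-likelihood sum, but that is absorbed by the $3\log\frac{K|\bar{\Theta}_{\varepsilon}|}{\delta}$ slack exactly as you argue.
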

%\yl{It seems that the big left braket is not necessary}

\begin{proof} We start with the first inequality. Suppose the data in $\Dc_h^k$ is indexed by $t$. Then,
\begin{align*}
    \Eb&\left[ \exp\left(\sum_h\sum_{(\tau_h,\pi)\in\Dc_h^k}\log\frac{ \Pb_{\bar{\theta}}^{\pi}( \tau_h ) }{\Pb_{\theta^*}^{\pi}(\tau_h )} \right)\right]\\
    & = \Eb\left[ \prod_{t\leq k}\prod_h \frac{ \Pb_{\bar{\theta}}^{\pi^t}(\tau_h^t) }{ \Pb_{ \theta^*}^{\pi^t}(\tau_h^t) } \right] \\
    & = \Eb\left[ \prod_{t\leq k-1}\prod_h \frac{ \Pb_{\bar{\theta}}^{\pi^t}(\tau_h^t) }{ \Pb_{ \theta^*}^{\pi^t}(\tau_h^t) } \Eb\left[ \frac{ \Pb_{\bar{\theta}}^{\pi^{k}}(\tau_h^{k}) }{ \Pb_{ \theta^*}^{\pi^{k}}(\tau_h^{k}) } \right] \right] \\
    & = \Eb\left[ \prod_{t\leq k-1}\prod_h \frac{ \Pb_{\bar{\theta}}^{\pi^t}(\tau_h^t) }{ \Pb_{ \theta^*}^{\pi^t}(\tau_h^t) } \prod_h\sum_{\tau_h}   \Pb_{\bar{\theta}}^{\pi^{k}}(\tau_h )   \right] \\
    &\overset{(a)}\leq \left(1 + \varepsilon\right)^H \Eb\left[ \prod_{t\leq k-1}\prod_h \frac{ \Pb_{\bar{\theta}}^{\pi^t}(\tau_h^t) }{ \Pb_{ \theta^*}^{\pi^t}(\tau_h^t) }   \right] \\
    &\leq \left(1+\varepsilon\right)^{KH} \\
    &\overset{(b)}\leq e,
\end{align*}
where $(a)$ follows because  $\sum_{\tau_H}|\Pb_{\bar{\theta}}^{\pi}(\tau_h) - \Pb_{\theta}^{\pi}(\tau_h) | \leq \varepsilon$, and $(b)$ follows because $\varepsilon\leq \frac{1}{KH}$.

By the Chernoff bound and the union bound over $\bar{\Theta}_{\epsilon}$ and $k\in[K]$, with probability at least $1-\delta$, we have,
\begin{align*}
    \forall k\in[K], \bar{\theta}\in\bar{\Theta}_{\epsilon},~~ \sum_h\sum_{(\tau_h,\pi)\in\Dc_h^k}\log\frac{ \Pb_{\bar{\theta}}^{\pi}( \tau_h ) }{\Pb_{\theta^*}^{\pi}(\tau_h )} \leq 3\log\frac{K\left|\bar{\Theta}_{\epsilon}\right|}{\delta},
\end{align*}
which  yields the first result of this proposition.

To show the second inequality, we follow an argument similar to that for the first inequality. We have
\begin{align*}
    \Eb  \left[ \exp\left( \sum_{(\tau_H,\pi)\in\Dc^k} \log\frac{ \Pb_{ \theta }^{\pi}(\tau_H) }{\Pb_{ \theta^* }^{\pi}(\tau_H)}  \right)\right] &\leq \Eb\left[ \exp\left( \sum_{(\tau_H,\pi)\in\Dc^k} \log \frac{ \Pb_{ \bar{\theta} }^{\pi}(\tau_H) }{\Pb_{ \theta^* }^{\pi}(\tau_H)}  \right)\right]  \\
    &\overset{(a)}\leq (1+\varepsilon)^{KH}\leq e,
\end{align*}
where $(a)$ follows from the tower rule of the expectation and because $\sum_{\tau_H}\Pb_{\bar{\theta}}^{\pi}(\tau_H)\leq \varepsilon$.

Thus, with probability at least $1-\delta$, for any $k\in[K]$ and any $\bar{\theta}\in\Theta$, the following inequality holds
\begin{align*}
    \sum_{(\tau_H,\pi)\in\Dc^k} \log\frac{ \Pb_{ \theta }^{\pi}(\tau_H) }{\Pb_{ \theta^* }^{\pi}(\tau_H)} \leq 3\log\frac{K|\bar{\Theta}_{\epsilon}|}{\delta},
\end{align*}
which completes the proof.
\end{proof}

The following proposition upper bounds the total variation distance between the conditional distributions over the future trajectory conditioned on the empirical history trajectories. This proposition is crucial to ensure that the  model estimated by PSR-UCB is accurate on those sample trajectories.

\begin{proposition}\label{proposiiton: empirical distance less than log likelihood difference}
Fix $p_{\min}$ and $\varepsilon\leq \frac{p_{\min}}{KH}$.  
Let $\Theta_{\min}^k = \{\theta: \forall h, (\tau_h,\pi)\in\Dc_h^k, ~~ \Pb_{\theta}^{\pi}(\tau_h)  \geq p_{\min}\}$. Consider the following event
    \begin{align*}
        \Ec_{\omega} &= \left\{ \forall k\in[K], \forall \theta\in\Theta_{\min}^k,  ~~  \sum_h \sum_{ (\tau_h,\pi) \in\Dc_h^k } \mathtt{D}_{\TV}^2 \left( \Pb_{\theta}^{\pi}(\omega_h|\tau_h ), \Pb_{\theta^*}^{\pi}(\omega_h|\tau_h ) \right) \right. \\
        &\hspace{4cm}\quad \left. \leq  6\sum_h\sum_{ (\tau_H,\pi)\in\Dc_h^k }\log\frac{\Pb_{\theta^*}^{ \pi  }(\tau_H)}{\Pb_{ \theta}^{ \pi  }(\tau_H)} + 31\log\frac{K\left|\bar{\Theta}_{\varepsilon}\right|}{\delta}  \right\}.
    \end{align*}
Then, $\Pb\left(\Ec_{\omega} \right) \geq 1- \delta.$
\end{proposition}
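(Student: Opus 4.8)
The plan is to pass from the conditional total-variation distances to conditional Hellinger distances, prove a supermartingale-based concentration bound for the Hellinger distances evaluated at the optimistic-net elements $\bar\theta\in\bar\Theta_\varepsilon$, and then transfer the resulting guarantee to an arbitrary $\theta\in\Theta_{\min}^k$. First I would reduce to Hellinger: with the paper's normalization, Cauchy-Schwarz gives $\mathtt{D}_{\TV}^2(\Pb,\Qb)\le 8\,\mathtt{D}_{\mathtt{H}}^2(\Pb,\Qb)$ for any (sub)probability measures, so it suffices to control $\sum_h\sum_{(\tau_h,\pi)\in\Dc_h^k}\mathtt{D}_{\mathtt{H}}^2(\Pb_{\theta}^{\pi}(\omega_h|\tau_h),\Pb_{\theta^*}^{\pi}(\omega_h|\tau_h))$. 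The central object is the per-sample conditional likelihood ratio $\sqrt{\Pb_{\bar\theta}^{\pi}(\omega_h|\tau_h)/\Pb_{\theta^*}^{\pi}(\omega_h|\tau_h)}$. The key device is a two-level filtration: for the sample collected at stage $(k,h)$, let $\mathcal{G}_{k,h}$ collect all previously gathered data together with the \emph{realized} history $\tau_h^{k,h}$, so that conditioned on $\mathcal{G}_{k,h}$ only the future $\omega_h^{k,h}\sim\Pb_{\theta^*}^{\pi}(\cdot|\tau_h^{k,h})$ remains random. Conditioning on $\tau_h$ in this way makes the realized conditional Hellinger $\mathtt{D}_{\mathtt{H}}^2(\Pb_{\bar\theta}^{\pi}(\cdot|\tau_h),\Pb_{\theta^*}^{\pi}(\cdot|\tau_h))$ measurable, which is exactly the quantity (at the sampled $\tau_h$) appearing in the statement.

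Next, using $\Eb_{\omega_h\sim\Pb_{\theta^*}^{\pi}(\cdot|\tau_h)}\big[\sqrt{\Pb_{\bar\theta}^{\pi}(\omega_h|\tau_h)/\Pb_{\theta^*}^{\pi}(\omega_h|\tau_h)}\big]=\sum_{\omega_h}\sqrt{\Pb_{\theta^*}^{\pi}(\omega_h|\tau_h)\Pb_{\bar\theta}^{\pi}(\omega_h|\tau_h)}\le 1-\mathtt{D}_{\mathtt{H}}^2(\Pb_{\bar\theta}^{\pi}(\cdot|\tau_h),\Pb_{\theta^*}^{\pi}(\cdot|\tau_h))\le\exp(-\mathtt{D}_{\mathtt{H}}^2)$ (the possible super-normalization of $\Pb_{\bar\theta}^{\pi}(\cdot|\tau_h)$ only costs an $O(\varepsilon)$ factor, absorbed later), the product over all collected samples of $\exp(-\tfrac12\log\frac{\Pb_{\theta^*}^{\pi}(\omega_h|\tau_h)}{\Pb_{\bar\theta}^{\pi}(\omega_h|\tau_h)}+\mathtt{D}_{\mathtt{H}}^2)$ is a supermartingale of mean at most one with respect to $\{\mathcal{G}_{k,h}\}$. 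A Chernoff bound together with a union bound over $\bar\theta\in\bar\Theta_\varepsilon$ and the $K$ indices then yields, with probability $1-\delta$, for every $k$ and every $\bar\theta$,
\[
\sum_h\sum_{(\tau_h,\pi)\in\Dc_h^k}\mathtt{D}_{\mathtt{H}}^2\!\left(\Pb_{\bar\theta}^{\pi}(\cdot|\tau_h),\Pb_{\theta^*}^{\pi}(\cdot|\tau_h)\right)\le \tfrac12\sum_h\sum_{(\tau_H,\pi)\in\Dc_h^k}\log\frac{\Pb_{\theta^*}^{\pi}(\omega_h|\tau_h)}{\Pb_{\bar\theta}^{\pi}(\omega_h|\tau_h)}+\log\frac{K|\bar\Theta_\varepsilon|}{\delta}.
\]

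Then I would decompose the conditional log-ratio as $\log\frac{\Pb_{\theta^*}^{\pi}(\omega_h|\tau_h)}{\Pb_{\bar\theta}^{\pi}(\omega_h|\tau_h)}=\log\frac{\Pb_{\theta^*}^{\pi}(\tau_H)}{\Pb_{\bar\theta}^{\pi}(\tau_H)}+\log\frac{\Pb_{\bar\theta}^{\pi}(\tau_h)}{\Pb_{\theta^*}^{\pi}(\tau_h)}$ and treat the two pieces separately. For the first, optimism of the net ($\Pb_{\bar\theta}^{\pi}(\tau_H)\ge\Pb_{\theta}^{\pi}(\tau_H)$) gives $\log\frac{\Pb_{\theta^*}^{\pi}(\tau_H)}{\Pb_{\bar\theta}^{\pi}(\tau_H)}\le\log\frac{\Pb_{\theta^*}^{\pi}(\tau_H)}{\Pb_{\theta}^{\pi}(\tau_H)}$, which reproduces the leading term of the claim. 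The second piece, a history-marginal correction, is precisely the quantity controlled by \Cref{proposition: log likelihood of true model is large}, which bounds $\sum_h\sum_{(\tau_h,\pi)\in\Dc_h^k}\log\frac{\Pb_{\bar\theta}^{\pi}(\tau_h)}{\Pb_{\theta^*}^{\pi}(\tau_h)}$ by $3\log\frac{K|\bar\Theta_\varepsilon|}{\delta}$; intersecting the two high-probability events keeps the overall failure probability at $\delta$ after a constant rescaling.

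Finally I would transfer from $\bar\theta$ to the actual $\theta\in\Theta_{\min}^k$. Since $\Pb_{\theta}^{\pi}(\tau_h)\ge p_{\min}$ for every sampled $\tau_h$ and $\varepsilon\le p_{\min}/(KH)$, the joint bound $\sum_{\tau_H}|\Pb_{\bar\theta}^{\pi}(\tau_H)-\Pb_{\theta}^{\pi}(\tau_H)|\le\varepsilon$ forces the conditionals $\Pb_{\bar\theta}^{\pi}(\cdot|\tau_h)$ and $\Pb_{\theta}^{\pi}(\cdot|\tau_h)$ to agree to within $O(\varepsilon/p_{\min})=O(1/(KH))$ in TV, hence in Hellinger; applying the Hellinger triangle inequality and squaring, then summing these $O(1/(KH))$ corrections over the at most $KH$ samples, contributes only an $O(1)$ additive constant. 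Tracking the absolute constants through the TV-to-Hellinger step, the factor $\tfrac12$ from the supermartingale, the $\tfrac52\log(K|\bar\Theta_\varepsilon|/\delta)$ accumulated above, and these $O(1)$ corrections yields the stated $6$ and $31$. The step I expect to be the main obstacle is the supermartingale construction with the \emph{realized} (rather than expected) conditional Hellinger: naively that quantity is not predictable with respect to the pre-sample filtration, and the resolution is exactly the refinement of the filtration to condition on the history portion $\tau_h$ of the current sample, leaving only $\omega_h$ random, so that the realized conditional Hellinger enters the exponent cleanly; the secondary subtlety is verifying that the net's conditional measures are self-normalized up to $O(\varepsilon)$.
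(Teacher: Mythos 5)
Your proposal is correct and follows essentially the same route as the paper's proof: discretize via the optimistic net, condition on the realized history $\tau_h$ so that only $\omega_h$ is random in the supermartingale/Chernoff argument on square-root conditional likelihood ratios, decompose the conditional log-ratio into the joint ratio plus the history-marginal correction controlled by \Cref{proposition: log likelihood of true model is large}, and transfer between $\theta$ and $\bar\theta$ using $\Theta_{\min}^k$ together with $\varepsilon\le p_{\min}/(KH)$. The only cosmetic differences are the order in which the $\theta\leftrightarrow\bar\theta$ transfer is performed and whether the realized Hellinger or $-\log\Eb\sqrt{\cdot}$ sits in the exponent, which are equivalent up to constants.
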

\begin{proof}
We start with a general upper bound on the total variation distance between two conditional distributions. Note that for any $\theta,\theta'\in\Theta\cup\bar{\Theta}_{\epsilon}$ and fixed $(\tau_h,\pi)$, we have
\begin{align*}
    \mathtt{D}_{\TV}&\left( \Pb_{\theta}^{\pi}(\omega_h|\tau_h ), \Pb_{\theta'}^{\pi}(\omega_h|\tau_h )\right) \\
    & = \sum_{\omega_h} \bigg| \frac{\Pb_{\theta'}^{ \pi }(\omega_h,\tau_h ) \Pb_{\theta}^{\pi }(\tau_h ) - \Pb_{\theta}^{ \pi }(\omega_h,\tau_h )\Pb_{\theta'}^{\pi }(\tau_h )  }{\Pb_{\theta}^{\pi }(\tau_h )\Pb_{\theta'}^{\pi }(\tau_h )} \bigg|\\
    & = \sum_{\omega_h} \left| \frac{ \left(\Pb_{\theta'}^{ \pi }(\omega_h,\tau_h ) - \Pb_{\theta}^{ \pi }(\omega_h,\tau_h ) \right) \Pb_{\theta}^{\pi }(\tau_h ) + \Pb_{\theta}^{ \pi }(\omega_h,\tau_h ) \left(\Pb_{\theta}^{\pi }(\tau_h ) -\Pb_{\theta'}^{\pi }(\tau_h )  \right) }{\Pb_{\theta}^{\pi }(\tau_h ) \Pb_{\theta'}^{\pi }(\tau_h )} \right|\\
    &\leq \frac{|\Pb_{\theta}^{\pi }(\tau_h ) - \Pb_{\theta'}^{\pi }(\tau_h )|}{\Pb_{\theta'}^{\pi }(\tau_h ) } + \frac{1}{\Pb_{\theta'}^{\pi }(\tau_h )}\sum_{\omega_h} \left|\left(\Pb_{\theta'}^{ \pi }(\omega_h,\tau_h ) - \Pb_{\theta}^{ \pi }(\omega_h,\tau_h ) \right)\right|\\
    &\leq \frac{2}{\Pb_{\theta'}^{\pi }(\tau_h )} \mathtt{D}_{\TV}\left( \Pb_{\theta}^{ \pi }(\tau_H), \Pb_{\theta'}^{ \pi }(\tau_H) \right).
\end{align*}

By symmetry, we also have
\begin{align*}
    \mathtt{D}_{\TV}\left( \Pb_{\theta}^{\pi}(\omega_h|\tau_h ), \Pb_{\theta'}^{\pi}(\omega_h|\tau_h )\right) \leq \frac{2}{ \max\left\{\Pb_{\theta}^{\pi }(\tau_h ), \Pb_{\theta'}^{\pi }(\tau_h ) \right\} } \mathtt{D}_{\TV}\left( \Pb_{\theta}^{ \pi }(\tau_H), \Pb_{\theta'}^{ \pi }(\tau_H) \right).
\end{align*}

We replace $\theta'$ by a $\bar{\theta} \in\bar{\Theta}_{\varepsilon}$  that is $\varepsilon$-optimistic over $\theta$ (recall \Cref{def:optimistic net}), i.e. $
    \mathtt{D}_{\TV}\left( \Pb_{\theta}^{\pi} , \Pb_{\bar{\theta}}^{\pi}\right) \leq \varepsilon,$ and $\Pb_{\bar{\theta}}^{\pi}(\tau_h)\geq\Pb_{\theta}^{\pi}(\tau_h)$ holds for any $\pi$ and $\tau_h$. 
Then, due the construction of $\Theta_{\min}^k$, we have
\[\forall (\tau_h,\pi)\in\Dc_h^k,~~ \mathtt{D}_{\TV}\left( \Pb_{\theta}^{\pi}(\omega_h|\tau_h), \Pb_{\bar{\theta}}^{\pi}(\omega_h|\tau_h) \right) \leq \frac{2\epsilon}{p_{\min}}\leq \frac{2}{KH}, \]
which implies
\begin{align*}
    \sum_h\sum_{(\tau_h,\pi)\in\Dc_h^k} & \mathtt{D}_{\TV}^2\left( \Pb_{\theta}^{\pi}(\omega_h|\tau_h ), \Pb_{\theta^*}^{\pi}(\omega_h|\tau_h )\right) \\
    &\overset{(a)}\leq \sum_h\sum_{(\tau_h,\pi)\in\Dc_h^k}  2\mathtt{D}_{\TV}^2\left( \Pb_{\theta}^{\pi}(\omega_h|\tau_h ), \Pb_{\bar{\theta}}^{\pi}(\omega_h|\tau_h )\right) +  2\mathtt{D}_{\TV}^2\left( \Pb_{\bar{\theta}}^{\pi}(\omega_h|\tau_h ), \Pb_{\theta^*}^{\pi}(\omega_h|\tau_h )\right)\\
    &\leq \frac{4 }{KH} + 2 \sum_h\sum_{(\tau_h,\pi)\in\Dc_h^k}   \mathtt{D}_{\TV}^2\left( \Pb_{\bar{\theta}}^{\pi}(\omega_h|\tau_h ), \Pb_{\theta^*}^{\pi}(\omega_h|\tau_h )\right).
\end{align*}
Here $(a)$ follows because the total variation distance satisfies the triangle inequality and  $(a+b)^2\leq 2a^2 + 2b^2$.

Moreover, note that 
\begin{align*}
    \mathtt{D}_{\TV}^2& \left( \Pb_{\bar{\theta}}^{\pi}(\omega_h|\tau_h ), \Pb_{\theta^*}^{\pi}(\omega_h|\tau_h )\right)\\
    &\overset{(a)}\leq 4(2 + 2/(KH))\mathtt{D}^2_{\mathtt{H}}  \left( \Pb_{\bar{\theta}}^{\pi}(\omega_h|\tau_h ), \Pb_{\theta^*}^{\pi}(\omega_h|\tau_h ) \right) \\
    &  \leq  6\left( 1+\frac{1}{KH} - \mathop{\Eb}_{\omega_h\sim \Pb_{\theta^*}^{\pi} } \sqrt{\frac{\Pb_{\bar{\theta}}^{\pi}(\omega_h|\tau_h )}{\Pb_{\theta^*}^{\pi}(\omega_h|\tau_h )}} \right)\\
    & \overset{(b)}\leq - 6 \log \mathop{\Eb}_{\omega_h\sim \Pb_{\theta^*}^{\pi}(\cdot|\tau_h ) } \sqrt{\frac{\Pb_{\bar{\theta}}^{\pi}(\omega_h|\tau_h )}{\Pb_{\theta^*}^{\pi}(\omega_h|\tau_h )}} + \frac{6}{KH},
\end{align*}
where $(a)$ is due to \Cref{lemma:TV and hellinger} and $(b)$ follows because $1-x\leq -\log x$ for any $x>0$.

Thus, the summation of the total variation distance between conditional distributions conditioned on $(\tau_h,\pi)\in\Dc_h^k$  can be upper bounded by 
\begin{align*}
    \sum_h\sum_{(\tau_h,\pi)\in\Dc_h^k} & \mathtt{D}_{\TV}^2\left( \Pb_{\theta}^{\pi}(\omega_h|\tau_h ), \Pb_{\theta^*}^{\pi}(\omega_h|\tau_h )\right)\\
    &\leq \frac{18}{KH} - 12\sum_h\sum_{(\tau_h,\pi)\in\Dc_h^k} \log \mathop{\Eb}_{\omega_h\sim \Pb_{\theta^*}^{\pi}(\cdot|\tau_h ) } \sqrt{\frac{\Pb_{\bar{\theta}}^{\pi}(\omega_h|\tau_h )}{\Pb_{\theta^*}^{\pi}(\omega_h|\tau_h )}}.
\end{align*}

In addition, we have
\begin{align*}
    &\mathop{\Eb}_{ \substack{\forall h,(\tau_h,\pi)\in\Dc_h^k,\\ \omega_h \sim \Pb_{\theta^*}^{\pi}(\cdot|\tau_h) }} \left[\exp\left( \frac{1}{2}\sum_h\sum_{(\omega_h,\tau_h,\pi)\in\Dc_h^k} \log\frac{\Pb_{\bar{\theta}}^{\pi}(\omega_h |\tau_h )}{\Pb_{\theta^*}^{\pi}(\omega_h |\tau_h )} - \sum_h\sum_{(\tau_h,\pi)\in\Dc_h^k} \log \mathop{\Eb}_{\omega_h\sim \Pb_{\theta^*}^{\pi}(\cdot|\tau_h ) } \sqrt{\frac{\Pb_{\bar{\theta}}^{\pi}(\omega_h|\tau_h )}{\Pb_{\theta^*}^{\pi}(\omega_h|\tau_h )}} \right)\right]\\
    & \quad\quad =  \frac{ \mathop{\Eb}_{ \substack{\forall h,(\tau_h,\pi)\in\Dc_h^k,\\ \omega_h \sim \Pb_{\theta^*}^{\pi}(\cdot|\tau_h) } } \left[\prod_h\prod_{(\omega_h,\tau_h)\in\Dc_h^k} \sqrt{\frac{ \Pb_{\bar{\theta}}^{\pi}(\omega_h |\tau_h ) }{\Pb_{\theta^*}^{\pi}(\omega_h |\tau_h )} }\right] }{\prod_h\prod_{(\tau_h,\pi)\in\Dc_h^k} \mathop{\Eb}_{\omega_h\sim \Pb_{\theta^*}^{\pi}(\cdot|\tau_h ) } \left[\sqrt{\frac{\Pb_{\bar{\theta}}^{\pi}(\omega_h|\tau_h )}{\Pb_{\theta^*}^{\pi}(\omega_h|\tau_h )}} \right] }  = 1, 
\end{align*}
where the last equality is due to the conditional independence of $\omega_h \in \Dc_h^k$ given $(\tau_h,\pi)\in\Dc_h^k$.

Therefore, by the Chernoff bound, with probability $1-\delta$, we have

\begin{align*}
    -\sum_h\sum_{(\tau_h,\pi)\in\Dc_h^k} \log \mathop{\Eb}_{\omega_h\sim \Pb_{\theta^*}^{\pi}(\cdot|\tau_h ) } \sqrt{\frac{\Pb_{\bar{\theta}}^{\pi}(\omega_h|\tau_h )}{\Pb_{\theta^*}^{\pi}(\omega_h|\tau_h )}} \leq \frac{1}{2} \sum_h\sum_{(\omega_h,\tau_h,\pi)\in\Dc_h^k}\log\frac{\Pb_{\theta^*}^{\pi}(\omega_h |\tau_h )}{\Pb_{\bar{\theta}}^{\pi}(\omega_h |\tau_h )} + \log\frac{1}{\delta}.
\end{align*}

Taking the union bound over $\bar{\Theta}_{\epsilon}$, $k\in[K]$, and rescaling $\delta$, we have, with probability at least $1-\delta$, $\forall k\in[K]$, the following inequality holds:
\begin{align*}
    \sum_h &\sum_{(\tau_h,\pi)\in\Dc_h^k} \mathtt{D}_{\TV}^2\left(\Pb_{\theta}^{\pi}(\omega_h|\tau_h ), \Pb_{\theta^*}^{\pi}(\omega_h|\tau_h ) \right) \\
    &\leq \frac{18 }{KH} +  6\sum_h\sum_{(\omega_h,\tau_h,\pi)\in\Dc_h^k}\log\frac{\Pb_{\theta^*}^{\pi}(\omega_h |\tau_h )}{\Pb_{\bar{\theta}}^{\pi}(\omega_h |\tau_h )} + 12\log\frac{K\left|\bar{\Theta}_{\varepsilon}\right|}{\delta}\\
    &\leq 6\sum_h\sum_{(\omega_h, \tau_h,\pi)\in\Dc_h^k}\log\frac{\Pb_{\theta^*}^{\pi}( \omega_h ,\tau_h )}{\Pb_{\bar{\theta}}^{\pi}(\omega_h ,\tau_h )} + 6\sum_h\sum_{(\tau_h,\pi)\in\Dc_h^k}\log\frac{ \Pb_{\bar{\theta}}^{\pi}( \tau_h ) }{\Pb_{\theta^*}^{\pi}(\tau_h )} + 13\log\frac{K\left|\bar{\Theta}_{\varepsilon}\right|}{\delta}.  
\end{align*}

Note that, following from \Cref{proposition: log likelihood of true model is large}, with probability at least $1-\delta$, we have for any $ k\in[K]$,
\begin{align*}
 \sum_h\sum_{(\tau_h,\pi)\in\Dc_h^k}\log\frac{ \Pb_{\bar{\theta}}^{\pi}( \tau_h ) }{\Pb_{\theta^*}^{\pi}(\tau_h )} \leq 3\log\frac{K\left|\bar{\Theta}_{\varepsilon}\right|}{\delta}.
\end{align*}

Hence, combining with the optimistic property of $\bar{\theta}$ and rescaling $\delta$, we have that the following inequality holds with probability at least $1-\delta$:
\begin{align*}
    \sum_h\sum_{(\tau_h,\pi)\in\Dc_h^k} \mathtt{D}_{\TV}^2\left(\Pb_{\theta}^{\pi}(\omega_h|\tau_h ), \Pb_{\theta^*}^{\pi}(\omega_h|\tau_h ) \right)  \leq  6\sum_h\sum_{(\tau_H,\pi)\in\Dc_h^k}\log\frac{\Pb_{\theta^*}^{ \pi  }(\tau_H )}{\Pb_{ \theta}^{ \pi  }(\tau_H )} +  31\log\frac{K\left|\bar{\Theta}_{\varepsilon}\right|}{\delta},
\end{align*}
which yields the final result. 
\end{proof}

The following proposition is standard in the MLE analysis, and we provide the full analysis here for completeness.
\begin{proposition}\label{proposition: hellinger distance less than log likelihood distance}
Fix $\varepsilon<\frac{1}{K^2H^2}$.   Define the following event:
\begin{align*}
    \Ec_{\pi} =  \left\{ \forall \theta\in\Theta, \forall k\in[K], ~~ \sum_{\pi\in\Dc^k}  \mathtt{D}_{\mathtt{H}}^2  ( \Pb_{\theta}^{\pi}(\tau_H) , \Pb_{\theta^*}^{\pi} (\tau_H) ) \leq \frac{1}{2}\sum_{(\tau_H,\pi)\in\Dc^k} \log\frac{ \Pb_{ \theta^*}^{\pi}(\tau_H) }{\Pb_{ \theta }^{\pi}(\tau_H)} +  2\log\frac{K|\bar{\Theta}_{\varepsilon}|}{\delta} \right\}.
\end{align*}
We have $\Pb(\Ec_{\pi}) \geq 1-\delta.$
\end{proposition}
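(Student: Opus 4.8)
The plan is to follow the same exponential-moment route already used for \Cref{proposition: log likelihood of true model is large}, but to \emph{keep} the Hellinger term that was discarded there. I would first fix a single $\bar\theta\in\bar\Theta_{\varepsilon}$ and index the (at most $KH$) trajectory--policy pairs of $\Dc^k$ by $t$, writing $\pi^t$ and $\tau_H^t$ for the policy and sampled trajectory of the $t$-th pair. Since each exploration policy $\pi^t$ is determined by data collected in earlier rounds, $\pi^t$ is measurable with respect to the filtration generated by $\{\tau_H^s\}_{s<t}$, while $\tau_H^t\sim\Pb_{\theta^*}^{\pi^t}$ conditionally. I would then analyze the exponential moment $\Eb\big[\prod_t \sqrt{\Pb_{\bar\theta}^{\pi^t}(\tau_H^t)/\Pb_{\theta^*}^{\pi^t}(\tau_H^t)}\big]$ and peel off the factors one at a time with the tower rule.

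The key computation is the conditional expectation of a single factor, which is exactly the Hellinger affinity $A_t=\sum_{\tau_H}\sqrt{\Pb_{\theta^*}^{\pi^t}(\tau_H)\,\Pb_{\bar\theta}^{\pi^t}(\tau_H)}$. By the definition of $\mathtt{D}_{\mathtt{H}}^2$, this equals $\tfrac12\big(\sum_{\tau_H}\Pb_{\theta^*}^{\pi^t}+\sum_{\tau_H}\Pb_{\bar\theta}^{\pi^t}\big)-\mathtt{D}_{\mathtt{H}}^2(\Pb_{\theta^*}^{\pi^t},\Pb_{\bar\theta}^{\pi^t})$; since $\theta^*$ is a genuine model and $\bar\theta$ is $\varepsilon$-optimistic (\Cref{def:optimistic net}, so $\sum_{\tau_H}\Pb_{\bar\theta}^{\pi^t}\le 1+\varepsilon$), each conditional mean is at most $1+\tfrac{\varepsilon}{2}-\mathtt{D}_{\mathtt{H}}^2(\Pb_{\theta^*}^{\pi^t},\Pb_{\bar\theta}^{\pi^t})\le \exp\!\big(\tfrac{\varepsilon}{2}-\mathtt{D}_{\mathtt{H}}^2\big)$. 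As the Hellinger term is $\mathcal{F}_{t-1}$-measurable, multiplying the conditional bounds telescopes to $\Eb\big[\exp\big(\tfrac12\sum_t\log\tfrac{\Pb_{\bar\theta}^{\pi^t}(\tau_H^t)}{\Pb_{\theta^*}^{\pi^t}(\tau_H^t)}+\sum_t\mathtt{D}_{\mathtt{H}}^2-\tfrac{\varepsilon KH}{2}\big)\big]\le 1$. A Chernoff bound with threshold $\log\frac{K|\bar\Theta_{\varepsilon}|}{\delta}$ and a union bound over $\bar\Theta_{\varepsilon}$ and $k\in[K]$ then give, with probability $\ge 1-\delta$, for every net element and every $k$, $\sum_t \mathtt{D}_{\mathtt{H}}^2(\Pb_{\theta^*}^{\pi^t},\Pb_{\bar\theta}^{\pi^t}) \le \tfrac12\sum_t\log\frac{\Pb_{\theta^*}^{\pi^t}(\tau_H^t)}{\Pb_{\bar\theta}^{\pi^t}(\tau_H^t)} + \log\frac{K|\bar\Theta_{\varepsilon}|}{\delta}$, where $\varepsilon<1/(K^2H^2)$ renders the $\tfrac{\varepsilon KH}{2}$ slack negligible.

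Finally I would transfer this from the finite net to an arbitrary $\theta\in\Theta$. On the right-hand side, the optimistic domination $\Pb_{\bar\theta}^{\pi}(\tau_H)\ge\Pb_{\theta}^{\pi}(\tau_H)$ yields $\log\frac{\Pb_{\theta^*}}{\Pb_{\bar\theta}}\le\log\frac{\Pb_{\theta^*}}{\Pb_{\theta}}$ pointwise, so replacing $\bar\theta$ by $\theta$ only enlarges it. On the left-hand side I would pass from $\mathtt{D}_{\mathtt{H}}^2(\theta^*,\bar\theta)$ to $\mathtt{D}_{\mathtt{H}}^2(\theta^*,\theta)$ via the triangle inequality for the Hellinger distance together with $\mathtt{D}_{\mathtt{H}}^2(\bar\theta,\theta)\le\tfrac12\mathtt{D}_{\TV}(\bar\theta,\theta)\le\tfrac{\varepsilon}{2}$ (using $(\sqrt p-\sqrt q)^2\le|p-q|$); since each Hellinger term is at most $1$ and $\varepsilon=O(1/(K^2H^2))$, the cross and lower-order terms are dominated by the existing slack and absorbed by loosening the additive constant to $2\log\frac{K|\bar\Theta_{\varepsilon}|}{\delta}$, which matches the claimed bound. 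The main obstacle is the adaptivity of the data: because the policies $\pi^t$ depend on previously collected trajectories, the factors are not independent, so the telescoping of the exponential moment must be justified through the martingale/tower argument above rather than a product of unconditional means; once that is in place, the net-to-$\Theta$ transfer is routine given the optimistic covering property.
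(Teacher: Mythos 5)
Your proposal is correct and follows essentially the same route as the paper's proof: an exponential-moment/supermartingale argument on the square-root likelihood ratio whose conditional expectation is the Hellinger affinity, a Chernoff bound with a union over the optimistic net and over $k$, and a transfer from $\bar\theta$ to $\theta$ using the domination $\Pb_{\bar\theta}^{\pi}\geq\Pb_{\theta}^{\pi}$ on the log-likelihood side and an $O(KH\sqrt{\varepsilon})$ perturbation on the Hellinger side. The only differences are presentational — you keep the ($\mathcal{F}_{t-1}$-measurable) Hellinger term inside the supermartingale rather than extracting it afterwards via $1-x\leq-\log x$, and you perform the net-to-$\Theta$ transfer on the distance side at the end (via the Hellinger triangle inequality) rather than at the start — and both choices are sound.
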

\begin{proof}
First, by the construction of $\bar{\Theta}_{\varepsilon}$, for any $\theta$, let $\bar{\theta}$ be optimistic over $\theta$, i.e., $\sum_{\tau_H}\left|\Pb_{ \theta}^{\pi}(\tau_H) -  \Pb_{\bar{\theta}}^{\pi}(\tau_H) \right|\leq\varepsilon$. We translate the distance between $\theta$ and $\theta^*$ to the distance between $\bar{\theta}$ and $\theta^*$ as follows.
\begin{align*}
    \mathtt{D}_{\mathtt{H}}^2 & ( \Pb_{\theta}^{\pi}(\tau_H) , \Pb_{\theta^*}^{\pi} (\tau_H) ) \\
    & = 1 -  \sum_{\tau_H} \sqrt{ \Pb_{\theta}^{\pi}(\tau_H) \Pb_{\theta^*}^{\pi} (\tau_H)  }\\
    & = 1 -  \sum_{\tau_H} \sqrt{ \Pb_{\bar{\theta}}^{\pi}(\tau_H) \Pb_{\theta^*}^{\pi} (\tau_H)  +  \left(\Pb_{ \theta}^{\pi}(\tau_H) -  \Pb_{\bar{\theta}}^{\pi}(\tau_H) \right)\Pb_{\theta^*}^{\pi} (\tau_H)} \\
    &\overset{(a)} \leq 1 - \sum_{\tau_H} \sqrt{ \Pb_{\bar{\theta}}^{\pi}(\tau_H) \Pb_{\theta^*}^{\pi} (\tau_H)  } + \sum_{\tau_H}\sqrt{ \left|\Pb_{ \theta}^{\pi}(\tau_H) -  \Pb_{\bar{\theta}}^{\pi}(\tau_H) \right| \Pb_{\theta^*}^{\pi} (\tau_H)} \\
    &\overset{(b)} \leq - \log \mathop{\Eb}_{\tau_H\sim\Pb_{\theta^*}^{\pi}(\cdot)} \sqrt{ \frac{ \Pb_{ \bar{\theta} }^{\pi}(\tau_H) }{ \Pb_{\theta^*}^{\pi}(\tau_H) } } + \sqrt{\sum_{\tau_H}\left|\Pb_{ \theta}^{\pi}(\tau_H) -  \Pb_{\bar{\theta}}^{\pi}(\tau_H) \right| } \\
    & \leq  - \log \mathop{\Eb}_{\tau_H\sim\Pb_{\theta^*}^{\pi}(\cdot)} \sqrt{ \frac{ \Pb_{ \bar{\theta} }^{\pi}(\tau_H) }{ \Pb_{\theta^*}^{\pi}(\tau_H) } } + \sqrt{\varepsilon},
\end{align*}
where $(a)$ follows because $\sqrt{a+b} \geq\sqrt{a} - \sqrt{|b|}$ if $a>0$ and $a+b>0$, and $(b)$ follows from the Cauchy's inequality and the fact that $1-x\leq - \log x$.

Hence, in order to upper bound $\sum_{\pi\in\Dc_k} \mathtt{D}_{\mathtt{H}}^2 ( \Pb_{\theta}^{\pi}(\tau_H) , \Pb_{\theta^*}^{\pi} (\tau_H) ) $, it suffices to upper bound 
\\
$\sum_{\pi\in\Dc_k}  - \log \mathop{\Eb}_{\tau_H\sim\Pb_{\theta^*}^{\pi}(\cdot)} \sqrt{ \frac{ \Pb_{ \bar{\theta} }^{\pi}(\tau_H) }{ \Pb_{\theta^*}^{\pi}(\tau_H) } }.$ To this end, we observe that,
\begin{align*}
 \Eb&\left[\exp\left( \frac{1}{2} \sum_{(\tau_H,\pi)\in\Dc^k} \log\frac{ \Pb_{\bar{\theta}}^{\pi}(\tau_H) }{\Pb_{\theta^*}^{\pi}(\tau_H)}  - \sum_{ \pi \in\Dc^k}\log \mathop{\Eb}_{\tau_H\sim\Pb_{\theta^*}^{\pi}(\cdot)} \sqrt{ \frac{ \Pb_{\theta}^{\pi}(\tau_H) }{ \Pb_{\theta^*}^{\pi}(\tau_H) } } \right)\right] \\
    &\quad \overset{(a)}= \frac{ \Eb\left[\prod_{(\tau_H,\pi)\in\Dc^k} \sqrt{ \frac{ \Pb_{\theta}^{\pi}(\tau_H) }{ \Pb_{\theta^*}^{\pi}(\tau_H) } } \right] }{ \Eb\left[\prod_{(\tau_H,\pi)\in\Dc^k} \sqrt{ \frac{ \Pb_{\theta}^{\pi}(\tau_H) }{ \Pb_{\theta^*}^{\pi}(\tau_H) } } \right] }  = 1,
\end{align*}
where $(a)$ follows because $(\tau_H,\pi)\in\Dc^k$ form a filtration.

Then, by the Chernoff bound, %\footnote{If $\Eb[\exp(x)]\leq c$, then $\Pb(X>\log(c/\delta)) = \Pb(\exp(X)>c/\delta)\leq\frac{\delta}{c}\Eb[\exp(X)]\leq \delta$ due to Markov inequality.}, 
we have
\begin{align*}
    \Pb&\left( \frac{1}{2} \sum_{(\tau_H,\pi)\in\Dc^k} \log\frac{ \Pb_{\bar{\theta}}^{\pi}(\tau_H) }{\Pb_{\theta^*}^{\pi}(\tau_H)}  - \sum_{ \pi \in\Dc^k}\log \mathop{\Eb}_{\tau_H\sim\Pb_{\theta^*}^{\pi}(\cdot)} \sqrt{ \frac{ \Pb_{\theta}^{\pi}(\tau_H) }{ \Pb_{\theta^*}^{\pi}(\tau_H) } }  \geq \log\frac{1}{\delta} \right) \\
    & = \Pb\left( \exp\left(\frac{1}{2} \sum_{(\tau_H,\pi)\in\Dc^k} \log\frac{ \Pb_{\bar{\theta}}^{\pi}(\tau_H) }{\Pb_{\theta^*}^{\pi}(\tau_H)}  - \sum_{ \pi \in\Dc^k}\log \mathop{\Eb}_{\tau_H\sim\Pb_{\theta^*}^{\pi}(\cdot)} \sqrt{ \frac{ \Pb_{\theta}^{\pi}(\tau_H) }{ \Pb_{\theta^*}^{\pi}(\tau_H) } } \right) \geq \frac{1}{\delta} \right)\\
    &\overset{(a)}\leq \delta \Eb\left[\exp\left( \frac{1}{2} \sum_{(\tau_H,\pi)\in\Dc^k} \log\frac{ \Pb_{\bar{\theta}}^{\pi}(\tau_H) }{\Pb_{\theta^*}^{\pi}(\tau_H)}  - \sum_{ \pi \in\Dc^k}\log \mathop{\Eb}_{\tau_H\sim\Pb_{\theta^*}^{\pi}(\cdot)} \sqrt{ \frac{ \Pb_{\theta}^{\pi}(\tau_H) }{ \Pb_{\theta^*}^{\pi}(\tau_H) } } \right)\right]\\
    &\leq \delta,
\end{align*}
where $(a)$ is due to the Markov inequality.

Finally, rescaling $\delta$ to $\delta/(K|\bar{\Theta}_{\varepsilon}|)$ and taking the union bound over $\bar{\Theta}_{\epsilon}$ and $k\in[K]$, we conclude that, with probability at least $1-\delta$, $\forall \theta\in\Theta, k\in[K]$,
\begin{align*}
   ~~\sum_{\pi\in\Dc^k} & \mathtt{D}_{\mathtt{H}}^2   ( \Pb_{\theta}^{\pi}(\tau_H) , \Pb_{\theta^*}^{\pi} (\tau_H) ) \\
    &\leq KH\sqrt{\varepsilon} + \frac{1}{2} \sum_{(\tau_H,\pi)\in\Dc^k} \log\frac{ \Pb_{ \theta^*}^{\pi}(\tau_H) }{\Pb_{\bar{\theta}}^{\pi}(\tau_H)} + \log\frac{K|\bar{\Theta}_{\epsilon}|}{\delta} \\
    &\overset{(a)}\leq  \frac{1}{2}\sum_{(\tau_H,\pi)\in\Dc^k} \log\frac{ \Pb_{ \theta^*}^{\pi}(\tau_H) }{\Pb_{ \theta }^{\pi}(\tau_H)} + 2\log\frac{K|\bar{\Theta}_{\epsilon}|}{\delta}, 
\end{align*}
where $(a)$ follows because $\varepsilon\leq \frac{1}{K^2H^2}$.
\end{proof}
\if{0}
Thus, applying the Chernoff bound and taking the union bound over $\bar{\Theta}_{\epsilon}$ and $k\in[K]$, with probability at least $1-\delta$, we have \jing{can you simply start with $\Pb[\Ec_\pi]...$?}
\begin{align*}
   ~~\sum_{\pi\in\Dc^k} & \mathtt{D}_{\mathtt{H}}^2   ( \Pb_{\theta}^{\pi}(\tau_H) , \Pb_{\theta^*}^{\pi} (\tau_H) ) \\
    &\leq KH\sqrt{\varepsilon} + \frac{1}{2} \sum_{(\tau_H,\pi)\in\Dc^k} \log\frac{ \Pb_{ \theta^*}^{\pi}(\tau_H) }{\Pb_{\bar{\theta}}^{\pi}(\tau_H)} + \log\frac{K|\bar{\Theta}_{\epsilon}|}{\delta} \\
    &\overset{(a)}\leq  \frac{1}{2}\sum_{(\tau_H,\pi)\in\Dc^k} \log\frac{ \Pb_{ \theta^*}^{\pi}(\tau_H) }{\Pb_{ \theta }^{\pi}(\tau_H)} + 2\log\frac{K|\bar{\Theta}_{\epsilon}|}{\delta}, \quad \forall k\in[K],
\end{align*}
where $(a)$ follows because $\varepsilon\leq \frac{1}{K^2H^2}$.
\fi

The following proposition states that the constraint $\Theta_{\min}^k$ does not rule out the true model.
\begin{proposition}\label{proposition: propobability of empirical data is high}
Fix $p_{\min} \leq \frac{\delta}{KH(|\mathcal{O}||\mathcal{A}|)^H}$. Consider the following event:
\begin{align*}
    \Ec_{\min} = \bigg\{\forall k\in[K], \forall h,(\tau_h,\pi)\in\Dc_h^k,~~ \Pb_{\theta^*}^{\pi}(\tau_h) \geq p_{\min}\bigg\} = \bigcap_{k\in[K]} \left\{ \theta^*\in\Theta_{\min}^k \right\}.
\end{align*}
    We have $\Pb(\Ec_{\min}) \geq  1- \delta$.
\end{proposition}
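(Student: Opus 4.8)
The plan is to prove the contrapositive intuition directly: the true model $\theta^*$ can only violate the constraint $\Theta_{\min}^k$ on a trajectory $\tau_h$ if that $\tau_h$ is \emph{rare} under $\theta^*$, and rare trajectories are, by definition, unlikely to be the ones that get sampled from $\theta^*$. The whole argument is a ``light-tail events are not collected'' estimate followed by a union bound.

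First I would observe that every data point $(\tau_h,\pi)\in\Dc_h^k$ is produced by running its labeled policy $\pi$ in the \emph{true} environment $\theta^*$. Since the truncated trajectory $\tau_h$ depends only on the first $h$ steps, over which the switching policy $\nu_{h+1}(\pi^{k-1},\mathtt{u}_{\mathcal{Q}_h^{\exp}})$ agrees with $\pi^{k-1}$, the marginal law of $\tau_h$ is exactly $\Pb_{\theta^*}^{\pi}(\cdot)$; this matches the quantity $\Pb_{\theta^*}^{\pi}(\tau_h)$ appearing in the statement. The key estimate is then, for any fixed policy $\pi$,
\[
\Pb_{\tau_h\sim\Pb_{\theta^*}^{\pi}}\!\left[\Pb_{\theta^*}^{\pi}(\tau_h)<p_{\min}\right]
=\sum_{\tau_h:\,\Pb_{\theta^*}^{\pi}(\tau_h)<p_{\min}}\Pb_{\theta^*}^{\pi}(\tau_h)
\le (|\mathcal{O}||\mathcal{A}|)^h\,p_{\min}\le (|\mathcal{O}||\mathcal{A}|)^H\,p_{\min},
\]
because there are at most $(|\mathcal{O}||\mathcal{A}|)^h$ length-$h$ trajectories and each ``light'' one contributes strictly less than $p_{\min}$.

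Because this bound is uniform over the choice of $\pi$, it remains valid when the policy is selected \emph{adaptively} from the past data: one simply conditions on the $\sigma$-algebra generated by iterations $1,\dots,k-1$ (which determines $\pi^{k-1}$ and hence the labeled policy) before applying the estimate, and then takes expectations. Finally I would union-bound the complementary ``bad'' event over all collected data points. Exactly $H$ trajectories are collected per iteration (one per $h$) across $K$ iterations, so there are at most $KH$ data points in total, giving
\[
\Pb(\Ec_{\min}^c)\le KH\cdot(|\mathcal{O}||\mathcal{A}|)^H\,p_{\min}\le \delta,
\]
where the last step uses $p_{\min}\le \frac{\delta}{KH(|\mathcal{O}||\mathcal{A}|)^H}$. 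This is precisely the claim.

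I do not expect a genuine obstacle here, since this is a standard counting-plus-union-bound argument; the only points requiring care are bookkeeping. The two items to get right are (i) confirming that the marginal of $\tau_h$ under the labeled switching policy coincides with $\Pb_{\theta^*}^{\pi}$, as noted above, and (ii) handling the adaptivity of $\pi^{k-1}$ through conditioning rather than treating the policies as fixed in advance. For a continuous observation space the counting bound $(|\mathcal{O}||\mathcal{A}|)^h$ is replaced by the measure of the set of light trajectories, and the density bound $p_{\min}$ integrates against it to give the identical conclusion, so the argument carries over unchanged.
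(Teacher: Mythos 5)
Your proposal is correct and follows essentially the same route as the paper's proof: condition on the (adaptively chosen) policy, bound the probability that a sampled $\tau_h$ lands in the light-tail set by $(|\mathcal{O}||\mathcal{A}|)^H p_{\min}\leq \delta/(KH)$, and union-bound over the $KH$ collected data points. The bookkeeping points you flag (marginal of $\tau_h$ under the labeled policy, and handling adaptivity via conditioning) are exactly how the paper's argument is structured.
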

\begin{proof}
For any  $k\in[K]$, $h\in\{0,1,\ldots,H-1\}$ and $(\tau_h,\pi)\in \Dc_h^k$, we have
\begin{align*}
    \Pb &\left( \Pb_{\theta^*}^{\pi }(\tau_h ) < p_{\min} \right) \\
    & = \Eb \left[ \Pb\left( \Pb_{\theta^*}^{\pi }(\tau_h ) < p_{\min} \big| \pi  \right)\right] \\
    & = \Eb\left[ \sum_{\tau_h } \Pb_{\theta^*}^{\pi }(\tau_h^t)\mathbbm{1}\left\{\Pb_{\theta^*}^{\pi^t}(\tau_h^t) < p_{\min} \right\} \right] \\
    &\leq \Eb\left[\sum_{\tau_h } p_{\min}\right] \\
    &\leq \frac{\delta}{KH} .
\end{align*}

Thus, taking the union bound over $k\in[K]$, $h\in\{0,1,\ldots,H-1\}$ and $(\tau_h,\pi)\in \Dc_h$, we conclude that 
\begin{align*}
    \Pb(\Ec_{\min}) \geq 1 - \delta.
\end{align*} 
\end{proof}

\section{Proof of \Cref{thm:online PSR} (for Online PSR-UCB)}\label{sec:online PSR}
In this section, we present the full analysis for the online algorithm PSR-UCB to show \Cref{thm:online PSR}. In particular, the proof of \Cref{thm:online PSR} consists of three main steps. {\bf Step 1.} We prove that the estimated model $\hat{\theta}^k$ is not only accurate over the exploration policies, but also accurate conditioned on emipircal samples $\tau_h\in\Dc_h^k$. {\bf Step 2.} Building up on the first step, we are able to show that $V_{\hat{\theta}^k,\hat{b}^k}^{\pi}$ is a valid upper bound on the total variation distance between $\hat{\theta}^k$ and $\theta^*$. {\bf Step 3.} Based on a newly developed inequality that translates $V_{\hat{\theta}^k,\hat{b}^k}^{\pi}$ to the ground-truth prediction feature $\bar{\psi}^*(\tau_h)$, we show that the summation of $V_{\hat{\theta}^k,\hat{b}^k}^{\pi}$ over the iteration $k$ grows sublinear in time. This finally characterizes the  optimality of the output policy, the accuracy of the output model and the sample complexity of PSR-UCB, and finishes the proof.

Before we proceed, we use $\Ec$ to denote the event $\Ec_{\omega}\cap\Ec_{\pi}\cap \Ec_{\min}$, where these three events are defined in \Cref{sec:general MLE analysis}. Due to \Cref{proposiiton: empirical distance less than log likelihood difference,proposition: hellinger distance less than log likelihood distance,proposition: propobability of empirical data is high} and union bound, we immediately have $\Pb(\Ec)\geq 1-3\delta.$

\subsection{Step 1: Estimation Guarantee} 

We show that the estimated model is accurate with the past exploration policies and dataset.
\begin{lemma}\label{lemma:online MLE guarantee}
Under event $\Ec$, the following two inequalities hold:
    \[\left\{
    \begin{aligned}
        &\sum_h\sum_{(\tau_h,\pi)\in\Dc_h^k} \mathtt{D}_{\TV}^2 \left( \Pb_{\hat{\theta}^k}^{\pi}(\omega_h|\tau_h), \Pb_{\theta^*}^{\pi}(\omega_h|\tau_h) \right) \leq 7\beta, \\
        &\sum_{\pi\in\Dc^k} \mathtt{D}_{\mathtt{H}}^2\left( \Pb_{\hat{\theta}^k}^{\pi}(\tau_H), \Pb_{\theta^*}^{\pi} (\tau_H) \right) \leq 7\beta,
    \end{aligned}
    \right.\]
    where $\beta = 31\log\frac{K|\bar{\Theta}_{\varepsilon}|}{\delta}$ and $\varepsilon\leq\frac{\delta}{K^2H^2(|\mathcal{O}||\mathcal{A}|)^H}$.
\end{lemma}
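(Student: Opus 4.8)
The plan is to assemble the lemma directly from the MLE optimality condition that defines the confidence set $\mathcal{B}^k$ in \Cref{eqn: confidence set} together with the three high-probability events $\Ec_\omega$, $\Ec_\pi$, $\Ec_{\min}$ from \Cref{sec:general MLE analysis}; since we work under $\Ec=\Ec_\omega\cap\Ec_\pi\cap\Ec_{\min}$, which holds with probability at least $1-3\delta$ as noted just before the lemma, no further probabilistic argument is needed. The crucial first step is to bound the empirical log-likelihood gap between $\theta^*$ and the extracted model $\hat\theta^k$. Under $\Ec_{\min}$ we have $\theta^*\in\Theta_{\min}^k$ (\Cref{proposition: propobability of empirical data is high}), so $\theta^*$ is a feasible competitor in the maximization defining $\mathcal{B}^k$. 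Because $\hat\theta^k\in\mathcal{B}^k$ attains log-likelihood within the margin $\beta$ of the maximizer over $\Theta_{\min}^k$, I would chain
\[\sum_{(\tau_H,\pi)\in\Dc^k}\log\Pb_{\hat\theta^k}^\pi(\tau_H)\geq \max_{\theta'\in\Theta_{\min}^k}\sum_{(\tau_H,\pi)\in\Dc^k}\log\Pb_{\theta'}^\pi(\tau_H)-\beta\geq \sum_{(\tau_H,\pi)\in\Dc^k}\log\Pb_{\theta^*}^\pi(\tau_H)-\beta,\]
which rearranges to the key estimate $\sum_{(\tau_H,\pi)\in\Dc^k}\log\frac{\Pb_{\theta^*}^\pi(\tau_H)}{\Pb_{\hat\theta^k}^\pi(\tau_H)}\leq\beta$. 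Here I would use that $\Dc^k=\{\Dc_h^k\}_{h=0}^{H-1}$, so this single sum coincides with $\sum_h\sum_{(\tau_H,\pi)\in\Dc_h^k}\log\frac{\Pb_{\theta^*}^\pi(\tau_H)}{\Pb_{\hat\theta^k}^\pi(\tau_H)}$, which is exactly the quantity appearing on the right-hand side of both \Cref{proposiiton: empirical distance less than log likelihood difference} and \Cref{proposition: hellinger distance less than log likelihood distance}.

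For the first inequality I would invoke $\Ec_\omega$ with the choice $\theta=\hat\theta^k$; this is legitimate because $\hat\theta^k\in\mathcal{B}^k\subseteq\Theta_{\min}^k$, so $\hat\theta^k$ falls within the range of the universal quantifier over $\Theta_{\min}^k$ in \Cref{proposiiton: empirical distance less than log likelihood difference}. Substituting the gap bound $\leq\beta$ into the right-hand side and recalling $\beta=31\log\frac{K|\bar\Theta_\varepsilon|}{\delta}$, the additive net term equals exactly $\beta$, which yields
\[\sum_h\sum_{(\tau_h,\pi)\in\Dc_h^k}\mathtt{D}_{\TV}^2\!\left(\Pb_{\hat\theta^k}^\pi(\omega_h|\tau_h),\Pb_{\theta^*}^\pi(\omega_h|\tau_h)\right)\leq 6\beta+\beta=7\beta.\]

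For the second inequality I would apply $\Ec_\pi$ with $\theta=\hat\theta^k\in\Theta$, plug in the same gap bound, and observe that the additive term $2\log\frac{K|\bar\Theta_\varepsilon|}{\delta}=\frac{2}{31}\beta$, so $\frac12\beta+\frac{2}{31}\beta\leq 7\beta$ with large slack. The stated range $\varepsilon\leq\frac{\delta}{K^2H^2(|\mathcal{O}||\mathcal{A}|)^H}$ is precisely chosen so that the $\varepsilon$-hypotheses of every proposition invoked (and of \Cref{proposition: log likelihood of true model is large}, used inside $\Ec_\omega$) are simultaneously met. I do not expect a genuine obstacle: the entire argument is a bookkeeping assembly of already-proved propositions, and the only points demanding care are (i) verifying the two memberships $\theta^*\in\Theta_{\min}^k$ and $\hat\theta^k\in\Theta_{\min}^k$, so that both the MLE comparison and the quantifier in $\Ec_\omega$ apply, and (ii) matching the numerical constant $\beta$ in the algorithm's confidence margin to the additive net term $31\log\frac{K|\bar\Theta_\varepsilon|}{\delta}$ so that the final bound collapses cleanly to $7\beta$.
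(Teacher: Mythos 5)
Your proposal is correct and follows essentially the same route as the paper: both proofs combine membership $\hat\theta^k\in\mathcal{B}^k\subseteq\Theta_{\min}^k$ and $\theta^*\in\Theta_{\min}^k$ (from $\Ec_{\min}$) to bound the log-likelihood gap by $\beta$, then substitute into \Cref{proposiiton: empirical distance less than log likelihood difference} and \Cref{proposition: hellinger distance less than log likelihood distance} to obtain $6\beta+\beta=7\beta$ and $\tfrac12\beta+\tfrac{2}{31}\beta\leq 7\beta$, respectively. The only difference is presentational — you isolate the gap estimate as a standalone inequality before substituting, while the paper inserts the $\max_{\theta'\in\Theta_{\min}^k}$ comparison inline.
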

%\yl{it seems that the left big braket is not necessary}\hrqc{removed}

\begin{proof}
To show the first inequality, note that by the selection of $\hat{\theta}^k$, we have $\hat{\theta}^k\in\mathcal{B}^k$ (defined in \Cref{eqn: confidence set}).

Following from \Cref{proposiiton: empirical distance less than log likelihood difference}, we have
\begin{align*}
    \sum_h & \sum_{(\tau_h,\pi)\in\Dc_h^k} \mathtt{D}_{\TV}^2 \left( \Pb_{\hat{\theta}^k}^{\pi}(\omega_h|\tau_h), \Pb_{\theta^*}^{\pi}(\omega_h|\tau_h) \right) \\
    &\leq 6 \sum_h\sum_{(\tau_H,\pi)\in\Dc_h^k} \log \Pb_{\theta^*}^{\pi}(\tau_H) - 6\sum_h\sum_{(\tau_H,\pi)\in\Dc_h^k}  \log \Pb_{\bar{\theta}^k}^{\pi}(\tau_H) + 31\log\frac{K|\bar{\Theta}_{\varepsilon}|}{\delta}\\
    &\overset{(a)}\leq  6\max_{\theta'\in\Theta_{\min}^k}\sum_h\sum_{(\tau_H,\pi)\in\Dc_h^k} \log \Pb_{\theta'}^{\pi}(\tau_H) - 6\sum_h\sum_{(\tau_H,\pi)\in\Dc_h^k}  \log \Pb_{\bar{\theta}^k}^{\pi}(\tau_H) + 31\log\frac{K|\bar{\Theta}_{\varepsilon}|}{\delta}\\
    &\leq 7\beta,
\end{align*}
where $(a)$ follows from $\theta^*\in\Theta_{\min}^k$ (\Cref{proposition: propobability of empirical data is high}).

To show the second inequality,  \Cref{proposition: hellinger distance less than log likelihood distance} implies that
\begin{align*}
    \sum_{\pi\in\Dc^k}& \mathtt{D}_{\mathtt{H}}^2\left( \Pb_{\hat{\theta}^k}^{\pi}(\tau_H), \Pb_{\theta^*}^{\pi} (\tau_H) \right) \\
    &\leq   \sum_{\pi\in\Dc^k} \log\Pb_{\theta^*}^{\pi}(\tau_H) - \sum_{\pi\in\Dc^k}\log\Pb_{\hat{\theta}^k}^{\pi}(\tau_H) + 2\log\frac{K|\bar{\Theta}_{\varepsilon}|}{\delta}\\
    &\overset{(a)}\leq \max_{ \theta'\in\Theta_{\min}^k } \sum_{\pi\in\Dc^k} \log\Pb_{\theta'}^{\pi}(\tau_H) - \sum_{\pi\in\Dc^k}\log\Pb_{\hat{\theta}^k}^{\pi}(\tau_H) + 2\log\frac{K|\bar{\Theta}_{\varepsilon}|}{\delta}\\
    &\leq 7\beta,
\end{align*}
where $(a)$ follows from $\theta^*\in\Theta_{\min}^k$ (\Cref{proposition: propobability of empirical data is high}).
\end{proof}

\subsection{Step 2: UCB for Total Variation Distance} 

Following from \Cref{prop: TV distance less than estimation error}, the total variation distance between two PSRs is controlled by the estimation error. Hence, we first characterize the estimation error of $\Mbf_h^*(o_h,a_h)$ in the following lemma.
\begin{lemma}\label{lemma:Estimation error of M less than feature score}
Under event $\Ec$, for any $k\in[K]$ and any policy $\pi$, we have 
\begin{align}
        &\sum_{\tau_H}\left| \mathbf{m}^*(\omega_{h})^{\top} \left(\hat{\Mbf}_h^{k}(o_h,a_h) - \Mbf_h^*(o_h,a_h) \right) \hat{\psi}_{h-1}^{k}(\tau_{h-1}) \right| \pi(\tau_H) \nonumber\\
        &\qquad\leq \Eb_{\tau_{h-1}\sim \Pb_{\hat{\theta}^{(k)}}^{\pi}}\left[ \alpha_{h-1}^k \left\|\bar{\hat{\psi}}_{h-1}^{k}(\tau_{h-1})\right\|_{\left(\hat{U}_{h-1}^{k}\right)^{-1}} \right],\label{eqn:estimation error less than bonus}
    \end{align}
where 
\begin{align*}
    &\hat{U}_{h-1}^{k} = \lambda I + \sum_{\tau_{h-1}\in\Dc_{h-1}^k } \left[ \bar{\hat{\psi}}^{k}(\tau_{h-1} )\bar{\hat{\psi}}^{k}(\tau_{h-1} )^{\top}\right],\\
    &\alpha_{h-1}^k = \frac{4\lambda Q_A^2 d }{\gamma^4} + \frac{|\mathcal{A}|^2 Q_A^2 }{\gamma^2} \sum_{\tau_{h-1}\in\Dc_{h-1}^k} \mathtt{D}_{\TV}^2\left(    \Pb_{\hat{\theta}^k }^{\mathtt{u}_{\mathcal{Q}_{h-1}^{\exp} }}(\omega_{h-1} | \tau_{h-1}  ) , \Pb_{ \theta^* }^{\mathtt{u}_{\mathcal{Q}_{h-1}^{\exp} } } (\omega_{h-1} | \tau_{h-1}  )  \right).
\end{align*}
\end{lemma}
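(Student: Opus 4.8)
The plan is to reduce the claim to a single pointwise inequality and then prove that inequality by a regularized least-squares style decomposition of the feature vector. Throughout, abbreviate the matrix seminorm $g_h(v):=\sum_{o_h,a_h}\|(\hat{\Mbf}_h^{k}(o_h,a_h)-\Mbf_h^*(o_h,a_h))v\|_1$, so that the target of \Cref{lemma:Estimation error of M less than feature score} becomes the pointwise bound $\tfrac1\gamma g_h(v)\le \alpha_{h-1}^k\,\|v\|_{(\hat U_{h-1}^k)^{-1}}$ for every $v\in\mathbb{R}^{d_{h-1}}$, specialized to $v=\bar{\hat{\psi}}^k(\tau_{h-1})$ and then averaged over $\tau_{h-1}\sim\Pb_{\hat\theta^k}^\pi$. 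First I would peel off the future $\omega_h$: writing $\tau_H=(\tau_{h-1},o_h,a_h,\omega_h)$, fixing $(\tau_{h-1},o_h,a_h)$, and setting $x=(\hat{\Mbf}_h^k-\Mbf_h^*)(o_h,a_h)\hat{\psi}_{h-1}^k(\tau_{h-1})$, \Cref{assmp:well-condition} applied to $\mbf^*$ gives $\sum_{\omega_h}\pi(\omega_h|\tau_h)|\mbf^*(\omega_h)^\top x|\le \|x\|_1/\gamma$. Factoring $\hat{\psi}_{h-1}^k(\tau_{h-1})=\Pb_{\hat\theta^k}(\tau_{h-1}^o|\tau_{h-1}^a)\,\bar{\hat{\psi}}^k(\tau_{h-1})$, recognizing the leftover history weight as $\Pb_{\hat\theta^k}^\pi(\tau_{h-1})$, and discarding the action weights $\pi(a_h|\cdot)\le1$, the left-hand side collapses to $\tfrac1\gamma\,\Eb_{\tau_{h-1}\sim\Pb_{\hat\theta^k}^\pi}[g_h(\bar{\hat{\psi}}^k(\tau_{h-1}))]$, so it indeed suffices to prove the pointwise bound.

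To bound $g_h(v)$ I would use the identity $v=\lambda(\hat U_{h-1}^k)^{-1}v+\sum_{\tau'\in\Dc_{h-1}^k}\bar{\hat{\psi}}^k(\tau')\langle\bar{\hat{\psi}}^k(\tau'),(\hat U_{h-1}^k)^{-1}v\rangle$ together with the subadditivity and homogeneity of the seminorm $g_h$. The regularization piece $\lambda\,g_h((\hat U_{h-1}^k)^{-1}v)$ is handled by a crude operator bound $g_h(w)\le \tfrac{2Q_A}{\gamma}\sqrt{d}\,\|w\|_2$, obtained by splitting $g_h$ into the contributions of $\hat{\Mbf}_h^k$ and $\Mbf_h^*$, bounding each via \Cref{prop: well-condition PSR M} (this needs $\hat\theta^k$ and $\theta^*$ both well-conditioned) and $\|\cdot\|_1\le\sqrt d\|\cdot\|_2$, combined with $\|(\hat U_{h-1}^k)^{-1}v\|_2\le\lambda^{-1/2}\|v\|_{(\hat U_{h-1}^k)^{-1}}$; this produces the $\tfrac{\lambda Q_A^2 d}{\gamma^4}$-type term of $\alpha_{h-1}^k$. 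The data piece is treated by Cauchy--Schwarz over $\tau'$, using $\sum_{\tau'}(\langle\bar{\hat{\psi}}^k(\tau'),(\hat U_{h-1}^k)^{-1}v\rangle)^2\le\|v\|^2_{(\hat U_{h-1}^k)^{-1}}$ because $\sum_{\tau'}\bar{\hat{\psi}}^k(\tau')\bar{\hat{\psi}}^k(\tau')^\top\preceq \hat U_{h-1}^k$, which leaves $\sqrt{\sum_{\tau'}g_h(\bar{\hat{\psi}}^k(\tau'))^2}$ to be controlled.

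The crux, and the step I expect to be hardest, is bounding $g_h(\bar{\hat{\psi}}^k(\tau'))$ for each data point $\tau'\in\Dc_{h-1}^k$ by the conditional total variation distance $\mathtt{D}_{\TV}(\Pb_{\hat\theta^k}^{\mathtt{u}_{\mathcal{Q}_{h-1}^{\exp}}}(\omega_{h-1}|\tau'),\Pb_{\theta^*}^{\mathtt{u}_{\mathcal{Q}_{h-1}^{\exp}}}(\omega_{h-1}|\tau'))$ that \Cref{lemma:online MLE guarantee} controls on the dataset. Here I would add and subtract the true feature, $(\hat{\Mbf}_h^k-\Mbf_h^*)(o_h,a_h)\bar{\hat{\psi}}^k(\tau')=[\hat{\Mbf}_h^k(o_h,a_h)\bar{\hat{\psi}}^k(\tau')-\Mbf_h^*(o_h,a_h)\bar{\psi}^*(\tau')]-\Mbf_h^*(o_h,a_h)[\bar{\hat{\psi}}^k(\tau')-\bar{\psi}^*(\tau')]$. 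Invoking the physical meaning of the features through \eqref{eqn:Mpsi is conditional probability} and the definitions $[\Mbf_h(o_h,a_h)\bar{\psi}(\tau')]_\ell=\Pb_\theta(\mathbf{o}_h^\ell,o_h|\tau',a_h,\mathbf{a}_h^\ell)$ and $[\bar{\psi}(\tau')]_\ell=\Pb_\theta(\mathbf{o}_{h-1}^\ell|\tau',\mathbf{a}_{h-1}^\ell)$, summing the first bracket over $(o_h,a_h,\ell)$ is exactly a sum of $|\Pb_{\hat\theta^k}-\Pb_{\theta^*}|$ over observation sequences driven by action sequences in $\mathcal{A}\times\mathcal{Q}_h^A$, hence at most $|\mathcal{Q}_{h-1}^{\exp}|$ times the conditional TV distance (the factor accounts for un-normalizing the uniform exploration policy); the second bracket is $\Mbf_h^*$ applied to the prediction-feature error $\bar{\hat{\psi}}^k(\tau')-\bar{\psi}^*(\tau')$, whose $\ell_1$-norm is a sum of $|\Pb_{\hat\theta^k}-\Pb_{\theta^*}|$ over the tests in $\mathcal{Q}_{h-1}^A$ and is bounded by the same TV distance, with $\Mbf_h^*$ contributing a further $Q_A/\gamma$ via \Cref{prop: well-condition PSR M}. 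This is precisely where the design $\mathcal{Q}_{h-1}^{\exp}=(\mathcal{A}\times\mathcal{Q}_h^A)\cup\mathcal{Q}_{h-1}^A$ enters: its first component certifies the one-step-extended tests and its second certifies the current prediction feature.

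Substituting the resulting bound $g_h(\bar{\hat{\psi}}^k(\tau'))=O(\tfrac{|\mathcal{A}|Q_A}{\gamma})\,\mathtt{D}_{\TV}(\cdots)$ into $\sqrt{\sum_{\tau'}g_h(\bar{\hat{\psi}}^k(\tau'))^2}$ yields the $\tfrac{|\mathcal{A}|^2 Q_A^2}{\gamma^2}\sum_{\tau'}\mathtt{D}_{\TV}^2$ contribution, and adding the regularization term assembles the coefficient $\alpha_{h-1}^k$; taking the expectation over $\tau_{h-1}\sim\Pb_{\hat\theta^k}^\pi$ completes the argument. The two points that will require the most care are verifying that $\hat\theta^k$, not merely $\theta^*$, is well-conditioned so that \Cref{prop: well-condition PSR M} applies to both models, and the bookkeeping of the normalization factor $|\mathcal{Q}_{h-1}^{\exp}|=O(|\mathcal{A}|Q_A)$ together with the $\gamma$-powers, which is what fixes the exact form of the two terms in $\alpha_{h-1}^k$.
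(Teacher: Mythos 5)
Your overall architecture is the same as the paper's: Cauchy--Schwarz in the $\hat U_{h-1}^k$-geometry, a split into a $\lambda$-regularization piece and a data piece, and control of the data piece by the conditional total-variation distances on $\Dc_{h-1}^k$ via the physical meaning of the prediction features and the structure of $\mathcal{Q}_{h-1}^{\exp}$. However, there is a genuine gap in the reduction: you apply \Cref{assmp:well-condition} to peel off $\mbf^*(\omega_h)$ \emph{before} the add--subtract decomposition, collapsing everything to the seminorm $g_h(v)=\sum_{o_h,a_h}\|(\hat{\Mbf}_h^{k}-\Mbf_h^*)(o_h,a_h)v\|_1$. This forecloses the identity $\mbf^*(\omega_h)^{\top}\Mbf_h^*(o_h,a_h)=\mbf^*(\omega_{h-1})^{\top}$ that the paper exploits: in the paper, the second bracket $\Mbf_h^*(o_h,a_h)\bigl(\bar{\hat{\psi}}^k(\tau')-\bar{\psi}^*(\tau')\bigr)$ is handled by applying \Cref{assmp:well-condition} at level $h-1$ to the \emph{full} future $\omega_{h-1}$, costing only a single factor $1/\gamma$ times $\|\bar{\hat{\psi}}^k(\tau')-\bar{\psi}^*(\tau')\|_1$. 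In your version this bracket sits inside $g_h$, so you are forced to invoke \Cref{prop: well-condition PSR M} for $\Mbf_h^*$, which (as you yourself note) contributes ``a further $Q_A/\gamma$.'' Tracking your own constants, $g_h(\bar{\hat{\psi}}^k(\tau'))$ is then $O(|\mathcal{A}|Q_A^2/\gamma)\,\mathtt{D}_{\TV}(\cdots)$, not the $O(|\mathcal{A}|Q_A/\gamma)\,\mathtt{D}_{\TV}(\cdots)$ you assert before substituting back; after squaring, your data term carries an extra $Q_A^2/\gamma^2$ relative to the stated $\frac{|\mathcal{A}|^2Q_A^2}{\gamma^2}\sum\mathtt{D}_{\TV}^2$ in $\alpha_{h-1}^k$.

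A second, smaller leak comes from ``discarding the action weights $\pi(a_h|\cdot)\le 1$'': \Cref{prop: well-condition PSR M} bounds the \emph{policy-weighted} sum $\sum_{o_h,a_h}\pi(a_h|o_h,\cdot)\|\Mbf(o_h,a_h)x\|_1$ by $Q_A\|x\|_1/\gamma$, so summing over all $a_h$ unweighted costs an additional $|\mathcal{A}|$, and your claimed $g_h(w)\le \tfrac{2Q_A\sqrt d}{\gamma}\|w\|_2$ should be $\tfrac{2|\mathcal{A}|Q_A\sqrt d}{\gamma}\|w\|_2$, inflating the $\lambda$-term of $\alpha_{h-1}^k$ by $|\mathcal{A}|^2$. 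Neither issue breaks the downstream sample-complexity argument at the level of polynomial dependence, but as written your proof establishes the inequality only with a coefficient strictly larger than the $\alpha_{h-1}^k$ in the statement, so it does not prove the lemma as stated. The fix is to keep the policy-weighted, $\mbf^*$-carrying signed combination intact (as the paper does), perform the add--subtract on $\hat{\Mbf}_h^k\bar{\hat{\psi}}^k-\Mbf_h^*\bar{\psi}^*$ and $\Mbf_h^*(\bar{\hat{\psi}}^k-\bar{\psi}^*)$ first, and only then apply \Cref{assmp:well-condition} --- at level $h$ for the first bracket (using \Cref{eqn:Mpsi is conditional probability}) and at level $h-1$ for the second. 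Your remaining steps (the decomposition $v=\lambda(\hat U_{h-1}^k)^{-1}v+\sum_{\tau'}\bar{\hat{\psi}}^k(\tau')\langle\bar{\hat{\psi}}^k(\tau'),(\hat U_{h-1}^k)^{-1}v\rangle$, the bound $\sum_{\tau'}\langle\cdot\rangle^2\le\|v\|^2_{(\hat U_{h-1}^k)^{-1}}$, and the use of the well-conditionedness of $\hat\theta^k\in\Theta$) are all sound.
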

\begin{proof}
We index future trajectory $\omega_{h-1} = (o_{h},a_h,\ldots,o_H,a_H)$ by $i$, and history trajectory $\tau_{h-1}$ by $j$. For simplicity, we denote $\mathbf{m}^*(\omega_{h})^{\top}\left(\hat{\Mbf}_h^{(k)}(o_h,a_h) - \Mbf_h^*(o_h,a_h) \right)  $ as $w_{i}^{\top}$, and $ \bar{\hat{\psi}}_{h-1}^{(k)}(\tau_{h-1}) = \frac{ \hat{\psi}_{h-1}^{k}(\tau_{h-1})}{\hat{\phi}_{h-1}^{k\top}\hat{\psi}_{h-1}^{k}(\tau_{h-1})}$ as $x_{j}$. We also denote $\pi(\omega_{h-1}|\tau_{h-1})$ as $\pi_{i|j}$.

Then, the LHS of \Cref{eqn:estimation error less than bonus} can be written as
    \begin{align*}
        \text{LHS} &= \sum_{\tau_H} \left| \mathbf{m}^* (\omega_h)^{\top} \left(\hat{\Mbf}_h^{k}(o_h,a_h) - \Mbf_h^*(o_h,a_h) \right) \hat{\psi}_{h-1}^{k}(\tau_{h-1}) \right| \pi(\tau_H) \\
        &\overset{(a)} =  \sum_{i }\sum_{j }|w_{i}^{\top}x_{j}|\pi_{i|j}\Pb_{ \hat{\theta}^k }^{\pi}(j)\\
        & =  \sum_{j }  \sum_{i } (\pi_{i|j}\cdot\mathtt{sgn}(w_{i}^{\top}x_{j})\cdot w_{i})^{\top} x_{j}\cdot\Pb_{ \hat{\theta}^k }^{\pi}(j)\\
        & =  \sum_{j} \left( \sum_{i} \pi_{i|j} \cdot \mathtt{sgn}(w_{i}^{\top}x_{j} )\cdot w_{i} \right)^{\top} x_{j}\cdot\Pb_{ \hat{\theta}^k }^{\pi}(j)\\
        &\overset{(b)}\leq \mathop{\Eb}_{j\sim\Pb_{ \hat{\theta}^k }^{\pi}}\left[ \big\| x_j \big\|_{\left(\hat{U}_{h-1}^{k}\right)^{-1}}  \sqrt{ \left\|\sum_{i} \pi_{i|j} \cdot \mathtt{sgn}(w_{i}^{\top}x_{j} )\cdot w_{i} \right\|_{\hat{U}_{h-1}^{k}}^2 } \right],
    \end{align*}
where $(a)$ follows because $\Pb_{\theta}^{\pi}(\tau_h) = \phi_h^{\top}\psi(\tau_h)$, and $(b)$ follows from the Cauchy's inequality.

Fix $\tau_{h-1} = j_0$. We aim to upper bound the term $ I_1:=  \left\|\sum_{i} \pi_{i|j_0} \cdot \mathtt{sgn}(w_{i}^{\top}x_{j_0} )\cdot w_{i} \right\|_{\hat{U}_{h-1}^{k}}^2$. Then, we have
\begin{align*}
    &I_1  =  \underbrace{\lambda \left\| \sum_{i} \pi_{i|j_0} \cdot \mathtt{sgn}(w_{i}^{\top}x_{j_0} )\cdot w_{i} \right\|_2^2}_{I_2} + \underbrace{\sum_{j\in\Dc_{h-1}^{\tau}}   \left[ \left(\sum_{i} \pi_{i|j_0} \cdot \mathtt{sgn}(w_{i}^{\top}x_{j_0} )\cdot w_{i}\right)^{\top} x_{j} \right]^2  }_{I_3}.
\end{align*}

We first upper bound the first term $I_2$ as follows:
\begin{align*}
    \sqrt{I_2} &= \sqrt{\lambda} \max_{x\in\mathbb{R}^{d_{h-1}}: \|x\|_2=1} \left|\sum_{i} \pi_{i|j_0}\mathtt{sgn}(w_i^{\top}x_{j_0})w_i^{\top}  x\right|\\
    & \leq \sqrt{\lambda} \max_{x\in\mathbb{R}^{d_{h-1}}:\|x\|_2=1} \sum_{\omega_{h-1}} \left| \mathbf{m}^*(\omega_h)^{\top} \left( \hat{\Mbf}_h^{k}(o_h,a_h) - \Mbf_h^*(o_h,a_h) \right)   x \right| \pi(\omega_{h-1}|j_0)\\
    & \overset{(a)}\leq \sqrt{\lambda} \max_{x\in\mathbb{R}^{d_{h-1}}:\|x\|_2=1} \sum_{\omega_{h-1}}\left|\mathbf{m}^{*}(\omega_h)^{\top} \hat{\Mbf}_h^{k}(o_h,a_h)   x \right| \pi(\omega_{h-1}|j_0) \\
    & \quad + \sqrt{\lambda} \max_{x\in\mathbb{R}^{d_{h-1}}:\|x\|_2=1} \sum_{\omega_{h-1}}\left|\mathbf{m}^*(\omega_h)^{\top}  \Mbf_h^*(o_h,a_h)   x \right| \pi(\omega_{h-1}|j_0) \\
    & \overset{(b)}\leq \frac{\sqrt{\lambda}}{\gamma}\max_{x\in\mathbb{R}^{d_{h-1}}:\|x\|_2=1} \sum_{o_h,a_h}\left\| \hat{\Mbf}_h^{k}(o_h,a_h)   x \right\|_1 \pi(a_h|o_h,j_0)  + \frac{\sqrt{\lambda}}{\gamma} \max_{x\in\mathbb{R}^{d_{h-1}}:\|x\|_2=1} \| x\|_1 \\
    & \overset{(c)}\leq \frac{2Q_A\sqrt{d\lambda}}{\gamma^2},
\end{align*}
where $(a)$ follows because $|a+b|\leq |a|+|b|$, $(b)$ follows from \Cref{assmp:well-condition}, and $(c)$ follows from \Cref{prop: well-condition PSR M} and because $\max_{x\in\mathbb{R}^{d_{h-1}}:\|x\|_2=1}\| x\|_1 \leq \sqrt{d}$.

%For the second term $I_3$, we have,
We next upper bound the second term $I_3$ as follows.
\begin{align*}
    I_3 &\leq \sum_{\tau_{h-1}\in \Dc_{h-1}^{k}}   \left( \sum_{\omega_{h-1}} \left| \mathbf{m}^* (\omega_{h})^{\top}\left(\hat{\Mbf}_h^{k}(o_h,a_h) -  \Mbf_h^*(o_h,a_h) \right)\bar{\hat{\psi}}^{k}(\tau_{h-1}) \right| \pi(\omega_{h-1}|j_0) \right)^2   \\
    & \overset{(a)}\leq   \sum_{\tau_{h-1}\in \Dc_{h-1}^{k}}   \left( \sum_{\omega_{h-1}} \left|\mathbf{m}^*(\omega_h)^{\top} \left( \hat{\Mbf}_h^{k}(o_h,a_h) \bar{\hat{\psi}}^{k}(\tau_{h-1}) - \Mbf_h^*(o_h,a_h)  \bar{\psi}^*(\tau_{h-1}) \right) \right| \pi(\omega_{h-1}|j_0) \right.   \\
    &\hspace{3cm} +    \left. \sum_{\omega_{h-1}}\left| \mathbf{m}^*(\omega_h)^{\top}  \Mbf_h^*(o_h,a_h) \left(\bar{\hat{\psi}}^{(k)}(\tau_{h-1}) -  \bar{\psi}^*(\tau_{h-1}) \right) \right| \pi(\omega_{h-1}|j_0) \right)^2     \\
    & \overset{(b)}\leq  \sum_{\tau_{h-1}\in \Dc_{h-1}^{k} }   \left( \frac{1}{\gamma} \sum_{o_h,a_h} \left\| \Pb_{\hat{\theta}^{k}}(o_h|\tau_{h-1})\bar{\hat{\psi}}_{h}^{k}(\tau_{h}) - \Pb_{\theta}(o_h|\tau_{h-1}) \bar{\psi}_{h}^*(\tau_{h}) \right\|_1   \pi(a_h|o_h,j_0) \right. \\
    & \hspace{3cm} + \left. \frac{1}{\gamma}  \sum_{\tau_{h-1}\in \Dc_{h-1}^{\tau}}      \left\| \bar{\hat{\psi}}_{h-1}^{k}(\tau_{h-1}) -  \bar{\psi}_{h-1}^*(\tau_{h-1})  \right\|_1  \right)^2   \\
    & \overset{(c)}=  \frac{1}{\gamma^2} \sum_{\tau_{h-1}\in \Dc_{h-1}^{k} }   \left(  \sum_{o_h,a_h}  \sum_{\ell=1}^{|\mathcal{Q}_h|} \left| \Pb_{\hat{\theta}^k}(\mathbf{o}_h^{\ell},o_h|\tau_{h-1},a_h,\mathbf{a}_h^{\ell}) - \Pb_{ \theta^* }(\mathbf{o}_h^{\ell},o_h|\tau_{h-1},a_h,\mathbf{a}_h^{\ell}) \right|\pi(a_h|o_h,j_0) \right.    \\
    & \hspace{3cm} +  \left.   \sum_{\ell=1}^{|\mathcal{Q}_{h-1}|}   \left| \Pb_{\hat{\theta}^k}(\mathbf{o}_{h-1}^{\ell} | \tau_{h-1}, \mathbf{a}_{h-1}^{\ell} ) - \Pb_{ \theta^* } (\mathbf{o}_{h-1}^{\ell} | \tau_{h-1}, \mathbf{a}_{h-1}^{\ell} )  \right|    \right)^2  \\
    &\leq \frac{1}{\gamma^2} \sum_{\tau_{h-1}\in \Dc_{h-1}^{k} }   \left( \ \sum_{\mathbf{a}_{h-1}\in \mathcal{Q}_h^{\exp} }   \sum_{\omega_{h-1}^o }\left|\Pb_{\hat{\theta}^k}(\omega^o_{h-1}|\tau_{h-1}, \mathbf{a}_{h-1} ) - \Pb_{\theta^*}(\omega^o_{h-1}|\tau_{h-1}, \mathbf{a}_{h-1} ) \right|    \right)^2  \\
    &\leq \frac{4|\mathcal{A}|^2Q_A^2}{\gamma^2} \sum_{\tau_{h-1}\in\Dc_{h-1}^{k}} \mathtt{D}_{\TV}^2\left(    \Pb_{\hat{\theta}^k }^{\mathtt{u}_{\mathcal{Q}_{h-1}^{\exp} }}(\omega_{h-1} | \tau_{h-1}  ) , \Pb_{ \theta^* }^{\mathtt{u}_{\mathcal{Q}_{h-1}^{\exp} } } (\omega_{h-1} | \tau_{h-1}  )  \right),
\end{align*} 
where $(a)$ follows because $|a+b|\leq |a|+|b|$, $(b)$ follows from \Cref{eqn:Mpsi is conditional probability} and \Cref{assmp:well-condition}, and $(c)$ follows from the physical meaning of the prediction feature, i.e.  $[\bar{\psi}(\tau_h)]_{\ell} = \Pb_{\theta}(\mathbf{o}_h^{\ell}|\tau_h,\mathbf{a}_h^{\ell})$.

By combining the upper bounds for $I_2$ and $I_3$, we conclude that
\begin{align*}
        I_1\leq \frac{4\lambda Q_A^2 d}{\gamma^4} + \frac{ 4|\mathcal{A}|^2Q_A^2}{\gamma^2}\sum_{\tau_{h-1}\in\Dc_{h-1}^{k}} \mathtt{D}_{\TV}^2\left(    \Pb_{\hat{\theta}^k }^{\mathtt{u}_{\mathcal{Q}_{h-1}^{\exp} }}(\omega_{h-1} | \tau_{h-1}  ) , \Pb_{ \theta^* }^{\mathtt{u}_{\mathcal{Q}_{h-1}^{\exp} } } (\omega_{h-1} | \tau_{h-1}  )  \right) = \alpha_{h-1}^k,
    \end{align*}
which completes the proof.
\end{proof}

The following lemma validates that $V_{\hat{\theta}^k,\hat{b}^k}^{\pi}$ is an upper bound on the total variation distance between the estimated model $\hat{\theta}^k$ and the true model $\theta^*$.
\begin{lemma}
    Under event $\Ec$, for any $\pi$, we have
    \begin{align}
        \mathtt{D}_{\TV}\left( \Pb_{\hat{\theta}^k}^{\pi} (\tau_H), \Pb_{\theta^*}^{\pi} (\tau_H) \right) \leq \alpha \Eb_{\tau_{H}\sim \Pb_{\hat{\theta}^{k}}^{\pi}} \left[  \sqrt{ \sum_{h=0}^{H-1}\left\|\bar{\hat{\psi}}^{k}(\tau_{h})\right\|_{ ( \hat{U}_h^k )^{-1} }^2 } \right],
    \end{align}
where 
\[ \alpha^2 =   \frac{4 \lambda H Q_A^2 d }{\gamma^4} + \frac{28|\mathcal{A}|^2Q^2_{A}\beta}{\gamma^2}. \]
\end{lemma}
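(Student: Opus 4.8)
The plan is to chain three earlier results and close with a single Cauchy--Schwarz step. First I would apply the second inequality of \Cref{prop: TV distance less than estimation error} with $\theta=\theta^*$ and $\hat\theta=\hat\theta^k$, which controls the target quantity by a sum of per-step estimation errors:
\[
\mathtt{D}_{\TV}\!\left( \Pb_{\hat{\theta}^k}^{\pi}, \Pb_{\theta^*}^{\pi}\right) \leq \sum_{h=1}^{H}\sum_{\tau_H}\left| \mathbf{m}^*(\omega_h)^{\top}\!\left(\hat{\Mbf}_h^k(o_h,a_h) - \Mbf_h^*(o_h,a_h)\right)\hat{\psi}_{h-1}^k(\tau_{h-1})\right|\pi(\tau_H).
\]
Each summand is precisely the left-hand side of \Cref{lemma:Estimation error of M less than feature score}, so I would apply that lemma term by term. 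Using $\sqrt{I_1}\le\sqrt{\alpha_{h-1}^k}$ (from its proof, where the stated bound is $I_1\le\alpha_{h-1}^k$) and reindexing $h-1\mapsto h$, this gives
\[
\mathtt{D}_{\TV}\!\left( \Pb_{\hat{\theta}^k}^{\pi}, \Pb_{\theta^*}^{\pi}\right) \leq \sum_{h=0}^{H-1}\Eb_{\tau_h\sim\Pb_{\hat{\theta}^k}^{\pi}}\!\left[\sqrt{\alpha_h^k}\,\big\|\bar{\hat{\psi}}^k(\tau_h)\big\|_{(\hat{U}_h^k)^{-1}}\right].
\]
The essential observation is that each $\alpha_h^k$ is a deterministic constant, independent of the realized $\tau_h$.

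Next, since $\tau_h$ is the length-$h$ prefix of $\tau_H$, I would rewrite each expectation against $\Pb_{\hat{\theta}^k}^{\pi}(\tau_H)$ and move the finite sum inside a single expectation. Applying Cauchy--Schwarz across the step index with $a_h=\sqrt{\alpha_h^k}$ and $b_h=\|\bar{\hat{\psi}}^k(\tau_h)\|_{(\hat{U}_h^k)^{-1}}$ pointwise in $\tau_H$, and then pulling the deterministic factor $\sqrt{\sum_h\alpha_h^k}$ outside the expectation, yields exactly the target form with $\alpha^2=\sum_{h=0}^{H-1}\alpha_h^k$.

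It then remains to bound $\sum_h\alpha_h^k$. Summing the definition of $\alpha_h^k$, the first term contributes $H\cdot\frac{4\lambda Q_A^2 d}{\gamma^4}$, which is the source of the $H$ factor in the stated $\alpha^2$; the second term equals $\frac{4|\mathcal{A}|^2 Q_A^2}{\gamma^2}\sum_{h}\sum_{\tau_h\in\Dc_h^k}\mathtt{D}_{\TV}^2(\Pb_{\hat\theta^k}^{\mathtt{u}_{\mathcal{Q}_h^{\exp}}}(\omega_h\mid\tau_h),\Pb_{\theta^*}^{\mathtt{u}_{\mathcal{Q}_h^{\exp}}}(\omega_h\mid\tau_h))$. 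Here I would use that the data in $\Dc_h^k$ are collected under $\nu_h(\pi^{k-1},\mathtt{u}_{\mathcal{Q}_h^{\exp}})$, whose post-$h$ behavior given $\tau_h$ is exactly $\mathtt{u}_{\mathcal{Q}_h^{\exp}}$, so these conditional-TV terms coincide with those controlled by the first inequality of \Cref{lemma:online MLE guarantee}, giving the aggregate bound $7\beta$ and hence $\alpha^2\le\frac{4\lambda HQ_A^2 d}{\gamma^4}+\frac{28|\mathcal{A}|^2Q_A^2\beta}{\gamma^2}$.

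I expect the main obstacle to be the bookkeeping in this last step rather than any hard estimate: one must verify that the conditional future distribution given $\tau_h$ under the exploration policy $\nu_h(\pi^{k-1},\mathtt{u}_{\mathcal{Q}_h^{\exp}})$ is identical to that under $\mathtt{u}_{\mathcal{Q}_h^{\exp}}$, so that the per-step constants $\alpha_h^k$ plug directly into the single aggregate MLE guarantee. The other point requiring care is keeping the deterministic-versus-random distinction straight so that the Cauchy--Schwarz factor $\sqrt{\sum_h\alpha_h^k}$ correctly exits the expectation and produces the clean coefficient $\alpha$.
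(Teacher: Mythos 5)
Your proposal is correct and follows essentially the same route as the paper's proof: chain the second inequality of \Cref{prop: TV distance less than estimation error} with \Cref{lemma:Estimation error of M less than feature score}, sum the per-step constants via the first inequality of \Cref{lemma:online MLE guarantee}, and close with Cauchy--Schwarz, arriving at $\alpha^2=\sum_h\alpha_h^k\leq \frac{4\lambda HQ_A^2d}{\gamma^4}+\frac{28|\mathcal{A}|^2Q_A^2\beta}{\gamma^2}$. Your explicit handling of the $\sqrt{\alpha_{h-1}^k}$ versus $\alpha_{h-1}^k$ bookkeeping and of the identification of the conditional future distribution under $\nu_h(\pi^{k-1},\mathtt{u}_{\mathcal{Q}_h^{\exp}})$ with that under $\mathtt{u}_{\mathcal{Q}_h^{\exp}}$ is, if anything, more careful than the paper's terse write-up.
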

\begin{proof}
We proceed the proof as follows:
\begin{align*}
    \sum_{h}(\alpha_{h-1}^k)^2 
    & \leq \frac{4\lambda HQ_A^2d }{\gamma^4} + \frac{ 4|\mathcal{A}|^2Q_A^2}{\gamma^2} \sum_h\sum_{(\tau_{h-1},\pi)\in\Dc_{h-1}^{k}} \mathtt{D}_{\TV}^2\left( \Pb_{\hat{\theta}^k }^{ \pi }(\omega_{h-1} | \tau_{h-1}  ) , \Pb_{ \theta^* }^{ \pi } (\omega_{h-1} | \tau_{h-1}  )  \right) \\
    &\overset{(a)}\leq \frac{ 4 \lambda H Q_A^2 d }{\gamma^4} + \frac{ 28\beta |\mathcal{A}|^2 Q_A^2 }{\gamma^2},
\end{align*}
where $(a)$ follows from \Cref{lemma:online MLE guarantee}.
The proof then follows directly from \Cref{prop: TV distance less than estimation error}, \Cref{lemma:Estimation error of M less than feature score} and the Cauchy's inequality.
\end{proof}

Note that the reward function $R$ is within $[0,1]$. We hence obtain the following corollary.
\begin{corollary}[UCB]\label{corollary: valid UCB}
Under event $\Ec$, for any $k\in[K]$ and any reward $R$, we have
\begin{align*}
    \left|V_{\hat{\theta}^{k} , R }^{\pi} - V_{\theta^* , R }^{\pi} \right| \leq V_{\hat{\theta}^k, \hat{b}^{k}},
\end{align*}
where $\hat{b}^k(\tau_H) = \min\left\{  \alpha \sqrt{ \sum_{h }  \left\|\bar{\hat{\psi}}_h^{k}(\tau_h)\right\|^2_{ (\hat{U}_h^{k} )^{-1}} } , 1 \right\}$.
\end{corollary}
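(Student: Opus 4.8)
The goal is to show that $V_{\hat\theta^k,\hat b^k}^\pi=\Eb_{\tau_H\sim\Pb_{\hat\theta^k}^\pi}[\hat b^k(\tau_H)]$, the expected \emph{clipped} bonus evaluated under the estimated model, upper bounds the value gap $|V_{\hat\theta^k,R}^\pi-V_{\theta^*,R}^\pi|$ for every policy $\pi$ and every reward $R\in[0,1]$. The first move is to pass from the value gap to a trajectory-level total variation distance. Since $0\le R\le 1$,
\[
\left|V_{\hat\theta^k,R}^\pi-V_{\theta^*,R}^\pi\right|=\left|\sum_{\tau_H}R(\tau_H)\big(\Pb_{\hat\theta^k}^\pi(\tau_H)-\Pb_{\theta^*}^\pi(\tau_H)\big)\right|\le\sum_{\tau_H}\left|\Pb_{\hat\theta^k}^\pi(\tau_H)-\Pb_{\theta^*}^\pi(\tau_H)\right|=\mathtt{D}_{\TV}\!\left(\Pb_{\hat\theta^k}^\pi(\tau_H),\Pb_{\theta^*}^\pi(\tau_H)\right),
\]
where the last equality is the paper's (unnormalized) definition of $\mathtt{D}_{\TV}$. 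The preceding lemma then bounds this by $\Eb_{\tau_H\sim\Pb_{\hat\theta^k}^\pi}[g(\tau_H)]$, where I write $g(\tau_H):=\alpha\sqrt{\sum_h\|\bar{\hat{\psi}}^k(\tau_h)\|^2_{(\hat U_h^k)^{-1}}}$ for the uncapped bonus; separately, both values lie in $[0,1]$, so the gap is trivially at most $1$.

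What remains is to move the truncation at $1$ \emph{inside} the expectation, i.e. to strengthen the two estimates $|V-V|\le\Eb[g]$ and $|V-V|\le1$ into $|V-V|\le\Eb[\min\{g,1\}]=V_{\hat\theta^k,\hat b^k}^\pi$. This is the crux, and it cannot be done by simply combining the two bounds: since $x\mapsto\min\{x,1\}$ is concave, Jensen's inequality gives $\Eb[\min\{g,1\}]\le\min\{\Eb[g],1\}$, which is the wrong direction. To obtain the clip inside the expectation I would avoid the lossy passage through $\mathtt{D}_{\TV}$ and instead decompose the value gap directly via the hybrid-model telescoping underlying \Cref{prop: TV distance less than estimation error}, which writes $V_{\hat\theta^k,R}^\pi-V_{\theta^*,R}^\pi$ as a sum over $h$ of signed per-step discrepancies, each weighted by a reward-to-go of $\theta^*$ that always lies in $[0,1]$.

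Bounding the magnitude of the step-$h$ term by $\Eb_{\tau_{h-1}\sim\Pb_{\hat\theta^k}^\pi}[\alpha_{h-1}^k\|\bar{\hat{\psi}}^k(\tau_{h-1})\|_{(\hat U_{h-1}^k)^{-1}}]$ through \Cref{lemma:Estimation error of M less than feature score} and re-indexing, the gap is at most $\Eb_{\tau_H\sim\Pb_{\hat\theta^k}^\pi}[b(\tau_H)]$, with the per-trajectory bonus $b(\tau_H):=\sum_{h=0}^{H-1}\alpha_h^k\|\bar{\hat{\psi}}^k(\tau_h)\|_{(\hat U_h^k)^{-1}}$ a sum of nonnegative per-step terms. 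Because the reward-to-go is in $[0,1]$, a standard truncation/clipping argument (as in the optimistic-MLE analyses of PSRs) caps the accumulated per-trajectory bonus at $1$, giving $|V-V|\le\Eb_{\tau_H\sim\Pb_{\hat\theta^k}^\pi}[\min\{b(\tau_H),1\}]$ with the clip now inside the expectation. A pointwise Cauchy--Schwarz step yields $b(\tau_H)\le g(\tau_H)$, hence $\min\{b,1\}\le\min\{g,1\}=\hat b^k$, and taking expectations gives the corollary. Making this truncation step rigorous---rather than relying on the invalid Jensen combination---is the main technical obstacle.
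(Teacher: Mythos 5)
Your first two steps are exactly the paper's route: pass from the value gap to $\mathtt{D}_{\TV}\bigl(\Pb_{\hat{\theta}^k}^{\pi},\Pb_{\theta^*}^{\pi}\bigr)$ using $R\in[0,1]$, and invoke the lemma preceding the corollary (itself built from \Cref{prop: TV distance less than estimation error}, \Cref{lemma:Estimation error of M less than feature score} and Cauchy--Schwarz) to get $\le \alpha\,\Eb_{\tau_H\sim\Pb_{\hat{\theta}^k}^{\pi}}[g(\tau_H)]$ with $g$ the uncapped bonus. You are also right that this only yields $|V^{\pi}_{\hat{\theta}^k,R}-V^{\pi}_{\theta^*,R}|\le\min\{\Eb[g],1\}$, that the corollary asserts the stronger $\Eb[\min\{g,1\}]$, and that Jensen runs the wrong way between the two. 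The paper's own ``proof'' of the corollary is the single remark that $R\in[0,1]$; it supplies no argument for pushing the truncation inside the expectation, even though the clipped form is what is actually used downstream (the termination test and \Cref{lemma:sublinear summation} work with $\hat{b}^k$, and the final theorem converts $V^{\pi}_{\theta^{\epsilon},\hat{b}^{\epsilon}}\le\epsilon/2$ into a TV guarantee via \Cref{corollary: valid UCB}). So you have correctly located a step the paper glosses over rather than one it resolves.

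The problem is that your proposed repair is a sketch of the missing step, not a proof of it, and as sketched it does not go through. A clipping argument of the kind you invoke needs a \emph{pointwise} domination: since $\max_R|V-V|=\sum_{\tau_H}(\Pb_{\hat{\theta}^k}^{\pi}(\tau_H)-\Pb_{\theta^*}^{\pi}(\tau_H))_+\le\Eb_{\tau_H\sim\Pb_{\hat{\theta}^k}^{\pi}}\bigl[\min\{1,\,|\Pb_{\hat{\theta}^k}^{\pi}(\tau_H)-\Pb_{\theta^*}^{\pi}(\tau_H)|/\Pb_{\hat{\theta}^k}^{\pi}(\tau_H)\}\bigr]$, you would need the per-trajectory error density $|\Pb_{\hat{\theta}^k}^{\pi}(\tau_H)-\Pb_{\theta^*}^{\pi}(\tau_H)|/\Pb_{\hat{\theta}^k}^{\pi}(\tau_H)$ to be bounded by your $b(\tau_H)=\sum_h\alpha_h^k\|\bar{\hat{\psi}}^k(\tau_h)\|_{(\hat{U}_h^k)^{-1}}$. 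But \Cref{lemma:Estimation error of M less than feature score} controls the telescoping terms only after averaging over the future, $\sum_{\omega_{h-1}}\pi(\omega_{h-1}|\tau_{h-1})|\mathbf{m}^*(\omega_h)^{\top}(\hat{\Mbf}_h^k-\Mbf_h^*)\bar{\hat{\psi}}^k(\tau_{h-1})|$, not the individual summands; and the pointwise telescoping produces the factor $\hat{\psi}^k(\tau_{h-1})/(\hat{\phi}_H^{\top}\hat{\psi}^k(\tau_H))=\bar{\hat{\psi}}^k(\tau_{h-1})/\Pb_{\hat{\theta}^k}(\omega_{h-1}|\tau_{h-1})$, which is unbounded on individual trajectories. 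Hence ``a standard truncation argument caps the accumulated per-trajectory bonus at $1$'' is precisely the claim that requires proof, and neither your write-up nor the paper supplies one. As submitted, your argument establishes $|V^{\pi}_{\hat{\theta}^k,R}-V^{\pi}_{\theta^*,R}|\le\min\{\Eb_{\tau_H\sim\Pb_{\hat{\theta}^k}^{\pi}}[g(\tau_H)],1\}$ but not the corollary as stated.
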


\subsection{Step 3: Sublinear Summation}

To prove that $\sum_k V_{\hat{\theta}^k,\hat{b}^k}^{\pi^k}$ is sublinear, i.e., scales as $O(\sqrt{K})$, we first prove the following lemma that relates the estimated feature and the ground-truth feature via the total variation distance between the estimated model and the true model. 
\begin{lemma}\label{lemma:empirical bouns less than true bonus}
Under the event $\Ec$, for any $k\in[K]$ and any policy $\pi$, we have
    \begin{align*}
        \mathop{\Eb}_{\tau_H\sim\Pb_{\theta^*}^{\pi}}&\left[\sqrt{ \sum_{h=0}^{H-1} \left\|\bar{\hat{\psi}}^{k}(\tau_h) \right\|^2_{\left(\hat{U}_h^{k}\right)^{-1}} } \right] \\
        &\leq \left(1 + \frac{2|\mathcal{A}|Q_A\sqrt{7r\beta}}{\sqrt{\lambda}} \right)   \sum_{h=0}^{H-1} \mathop{\Eb}_{\tau_h\sim\Pb_{\theta^*}^{\pi}} \left[ \left\|\bar{\psi}^* (\tau_h) \right\|_{\left(U_h^{k}\right)^{-1}} \right] + \frac{2HQ_A}{\sqrt{\lambda}}  \mathtt{D}_{\TV}\left(  \Pb_{\theta^*}^{\pi}(\tau_H) , \Pb_{\hat{\theta}^k}^{\pi}(\tau_H)\right).
    \end{align*}
\end{lemma}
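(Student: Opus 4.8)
The plan is to relate the ``empirical'' quantity $\|\bar{\hat{\psi}}^k(\tau_h)\|_{(\hat{U}_h^k)^{-1}}$, built from the estimated feature and the estimated Gram matrix, to the ``ground-truth'' quantity $\|\bar{\psi}^*(\tau_h)\|_{(U_h^k)^{-1}}$, built from the true feature and the true Gram matrix. First I would dispose of the square root: since the right-hand side is a plain sum over $h$, I would use $\sqrt{\sum_h a_h^2}\le\sum_h a_h$ with $a_h=\|\bar{\hat{\psi}}^k(\tau_h)\|_{(\hat{U}_h^k)^{-1}}$, reducing the claim to a per-level bound on $\Eb_{\tau_h\sim\Pb_{\theta^*}^\pi}[\|\bar{\hat{\psi}}^k(\tau_h)\|_{(\hat{U}_h^k)^{-1}}]$. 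I would also record the crude uniform bound $\|\bar{\hat{\psi}}^k(\tau_h)\|_2\le Q_A$ (its coordinates are conditional test probabilities that, grouped by the at most $Q_A$ core action sequences, each contribute total mass at most $1$), so that $\|\bar{\hat{\psi}}^k(\tau_h)\|_{(\hat{U}_h^k)^{-1}}\le Q_A/\sqrt{\lambda}$; this capped quantity is what will pay for the change-of-measure term.

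The crux is replacing the estimated Gram matrix by the true one. Writing $\bar{\hat{\psi}}^k(\tau_h)=\bar{\psi}^*(\tau_h)+\delta(\tau_h)$, I would focus on the main piece $\|\bar{\psi}^*(\tau_h)\|_{(\hat{U}_h^k)^{-1}}$ and invoke the resolvent identity $(\hat{U}_h^k)^{-1}-(U_h^k)^{-1}=(U_h^k)^{-1}(U_h^k-\hat{U}_h^k)(\hat{U}_h^k)^{-1}$. Expanding $U_h^k-\hat{U}_h^k=\sum_{\tau'\in\Dc_h^k}[\bar{\psi}^*(\tau')\bar{\psi}^*(\tau')^{\top}-\bar{\hat{\psi}}^k(\tau')\bar{\hat{\psi}}^k(\tau')^{\top}]$ into cross terms in $\delta(\tau')=\bar{\hat{\psi}}^k(\tau')-\bar{\psi}^*(\tau')$ and applying Cauchy--Schwarz over $\tau'$, the deviation factorizes (up to the $1/\sqrt{\lambda}$ coming from $\hat{U}_h^k\succeq\lambda I$) into $\sqrt{\sum_{\tau'}\|\bar{\psi}^*(\tau')\|_{(U_h^k)^{-1}}^2}\cdot\sqrt{\sum_{\tau'}\|\delta(\tau')\|_2^2}$. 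The first factor is at most $\sqrt{r}$ because the true prediction features span an $r$-dimensional subspace, so $\sum_{\tau'}\|\bar{\psi}^*(\tau')\|_{(U_h^k)^{-1}}^2=\mathrm{tr}\big((U_h^k)^{-1}(U_h^k-\lambda I)\big)\le r$; the second factor is at most $2|\mathcal{A}|Q_A\sqrt{7\beta}$ by bounding each dataset feature error by the conditional total-variation distance under the uniform exploration policy (grouped over core action sequences, exactly as in the $I_3$ estimate of \Cref{lemma:Estimation error of M less than feature score}) and then applying $\sum_h\sum_{(\tau_h,\pi)\in\Dc_h^k}\mathtt{D}_{\TV}^2(\cdots)\le 7\beta$ from \Cref{lemma:online MLE guarantee}. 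Feeding these into the resolvent expansion yields a quadratic inequality of the form $a^2\le b^2+\epsilon\,ab$ with $a=\|\bar{\psi}^*(\tau_h)\|_{(\hat{U}_h^k)^{-1}}$, $b=\|\bar{\psi}^*(\tau_h)\|_{(U_h^k)^{-1}}$, and $\epsilon=\frac{2|\mathcal{A}|Q_A\sqrt{7r\beta}}{\sqrt{\lambda}}$; solving it gives $a\le(1+\epsilon)b$, which is precisely the multiplicative factor in the statement.

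It remains to handle the residual, namely swapping the estimated feature $\bar{\hat{\psi}}^k(\tau_h)$ at the target history for the true feature $\bar{\psi}^*(\tau_h)$. Because the estimation guarantee pins down the model only on the dataset, this swap cannot be controlled pointwise at an arbitrary target $\tau_h$; instead I would realize it through a change of measure from $\Pb_{\theta^*}^\pi$ to $\Pb_{\hat{\theta}^k}^\pi$, under which the estimated feature is self-consistent with $\hat{U}_h^k$. Using the capped bound $\|\bar{\hat{\psi}}^k(\tau_h)\|_{(\hat{U}_h^k)^{-1}}\le Q_A/\sqrt{\lambda}$ from the first step and summing over the $H$ levels, the cost of this change of measure is at most $\frac{2HQ_A}{\sqrt{\lambda}}\mathtt{D}_{\TV}\big(\Pb_{\theta^*}^\pi(\tau_H),\Pb_{\hat{\theta}^k}^\pi(\tau_H)\big)$, giving the additive term. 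I expect the main obstacle to be the Gram-matrix replacement: obtaining a clean $(1+\epsilon)$ \emph{multiplicative} relation, rather than a lossy additive one, requires simultaneously tracking every cross term in the resolvent expansion and exploiting both the rank-$r$ structure of the true features (for the $\sqrt{r}$ factor) and the accumulated dataset feature error (for the $\sqrt{\beta}$ factor). By comparison, the square-root reduction and the change-of-measure bookkeeping are routine once the uniform bound $\|\bar{\hat{\psi}}^k(\tau_h)\|_2\le Q_A$ is established.
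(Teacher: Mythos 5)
Your proposal is correct and follows essentially the same route as the paper: the resolvent/Cauchy--Schwarz/rank-$r$ trace argument you describe is exactly the content of the paper's \Cref{lemma:transfer vector score}, the dataset feature errors are bounded by conditional TV distances and \Cref{lemma:online MLE guarantee} just as you say, and the target-history feature swap is handled by the same expectation argument, writing $\bar{\hat{\psi}}^{k}(\tau_h)\Pb_{\theta^*}^{\pi}(\tau_h)-\bar{\psi}^*(\tau_h)\Pb_{\theta^*}^{\pi}(\tau_h)$ as a change-of-measure piece plus a difference of unnormalized joint probabilities, each contributing $Q_A\,\mathtt{D}_{\TV}$ per level. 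The only imprecision is attributing the whole additive term to the change of measure---half of the factor $2$ comes from the post-swap comparison $\left\|\hat{\psi}^{k}(\tau_h)-\psi^*(\tau_h)\right\|_1\pi(\tau_h)$, which is the ``self-consistency'' you allude to---so the argument goes through as the paper's does.
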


\begin{proof}
Recall that 
\begin{align*}
    \hat{U}_h^k = \lambda I + \sum_{\tau\in\Dc_h^k} \bar{\hat{\psi}}^{k}(\tau_h)\bar{\hat{\psi}}^{k}(\tau_h)^{\top}.
\end{align*}

We define the ground-truth counterpart of $\hat{U}_h^k$ as follows:
\begin{align*}
    U_h^k = \lambda I + \sum_{\tau\in\Dc_h^k} \bar{\psi}^*(\tau_h) \bar{ \psi}^*(\tau_h)^{\top}.
\end{align*}

Then, following from \Cref{lemma:transfer vector score}, we have
\begin{align*}
     &\sqrt{ \sum_{h=0}^{H-1} \left\| \bar{\hat{\psi}}^{k}(\tau_h) \right\|^2_{\left(\hat{U}_h^{k}\right)^{-1}} }\\
     &\quad \leq \sum_{h=0}^{H-1} \left\|\bar{\hat{\psi}}^{k}(\tau_h) \right\|_{\left(\hat{U}_h^{k}\right)^{-1}} \\
     &\quad \leq \frac{1}{\sqrt{\lambda}} \sum_{h=0}^{H-1} \left\| \bar{\hat{\psi}}^{k}(\tau_h) - \bar{\psi}^* (\tau_h) \right\|_2 + \sum_{h=0}^{H-1} \left(1 + \frac{ \sqrt{r} \sqrt{ \sum_{\tau_h\in\Dc_h^k } \left\| \bar{\hat{\psi}}^{k}(\tau_h) - \bar{\psi}^* (\tau_h) \right\|_2^2 } }{ \sqrt{\lambda} } \right) \left\|\bar{\psi}^* (\tau_h) \right\|_{\left(U_h^{k}\right)^{-1}}.
\end{align*}

Furthermore, note that
\begin{align*}
    \left\| \bar{\hat{\psi}}^{k}(\tau_h) - \bar{\psi}^* (\tau_h) \right\|_2 
    & \leq \left\| \bar{\hat{\psi}}^{k}(\tau_h) - \bar{\psi}^* (\tau_h) \right\|_1 \\
    &\overset{(a)} \leq 2|\mathcal{A}|Q_A \mathtt{D}_{\TV}\left( \Pb_{ \hat{\theta}^k }^{ \mathtt{u}_{\mathcal{Q}_h^{\exp}} }(\omega_h|\tau_h), \Pb_{   \theta^* }^{ \mathtt{u}_{\mathcal{Q}_h^{\exp}} }(\omega_h|\tau_h) \right),
\end{align*}
where $(a)$ follows from the physical meaning of the prediction feature.

Following from \Cref{lemma:online MLE guarantee}, we conclude that
\begin{align*}
    &\sqrt{ \sum_{h=0}^{H-1} \left\|\bar{\hat{\psi}}^{k}(\tau_h) \right\|^2_{\left(\hat{U}_h^{k}\right)^{-1}} }\\
    &\quad  \leq \frac{1}{\sqrt{\lambda}} \sum_{h=0}^{H-1} \left\| \bar{\hat{\psi}}^{k}(\tau_h) - \bar{\psi}^* (\tau_h) \right\|_2 + \left(1 + \frac{2 |\mathcal{A}|Q_A\sqrt{7r\beta}   }{ \sqrt{\lambda} } \right) \sum_{h=0}^{H-1}  \left\|\bar{\psi}^* (\tau_h) \right\|_{\left(U_h^{k}\right)^{-1}}.
\end{align*}

For the first term, taking expectation, we have
\begin{align*}
    \sum_{h=0}^{H-1} & \Eb_{\tau_h\sim \Pb_{\theta^*}^{\pi}} \left[  \left\| \bar{\hat{\psi}}^{k}(\tau_h) - \bar{\psi}^*(\tau_h) \right\|_1   \right] \\
    &\leq  \sum_{h=0}^{H-1} \sum_{\tau_h} \left( \left\| \bar{\hat{\psi}}^{k}(\tau_h)\left( \Pb_{\theta}^{\pi}(\tau_h) - \Pb_{\hat{\theta}^k}^{\pi}(\tau_h) \right) + \bar{\hat{\psi}}^{k}(\tau_h) \Pb_{\hat{\theta}^k}^{\pi}(\tau_h) - \bar{\psi}^*(\tau_h) \Pb_{\theta^*}^{\pi}(\tau_h) \right\|_1 \right)\\
    &\leq  \sum_{h=0}^{H-1} \sum_{\tau_h} \left( \left\| \bar{\hat{\psi}}^{k}(\tau_h)  \right\|_1 \left| \Pb_{\theta}^{\pi}(\tau_h) - \Pb_{\hat{\theta}^k}^{\pi}(\tau_h) \right|  + \left\|  \hat{\psi}^{k}(\tau_h)   - \psi^*(\tau_h)   \right\|_1 \pi(\tau_h)\right)\\
    &\overset{(a)}\leq 2\sum_{h=0}^{H-1} Q_A \mathtt{D}_{\TV}\left(  \Pb_{\theta^*}^{\pi}(\tau_h) , \Pb_{\hat{\theta}^k}^{\pi}(\tau_h)\right) \\
    & \leq 2 H Q_A \mathtt{D}_{\TV}\left(  \Pb_{\theta^*}^{\pi}(\tau_H) , \Pb_{\hat{\theta}^k}^{\pi}(\tau_H)\right),
\end{align*}
where $(a)$ follows from $\|\bar{\hat{\psi}}^k(\tau_h)\|_1 \leq |\mathcal{Q}_h^A|\leq Q_A$, and the physical meaning of $\psi(\tau_h)$.

Thus,
\begin{align*}
        \mathop{\Eb}_{\tau_H\sim\Pb_{\theta^*}^{\pi}}&\left[\sqrt{ \sum_{h=0}^{H-1} \left\|\bar{\hat{\psi}}^{k}(\tau_h) \right\|^2_{\left(\hat{U}_h^{k}\right)^{-1}} } \right] \\
        &\leq \left(1 + \frac{2|\mathcal{A}|Q_A\sqrt{7r\beta}}{\sqrt{\lambda}} \right)   \sum_{h=0}^{H-1} \mathop{\Eb}_{\tau_h\sim\Pb_{\theta^*}^{\pi}} \left[ \left\|\bar{\psi}^* (\tau_h) \right\|_{\left(U_h^{k}\right)^{-1}} \right] + \frac{2HQ_A}{\sqrt{\lambda}}  \mathtt{D}_{\TV}\left(  \Pb_{\theta^*}^{\pi}(\tau_H) , \Pb_{\hat{\theta}^k}^{\pi}(\tau_H)\right).
    \end{align*}
\end{proof}

The following lemma can be proved via the $\ell_2$ Eluder argument \citep{chen2022partially,zhong2022posterior}. Since we have a slightly different estimation oracle and guarantee, we provide the full proof here for completeness.  
\begin{lemma}\label{lemma:summation of TV is sublinear}
    Under event $\Ec$, for any $h\in\{0,\ldots,H-1\}$, we have
    \begin{align*}
        \sum_k \mathtt{D}_{\TV}\left(  \Pb_{ \theta^* }^{\pi^k}(\tau_h) , \Pb_{\hat{\theta}^k}^{\pi^k}(\tau_h)\right) \lesssim \frac{ |\mathcal{A}|Q_A\sqrt{\beta} }{\gamma}\sqrt{rHK\log(1+dQ_AK/\gamma^4)}.
    \end{align*}
Here, $a\lesssim b$ indicates that there is an absolute positive constant $c$ such that $a\leq c\cdot b$.
\end{lemma}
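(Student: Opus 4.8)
The overall strategy is the $\ell_2$-Eluder argument, but run entirely in terms of the \emph{ground-truth} prediction features $\bar\psi^*$ so that the Gram matrices match the data actually collected. The starting point is a one-sided, per-step decomposition of the marginal distance. Applying the first inequality of \Cref{prop: TV distance less than estimation error} with $\theta=\theta^*$, $\hat\theta=\hat\theta^k$ and truncating the horizon at step $h$ (so that $\hat\phi_h^{k\top}\hat\Mbf_h^k\cdots\hat\Mbf_{h'+1}^k$ plays the role of the estimated $\hat{\mathbf m}$-vector of the empty future) yields
\begin{align*}
\mathtt{D}_{\TV}\!\left(\Pb_{\theta^*}^{\pi^k}(\tau_h),\Pb_{\hat\theta^k}^{\pi^k}(\tau_h)\right)\leq \sum_{h'=1}^{h}\sum_{\tau_h}\left|\hat{\mathbf m}^k(\omega)^{\top}\left(\hat\Mbf_{h'}^k-\Mbf_{h'}^*\right)\psi^*(\tau_{h'-1})\right|\pi^k(\tau_h).
\end{align*}
This is the mirror image of \Cref{lemma:Estimation error of M less than feature score}: the \emph{estimated} $\hat{\mathbf m}^k$ is now paired with the \emph{true} $\psi^*$. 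Re-running that lemma's proof with the roles of $\theta^*$ and $\hat\theta^k$ interchanged (using $\gamma$-well-conditionedness of both models and \Cref{prop: well-condition PSR M}) gives the analogous bound in terms of $\bar\psi^*$ and its Gram matrix $U_{h'}^k=\lambda I+\sum_{\tau_{h'}\in\Dc_{h'}^k}\bar\psi^*(\tau_{h'})\bar\psi^*(\tau_{h'})^{\top}$,
\begin{align*}
\mathtt{D}_{\TV}\!\left(\Pb_{\theta^*}^{\pi^k}(\tau_h),\Pb_{\hat\theta^k}^{\pi^k}(\tau_h)\right)\leq \sum_{h'=0}^{h-1}\sqrt{\alpha_{h'}^k}\,\mathop{\Eb}_{\tau_{h'}\sim\Pb_{\theta^*}^{\pi^k}}\!\left[\left\|\bar\psi^*(\tau_{h'})\right\|_{(U_{h'}^k)^{-1}}\right],
\end{align*}
where, exactly as in the derivation following \Cref{lemma:Estimation error of M less than feature score}, the MLE guarantee \Cref{lemma:online MLE guarantee} gives $\sum_{h'}\alpha_{h'}^k\lesssim \lambda HQ_A^2 d/\gamma^4+|\mathcal A|^2Q_A^2\beta/\gamma^2=\alpha^2$.

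I then chain two Cauchy--Schwarz inequalities. Summing $\sqrt{\alpha_{h'}^k}\,Y_{h',k}$ over $h'$, with $Y_{h',k}:=\Eb_{\tau_{h'}\sim\Pb_{\theta^*}^{\pi^k}}[\|\bar\psi^*(\tau_{h'})\|_{(U_{h'}^k)^{-1}}]$ and $\sum_{h'}\alpha_{h'}^k\le\alpha^2$, gives $\mathtt{D}_{\TV}(\cdots)\le\alpha\sqrt{\sum_{h'}Y_{h',k}^2}$; this Cauchy--Schwarz \emph{over steps} is precisely what converts the naive $H$ into the advertised $\sqrt H$. Summing over $k$ and applying Cauchy--Schwarz again yields
\begin{align*}
\sum_{k}\mathtt{D}_{\TV}\!\left(\Pb_{\theta^*}^{\pi^k}(\tau_h),\Pb_{\hat\theta^k}^{\pi^k}(\tau_h)\right)\le \alpha\sqrt{K}\sqrt{\sum_{h'=0}^{h-1}\sum_{k}Y_{h',k}^2},
\end{align*}
so the claim reduces to establishing $\sum_{h'}\sum_{k}Y_{h',k}^2\lesssim Hr\log(1+dQ_AK/\gamma^4)$.

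The crucial structural fact is the indexing of the data: the trajectory deposited into $\Dc_{h'}^{k+1}$ is sampled with $\pi^{k}$, so $Y_{h',k}=\Eb[\hat Y_{h',k}\mid\mathcal F_k]$ where $\hat Y_{h',k}:=\|\bar\psi^*(\tau_{h'}^{k+1,h'+1})\|_{(U_{h'}^k)^{-1}}$ is evaluated on exactly the sample that updates $U_{h'}^{k+1}=U_{h'}^k+\bar\psi^*(\tau_{h'}^{k+1,h'+1})\bar\psi^*(\tau_{h'}^{k+1,h'+1})^{\top}$. By Jensen, $Y_{h',k}^2\le\Eb[\hat Y_{h',k}^2\mid\mathcal F_k]$. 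Since each $\bar\psi^*(\tau_{h'})$ lies in a fixed $r$-dimensional subspace (the span of the predictive-state vectors) and obeys $\|\bar\psi^*\|_2\le\|\bar\psi^*\|_1\le Q_A$, the elliptical-potential lemma \citep{carpentier2020elliptical} bounds the \emph{realized} sum $\sum_k\hat Y_{h',k}^2\lesssim r\log(1+dQ_AK/\gamma^4)$, the effective dimension being $r$ rather than $d$ (the ambient dimension $d$ enters only inside the logarithm).

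The step I expect to be the main obstacle is passing from the conditional expectations $\sum_k\Eb[\hat Y_{h',k}^2\mid\mathcal F_k]$ back to this realized sum. A crude Azuma bound on the bounded ($\le Q_A^2/\lambda$) martingale costs an additive $\tfrac{Q_A^2}{\lambda}\sqrt{K}$, which, after the outer Cauchy--Schwarz, would degrade the final rate to $K^{3/4}$; the fix is a Freedman/Bernstein inequality, exploiting that for a $[0,B]$-valued sequence the conditional variance is at most $B$ times the conditional mean, so that $\sum_k\Eb[\hat Y_{h',k}^2\mid\mathcal F_k]\le 2\sum_k\hat Y_{h',k}^2+O\big(\tfrac{Q_A^2}{\lambda}\log\tfrac{H}{\delta}\big)$ with only a $K$-independent additive term. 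Hence $\sum_k Y_{h',k}^2\lesssim r\log(1+dQ_AK/\gamma^4)$ per step; summing over the at most $H$ steps, substituting into the displayed bound, recalling the statistical scaling $\alpha\lesssim|\mathcal A|Q_A\sqrt\beta/\gamma$ (the regularization contribution being of lower order under the choice of $\lambda$ in \Cref{thm:online PSR}), and taking a union bound over $h\in\{0,\dots,H-1\}$ on the event $\Ec$, delivers $\sum_k\mathtt{D}_{\TV}\lesssim\frac{|\mathcal A|Q_A\sqrt\beta}{\gamma}\sqrt{rHK\log(1+dQ_AK/\gamma^4)}$, as claimed.
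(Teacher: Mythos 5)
Your skeleton (mirror \Cref{lemma:Estimation error of M less than feature score} with $\hat{\mathbf m}^k$ paired against $\psi^*$, Cauchy--Schwarz over $h$ then over $k$, then an elliptical-potential argument in an $r$-dimensional effective space) is the right one, but the quantitative accounting does not close, for two concrete reasons. First, you regularize the ground-truth Gram matrix with the algorithm's $\lambda$ and then assert that the resulting contribution $\lambda H Q_A^2 d/\gamma^4$ to $\alpha^2$ is lower order than $|\mathcal A|^2Q_A^2\beta/\gamma^2$. Under the choice of $\lambda$ in \Cref{thm:online PSR} the ratio of these two terms is $Q_A\max\{\sqrt r, Q_A\sqrt H/\gamma\}\sqrt{dH}/\gamma\geq 1$, so the regularization term \emph{dominates} and your coefficient is not $|\mathcal A|Q_A\sqrt\beta/\gamma$. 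The paper avoids this by introducing a \emph{fresh, analysis-only} regularizer $\lambda_0=\gamma^4/(4Q_A^2d)$ for this lemma, tuned so that the $\lambda_0$-term is $O(1)$; this is essential because the resulting tight coefficient is later multiplied by $\alpha H Q_A/\sqrt\lambda$ in \Cref{lemma:sublinear summation}.

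Second, and more structurally, the paper does not use the empirical Gram matrix $U_h^k$ here at all: it defines the \emph{population} matrix $\Lambda_h^k=\lambda_0 I+\sum_{t<k}\Eb_{\tau\sim\Pb_{\theta^*}^{\pi^t}}[\bar\psi^*\bar\psi^{*\top}]$, whose increments exactly equal the summands $\Eb_{\pi^k}[\|\bar\psi^*\|^2_{\Lambda^\dagger}]=\mathtt{trace}(\Lambda^{-1}(\Lambda^{k+1}-\Lambda^k))$, so the elliptical potential lemma applies directly with the cap $B=4/\tilde\alpha=O(1)$ coming from $\mathtt D_{\TV}\leq 2$, and no martingale concentration is needed. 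Your realized-sample route genuinely needs the Jensen--Freedman step you identify, but even granting Freedman, the uncapped elliptical potential on $\sum_k\hat Y_{h,k}^2$ yields $(1+\max_k\hat Y_{h,k}^2)\,r\log(\cdot)$ with $\max_k\hat Y_{h,k}^2\leq Q_A^2/\lambda$ (or $Q_A^4d/\gamma^4$ if you switch to $\lambda_0$), a polynomially large factor; and the natural fix of capping at the TV level does not commute with your Jensen step, since $\min\{\Eb[Z],B\}\geq\Eb[\min\{Z,B\}]$ is the wrong direction. As written, your bound is therefore weaker than the stated one by polynomial factors in $d$ (and related parameters); the missing idea is to run the whole argument on the expected Gram matrices.
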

\begin{proof}
First, by the first inequality in \Cref{prop: TV distance less than estimation error}, we have
\begin{align*}
    \mathtt{D}_{\TV}\left(  \Pb_{ \theta^* }^{\pi^k}(\tau_h) , \Pb_{\hat{\theta}^k}^{\pi^k}(\tau_h)\right) \leq \sum_h\sum_{\tau_H} \left| \hat{\mbf}^k(\omega_h)^{\top}\left(\hat{\Mbf}_h^k(o_h,a_h) - \Mbf_h^*(o_h,a_h)\right) \psi^* (\tau_{h-1}) \right| \pi^k(\tau_H).
\end{align*}

It suffices to upper bound $\sum_{\tau_H} \left| \hat{\mbf}^k(\omega_h)^{\top}\left(\hat{\Mbf}_h^k(o_h,a_h) - \Mbf_h^*(o_h,a_h)\right) \psi^* (\tau_{h-1}) \right| \pi(\tau_H)$ for any policy $\pi$.
For simplicity, we use similar notations as in \Cref{lemma:Estimation error of M less than feature score}.
We index the future trajectory $\omega_{h-1} = (o_{h},a_h,\ldots,o_H,a_H)$ by $i$, and the history trajectory $\tau_{h-1}$ by $j$. We represent $\bar{\psi}^*(\tau_{h-1})$ as $x_j$, and $\hat{\mbf}^k(\omega_h)^{\top}\left(\hat{\Mbf}_h^k(o_h,a_h) - \Mbf_h^* (o_h,a_h)\right)$ as $w_{i}$. We also denote $\pi(\omega_{h-1}|\tau_{h-1})$ by $\pi_{i|j}$.

Let $\lambda_0$ be a constant determined later and define the matrix
\begin{align*}
    \Lambda_h^k = \lambda_0 I + \sum_{t<k} \Eb_{j\sim\Pb_{\theta^*}^{\pi^t}} \left[ x_jx_j^{\top} \right]. 
\end{align*}

Then, for any $\pi$, we have
\begin{align*}
    \sum_{\tau_H} & \left| \hat{\mbf}^k(\omega_h)^{\top}\left(\hat{\Mbf}_h^k(o_h,a_h) - \Mbf_h^*(o_h,a_h)\right) \psi^* (\tau_{h-1}) \right| \pi(\tau_H)\\
    &= \Eb_{j\sim \Pb_{\theta^*}^{\pi^k}} \left[ \sum_{i} | \pi_{i|j} w_{i }^{\top} x_j|\right] \\
    & = \Eb_{j\sim \Pb_{\theta^*}^{\pi^k}} \left[ \left(\sum_{i} \pi_{i|j} \mathtt{sgn}(w_{i }^{\top} x_j) w_{i } \right)^{\top} x_j\right] \\
    &\overset{(a)} \leq \Eb_{j\sim \Pb_{\theta^*}^{\pi^k}} \left[ \|x_j\|_{\Lambda_h^{\dagger}} \left\|\sum_{i} \pi_{i|j} \mathtt{sgn}(w_{i }^{\top} x_j) w_{i } \right\|_{\Lambda_h} \right],
\end{align*}
where $(a)$ follows from the Cauchy's inequality.

Now we fix $j=j_0$ and consider the following term.
\begin{align*}
    &\left\|\sum_{i} \pi_{i|j_0}\mathtt{sgn}(w_{i }^{\top}x_{j_0}) w_{i} \right\|^2_{\Lambda_h} \\
    & \quad =  \underbrace{\lambda_0 \left\|\sum_{i} \pi_{i|j_0}\mathtt{sgn}(w_{i }^{\top}x_{j_0}) w_{i} \right\|^2_2}_{I_1}    + \underbrace{\sum_{t<k} \Eb_{j\sim \Pb_{\theta^*}^{\pi^t}} \left[ \left( \sum_{i} \pi_{i|j_0}\mathtt{sgn}(w_{i }^{\top}x_{j_0}) w_{i }^{\top} x_j \right)^2 \right] }_{I_2}
\end{align*}
For the first term $I_1$, we have
\begin{align*}
    \sqrt{I_1} &=   \sqrt{\lambda_0} \max_{x\in\mathbb{R}^{d_{h-1}}:\|x\|_2=1} \left|\sum_{i} \pi_{i|j_0}\mathtt{sgn}(w_{i }^{\top}x_{j_0}) w_{i}^{\top} x \right| \\
    & \leq  \sqrt{\lambda_0} \max_{x\in\mathbb{R}^{d_{h-1}}:\|x\|_2=1} \sum_{\omega_{h-1}} \pi(\omega_{h-1} | j_0 )  \left|   \hat{\mbf}^k(\omega_{h-1})^{\top}    x \right| \\
    &\quad + \sqrt{\lambda_0} \max_{x\in\mathbb{R}^{d_{h-1}}:\|x\|_2=1}   \sum_{\omega_{h-1}} \pi (\omega_{h-1} | j_0 ) \left| \hat{\mbf}^k(\omega_h)^{\top}  \Mbf_h^*(o_h,a_h)  x \right|  \\
    &\overset{(a)}\leq \frac{\sqrt{d\lambda_0}}{\gamma}   + \frac{ Q_A\sqrt{d \lambda_0 } }{\gamma^2}\\
    &\leq \frac{ 2Q_A\sqrt{d \lambda_0 } }{\gamma^2},
\end{align*}
where $(a)$ follows from \Cref{assmp:well-condition}, \Cref{prop: well-condition PSR M}, and the fact that $\max_{x\in\mathbb{R}^{d_{h-1}}:\|x\|_2=1} \|x\|_1\leq \sqrt{d_{h-1}}\leq \sqrt{d}$.

For the second term $I_2$, we have
\begin{align*}
    I_2  &\leq \sum_{t<k}\mathop{\Eb}_{\tau_{h-1}\sim\Pb_{\theta^*}^{\pi^t}} \left[ \left( \sum_{\omega_{h-1}} \pi(\omega_{h-1} | j_0) \left| \hat{\mbf}^k(\omega_h)^{\top} \left( \hat{\Mbf}_h^k(o_h,a_h) - \Mbf_h^*(o_h,a_h) \right) \bar{\psi}^*(\tau_{h-1}) \right| \right)^2 \right] \\
    & \leq \sum_{t<k}\mathop{\Eb}_{\tau_{h-1}\sim\Pb_{\theta^*}^{\pi^t}} \left[ \left( \sum_{\omega_{h-1}} \pi(\omega_{h-1} | j_0) \left| \hat{\mbf}^k(\omega_{h-1})^{\top} \left( \bar{\psi}^*(\tau_{h-1}) - \bar{\hat{\psi}}^k (\tau_{h-1}) \right)  \right| \right. \right.\\
    &\hspace{2cm} + \left. \left.  \sum_{\omega_{h-1}} \pi(\omega_{h-1}|j_0)\left| \hat{\mbf}^k(\omega_h)^{\top} \left( \hat{\Mbf}_h^k(o_h,a_h)\bar{\hat{\psi}}^k (\tau_{h-1}) - \Mbf_h^*(o_h,a_h) \bar{\psi}^*(\tau_{h-1}) \right)  \right| \right)^2 \right] \\
    &\overset{(a)}\leq \sum_{t<k} \mathop{\Eb}_{\tau_{h-1}\sim\Pb_{\theta^*}^{\pi^t}} \left[ \left( \frac{1}{\gamma} \left\|  \bar{\psi}^* (\tau_{h-1}) - \bar{\hat{\psi}}^k (\tau_{h-1})    \right\|_1 \right. \right.\\
    &\hspace{2cm} + \left. \left. \frac{1}{\gamma} \sum_{o_h,a_h} \pi^k(a_h|o_h,j_0)\left\|    \hat{\Mbf}^k(o_h,a_h)\bar{\hat{\psi}}^k (\tau_{h-1}) - \Mbf_h^*(o_h,a_h) \bar{\psi}^*(\tau_{h-1})    \right\|_1 \right)^2 \right] \\ 
    & = \frac{1}{\gamma^2} \sum_{t<k} \mathop{\Eb}_{\tau_{h-1}\sim\Pb_{\theta^*}^{\pi^t}} \left[ \left( \sum_{\ell=1}^{|\mathcal{Q}_{h-1}|}\left| \Pb_{\hat{\theta}^k}(\mathbf{o}_{h-1}^{\ell} | \tau_{h-1}, \mathbf{a}_{h-1}^{\ell} )  - \Pb_{\theta^*}(\mathbf{o}_{h-1}^{\ell} | \tau_{h-1}, \mathbf{a}_{h-1}^{\ell} )\right| \right. \right.\\
    &\hspace{2cm} + \left.\left.\sum_{o_h,a_h} \sum_{\ell=1}^{|\mathcal{Q}_h|} \pi^k(a_h|o_h,j_0)\left| \Pb_{\hat{\theta}^k}(\mathbf{o}_h^{\ell}, o_h | \tau_{h-1} ,a_h, \mathbf{a}_h^{\ell} )  - \Pb_{\theta^*} (\mathbf{o}_h^{\ell}, o_h | \tau_{h-1} ,a_h, \mathbf{a}_h^{\ell} ) \right|  \right)^2\right] \\
    & \leq \frac{|\mathcal{Q}_{h-1}^{\exp}|^2 }{\gamma^2} \sum_{t<k} \mathop{\Eb}_{\tau_{h-1}\sim\Pb_{\theta^*}^{\pi^t}} \left[ \mathtt{D}^2_{\TV} \left( \Pb_{\hat{\theta}^k }^{ \mathtt{u}_{\mathcal{Q}_{h-1}^{\exp}} } (\omega_{h-1} | \tau_{h-1} ) , \Pb_{\theta^* }^{ \mathtt{u}_{\mathcal{Q}_{h-1}^{\exp}} } (\omega_{h-1} | \tau_{h-1} )\right) \right] \\
    & \overset{(b)}\leq \frac{4|\mathcal{A}|^2Q_A^2}{\gamma^2} \sum_{t<k}     \mathtt{D}^2_{\mathtt{H}} \left( \Pb_{\hat{\theta}^k }^{ \nu_{h}(\pi^t, \mathtt{u}_{\mathcal{Q}_{h-1}^{\exp}}) } (  \tau_{H} ) , \Pb_{\theta^* }^{ \nu_h(\pi^t, \mathtt{u}_{\mathcal{Q}_{h-1}^{\exp}}) } ( \tau_{H} ) \right),
\end{align*}
where $(a)$ follows from \Cref{assmp:well-condition}, and $(b)$ follows because $|\mathcal{Q}_{h-1}^{\exp}|=|\mathcal{A}||\mathcal{Q}_h^A| + |\mathcal{Q}_{h-1}^A| \leq  2|\mathcal{A}|Q_A$.

Thus, we have
\begin{align*}
    \sum_{\tau_H}& \left| \hat{\mbf}^k(\omega_h)^{\top}\left(\hat{\Mbf}_h^k(o_h,a_h) - \Mbf_h^*(o_h,a_h)\right)\psi^*(\tau_{h-1}) \right| \pi(\tau_H)\\
    &\leq \mathop{\Eb}_{\tau_{h-1}\sim\Pb_{\theta^*}^{\pi} } \left[ \tilde{\alpha}_{h-1}^k \left\| \bar{\psi}^* (\tau_{h-1}) \right\|_{\Lambda_{h-1}^{\dagger} } \right],
\end{align*}
where 
\begin{align*}
    (\tilde{\alpha}_{h-1}^k)^2 =  \frac{4\lambda_0Q_A^2d}{\gamma^4} + \frac{4|\mathcal{A}|^2Q_A^2}{\gamma^2} \sum_{t<k}     \mathtt{D}^2_{\mathtt{H}} \left( \Pb_{\hat{\theta}^k }^{ \nu_h(\pi^t, \mathtt{u}_{\mathcal{Q}_{h-1}^{\exp}}) } (  \tau_{H} ) , \Pb_{\theta^* }^{ \nu_h(\pi^t, \mathtt{u}_{\mathcal{Q}_{h-1}^{\exp}}) } ( \tau_{H} ) \right) .
\end{align*}

By choosing $\lambda_0 = \frac{\gamma^4}{4Q_A^2d}$, and recalling \Cref{lemma:online MLE guarantee}, we further have
\begin{align*}
    \sum_{h}( \tilde{\alpha}_{h-1}^k)^2\leq  \frac{28 |\mathcal{A}|^2Q_A^2 \beta }{\gamma^2} \triangleq \tilde{\alpha}.
\end{align*}

Hence, with the Cauchy's inequality,  we have
\begin{align*}
    \mathtt{D}_{\TV}\left( \Pb_{\hat{\theta}^k}^{\pi^k}(\tau_H), \Pb_{\theta}^{\pi^k} (\tau_H) \right) \leq \min\left\{\tilde{\alpha} \sqrt{ \sum_{h=1}^H \mathop{\Eb}_{\tau_{h-1}\sim\Pb_{\theta^*}^{\pi^k}}\left[ \left\| \bar{\psi}^*(\tau_{h-1}) \right\|^2_{\Lambda_{h-1}^{-1}} \right] } , 2\right\}.
\end{align*}
 
Taking the summation and applying the elliptical potential lemma (i.e.,~\Cref{lemma:elliptical potential lemma}), we have
\begin{align*}
    \sum_{k=1}^K& \mathtt{D}_{\TV}\left( \Pb_{\hat{\theta}^k}^{\pi^k}(\tau_H), \Pb_{\theta^*}^{\pi^k} (\tau_H) \right) \\
    &\leq  \sqrt{K} \sqrt{ \sum_{k=1}^K \sum_{h=1}^H \min\left\{ \tilde{\alpha} \mathop{\Eb}_{\tau_{h-1}\sim\Pb_{\theta^*}^{\pi^k}}\left[ \left\| \bar{\psi}^*(\tau_{h-1}) \right\|^2_{\Lambda_{h-1}^{\dagger}} \right]  , 4 \right\}} \\
    & \lesssim \frac{|\mathcal{A}|Q_A}{\gamma} \sqrt{rHK\beta\log(1+dQ_AK/\gamma)},
\end{align*}
which yields the final result.
\end{proof}

The next lemma shows that the summation $\sum_{k=1}^K V_{\hat{\theta}^{k}, \hat{b}^k}^{\pi^k}$ grows sublinearly in $K$. 
\begin{lemma}\label{lemma:sublinear summation}
    Under the event $\Ec$, with probability at least $1-\delta$, we have
    \begin{align*}
        \sum_{k=1}^K V_{\hat{\theta}^{k}, \hat{b}^k}^{\pi^k} \lesssim \left( \sqrt{r} + \frac{Q_A\sqrt{H}}{\gamma}   \right) \frac{ |\mathcal{A}|Q_A^2H\sqrt{drH\beta K \beta_0} }{\gamma^2}, 
    \end{align*}
    where $\beta_0 = \max\{\log(1+K/\lambda), \log(1+dQ_AK/\gamma) \}$, and $\lambda = \frac{\gamma |\mathcal{A}|^2Q_A \beta  \max\{ \sqrt{r}, Q_A\sqrt{ H}/\gamma \}   }{   \sqrt{d H}  }  $. %Here, $a\lesssim b$ indicates that there is an absolute constant $c$ such that $a\leq c\cdot b$. \jing{this can be removed as the notation is introduced already?}
\end{lemma}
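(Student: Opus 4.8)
The plan is to bound each summand $V_{\hat\theta^k,\hat b^k}^{\pi^k}=\Eb_{\tau_H\sim\Pb_{\hat\theta^k}^{\pi^k}}[\hat b^k(\tau_H)]$ and then sum over $k$, reducing everything to quantities already controlled by the preceding lemmas. First I would change the measure in the defining expectation: since $\hat b^k(\tau_H)\le 1$ by construction, the gap $\Eb_{\Pb_{\hat\theta^k}^{\pi^k}}[\hat b^k]-\Eb_{\Pb_{\theta^*}^{\pi^k}}[\hat b^k]$ is at most $\mathtt{D}_{\TV}(\Pb_{\hat\theta^k}^{\pi^k}(\tau_H),\Pb_{\theta^*}^{\pi^k}(\tau_H))$. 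This step is essential: it lets me replace expectations under the iteration-dependent, data-dependent estimated model by expectations under the fixed ground-truth model, so that the representation $\bar\psi^*$ becomes a fixed feature map amenable to an elliptical-potential analysis.

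Next, using $\hat b^k(\tau_H)\le\alpha\sqrt{\sum_h\|\bar{\hat\psi}^k(\tau_h)\|^2_{(\hat U_h^k)^{-1}}}$ and applying \Cref{lemma:empirical bouns less than true bonus} with $\pi=\pi^k$, I would bound $\Eb_{\Pb_{\theta^*}^{\pi^k}}[\hat b^k]$ by $\alpha(1+\tfrac{2|\mathcal A|Q_A\sqrt{7r\beta}}{\sqrt\lambda})\sum_h\Eb_{\tau_h\sim\Pb_{\theta^*}^{\pi^k}}[\|\bar\psi^*(\tau_h)\|_{(U_h^k)^{-1}}]$ plus a further $\tfrac{2\alpha HQ_A}{\sqrt\lambda}\mathtt{D}_{\TV}(\Pb_{\theta^*}^{\pi^k}(\tau_H),\Pb_{\hat\theta^k}^{\pi^k}(\tau_H))$ term. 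Summing over $k$ then produces three pieces: two total-variation sums and one feature-norm sum. The two total-variation sums (one from the change of measure above, one from \Cref{lemma:empirical bouns less than true bonus}) are each handled directly by \Cref{lemma:summation of TV is sublinear} at $h=H$, contributing $O(\sqrt K)$ apiece.

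The crux is the feature-norm sum $\sum_k\sum_h\Eb_{\tau_h\sim\Pb_{\theta^*}^{\pi^k}}[\|\bar\psi^*(\tau_h)\|_{(U_h^k)^{-1}}]$. Here I would exploit the data-collection rule: the history $\tau_h^{k+1,h+1}$ that enters $\Dc_h$ at the next iteration is drawn exactly from $\Pb_{\theta^*}^{\pi^k}$, so the inner expectation equals $\Eb[\|\bar\psi^*(\tau_h^{k+1,h+1})\|_{(U_h^k)^{-1}}\mid\mathcal F_k]$, where $U_h^k$ is $\mathcal F_k$-measurable. Because $\bar\psi^*$ is fixed (this is why Steps 1--2 matter) and $\|\bar\psi^*\|_2\le\|\bar\psi^*\|_1\le Q_A$, the increments form a bounded martingale-difference sequence, and an Azuma--Hoeffding argument replaces the conditional expectation by the realized norm $\|\bar\psi^*(\tau_h^{k+1,h+1})\|_{(U_h^k)^{-1}}$ up to an additive $O(\tfrac{Q_A}{\sqrt\lambda}\sqrt{K\log(1/\delta)})$ error; this is the source of the extra ``$1-\delta$'' event in the statement. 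Since $U_h^{k+1}=U_h^k+\bar\psi^*(\tau_h^{k+1,h+1})\bar\psi^*(\tau_h^{k+1,h+1})^\top$, I would then invoke Cauchy--Schwarz and the elliptical potential lemma (\Cref{lemma:elliptical potential lemma}), using that the prediction features live in an at-most-rank-$r$ subspace, to obtain $\sum_k\|\bar\psi^*(\tau_h^{k+1,h+1})\|_{(U_h^k)^{-1}}\lesssim\sqrt{rK\log(1+KQ_A^2/\lambda)}$ per $h$, hence $O(H\sqrt{rK\beta_0})$ after summing over the $H$ steps.

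Finally I would collect the three contributions, substitute the prescribed $\alpha\asymp\tfrac{Q_A\sqrt{Hd}}{\gamma^2}\sqrt\lambda+\tfrac{|\mathcal A|Q_A\sqrt\beta}{\gamma}$ and the stated $\lambda$, and simplify; the choice of $\lambda$ is precisely the one balancing the $\alpha\sqrt\lambda$-type leading term against the $\alpha/\sqrt\lambda$ correction factor from \Cref{lemma:empirical bouns less than true bonus}, which produces the prefactor $(\sqrt r+Q_A\sqrt H/\gamma)$ and the claimed rate. I expect the main obstacle to be the martingale step: one must argue carefully that converting the estimated features to the fixed $\bar\psi^*$ is exactly what makes the increment sequence predictable and bounded, since a naive elliptical-potential argument applied directly to the iteration-dependent $\bar{\hat\psi}^k$ would fail, those features not being a fixed measurable map of the history.
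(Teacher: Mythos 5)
Your proposal is correct and follows essentially the same route as the paper's proof: change of measure from $\Pb_{\hat\theta^k}^{\pi^k}$ to $\Pb_{\theta^*}^{\pi^k}$ via the total variation distance (using $\hat b^k\le 1$), application of \Cref{lemma:empirical bouns less than true bonus} to pass to the fixed feature map $\bar\psi^*$, control of the resulting TV sums via \Cref{lemma:summation of TV is sublinear}, an Azuma--Hoeffding martingale step to replace the conditional expectation by the realized sample $\tau_h^{k+1,h+1}$ followed by the elliptical potential lemma, and finally the substitution of $\alpha$ and $\lambda$ to balance the two leading terms. Your closing observation about why the conversion to the fixed $\bar\psi^*$ is what makes the elliptical-potential and martingale arguments go through is exactly the point of the paper's Step 3.
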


\begin{proof}
First, since $\hat{b}^k(\tau_H)\in[0,1]$, we have
\begin{align*}
    V_{\hat{\theta}^{k}, \hat{b}^k}^{\pi^k} &\leq V_{ \theta^* , \hat{b}^k}^{\pi^k} + \frac{1}{2}\mathtt{D}_{\TV}\left( \Pb_{\hat{\theta}^{k}}^{\pi^k}, \Pb_{\theta^*}^{\pi^k}\right).
\end{align*}

Then, following from \Cref{lemma:empirical bouns less than true bonus}, we have
\begin{align*}
    \sum_{k=1}^K&  V_{ \theta^* , \hat{b}^k}^{\pi^k} \\
    &  \leq \sum_{k=1}^K\min\left\{\alpha\left(1 + \frac{2|\mathcal{A}|Q_A\sqrt{7r\beta}}{\sqrt{\lambda}} \right)\sum_{h=0}^{H-1} \mathop{\Eb}_{\tau_H\sim \Pb_{\theta^*}^{\pi^k}} \left[   \left\| \bar{\psi}^*(\tau_h) \right\|_{(U_h^{k})^{-1}}  \right] + \sum_{k=1}^K\frac{\alpha HQ_A}{\sqrt{\lambda}} \mathtt{D}_{\TV}\left(  \Pb_{\theta^*}^{\pi^k}(\tau_H) , \Pb_{\hat{\theta}^k}^{\pi^k}(\tau_H)\right) ,1 \right\} \\
    &\leq \underbrace{ \sum_{k=1}^K \min\left\{ \alpha\left(1 + \frac{2|\mathcal{A}|Q_A\sqrt{7r\beta}}{\sqrt{\lambda}} \right) \sum_{h=0}^{H-1} \mathop{\Eb}_{\tau_h\sim \Pb_{\theta^*}^{\pi^k}} \left[    \left\| \bar{\psi}^*(\tau_h) \right\|_{(U_h^{k})^{-1}} \right] , 1 \right\} }_{I_1} + \sum_{k=1}^K\frac{\alpha HQ_A}{\sqrt{\lambda}} \mathtt{D}_{\TV}\left(  \Pb_{\theta^*}^{\pi^k}(\tau_H) , \Pb_{\hat{\theta}^k}^{\pi^k}(\tau_H)\right).
\end{align*}
 
We next analyze the term $I_1$. Recall that $U_h^k = \lambda I + \sum_{\tau_h\in\Dc_h^k} \bar{\psi}^*(\tau_h)\bar{\psi}^*(\tau_h)^{\top}$. Further note that 
$$\left\{ \Eb_{\tau_h\sim\Pb_{\theta^*}^{\pi^k}}\left[\|\bar{\psi}^*(\tau_h)\|_{(U_h^k)^{-1}} \right] - \|\bar{\psi}^*(\tau_h^{k+1,h+1})\|_{(U_h^k)^{-1}} \right\}_{k=1}^K$$ 
forms a martingale. Applying the Azuma-Hoeffding inequality (see \Cref{lemma:hoeffding}), we have, with probability at least $1-\delta$, 
\begin{align*}
I_1 & \leq \sqrt{2K\log(2/\delta)} + \sum_{k=1}^K\min\left\{ \alpha\left(1 + \frac{2|\mathcal{A}|Q_A\sqrt{7r\beta}}{\sqrt{\lambda}} \right) \sum_{h=0}^{H-1}      \left\| \bar{\psi}^*(\tau_h^{k+1,h+1}) \right\|_{(U_h^{k})^{-1}}   , 1 \right\}\\
&\overset{(a)}\lesssim \sqrt{2K\log(2/\delta)} + \alpha\left(1 + \frac{2|\mathcal{A}|Q_A\sqrt{7r\beta}}{\sqrt{\lambda}} \right) H \sqrt{rK\log(1+K/\lambda)}\\
&\lesssim \alpha\left(1 + \frac{|\mathcal{A}|Q_A\sqrt{r\beta}}{\sqrt{\lambda}} \right) H \sqrt{rK\log(1+K/\lambda)},
\end{align*}
where $(a)$ follows from \Cref{lemma:elliptical potential lemma}.

Let $\beta:=\max\{\log(1+K/\lambda), \log(1+dQ_AK/\gamma) \}$ for simplicity. By \Cref{lemma:summation of TV is sublinear}, we have,
\begin{align*}
    \sum_k & V_{\hat{\theta}^k,\hat{b}^k }^{\pi^k} \\
    & \lesssim \alpha\left(1 + \frac{|\mathcal{A}|Q_A\sqrt{r\beta}}{\sqrt{\lambda}} \right) H\sqrt{rK\beta_0}  + \frac{\alpha H  }{\sqrt{\lambda} } \frac{|\mathcal{A}|Q_A^2\sqrt{\beta} }{\gamma}\sqrt{rHK\beta_0} \\
    &\leq \alpha \left(1 + \frac{|\mathcal{A}|Q_A\sqrt{r\beta}}{\sqrt{\lambda}} + \frac{|\mathcal{A}|Q_A^2\sqrt{\beta H}}{\gamma\sqrt{\lambda}}\right) H\sqrt{rK\beta_0} \\
    &\lesssim \left( \frac{|\mathcal{A}|Q_A\sqrt{\beta}}{\gamma} + \frac{Q_A\sqrt{dH}}{\gamma^2} \sqrt{\lambda}  \right)\left( 1 +  \frac{|\mathcal{A}|Q_A\sqrt{\beta} \max\{ \sqrt{r}, Q_A\sqrt{ H}/\gamma \} }{ \sqrt{\lambda}}\right) H\sqrt{rK\beta_0}\\
    & = \left( 1 + \frac{ \sqrt{dH}}{ |\mathcal{A}|\sqrt{\beta}\gamma} \sqrt{\lambda}  \right)\left(1 +  \frac{ |\mathcal{A}| Q_A\sqrt{\beta} \max\{ \sqrt{r}, Q_A\sqrt{ H}/\gamma \} }{ \sqrt{\lambda}}\right) \frac{|\mathcal{A}| Q_A H \sqrt{r\beta K\beta_0}}{\gamma}   \\
    &\overset{(a)}\lesssim \left( 1 + \frac{ Q_A \sqrt{dH} \max\{ \sqrt{r}, Q_A\sqrt{ H}/\gamma \} }{\gamma} \right)  \frac{|\mathcal{A}| Q_A H\sqrt{r\beta K \beta_0}}{\gamma} \\
    &\leq \left( \sqrt{r} + \frac{Q_A\sqrt{H}}{\gamma}   \right) \frac{ |\mathcal{A}|Q_A^2H\sqrt{drH\beta K \beta_0} }{\gamma^2},
\end{align*}
where $(a)$ follows by choosing 
\begin{align*} 
    \lambda = \frac{\gamma |\mathcal{A}|^2Q_A \beta  \max\{ \sqrt{r}, Q_A\sqrt{ H}/\gamma \}   }{   \sqrt{d H}  }.
\end{align*}
This completes the proof.
\end{proof}

\subsection{Proof of \Cref{thm:online PSR}}
\begin{theorem}[Restatement of \Cref{thm:online PSR}]
 Suppose \Cref{assmp:well-condition} holds. Let $p_{\min} = O(\frac{\delta}{KH|\mathcal{O}|^H|\mathcal{A}|^H })$, $\beta = O(\log|\bar{\Theta}_{\varepsilon}|)$, where $\varepsilon=O(\frac{p_{\min}}{KH})$, $\lambda = \frac{\gamma |\mathcal{A}|^2Q_A \beta  \max\{ \sqrt{r}, Q_A\sqrt{ H}/\gamma \}   }{  \sqrt{d H} }$, and $\alpha = O\left( \frac{Q_A\sqrt{ H d}}{\gamma^2} \sqrt{\lambda } + \frac{|\mathcal{A}|Q_A\sqrt{\beta}}{\gamma}\right)$. Then, with probability at least $1-\delta$, PSR-UCB~ outputs a model $\theta^{\epsilon}$ and a policy $\bar{\pi}$ that satisfy
    \begin{align*}
        V_{\theta^*,R}^{\pi^*} - V_{\theta^*,R}^{\bar{\pi}} \leq \epsilon, \text{ and }~~ \forall \pi,~~\mathtt{D}_{\TV} \left( \Pb_{ \theta^{\epsilon} }^{\pi}(\tau_H), \Pb_{\theta^*}^{\pi}(\tau_H)\right) \leq \epsilon.
    \end{align*}
    In addition, PSR-UCB terminates with a sample complexity of 
    \[\tilde{O}\left( \left(  r + \frac{Q_A^2 H}{\gamma^2}\right) \frac{r d H^3 |\mathcal{A}|^2 Q_A^4 \beta}{\gamma^4\epsilon^2} \right).\]
\end{theorem}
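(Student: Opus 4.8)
The plan is to assemble the terminating guarantees and the sample-complexity bound from the two workhorses already established: the optimism guarantee (\Cref{corollary: valid UCB} together with the lemma immediately preceding it), which certifies that $V_{\hat\theta^k,\hat b^k}^{\pi}$ upper bounds both the value gap $|V_{\hat\theta^k,R}^{\pi}-V_{\theta^*,R}^{\pi}|$ and the total variation distance $\mathtt{D}_{\TV}(\Pb_{\hat\theta^k}^{\pi}(\tau_H),\Pb_{\theta^*}^{\pi}(\tau_H))$, and the sublinear-summation bound \Cref{lemma:sublinear summation}. Every statement is made on the event $\Ec=\Ec_\omega\cap\Ec_\pi\cap\Ec_{\min}$ (probability $\ge 1-3\delta$); together with the single Azuma--Hoeffding event inside \Cref{lemma:sublinear summation} this accumulates to probability $\ge 1-4\delta$, so rescaling $\delta$ delivers the claimed $1-\delta$. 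Let $k^\ast$ be the iteration at which the termination test of \Cref{alg:computable PSR} fires, so that $\theta^\epsilon=\hat\theta^{k^\ast}$ and $V_{\hat\theta^{k^\ast},\hat b^{k^\ast}}^{\pi^{k^\ast}}\le\epsilon/2$; since $\pi^{k^\ast}$ maximizes $V_{\hat\theta^{k^\ast},\hat b^{k^\ast}}^{\pi}$ over all policies, in fact $V_{\hat\theta^{k^\ast},\hat b^{k^\ast}}^{\pi}\le\epsilon/2$ for \emph{every} $\pi$.

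The two last-iterate guarantees then follow directly. For model accuracy, the optimism guarantee gives $\mathtt{D}_{\TV}(\Pb_{\theta^\epsilon}^{\pi},\Pb_{\theta^*}^{\pi})\le V_{\hat\theta^{k^\ast},\hat b^{k^\ast}}^{\pi}\le\epsilon$ for all $\pi$. For near-optimality I would use the standard optimism decomposition
\begin{align*}
V_{\theta^*,R}^{\pi^*}-V_{\theta^*,R}^{\bar\pi} = \big(V_{\theta^*,R}^{\pi^*}-V_{\theta^\epsilon,R}^{\pi^*}\big)+\big(V_{\theta^\epsilon,R}^{\pi^*}-V_{\theta^\epsilon,R}^{\bar\pi}\big)+\big(V_{\theta^\epsilon,R}^{\bar\pi}-V_{\theta^*,R}^{\bar\pi}\big),
\end{align*}
where the middle bracket is $\le 0$ because $\bar\pi=\arg\max_\pi V_{\theta^\epsilon,R}^{\pi}$, and the outer two brackets are each bounded by $V_{\hat\theta^{k^\ast},\hat b^{k^\ast}}^{\pi^*}\le\epsilon/2$ and $V_{\hat\theta^{k^\ast},\hat b^{k^\ast}}^{\bar\pi}\le\epsilon/2$ via \Cref{corollary: valid UCB}; summing yields $\le\epsilon$.

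For the sample complexity I would argue by contradiction on the iteration count. If the termination test has not fired through iteration $K$, then $V_{\hat\theta^{k},\hat b^{k}}^{\pi^{k}}>\epsilon/2$ for all $k\in[K]$, hence $\sum_{k=1}^K V_{\hat\theta^{k},\hat b^{k}}^{\pi^{k}}>K\epsilon/2$. Contrasting this with the upper bound $\sum_{k=1}^K V_{\hat\theta^{k},\hat b^{k}}^{\pi^{k}}\lesssim \tilde C\sqrt K$ from \Cref{lemma:sublinear summation}, where $\tilde C\lesssim(\sqrt r+Q_A\sqrt H/\gamma)\,|\mathcal{A}|Q_A^2H\sqrt{drH\beta}/\gamma^2$, forces $\sqrt K\lesssim 2\tilde C/\epsilon$. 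Squaring and using $(\sqrt r+Q_A\sqrt H/\gamma)^2\lesssim r+Q_A^2H/\gamma^2$ gives $K\lesssim\tilde O\big((r+Q_A^2H/\gamma^2)\,rdH^3|\mathcal{A}|^2Q_A^4\beta/(\gamma^4\epsilon^2)\big)$, which is exactly the stated bound on the terminating index (the total number of collected episodes exceeds this by the factor $H$ of trajectories gathered per iteration). I would then verify that the prescribed $\lambda$ and $\alpha$ coincide with those under which \Cref{corollary: valid UCB} and \Cref{lemma:sublinear summation} were derived, so nothing needs re-deriving.

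The assembly itself is short; the real weight sits in the cited lemmas. The step I expect to be most delicate---and the one I would re-examine carefully---is the dual role of $V_{\hat\theta^k,\hat b^k}^{\pi}$: it must be an \emph{optimistic} overestimate of the TV distance (needed for both terminating guarantees) yet still aggregate to $O(\sqrt K)$ (needed for termination). These two demands meet in \Cref{lemma:sublinear summation}, where the empirical feature $\bar{\hat\psi}^k$ is transferred to the ground-truth feature $\bar\psi^*$ and the feature-estimation error $\|\bar{\hat\psi}^k(\tau_h)-\bar\psi^*(\tau_h)\|$ is charged against the stable-estimation guarantee $\sum_h\sum_{(\tau_h,\pi)\in\Dc_h^k}\mathtt{D}_{\TV}^2(\cdot)\le 7\beta$ of \Cref{lemma:online MLE guarantee}; tuning the regularizer $\lambda$ so that neither this bias term nor the elliptical-potential term dominates is the crux of the whole argument.
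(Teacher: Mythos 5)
Your proposal is correct and follows essentially the same route as the paper: the two last-iterate guarantees come from \Cref{corollary: valid UCB} combined with the fact that termination forces $\max_\pi V_{\theta^\epsilon,\hat b^\epsilon}^\pi\le\epsilon/2$ (the paper writes $\mathtt{D}_{\TV}=2\max_R|V_{\theta^\epsilon,R}^\pi-V_{\theta^*,R}^\pi|\le 2V_{\theta^\epsilon,\hat b^\epsilon}^\pi$, so your intermediate inequality is off by the same factor of $2$ that the $\epsilon/2$ threshold absorbs), and the sample complexity comes from pitting $K\epsilon/2$ against the $O(\sqrt{K})$ bound of \Cref{lemma:sublinear summation}, with the extra factor of $H$ for the $H$ trajectories collected per iteration. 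No substantive difference from the paper's argument.
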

\begin{proof}
The proof is under event $\Ec$, which occurs with probability at least $1-3\delta$.

Following from \Cref{corollary: valid UCB}, if PSR-UCB terminates, we have
\begin{align*}
    \forall \pi,~~ \mathtt{D}_{\TV}\left(\Pb_{ \theta^{\epsilon} }^{\pi}(\tau_H), \Pb_{\theta^*}^{\pi}(\tau_H) \right) = 2\max_{R} \left|V_{\theta^\epsilon,R}^{\pi} - V_{\theta^*,R}^{\pi}\right| \leq 2 V_{\theta^\epsilon,\hat{b}^{\epsilon}}^{\pi} \leq \epsilon,
\end{align*}
where the last inequality follows from the termination condition of PSR-UCB.

In addition, 
\begin{align*}
     V_{\theta^*,R}^{\pi^*} &- V_{\theta^*,R}^{\bar{\pi}}\\
     & = V_{\theta^*,R}^{\pi^*} -  V_{\theta^\epsilon,R}^{\pi^*} + V_{\theta^\epsilon,R}^{\pi^*} - V_{\theta^\epsilon,R}^{\bar{\pi}} + V_{\theta^\epsilon,R}^{\bar{\pi}} - V_{\theta^*,R}^{\bar{\pi}} \\
     &\overset{(a)}\leq 2\max_{\pi} V_{\theta^\epsilon,\hat{b}^{\epsilon}}^{\pi}\leq \epsilon,
\end{align*}
where $(a)$ is due to the design of $\bar{\pi}$ and \Cref{corollary: valid UCB}.

Finally, recall that \Cref{lemma:sublinear summation} states that
\begin{align*} 
    \sum_{k=1}^K V_{\hat{\theta}^k,\hat{b}^k} 
    & \lesssim  \left( \sqrt{r} + \frac{Q_A\sqrt{H}}{\gamma}   \right) \frac{ |\mathcal{A}|Q_A^2H\sqrt{drH\beta K \beta_0} }{\gamma^2}.
\end{align*}

By the pigeon-hole principle and the termination condition of PSR-UCB, if 
\[ K = \tilde{O}\left(  \left(  r + \frac{Q_A^2 H}{\gamma^2}\right) \frac{r d H^2 |\mathcal{A}|^2 Q_A^4 \beta}{\gamma^4\epsilon^2}  \right),\]
PSR-UCB must terminate within $K$ episodes, implying that the sample complexity of PSR-UCB is at most

 \[ \tilde{O}\left(  \left(  r + \frac{Q_A^2 H}{\gamma^2}\right) \frac{r d H^3 |\mathcal{A}|^2 Q_A^4 \beta}{\gamma^4\epsilon^2}  \right).\]
\end{proof}

\subsection{Proof of \Cref{coro:POMDP} (For POMDPs)}\label{sec:POMDP}

When PSR is specialized to an $m$-step decodable POMDP, we follow the setup described in Section D in \citet{liu2022optimistic}. In POMDPs, there exists a state space $\mathcal{S}$ such that the system dynamics proceeds as follows. When $h=1$, the system is at the state $s_1$ and the agent observes $o_1$ that is determined by $s_1$. At any step $h\geq 1$, if the system is at a state $s_h\in\mathcal{S}$ and the agent takes an action $a_h\in\mathcal{A}$, the system transits to a state $s_{h+1}$ sampled from a {\em transition distribution} $\mathbf{T}_{h,a_h}(\cdot|s_h)$, and then the agent observes $o_{h+1}$ sampled from an {\em emission distribution} $\mathbf{O}_h(\cdot|s_{h+1})$. Notably, all states are unobservable to the agent.

To define $m$-step decodability, we introduce the $m$-step emission-action matrix $\{\mathbf{G}_h\}$ where 
\begin{align*}
    [\mathbf{G}_h]_{(\mathbf{a},\mathbf{o}), s} = \Pb(o_h,\ldots, o_{\min\{h+m-1, H\} }= \mathbf{o} | s_h = s, a_h\ldots,a_{\min\{h+m-2,H\} } = \mathbf{a} ).
\end{align*}

Therefore, for $h\leq H-h$, we have $\mathbf{G}_h\in\mathbb{R}^{(|\mathcal{A}|^{m-1}|\mathcal{O}|^m)\times |\mathcal{S}|}$. To make a unified argument, we define $m$-step decodability as follows.

\begin{definition}[$m$-step decodable POMDPs] A POMDP with $m$-step emmision-action matrix $\{\mathbf{G}_h\}$ is said to be $m$-step decodable if 
    there exists an $\alpha>0$ such that $\min_{h\leq H} \sigma_{|\mathcal{S}|}(\mathbf{G}_h) \geq \alpha$. Here $\sigma_n(\mathbf{A})$ is the $n$-th singular value of matrix $\mathbf{A}$.
\end{definition}

We remark that the key difference between our definition and the definition introduced in \citet{liu2022partially} is that we allow $h>H-m$. This indicates that after step $h>H-m$, the process is $(H-h)$-step decodable, which is reasonable in most real-world examples, since the closer we approach to the end, the clearer our understanding of the current state becomes. Moreover, the new definition admits more convenient notations with almost not impact on the logic of the proof. %and only change the notations, which is more convenient.

In this section, a PSR $\theta = \{\phi_h, \Mbf_h(o_h,a_h)\}$ is reparametrized by $\{\mathbf{T}_{h,a}, \mathbf{O}_h\}_{h\in[H],a\in\mathcal{A}}$. More importantly, we have the following equations and properties.

\begin{align}
\left\{
\begin{aligned}
    &\mathbf{Y}_h \in \arg\min_{ \mathbf{Y}_h: \mathbf{Y}_h\mathbf{G}_h = 0 } \|\mathbf{G}_h^{\dagger} + \mathbf{Y}_h\|_1\\
    &\Mbf_h(o_h,a_h) = \mathbf{G}_{h+1} \mathbf{T}_{h,a_h} \text{diag}(\mathbf{O}_h(o_h|\cdot)) (\mathbf{G}_h^{\dagger} + \mathbf{Y}_h) \\
    &\phi_H^{\top} M_H(o_H,a_H) = \mathbf{e}_{(o_H,a_H)} \in \mathbb{R}^{\mathcal{O}\times\mathcal{A}}\\
    &\mathcal{Q}_h = (\mathcal{O}\times \mathcal{A})^{\min\{m-1,H-h\} }\\
    &\mathcal{Q}_h^A = \mathcal{A}^{ \min\{m-1,H-h\} }\\
    & d\leq |\mathcal{O}|^m|\mathcal{A}|^m\\
    &r = |\mathcal{S}|\\
    &Q_A = |\mathcal{A}|^{m-1}\\
    &\gamma = \frac{|\mathcal{S}| + |\mathcal{A}|^{m-1}}{\alpha} = \frac{r + Q_A}{\alpha}.
\end{aligned}
\right.\label{eqn:POMDP notation}
\end{align}

\begin{corollary}[Restatement of \Cref{coro:POMDP}]
    When PSR is specialized to $m$-step decodable POMDP, for each $k$, there exists a set of matrices $\{\hat{\mathbf{G}}_h^k\}_{h=1}^H \subset \mathbb{R}^{d\times r}$ such that if we replace $\bar{\hat{\psi}}^k(\tau_h)$ by $(\hat{\mathbf{G}}_h^k)^{\dagger}\bar{\hat{\psi}}^k(\tau_h)$  in \Cref{eqn: bonus}, then PSR-UCB terminates with a sample complexity of poly$(r,1/\gamma, Q_A, H, |\mathcal{A}|,\beta)/\epsilon^2$.
\end{corollary}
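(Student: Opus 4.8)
The plan is to retrace the three-step proof of \Cref{thm:online PSR}, replacing the $d$-dimensional prediction feature $\bar{\hat{\psi}}^k(\tau_h)$ everywhere by its $r$-dimensional projection onto the belief space, so that each occurrence of the dimension factor $\sqrt{d}$ is replaced by $Q_A\sqrt{r}$. The starting structural observation is that, under $m$-step decodability, the prediction feature factorizes through the belief: using the reparametrization in \Cref{eqn:POMDP notation}, one has $\bar{\psi}^*(\tau_h) = \mathbf{G}_{h+1}^* b_h^*(\tau_h)$ and $\bar{\hat{\psi}}^k(\tau_h) = \hat{\mathbf{G}}_{h+1}^k \hat{b}_h^k(\tau_h)$, where $b_h^*,\hat{b}_h^k\in\mathbb{R}^{|\mathcal{S}|}=\mathbb{R}^r$ are the belief vectors of $\theta^*$ and of the estimated POMDP $\hat{\theta}^k$, and $\hat{\mathbf{G}}_{h+1}^k$ is the $m$-step emission-action matrix of $\hat{\theta}^k$ (hence computable from $\hat{\theta}^k$). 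Since $m$-step decodability forces $\sigma_{|\mathcal{S}|}(\hat{\mathbf{G}}_{h+1}^k)\geq\alpha>0$, the matrix $\hat{\mathbf{G}}_{h+1}^k$ has full column rank and $(\hat{\mathbf{G}}_{h+1}^k)^{\dagger}\bar{\hat{\psi}}^k(\tau_h)=\hat{b}_h^k(\tau_h)$, so the modified feature is exactly the estimated belief. I would also record the elementary bounds $\|\hat{b}_h^k\|_1=1$ and $\|\mathbf{G}_{h+1}x\|_1\leq Q_A\|x\|_1$ (each column of $\mathbf{G}_{h+1}$ has $\ell_1$-norm $Q_A$), which are the only facts needed to trade $d$ for $r$.

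Next I would redo Step 2. The dimension $d$ enters the proof of \Cref{lemma:Estimation error of M less than feature score} only through the term $\sqrt{I_2}$, where the bound $\max_{\|x\|_2=1,\,x\in\mathbb{R}^{d}}\|x\|_1\leq\sqrt{d}$ is used. With projected features the maximization runs over $x\in\mathbb{R}^{r}$ but is evaluated against $\hat{\mathbf{G}}_h^k x$; applying \Cref{assmp:well-condition} and \Cref{prop: well-condition PSR M} as before and then $\max_{\|x\|_2=1,\,x\in\mathbb{R}^r}\|\hat{\mathbf{G}}_h^k x\|_1\leq Q_A\sqrt{r}$ yields $\sqrt{I_2}\lesssim Q_A^2\sqrt{r\lambda}/\gamma^2$, i.e. $\sqrt{d}$ becomes $Q_A\sqrt{r}$. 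The term $I_3$ and the MLE guarantee \Cref{lemma:online MLE guarantee} are unchanged (they are stated purely through total-variation distances of observable trajectories), so the UCB validity of \Cref{corollary: valid UCB} goes through verbatim with a $d$-free coefficient $\alpha$.

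The remaining work is Step 3, and this is where I expect the main difficulty. Here I would apply the elliptical potential lemma to the \emph{fixed} true belief feature $b_h^*$, whose covariance has rank at most $r$, and then transfer from the estimated feature $\hat{b}_h^k$ to $b_h^*$ via the analog of \Cref{lemma:empirical bouns less than true bonus}. The transfer requires bounding the belief estimation error $\|\hat{b}_h^k(\tau_h)-b_h^*(\tau_h)\|_2$, and the obstacle is that the MLE guarantee of \Cref{lemma:online MLE guarantee} controls only the \emph{observable} error $\|\bar{\hat{\psi}}^k(\tau_h)-\bar{\psi}^*(\tau_h)\|$, not the latent belief. My plan is to invert through decodability: since $\sigma_{|\mathcal{S}|}(\mathbf{G}_{h+1}^*)\geq\alpha$ gives $\|\mathbf{G}_{h+1}^* v\|_2\geq\alpha\|v\|_2$ for all $v\in\mathbb{R}^r$, it suffices to bound $\|\mathbf{G}_{h+1}^*(\hat{b}_h^k-b_h^*)\|_2=\|(\bar{\hat{\psi}}^k-\bar{\psi}^*)+(\mathbf{G}_{h+1}^*-\hat{\mathbf{G}}_{h+1}^k)\hat{b}_h^k\|_2$. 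The first summand is handled by the MLE guarantee exactly as in the general case; the delicate second summand measures how far the estimated emission-action matrix is from the true one along the estimated belief, and I would control it using that $\hat{\theta}^k$ and $\theta^*$ are both decodable and, by construction of the exploration on $\mathcal{Q}_h^{\exp}$ together with the constraint $\Theta_{\min}^k$, agree on the explored $m$-step observable statistics (up to a latent relabeling that leaves the norms appearing in the bonus invariant). Once a bound of the form $\|\hat{b}_h^k-b_h^*\|_2\lesssim \frac{|\mathcal{A}|Q_A}{\alpha}\,\mathtt{D}_{\TV}(\Pb_{\hat{\theta}^k}^{\mathtt{u}_{\mathcal{Q}_h^{\exp}}}(\omega_h|\tau_h),\Pb_{\theta^*}^{\mathtt{u}_{\mathcal{Q}_h^{\exp}}}(\omega_h|\tau_h))$ is in hand, the elliptical potential and Azuma--Hoeffding arguments of \Cref{lemma:sublinear summation} carry over with $\sqrt{d}\to Q_A\sqrt{r}$ and the logarithmic factor $\log(1+dQ_AK/\gamma)$ becoming $\log(1+Q_A^3 rK/\gamma)$, which is absorbed in $\tilde{O}$. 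Tracing these substitutions through the choices of $\lambda$ and $\alpha$, so that the standalone $d$ becomes $Q_A^2 r$, then gives the claimed sample complexity $\mathrm{poly}(r,1/\gamma,Q_A,H,|\mathcal{A}|,\beta)/\epsilon^2$ free of $d$.
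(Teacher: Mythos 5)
Your overall plan---replace the feature by its $r$-dimensional projection through the estimated emission--action matrix and redo the two places where $\sqrt{d}$ enters---is the same as the paper's, and your Step 2 (replacing $\max_{\|x\|_2=1}\|x\|_1\le\sqrt{d}$ in the regularizer term of \Cref{lemma:Estimation error of M less than feature score} by a bound on $\|\hat{\mathbf{G}}_h^k x\|_1$ over $x\in\mathbb{R}^r$) matches the paper. The gap is in Step 3, exactly where you flag the difficulty. Your argument hinges on a \emph{pointwise} bound $\|\hat{b}_h^k(\tau_h)-b_h^*(\tau_h)\|_2\lesssim\frac{|\mathcal{A}|Q_A}{\alpha}\,\mathtt{D}_{\TV}\bigl(\Pb_{\hat{\theta}^k}^{\mathtt{u}}(\omega_h|\tau_h),\Pb_{\theta^*}^{\mathtt{u}}(\omega_h|\tau_h)\bigr)$, obtained by inverting $\mathbf{G}_{h+1}^*$ on the identity $\mathbf{G}_{h+1}^*(\hat{b}-b^*)=(\bar{\hat{\psi}}^k-\bar{\psi}^*)+(\mathbf{G}_{h+1}^*-\hat{\mathbf{G}}_{h+1}^k)\hat{b}^k$. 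The second summand is not controlled by the observable conditional TV distance at $\tau_h$: the MLE guarantee (\Cref{lemma:online MLE guarantee}) constrains only the product $\hat{\mathbf{G}}^k\hat{b}^k$ against $\mathbf{G}^*b^*$, and two decodable POMDPs can induce identical observable distributions while their emission--action matrices and beliefs differ by a latent relabeling, so that $\|\hat{b}-b^*\|_2=\Theta(1)$ with zero TV distance. Your parenthetical ``up to a latent relabeling that leaves the norms invariant'' is precisely the missing argument, and the MLE guarantee does not supply it.

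The paper never bounds the belief error pointwise. It keeps the residual in the form $\|((\hat{\mathbf{G}}_{h+1}^k)^{\dagger}+\hat{\mathbf{Y}}_{h+1}^k)(\mathbf{G}_{h+1}^*-\hat{\mathbf{G}}_{h+1}^k)(\mathbf{G}_{h+1}^*)^{\dagger}\bar{\psi}^*(\tau_h)\|_2$---note it is applied to the \emph{true} belief $(\mathbf{G}_{h+1}^*)^{\dagger}\bar{\psi}^*(\tau_h)=\Pb_{\theta^*}(\cdot\,|\,\tau_h)$, not to $\hat{b}^k$---then applies Cauchy--Schwarz against the expected Gram matrix $\bar{U}_h^k=\lambda I+\sum_{t<k}\Eb_{\tau_h\sim\pi^t}[\cdots]$, and bounds the resulting \emph{aggregate} $\sum_{t<k}\Eb_{\tau_h\sim\pi^t}\|\cdots\|_2^2$ by $\tilde{O}(\beta)$ using Jensen over the true posterior on latent states, the conditional-to-joint Hellinger inequality (\Cref{lemma:TV and hellinger}), and \Cref{lemma:online MLE guarantee}; Lemma 39 of \citet{zanette2021cautiously} then converts $\bar{U}_h^k$ back to the empirical $U_h^k$. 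This in-expectation route is what makes the residual expressible purely through observable trajectory distributions (hence invariant to latent relabeling); your pointwise route would require an additional identifiability argument that is neither available from the estimation guarantee nor needed for the paper's proof.
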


\begin{proof} We use the notations defined in this section and particularly those in \Cref{eqn:POMDP notation}.

The proof differentiates from the proof of \Cref{thm:online PSR} in two lemmas.

{\bf First, in \Cref{lemma:Estimation error of M less than feature score},} the meaning of $x_j$ changes from $\bar{\hat{\psi}}^{k}(\tau_{h-1})$ to $(\hat{\mathbf{G}}_h^k)^{\dagger} \bar{\hat{\psi}}^{k}(\tau_{h-1})$, and $w_i^{\top}$ changes from $\mathbf{m}^*(\omega_h)(\hat{\Mbf}_h^k(o_h,a_h) - \Mbf_h^*(o_h,a_h))$ to $\mathbf{m}^*(\omega_h)(\hat{\Mbf}_h^k(o_h,a_h) - \Mbf_h^*(o_h,a_h)) \mathbf{G}_h$.

Then, the proof of the upper bound of the term $I_2$ will become $Q_A\sqrt{r\lambda}/\gamma^2$ (without dependency on $d$), simply because the dimension of bonus term $\hat{b}^k$ is now $r = |\mathcal{S}|$ instead of $d$.

{\bf Second, in \Cref{lemma:empirical bouns less than true bonus},} we need to show that the new bonus term $\|(\hat{\mathbf{G}}_{h+1}^k)^{\dagger}\bar{\hat{\psi}}(\tau_h)\|_{(\hat{U}_h^k)^{-1}}$ can still be bounded in a similar way, namely, $\|(\hat{\mathbf{G}}_{h+1}^k)^{\dagger}\bar{\hat{\psi}}(\tau_h)\|_{(\hat{U}_h^k)^{-1}}\leq \tilde{O} \left(\|(\mathbf{G}_{h+1}^*)^{\dagger}\bar{\psi}^*(\tau_h)\|_{(U_h^k)^{-1}} \right)$. 

We note that the matrices $\hat{U}_h^{k}$ and $U_h^k$ now have new definitions:
\begin{align*}
    &\hat{U}_h^{k} = \lambda I + (\hat{\mathbf{G}}_{h+1}^k)^{\dagger}\sum_{\tau_h\in\mathcal{D}_h^k} \left[\bar{\hat{\psi}}^{k}(\tau_h) \bar{\hat{\psi}}^k(\tau_h)^{\top} \right ] (\hat{\mathbf{G}}_{h+1}^k)^{\dagger\top}, \\
    & U_h^{k} = \lambda I + (\mathbf{G}_{h+1}^*)^{\dagger}\sum_{\tau_h\in\mathcal{D}_h^k} \left[\bar{\psi}^{k}(\tau_h) \bar{\psi}^*(\tau_h)^{\top} \right ] (\mathbf{G}_{h+1}^*)^{\dagger\top}.
\end{align*}

%This is to ensure the final sample complexity can be bounded in the same way as the previous proof. 

To show such an upper bound, we next provide the key steps.

First, we apply \Cref{lemma:transfer vector score} and obtain
\begin{align*}
    &  \left\| (\hat{\mathbf{G}}_{h+1}^k)^{\dagger}\bar{\hat{\psi}}^{k}(\tau_h) \right\|_{\left(\hat{U}_h^{k}\right)^{-1}}  \\
        &\quad \leq \frac{1}{\sqrt{\lambda}} \left\| (\hat{\mathbf{G}}_{h+1}^k)^{\dagger}\bar{\hat{\psi}}^{k}(\tau_h) - (\mathbf{G}_{h-1}^*)^{\dagger}\bar{\psi}^* (\tau_h) \right\|_2 \\
        &\quad \quad + \left(1 + \frac{ \sqrt{r} \sqrt{ \sum_{\tau_h\in\mathcal{D}_h^k } \left\| (\hat{\mathbf{G}}_{h+1}^k)^{\dagger}\bar{\hat{\psi}}^{k}(\tau_h) - (\mathbf{G}_{h+1}^*)^{\dagger}\bar{\psi}^* (\tau_h) \right\|_2^2 } }{ \sqrt{\lambda} } \right) \left\| (\mathbf{G}_{h+1}^*)^{\dagger}\bar{\psi}^* (\tau_h) \right\|_{\left(U_h^{k}\right)^{-1}}.
    \end{align*}

For any $\tau_h$, we can derive 
\begin{align*}
    &\left\| (\hat{\mathbf{G}}_{h+1}^k)^{\dagger} \bar{\hat{\psi}}^{k}(\tau_h) - (\mathbf{G}_{h+1}^*)^{\dagger}\bar{\psi}^* (\tau_h) \right\|_2\\
    & = \left\| \left( (\hat{\mathbf{G}}_{h+1}^k)^{\dagger} + \hat{\mathbf{Y}}_{h+1}^k \right) \bar{\hat{\psi}}^{k}(\tau_h) - (\mathbf{G}_{h+1}^*)^{\dagger}\bar{\psi}^* (\tau_h) \right\|_2\\
    &\leq \left\| \left( (\hat{\mathbf{G}}_{h+1}^k)^{\dagger} + \hat{\mathbf{Y}}_{h+1}^k \right) (\bar{\hat{\psi}}^{k}(\tau_h) - \bar{\psi}^* (\tau_h) )\right\|_2 + \left\|  \left( (\hat{\mathbf{G}}_{h+1}^k)^{\dagger} + \hat{\mathbf{Y}}_{h+1}^k  - (\mathbf{G}_{h+1}^{* })^{\dagger} \right) \bar{\psi}^* (\tau_h) \right\|_2\\
    &\leq \left\| (\hat{\mathbf{G}}_{h+1}^k)^{\dagger} + \hat{\mathbf{Y}}_{h+1}^k \right\|_1 \left\|\bar{\hat{\psi}}^{k}(\tau_h) - \bar{\psi}^* (\tau_h) \right\|_1 + \left\|  \left((\hat{\mathbf{G}}_{h+1}^k)^{\dagger} + \hat{\mathbf{Y}}_{h+1}^k \right) \left( I -  \hat{\mathbf{G}}_{h+1}^k  (\mathbf{G}_{h+1}^{* })^{\dagger} \right) \bar{\psi}^* (\tau_h) \right\|_2 \\
    &\leq \frac{1}{\gamma}\left\|\bar{\hat{\psi}}^{k}(\tau_h) - \bar{\psi}^* (\tau_h) \right\|_1 + \left\|  \left((\hat{\mathbf{G}}_{h+1}^k)^{\dagger} + \hat{\mathbf{Y}}_{h+1}^k \right) \left( \mathbf{G}_{h+1}^* -  \hat{\mathbf{G}}_{h+1}^k  \right) (\mathbf{G}_{h+1}^{* })^{\dagger} \bar{\psi}^* (\tau_h) \right\|_2,
\end{align*}

where the last inequality is due to the $m$-step decodability.

The first term above has been bounded in \Cref{lemma:empirical bouns less than true bonus}. Then, we bound the second term by the Cauchy's inequality as follows:
\begin{align*}
    &\left\|  \left((\hat{\mathbf{G}}_{h+1}^k)^{\dagger} + \hat{\mathbf{Y}}_{h+1}^k \right) \left( \mathbf{G}_{h+1}^* -  \hat{\mathbf{G}}_{h+1}^k  \right) (\mathbf{G}_{h+1}^{* })^{\dagger} \bar{\psi}^* (\tau_h) \right\|_2\\
    &\quad \leq \left\| (\mathbf{G}_{h+1}^{* })^{\dagger} \bar{\psi}^* (\tau_h) \right\|_{\left( \bar{U}_h^{k}\right)^{-1}}  \\
    &\quad \quad \times \sqrt{ rQ_A\lambda/\gamma +  \sum_{t<k}\mathbb{E}_{\tau_h\sim \pi^t}  \left\|  \left((\hat{\mathbf{G}}_{h+1}^k)^{\dagger} + \hat{\mathbf{Y}}_{h+1}^k \right) \left( \mathbf{G}_{h+1}^* -  \hat{\mathbf{G}}_{h+1}^k  \right) (\mathbf{G}_{h+1}^{* })^{\dagger} \bar{\psi}^* (\tau_h) \right\|_2^2 }\\
    &\quad \leq 5\left\| (\mathbf{G}_{h+1}^{* })^{\dagger} \bar{\psi}^* (\tau_h) \right\|_{\left(U_h^{k}\right)^{-1}} \sqrt{ rQ_A\lambda/\gamma +  |\mathcal{A}|Q_A^2\beta/\gamma^2 },
\end{align*}

where $\bar{U}_h^k$ denotes the expected matrix of $U_h^k$, and has the following form
\begin{align*}
    \bar{U}_h^k = \Eb[U_h^k] = \lambda I + (\mathbf{G}_{h+1}^*)^{\dagger} \sum_{t<k}\mathbb{E}_{\tau_h\sim\pi^t}\left[ \bar{\psi}^{k}(\tau_h) \bar{\psi}^*(\tau_h)^{\top}\right] (\mathbf{G}_{h+1}^*)^{\dagger\top}.
\end{align*}

In particular, the last inequality above follows from two facts. First, we use Lemma 39 in \citet{zanette2021cautiously} to transform the expected matrix to the empirical matrix $U_h^k$. Second, 
we use Lemma \Cref{lemma:TV and hellinger} to upper-bound $\mathbb{E}_{\tau_h\sim \pi^t}  \left\|  \left((\hat{\mathbf{G}}_{h+1}^k)^{\dagger} + \hat{\mathbf{Y}}_{h+1}^k \right) \left( \mathbf{G}_{h+1}^* -  \hat{\mathbf{G}}_{h+1}^k  \right) (\mathbf{G}_{h+1}^{* })^{\dagger} \bar{\psi}^* (\tau_h) \right\|_2^2$ by the Hellinger squared distance between the estimated model $\hat{\theta}^k$ and  the true model $\theta^*$. Then,  due to the estimation guarantee (\Cref{lemma:online MLE guarantee}), we have
\begin{align*}
\sum_{t<k} &\mathbb{E}_{\tau_h\sim \Pb_{\theta^*}^{\pi^t} }  \left\|  \left((\hat{\mathbf{G}}_{h+1}^k)^{\dagger} + \hat{\mathbf{Y}}_{h+1}^k \right) \left( \mathbf{G}_{h+1}^* -  \hat{\mathbf{G}}_{h+1}^k  \right) (\mathbf{G}_{h+1}^{* })^{\dagger} \bar{\psi}^* (\tau_h) \right\|_2^2 \\
& \leq \frac{1}{\gamma^2} \sum_{t<k} \mathbb{E}_{\tau_h\sim \Pb_{\theta^*}^{\pi^t} }  \left\|  \left( \mathbf{G}_{h+1}^* -  \hat{\mathbf{G}}_{h+1}^k  \right) (\mathbf{G}_{h+1}^{* })^{\dagger} \bar{\psi}^* (\tau_h) \right\|_1 ^2 \\
& = \frac{1}{\gamma^2} \sum_{t<k} \Eb_{\tau_h\sim \Pb_{\theta^*}^{\pi^t} } \left( \sum_{\mathbf{o}\in\mathcal{Q}_{h+1} ,\mathbf{a}\in\mathcal{Q}_{h+1}^A } \sum_{s\in\mathcal{S} } \left( \Pb_{\hat{\theta}^k}(\mathbf{o}| \mathbf{a}, s_{h+1} =s )  - \Pb_{\theta^* }(\mathbf{o}| \mathbf{a}, s_{h+1} =s ) \right) \Pb_{\theta^*}(s|\tau_h) \right)^2 \\
&\leq \frac{Q_A^2}{\gamma^2} \sum_{t<k} \Eb_{\tau_h\sim \Pb_{\theta^*}^{\pi^t} } \left( \Eb_{s\sim \Pb_{\theta^*}(\cdot|\tau_h) } \mathtt{D}_{\TV}\left( \Pb_{\hat{\theta}^k}^{\mathtt{u}_{\mathcal{Q}_{h+1}^A}}(\cdot | s_{h+1} =s )  , \Pb_{\theta^* }^{\mathtt{u}_{\mathcal{Q}_{h+1}^A}}(\cdot| s_{h+1} =s ) \right)  \right)^2 \\
&\leq \frac{Q_A^2}{\gamma^2} \sum_{t<k} \Eb_{\tau_h\sim \Pb_{\theta^*}^{\pi^t} }  \Eb_{s\sim \Pb_{\theta^*}(\cdot|\tau_h) } \mathtt{D}_{\TV}^2\left( \Pb_{\hat{\theta}^k}^{\mathtt{u}_{\mathcal{Q}_{h+1}^A}}(\cdot | s_{h+1} =s )  , \Pb_{\theta^* }^{\mathtt{u}_{\mathcal{Q}_{h+1}^A}}(\cdot| s_{h+1} =s )  \right)\\
&\lesssim \frac{|\mathcal{A}|Q_A^2}{\gamma^2} \sum_{\pi\in\mathcal{D^k} } \mathtt{D}_{\mathtt{H}}^2 \left( 
 \Pb_{\hat{\theta}^k}^{\pi}  , \Pb_{\theta^* }^{\pi}  \right) = \tilde{O}(\beta).
\end{align*}

Combining above key steps, we obtain that
\begin{align*}
    &\mathbb{E}_{\tau_h\sim\pi}  \left\| (\hat{\mathbf{G}}_{h+1}^k)^{\dagger}\bar{\hat{\psi}}^{k}(\tau_h) \right\|_{\left(\hat{U}_h^{k}\right)^{-1}}  \\
        &\leq \text{poly}(r, |\mathcal{A}|,Q_A, 1/\gamma)\left( \mathbb{E}_{\tau_h\sim\pi} \|(\mathbf{G}_{h+1}^*)^{\dagger}\bar{\psi}^*(\tau_h)\|_{(U_h^k)^{-1}} + \mathtt{D}_{\TV}\left(  \mathbb{P}_{\theta^*}^{\pi}(\tau_H) , \mathbb{P}_{\hat{\theta}^k}^{\pi}(\tau_H)\right) \right).
\end{align*}    
    
The rest of the proof follow the same argument as that for \Cref{thm:online PSR}.
    
\end{proof}

\section{Proof of \Cref{thm:offline PSR} (for Offline PSR-LCB)}\label{sec:offline PSR}

In this section, we present the full analysis for the offline algorithm PSR-LCB to show \Cref{thm:offline PSR}. In particular, the proof of \Cref{thm:offline PSR} consists of three main steps. {\bf Step 1:} We provide the offline estimation guarantee for the estimated model $\hat{\theta}$.  {\bf Step 2:} Building up on the first step, we are able to show that $V_{\hat{\theta},\hat{b}}^{\pi}$ is a valid upper bound of the total variation distance between $\hat{\theta}$ and $\theta^*$. Hence, $V_{\hat{\theta}, R}^{\pi} - V_{\hat{\theta},\hat{b}}^{\pi}$ is a valid LCB for the true value $V_{\theta^*,R}^{\pi}$. {\bf Step 3:} We translate $V_{\hat{\theta},\hat{b}}^{\pi}$ to the ground-truth prediction feature $\bar{\psi}^*(\tau_h)$. Finally, we show that the  $V_{\hat{\theta},\hat{b}}^{\pi}$ scales in the order of $1/\sqrt{K}$, which characterizes the performance of PSR-LCB and completes the proof.

We first introduce the following definitions for good events.

{\bf Good Events. } Recall that $\Theta_{\min} = \left\{\theta: \forall h,  \tau_h \in \Dc_h,~~ \Pb_{\theta}^{\pi^b}(\tau_h) \geq    p_{\min} \right\}$, where $p_{\min}\leq \frac{\delta}{KH(|\mathcal{O}||\mathcal{A}|)^H}$. Let $\varepsilon\leq\frac{p_{\min}}{KH}$. Analogous to the proof for online learning, we introduce three events defined as follows. 
\begin{align*}
    \Ec_{\omega}^{o} &= \left\{   \forall \theta\in\Theta_{\min},  ~~  \sum_h \sum_{ \tau_h  \in \Dc_h  } \mathtt{D}_{\TV}^2 \left( \Pb_{\theta}^{\pi^b}(\omega_h|\tau_h ), \Pb_{\theta^*}^{\pi^b}(\omega_h|\tau_h ) \right) \right. \\
        &\hspace{4cm}\quad \left. \leq  6\sum_h\sum_{ \tau_H \in\Dc_h }\log\frac{\Pb_{\theta^*}^{ \pi^b }(\tau_H)}{\Pb_{ \theta}^{ \pi^b }(\tau_H)} + 31\log\frac{3K\left|\bar{\Theta}_{\varepsilon}\right|}{\delta}  \right\}, \\
    \Ec_{\pi}^{o} &=  \left\{ \forall \theta\in\Theta, K,~~  \mathtt{D}_{\mathtt{H}}^2  ( \Pb_{\theta}^{\pi^b}(\tau_H) , \Pb_{\theta^*}^{\pi} (\tau_H) ) \leq \sum_{ \tau_H \in\Dc } \log\frac{ \Pb_{ \theta^*}^{\pi^b}(\tau_H) }{\Pb_{ \theta }^{\pi^b}(\tau_H)} +  2\log\frac{3K|\bar{\Theta}_{\varepsilon}|}{\delta} \right\}, \\
    \Ec_{\min}^{o} &= \bigg\{  \forall h, \tau_h \in\Dc_h ,~~ \Pb_{\theta^*}^{\pi^b}(\tau_h) \geq p_{\min}\bigg\}.
\end{align*}

Following the proof steps similar to those in \Cref{sec:general MLE analysis}, we can conclude that $\Ec^o:= \Ec_{\omega}^{o}\cap \Ec_{\pi}^{o} \cap \Ec_{\min}^{o}$ occurs with probability at least $1-\delta$.

\subsection{Step 1: Estimation Guarantee}
\begin{lemma}[MLE guarantee]\label{lemma:offline MLE guarantee}
Under event $\Ec^o$, the estimated model $\hat{\theta}$ by PSR-LCB
 satisfies 
 \begin{align*}
        &\sum_{h}\sum_{\tau_h\in\Dc_h} \mathtt{D}_{\TV}^2\left( \Pb_{ \hat{\theta} }^{\pi^b} (\omega_h|\tau_h), \Pb_{ \theta^* }^{\pi^b} (\omega_h|\tau_h) \right) \leq 7\hat{\beta}, \\
        & \mathtt{D}_{\mathtt{H}}^2\left( \Pb_{ \hat{\theta} }^{\pi^b} (\tau_H), \Pb_{ \theta^* }^{\pi^b} (\tau_H) \right) \leq 7\hat{\beta}/K,
    \end{align*}
    where $\hat{\beta} = 31\log\frac{3K|\bar{\Theta}_{\varepsilon}|}{\delta}$.
\end{lemma}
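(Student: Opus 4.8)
The plan is to follow the template of the online estimation guarantee (\Cref{lemma:online MLE guarantee}) almost verbatim, replacing the sequence of exploration policies by the single behavior policy $\pi^b$ and the online confidence set $\mathcal{B}^k$ by its offline counterpart $\mathcal{B}_{\Dc}$ from \Cref{eqn:offline confidence set}. The whole argument hinges on one log-likelihood-ratio estimate derived from the optimality property defining $\mathcal{B}_{\Dc}$. First I would record that $\hat\theta\in\mathcal{B}_{\Dc}$ by construction, and that on $\Ec_{\min}^o$ we have $\theta^*\in\Theta_{\min}$, so $\theta^*$ is a feasible point of the maximization defining $\mathcal{B}_{\Dc}$. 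Chaining the defining inequality of $\mathcal{B}_{\Dc}$ with this feasibility yields the central bound
\[
\sum_{\tau_H\in\Dc}\log\frac{\Pb_{\theta^*}^{\pi^b}(\tau_H)}{\Pb_{\hat\theta}^{\pi^b}(\tau_H)} \;\leq\; \hat\beta .
\]

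For the first (conditional total-variation) claim, I would invoke the event $\Ec_{\omega}^o$ with $\theta=\hat\theta$, which is legitimate because $\hat\theta\in\Theta_{\min}$. Since the subsamples $\{\Dc_h\}_{h=0}^{H-1}$ partition $\Dc$, the double sum on the right-hand side of $\Ec_\omega^o$ collapses into a single sum over $\Dc$, namely $\sum_h\sum_{\tau_H\in\Dc_h}\log\frac{\Pb_{\theta^*}^{\pi^b}(\tau_H)}{\Pb_{\hat\theta}^{\pi^b}(\tau_H)}=\sum_{\tau_H\in\Dc}\log\frac{\Pb_{\theta^*}^{\pi^b}(\tau_H)}{\Pb_{\hat\theta}^{\pi^b}(\tau_H)}\leq\hat\beta$. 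Using $31\log\frac{3K|\bar\Theta_\varepsilon|}{\delta}=\hat\beta$, the right-hand side is at most $6\hat\beta+\hat\beta=7\hat\beta$, which is exactly the first inequality.

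For the second (Hellinger) claim, I would apply the event $\Ec_{\pi}^o$ with $\theta=\hat\theta$ and substitute the same central bound. The place where the offline analysis genuinely departs from the online one — and the step I expect to need the most care — is the normalizing factor $1/K$. In the online proof the left-hand side of the Hellinger concentration was a sum of $K$ distinct terms, one per exploration policy; here all $K$ trajectories are drawn i.i.d.\ under the same $\pi^b$, so the identical Chernoff/martingale computation accumulates $K$ copies of the single quantity $\mathtt{D}_{\mathtt{H}}^2(\Pb_{\hat\theta}^{\pi^b}(\tau_H),\Pb_{\theta^*}^{\pi^b}(\tau_H))$ on the left. After dividing by $K$, the $O(\hat\beta)$ right-hand side becomes $7\hat\beta/K$, yielding the second inequality. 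The remainder is routine constant bookkeeping; the only real subtlety is tracking the $K$-fold accumulation so that the per-sample Hellinger distance inherits the correct $1/K$ normalization that ultimately powers the $O(1/\sqrt{K})$ offline rate.
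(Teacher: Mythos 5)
Your proposal is correct and follows essentially the same route as the paper, whose own proof is just a one-line pointer to the online \Cref{lemma:online MLE guarantee} with the exploration policies replaced by $\pi^b$; your central bound $\sum_{\tau_H\in\Dc}\log\bigl(\Pb_{\theta^*}^{\pi^b}(\tau_H)/\Pb_{\hat\theta}^{\pi^b}(\tau_H)\bigr)\leq\hat\beta$ from feasibility of $\theta^*$ in $\Theta_{\min}$ plus the defining inequality of $\mathcal{B}_{\Dc}$ is exactly the chain used there. Your explicit tracking of the $K$-fold accumulation of the single Hellinger term under the i.i.d.\ behavior policy (which the paper's stated event $\Ec_{\pi}^{o}$ elides) is the right way to recover the $7\hat\beta/K$ normalization.
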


\begin{proof}
    The proof follows the steps similar to those in \Cref{lemma:online MLE guarantee}, except that we replace the exploration policies $\nu_h(\pi^k,\mathtt{u}_{\mathcal{Q}_{h-1}^{\exp}})$ by the behavior policy. 
\end{proof}

\subsection{Step 2: UCB for Total Variation Distance and LCB for Value Function}

The following lemma provides an explicit upper bound on the total variation distance between the estimated model and the true model.
\begin{lemma}\label{lemma: offline TV distance less than true bonus}
Under event $\Ec^o$, for any policy $\pi$, we have
\begin{align}
    \mathtt{D}_{\TV}&\left(  \Pb_{\theta^*}^{\pi}(\tau_H) , \Pb_{\hat{\theta}}^{\pi}(\tau_H)\right) \lesssim \sqrt{\frac{\hat{\beta}}{H\iota^2\gamma^2}} \sum_{h=1}^H \mathop{\Eb}_{\tau_{h-1}\sim\Pb_{\theta^*}^{\pi}} \left[ \     \left\|\bar{\psi}^* (\tau_{h-1})\right\|_{  \Lambda_{h-1}^{-1}}  \right],
\end{align}
where $\Lambda_{h-1} = \lambda_0 I + \frac{K}{H} \Eb_{\tau_{h-1}\sim\Pb_{\theta^*}^{\pi^b}} \left[ \bar{\psi}^*(\tau_{h-1}) \bar{\psi}^*(\tau_{h-1})^{\top} \right] $, and $\lambda_0 = \frac{\gamma^4}{4Q_A^2d}$. %Here, $a\lesssim b$ indicates that there is an absolute constant $c$ such that $a\leq c\cdot b$.\jing{remove?}
\end{lemma}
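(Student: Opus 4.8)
The plan is to mirror the structure of the online bound in \Cref{lemma:summation of TV is sublinear}, replacing the sequence of exploration policies $\{\nu_h(\pi^t,\mathtt{u}_{\mathcal{Q}_{h-1}^{\exp}})\}_{t<k}$ by the single behavior policy $\pi^b$ and the per-iteration empirical Gram matrices by the population matrix $\Lambda_{h-1}$. First I would invoke the first inequality of \Cref{prop: TV distance less than estimation error} to reduce $\mathtt{D}_{\TV}(\Pb_{\theta^*}^{\pi}(\tau_H),\Pb_{\hat{\theta}}^{\pi}(\tau_H))$ to a sum over $h$ of the one-step estimation errors $\sum_{\tau_H}|\hat{\mbf}(\omega_h)^{\top}(\hat{\Mbf}_h(o_h,a_h)-\Mbf_h^*(o_h,a_h))\psi^*(\tau_{h-1})|\pi(\tau_H)$. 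Writing $x_j=\bar\psi^*(\tau_{h-1})$ and $w_i^{\top}=\hat{\mbf}(\omega_h)^{\top}(\hat{\Mbf}_h-\Mbf_h^*)$ as in \Cref{lemma:Estimation error of M less than feature score}, each such term equals $\Eb_{\tau_{h-1}\sim\Pb_{\theta^*}^{\pi}}[\sum_i|\pi_{i|j}w_i^{\top}x_j|]$, and I would apply the generalized Cauchy--Schwarz inequality against $\Lambda_{h-1}$ to split it into $\Eb_{\tau_{h-1}\sim\Pb_{\theta^*}^{\pi}}[\|\bar\psi^*(\tau_{h-1})\|_{\Lambda_{h-1}^{-1}}\cdot\|\sum_i\pi_{i|j_0}\mathtt{sgn}(w_i^{\top}x_{j_0})w_i\|_{\Lambda_{h-1}}]$. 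It then remains to bound the second factor by $\sqrt{\hat\beta/(H\iota^2\gamma^2)}$ uniformly in the fixed trajectory $j_0$.

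Expanding $\|\cdot\|_{\Lambda_{h-1}}^2 = I_1 + I_2$ with $I_1 = \lambda_0\|\sum_i\pi_{i|j_0}\mathtt{sgn}(\cdot)w_i\|_2^2$ and $I_2 = \frac{K}{H}\Eb_{\tau_{h-1}\sim\Pb_{\theta^*}^{\pi^b}}[(\sum_i\pi_{i|j_0}\mathtt{sgn}(\cdot)w_i^{\top}x_j)^2]$, the regularizer term $I_1$ is handled exactly as in the online case: bounding the inner maximum by \Cref{assmp:well-condition} together with \Cref{prop: well-condition PSR M} gives $\sqrt{I_1}\leq 2Q_A\sqrt{d\lambda_0}/\gamma^2$, so the choice $\lambda_0=\gamma^4/(4Q_A^2 d)$ forces $\sqrt{I_1}\leq 1$.

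The crux is the data term $I_2$. Following the online expansion---adding and subtracting $\hat{\Mbf}_h\bar{\hat\psi}$, invoking \Cref{assmp:well-condition}, \Cref{eqn:Mpsi is conditional probability}, and the physical meaning $[\bar\psi(\tau_h)]_\ell=\Pb_\theta(\mathbf{o}_h^\ell|\tau_h,\mathbf{a}_h^\ell)$---reduces $I_2$ to $\frac{K}{H}\cdot\frac{|\mathcal{Q}_{h-1}^{\exp}|^2}{\gamma^2}\Eb_{\tau_{h-1}\sim\Pb_{\theta^*}^{\pi^b}}[\mathtt{D}_{\TV}^2(\Pb_{\hat{\theta}}^{\mathtt{u}_{\mathcal{Q}_{h-1}^{\exp}}}(\omega_{h-1}|\tau_{h-1}),\Pb_{\theta^*}^{\mathtt{u}_{\mathcal{Q}_{h-1}^{\exp}}}(\omega_{h-1}|\tau_{h-1}))]$. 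Here the offline argument departs from the online one: because there is no dedicated uniform exploration policy, I would pass from the uniform policy over core action sequences to the behavior policy using the coverage parameter $\iota=\min_{\mathbf{a}_h\in\mathcal{Q}_h^{\exp}}\pi^b(\mathbf{a}_h)$, via $\Pb_{\theta}(\omega_{h-1}^o|\tau_{h-1},\mathbf{a})=\Pb_{\theta}^{\pi^b}(\omega_{h-1}|\tau_{h-1})/\pi^b(\mathbf{a})$ and $\pi^b(\mathbf{a})\geq\iota$; summing the $\ell_1$ deviation over $\mathbf{a}\in\mathcal{Q}_{h-1}^{\exp}$ yields $|\mathcal{Q}_{h-1}^{\exp}|^2\mathtt{D}_{\TV}^2(\Pb_{\hat{\theta}}^{\mathtt{u}},\Pb_{\theta^*}^{\mathtt{u}})\leq\iota^{-2}\mathtt{D}_{\TV}^2(\Pb_{\hat{\theta}}^{\pi^b}(\omega_{h-1}|\tau_{h-1}),\Pb_{\theta^*}^{\pi^b}(\omega_{h-1}|\tau_{h-1}))$. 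Finally, exactly as in step $(b)$ of \Cref{lemma:summation of TV is sublinear}, I would convert the $\pi^b$-conditional total-variation distance, averaged over $\tau_{h-1}\sim\Pb_{\theta^*}^{\pi^b}$, into the joint Hellinger-squared distance $\mathtt{D}_{\mathtt{H}}^2(\Pb_{\hat{\theta}}^{\pi^b}(\tau_H),\Pb_{\theta^*}^{\pi^b}(\tau_H))$ through \Cref{lemma:TV and hellinger}, and control the latter by $7\hat\beta/K$ from the second inequality of \Cref{lemma:offline MLE guarantee}. The factor $K/H$ built into $\Lambda_{h-1}$ then cancels the $1/K$, leaving $I_2\lesssim\hat\beta/(H\iota^2\gamma^2)$, so that $\sqrt{I_1+I_2}\lesssim\sqrt{\hat\beta/(H\iota^2\gamma^2)}$; summing over $h=1,\dots,H$ gives the claim.

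I expect the main obstacle to be the offline-specific step just described. The online proof produces a per-iteration joint Hellinger term whose sum is controlled directly by the MLE guarantee, whereas here a single population Hellinger bound $7\hat\beta/K$ must be exploited, which is only possible after (a) inserting the coverage factor $\iota$ to move from $\mathtt{u}_{\mathcal{Q}_{h-1}^{\exp}}$ to $\pi^b$ and (b) relying on the $K/H$ scaling in $\Lambda_{h-1}$ to absorb the $1/K$. Care is also needed so that the conditional-to-joint Hellinger conversion respects that the marginal over $\tau_{h-1}$ is taken under the true model $\theta^*$, while the even split of $\Dc$ into $\Dc_0,\dots,\Dc_{H-1}$ is what furnishes the independence underlying \Cref{lemma:offline MLE guarantee}.
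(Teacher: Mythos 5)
Your proposal is correct and follows essentially the same route as the paper's proof: the same reduction via \Cref{prop: TV distance less than estimation error}, the same Cauchy--Schwarz against $\Lambda_{h-1}$ with the $I_1/I_2$ split and the choice $\lambda_0=\gamma^4/(4Q_A^2d)$, the same insertion of the coverage factor $\iota$ to pass to $\pi^b$, and the same conditional-to-joint Hellinger conversion feeding into \Cref{lemma:offline MLE guarantee}. The only cosmetic difference is that you route through the uniform policy $\mathtt{u}_{\mathcal{Q}_{h-1}^{\exp}}$ as an explicit intermediate, whereas the paper bounds the sum over core action sequences directly before applying $\pi^b(\mathbf{a}_{h-1})\geq\iota$.
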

\begin{proof}
Similarly to the analysis in that for \Cref{lemma:offline Estimation error of M less than feature score}, we index $\omega_{h-1} = (o_{h},a_h,\ldots,o_H,a_H)$ by $i$, and $\tau_{h-1}$ by $j$. In addition, we denote $\hat{\mathbf{m}}(\omega_{h})^{\top}\left(\hat{\Mbf}_h (o_h,a_h) - \Mbf_h^*(o_h,a_h) \right)  $ by $w_{i}^{\top}$, denote $ \bar{ \psi}^* (\tau_{h-1})$ by $x_{j}$, and denote $\pi(\omega_{h-1}|\tau_{h-1})$ by $\pi_{i|j}$. Then, we have
    \begin{align*}
        \sum_{\tau_H}& \left| \hat{\mathbf{m}} (\omega_h)^{\top} \left(\hat{\Mbf}_h (o_h,a_h) - \Mbf_h^*(o_h,a_h) \right) \psi^* (\tau_{h-1}) \right| \pi(\tau_H) \\
        & =  \sum_{i }\sum_{j }|w_{i}^{\top}x_{j}|\pi_{i|j}\Pb_{ \theta^* }^{\pi}(j)\\
        & =  \sum_{j }  \sum_{i } (\pi_{i|j}\cdot\mathtt{sgn}(w_{i}^{\top}x_{j})\cdot w_{i})^{\top} x_{j}\cdot\Pb_{\theta^*}^{\pi}(j)\\
        & =  \sum_{j} \left( \sum_{i} \pi_{i|j} \cdot \mathtt{sgn}(w_{i}^{\top}x_{j} )\cdot w_{i} \right)^{\top} x_{j}\cdot\Pb_{ \theta^* }^{\pi}(j)\\
        &\leq \mathop{\Eb}_{j\sim\Pb_{\theta^*}^{\pi}}\left[ \big\| x_j \big\|_{\Lambda_{h-1}^{-1}}  \sqrt{ \left\|\sum_{i} \pi_{i|j} \cdot \mathtt{sgn}(w_{i}^{\top}x_{j} )\cdot w_{i} \right\|_{ \Lambda_{h-1} }^2 } \right],
    \end{align*}
where $\Lambda_{h-1} = \lambda_0 I + \frac{K}{H} \Eb_{\tau_{h-1}\sim\Pb_{\theta^*}^{\pi^b}} \left[ \bar{\psi}^*(\tau_{h-1}) \bar{\psi}^*(\tau_{h-1})^{\top} \right]$ and $\lambda_0$ will be determined later. We fix $\tau_{h-1} = j_0$ and aim to analyze the coefficient of $\|x_{j_0}\|_{\Lambda_{h-1}^{-1}}$. We have
\begin{align*}
    &\left\| \sum_{i} \pi_{i|j_0} \cdot \mathtt{sgn}(w_{i}^{\top}x_{j_0} )\cdot w_{i} \right\|_{\Lambda_{h-1} }^2\\
    &\quad =  \underbrace{\lambda_0 \left\| \sum_{i} \pi_{i|j_0} \cdot \mathtt{sgn}(w_{i}^{\top}x_{j_0} )\cdot w_{i} \right\|_2^2}_{I_1} + \underbrace{\frac{K}{H}  \mathop{\Eb}_{j\sim\Pb_{\theta^*}^{\pi^b}} \left[ \left(\sum_{i} \pi_{i|j_0} \cdot \mathtt{sgn}(w_{i}^{\top}x_{j_0} )\cdot w_{i}\right)^{\top} x_{j} \right]^2  }_{I_2}.
\end{align*}
%\yl{you need to fix the format issue, and shift the equations in the next page to fill out this page}
For the first term $I_1$, we have
\begin{align*}
    \sqrt{I_1}  & = \sqrt{\lambda_0} \max_{x\in\mathbb{R}^{d_{h-1}}: \|x\|_2=1} \left|\sum_{i} \pi_{i|j_0}\mathtt{sgn}(w_i^{\top}x_{j_0})w_i^{\top}  x\right|\\
    &\quad \leq \sqrt{\lambda_0}\max_{x\in\mathbb{R}^{d_{h-1}}:\|x\|_2=1} \sum_{\omega_{h-1}} \left| \mathbf{m}^*(\omega_h)^{\top} \left( \hat{\Mbf}_h (o_h,a_h) - \Mbf_h^*(o_h,a_h) \right)   x \right| \pi(\omega_{h-1}|j_0)\\
    &\quad \leq \sqrt{\lambda_0}\max_{x\in\mathbb{R}^{d_{h-1}}:\|x\|_2=1} \sum_{\omega_{h-1}}\left| \hat{\mathbf{m}} (\omega_h)^{\top} \hat{\Mbf}_h (o_h,a_h)   x \right| \pi(\omega_{h-1}|j_0) \\
    &\quad\quad + \sqrt{\lambda_0}\max_{x\in\mathbb{R}^{d_{h-1}}:\|x\|_2=1} \sum_{\omega_{h-1}}\left| \hat{\mathbf{m}}(\omega_h)^{\top}  \Mbf_h^*(o_h,a_h)   x \right| \pi(\omega_{h-1}|j_0) \\
    & \overset{(a)}\leq  \frac{\sqrt{d\lambda_0}}{\gamma}  + \frac{\sqrt{\lambda_0}}{\gamma}\max_{x\in\mathbb{R}^{d_{h-1}}:\|x\|_2=1} \sum_{o_h,a_h}\left\| \hat{\Mbf}_h (o_h,a_h)   x \right\|_1 \pi(a_h|o_h,j_0)  \\
    &\overset{(b)}\leq \frac{2Q_A\sqrt{d\lambda_0}}{\gamma^2},
\end{align*}
where $(a)$ follows from \Cref{assmp:well-condition}, and $(b)$ follows from \Cref{prop: well-condition PSR M}.

For the second term $I_2$,  we have
\begin{align*}
    I_2  &\leq \frac{K}{H}  \mathop{\Eb}_{\tau_{h-1}\sim\Pb_{\theta^*}^{\pi^b}} \left[\left( \sum_{\omega_{h-1}} \left| \hat{\mathbf{m}} (\omega_{h})^{\top}\left(\hat{\Mbf}_h (o_h,a_h) -  \Mbf_h^*(o_h,a_h) \right)\bar{\psi}^* (\tau_{h-1}) \right| \pi(\omega_{h-1}|j_0) \right)^2  \right] \\
    & \leq  \frac{K}{H} \mathop{\Eb}_{\tau_{h-1}\sim\Pb_{\theta^*}^{\pi^b}}  \left[ \left( \sum_{\omega_{h-1}} \left| \hat{\mathbf{m}}(\omega_h)^{\top} \left( \hat{\Mbf}_h (o_h,a_h) \bar{ \hat{\psi}} (\tau_{h-1}) - \Mbf_h^*(o_h,a_h)  \bar{\psi}^*(\tau_{h-1}) \right) \right| \pi(\omega_{h-1}|j_0) \right. \right.  \\
    &\hspace{3cm} +   \left. \left. \sum_{\omega_{h-1}}\left| \hat{\mathbf{m}} (\omega_h)^{\top}  \hat{\Mbf}_h(o_h,a_h) \left(\bar{\hat{\psi}}  (\tau_{h-1}) -  \bar{\psi}^*(\tau_{h-1}) \right) \right| \pi(\omega_{h-1}|j_0) \right)^2 \right]   \\
    & \overset{(a)}\leq \frac{K}{H} \mathop{\Eb}_{\tau_{h-1}\sim\Pb_{\theta^*}^{\pi^b} } \left[  \left( \frac{1}{\gamma} \sum_{o_h,a_h} \left\| \Pb_{ \hat{\theta} } (o_h|\tau_{h-1})\bar{\hat{\psi}}_{h} (\tau_{h}) - \Pb_{\theta}(o_h|\tau_{h-1}) \bar{\psi}_{h}^*(\tau_{h}) \right\|_1   \pi(a_h|o_h,j_0) \right. \right.\\
    & \hspace{3cm} +  \left. \left. \frac{1}{\gamma}       \left\| \bar{\hat{\psi}} (\tau_{h-1}) -  \bar{\psi}^*(\tau_{h-1})  \right\|_1  \right)^2  \right] \\
    & =  \frac{K}{H\gamma^2} \mathop{\Eb}_{\tau_{h-1}\sim\Pb_{\theta^*}^{\pi^b}}  \left[\left(  \sum_{o_h,a_h}  \sum_{\ell=1}^{|\mathcal{Q}_h|} \left| \Pb_{ \hat{\theta} }(\mathbf{o}_h^{\ell},o_h|\tau_{h-1},a_h,\mathbf{a}_h^{\ell}) - \Pb_{ \theta^* }(\mathbf{o}_h^{\ell},o_h|\tau_{h-1},a_h,\mathbf{a}_h^{\ell}) \right|\pi(a_h|o_h,j_0) \right.    \right.\\
    & \hspace{3cm} +  \left. \left.   \sum_{\ell=1}^{|\mathcal{Q}_{h-1}|}   \left| \Pb_{ \hat{\theta} }(\mathbf{o}_{h-1}^{\ell} | \tau_{h-1}, \mathbf{a}_{h-1}^{\ell} ) - \Pb_{ \theta^* } (\mathbf{o}_{h-1}^{\ell} | \tau_{h-1}, \mathbf{a}_{h-1}^{\ell} )  \right|    \right)^2 \right] \\
    &\leq \frac{K}{H\gamma^2} \mathop{\Eb}_{\tau_{h-1}\sim\Pb_{\theta^*}^{\pi^b}}  \left[  \left( \ \sum_{\mathbf{a}_{h-1}\in \mathcal{Q}_h^{\exp} }   \sum_{\omega_{h-1}^o }\left|\Pb_{ \hat{\theta} }(\omega^o_{h-1}|\tau_{h-1}, \mathbf{a}_{h-1} ) - \Pb_{\theta^*}(\omega^o_{h-1}|\tau_{h-1}, \mathbf{a}_{h-1} ) \right|    \right)^2  \right] \\
    &\overset{(b)}\leq \frac{ K }{ H \iota^2 \gamma^2} \mathop{\Eb}_{\tau_{h-1}\sim\Pb_{\theta^*}^{\pi^b}} \left[\mathtt{D}_{\TV}^2\left(    \Pb_{\hat{\theta} }^{ \pi^b }(\omega_{h-1} | \tau_{h-1}  ) , \Pb_{ \theta^* }^{ \pi^b } (\omega_{h-1} | \tau_{h-1}  )  \right) \right]\\
    &\overset{(c)}\leq \frac{64K}{H\iota^2\gamma^2}\mathtt{D}_{\mathtt{H}}^2 \left(\Pb_{\hat{\theta}}^{\pi^b}(\tau_H), \Pb_{\theta^*}^{\pi^b}(\tau_H) \right)\\
    &\overset{(d)}\lesssim \frac{ \hat{\beta}}{H\iota^2\gamma^2},
\end{align*} 
where $(a)$ follows from \Cref{assmp:well-condition} and \Cref{eqn:Mpsi is conditional probability}, $(b)$ follows because $\pi^b(\mathbf{a}_{h-1})\geq\iota$ for all $\mathbf{a}_{h-1}\in\mathcal{Q}_{h-1}^{\exp}$, $(c)$ follows from \Cref{lemma:TV and hellinger}, and $(d)$ follows from \Cref{lemma:offline MLE guarantee}.

Thus, by choosing $\lambda_0 = \frac{\gamma^4}{4Q_A^2d}$, we have
\begin{align*}
        \sum_{\tau_H}& \left| \hat{\mathbf{m}}(\omega_h)^{\top} \left(\hat{\Mbf}_h (o_h,a_h) - \Mbf_h^*(o_h,a_h) \right) \psi^* (\tau_{h-1}) \right| \pi(\tau_H) \\
        &\lesssim \sqrt{\frac{\hat{\beta}}{H\iota^2\gamma^2}} \mathop{\Eb}_{\tau_{h-1}\sim \Pb_{ \theta^* }^{\pi}}\left[  \left\|\bar{\psi}^* (\tau_{h-1})\right\|_{  \Lambda_{h-1}^{-1}} \right].
\end{align*}
Following from \Cref{prop: TV distance less than estimation error}, we conclude that 
\begin{align*}
    \mathtt{D}_{\TV}&\left(  \Pb_{\theta^*}^{\pi}(\tau_H) , \Pb_{\hat{\theta}}^{\pi}(\tau_H)\right) \lesssim \min \left\{\sqrt{\frac{\hat{\beta}}{H\iota^2\gamma^2}} \sum_{h=1}^H \mathop{\Eb}_{\tau_{h-1}\sim\Pb_{\theta^*}^{\pi}} \left[ \     \left\|\bar{\psi}^* (\tau_{h-1})\right\|_{  \Lambda_{h-1}^{-1}}  \right] ,2 \right\}.
\end{align*}
\end{proof}

Following from \Cref{prop: TV distance less than estimation error}, we provide the following lemma that upper bounds the estimation error by the bonus function.
\begin{lemma}\label{lemma:offline Estimation error of M less than feature score}
Under event $\Ec^o$, for any policy $\pi$, we have
\begin{align*}
        \sum_{\tau_H}& \left| \mathbf{m}^*(\omega_{h})^{\top} \left(\hat{\Mbf}_h (o_h,a_h) - \Mbf_h^*(o_h,a_h) \right) \hat{\psi} (\tau_{h-1}) \right| \pi(\tau_H) \leq \Eb_{\tau_{h-1}\sim \Pb_{ \hat{\theta} }^{\pi}}\left[ \hat{\alpha}_{h-1} \left\|\bar{\hat{\psi}} (\tau_{h-1})\right\|_{\left(\hat{U}_{h-1}\right)^{-1}} \right],
\end{align*}
where 
\begin{align*}
    &\hat{U}_{h-1}  = \hat{\lambda} I + \sum_{\tau_{h-1}\in\Dc_{h-1} }   \bar{\hat{\psi}}  (\tau_{h-1} )\bar{\hat{\psi}} (\tau_{h-1} )^{\top}, \\
    & \hat{\alpha}_{h-1} = \frac{ \hat{\lambda}  Q_A^2d}{\gamma^4} + \frac{4 }{ \iota^2 \gamma^2} \sum_{ \tau_{h-1}\in\Dc_{h-1} } \mathtt{D}_{\TV}^2\left(    \Pb_{ \hat{\theta} }^{ \pi^b }(\omega_{h-1} | \tau_{h-1}  ) , \Pb_{ \theta^* }^{ \pi^b } (\omega_{h-1} | \tau_{h-1}  )  \right).
\end{align*}
\end{lemma}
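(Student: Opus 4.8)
The plan is to follow the proof of the online counterpart \Cref{lemma:Estimation error of M less than feature score} almost line for line, the single substantive change being that the uniform exploration policy $\mathtt{u}_{\mathcal{Q}_{h-1}^{\exp}}$ is everywhere replaced by the behavior policy $\pi^b$. First I would index the future trajectory $\omega_{h-1}=(o_h,a_h,\ldots,o_H,a_H)$ by $i$ and the history $\tau_{h-1}$ by $j$, abbreviating $w_i^{\top}=\mathbf{m}^*(\omega_h)^{\top}(\hat{\Mbf}_h(o_h,a_h)-\Mbf_h^*(o_h,a_h))$, $x_j=\bar{\hat{\psi}}(\tau_{h-1})$, and $\pi_{i|j}=\pi(\omega_{h-1}\mid\tau_{h-1})$. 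Expanding the left-hand side through $\Pb_{\theta}^{\pi}(\tau_h)=\phi_h^{\top}\psi(\tau_h)$ and then applying the Cauchy--Schwarz inequality in the $\hat{U}_{h-1}$ geometry bounds it by $\Eb_{j\sim\Pb_{\hat{\theta}}^{\pi}}[\,\|x_j\|_{\hat{U}_{h-1}^{-1}}\sqrt{I_1}\,]$, where for a fixed $\tau_{h-1}=j_0$ we set $I_1=\|\sum_i\pi_{i|j_0}\,\mathtt{sgn}(w_i^{\top}x_{j_0})\,w_i\|^2_{\hat{U}_{h-1}}$.

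Next I would decompose $I_1=I_2+I_3$, where $I_2$ is the contribution of the regularizer $\hat{\lambda} I$ and $I_3$ is the contribution of the empirical outer products $\bar{\hat{\psi}}(\tau_{h-1})\bar{\hat{\psi}}(\tau_{h-1})^{\top}$. The term $I_2$ is handled exactly as online: splitting $\hat{\Mbf}_h-\Mbf_h^*$ by the triangle inequality and invoking \Cref{assmp:well-condition} together with \Cref{prop: well-condition PSR M} gives $\sqrt{I_2}\lesssim Q_A\sqrt{d\hat{\lambda}}/\gamma^2$, i.e.\ the first term $\hat{\lambda} Q_A^2 d/\gamma^4$ of $\hat{\alpha}_{h-1}$ up to an absolute constant.

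The heart of the argument is the bound on $I_3$. Here I would split the inner expression by the triangle inequality into a ``transition'' part and a ``feature'' part, apply the identity \Cref{eqn:Mpsi is conditional probability} and \Cref{assmp:well-condition} to turn both parts into weighted $\ell_1$ differences of predicted core-test probabilities, and then use the physical meaning $[\bar{\psi}(\tau_h)]_{\ell}=\Pb_{\theta}(\mathbf{o}_h^{\ell}\mid\tau_h,\mathbf{a}_h^{\ell})$ to rewrite the whole thing as $\sum_{\mathbf{a}_{h-1}\in\mathcal{Q}_{h-1}^{\exp}}\sum_{\omega_{h-1}^o}|\Pb_{\hat{\theta}}(\omega_{h-1}^o\mid\tau_{h-1},\mathbf{a}_{h-1})-\Pb_{\theta^*}(\omega_{h-1}^o\mid\tau_{h-1},\mathbf{a}_{h-1})|$. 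This is precisely the point where online and offline diverge: online converts this to a total-variation distance under the \emph{uniform} policy, paying a factor $|\mathcal{Q}_{h-1}^{\exp}|\le 2|\mathcal{A}|Q_A$; offline I would instead invoke $\pi^b(\mathbf{a}_{h-1})\ge\iota$ for every $\mathbf{a}_{h-1}\in\mathcal{Q}_{h-1}^{\exp}$ to write $\sum_{\omega_{h-1}^o}|\cdots|\le\iota^{-1}\pi^b(\mathbf{a}_{h-1})\sum_{\omega_{h-1}^o}|\cdots|$, so that the whole sum is dominated by $\iota^{-1}\mathtt{D}_{\TV}(\Pb_{\hat{\theta}}^{\pi^b}(\omega_{h-1}\mid\tau_{h-1}),\Pb_{\theta^*}^{\pi^b}(\omega_{h-1}\mid\tau_{h-1}))$. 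Squaring and summing over $\tau_{h-1}\in\Dc_{h-1}$ then produces the factor $\iota^{-2}\gamma^{-2}$ in the second term of $\hat{\alpha}_{h-1}$ in place of the online $|\mathcal{A}|^2Q_A^2\gamma^{-2}$. Combining the $I_2$ and $I_3$ bounds yields $I_1\le\hat{\alpha}_{h-1}$.

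I expect the only genuinely delicate step to be this last conversion in $I_3$: one must check that $\iota=\min_{\mathbf{a}_h\in\mathcal{Q}_h^{\exp}}\pi^b(\mathbf{a}_h)$ really lower-bounds the behavior-policy mass of \emph{every} action sequence appearing in $\mathcal{Q}_{h-1}^{\exp}=(\mathcal{A}\times\mathcal{Q}_h^A)\cup\mathcal{Q}_{h-1}^A$, and that passing from the restricted sum over $\mathcal{Q}_{h-1}^{\exp}$ to a full total-variation distance under $\pi^b$ correctly accounts for the conditioning on intermediate observations. All remaining manipulations are a direct transcription of the online computation with $(\lambda,\mathtt{u}_{\mathcal{Q}_{h-1}^{\exp}})$ replaced by $(\hat{\lambda},\pi^b)$.
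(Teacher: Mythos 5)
Your proposal follows the paper's proof essentially line for line: the same indexing and Cauchy--Schwarz step in the $\hat{U}_{h-1}$ geometry, the same split into a regularizer term bounded by $Q_A\sqrt{d\hat{\lambda}}/\gamma^2$ via \Cref{assmp:well-condition} and \Cref{prop: well-condition PSR M}, and the same key substitution of the online factor $|\mathcal{Q}_{h-1}^{\exp}|\le 2|\mathcal{A}|Q_A$ by the coverage bound $\pi^b(\mathbf{a}_{h-1})\ge\iota$ to arrive at $\iota^{-2}\gamma^{-2}$ times the squared conditional TV distance under $\pi^b$. This is exactly the argument the paper gives, so the proposal is correct and takes the same approach.
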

\begin{proof}
We index $\omega_{h-1} = (o_{h},a_h,\ldots,o_H,a_H)$ by $i$, and $\tau_{h-1}$ by $j$. In addition, we denote \\
$\mathbf{m}^*(\omega_{h})^{\top}\left(\hat{\Mbf}_h (o_h,a_h) - \Mbf_h^*(o_h,a_h) \right)  $ by $w_{i}^{\top}$, $ \bar{\hat{\psi}}_{h-1} (\tau_{h-1}) = \frac{ \hat{\psi}_{h-1}(\tau_{h-1})}{\hat{\phi}_{h-1}^{\top}\hat{\psi}_{h-1} (\tau_{h-1})}$ by $x_{j}$, and $\pi(\omega_{h-1}|\tau_{h-1})$ by $\pi_{i|j}$. Then, we have
    \begin{align*}
        \sum_{\tau_H} &\left| \mathbf{m}^* (\omega_h)^{\top} \left(\hat{\Mbf}_h (o_h,a_h) - \Mbf_h^*(o_h,a_h) \right) \hat{\psi}_{h-1} (\tau_{h-1}) \right| \pi(\tau_H) \\
        & =  \sum_{i }\sum_{j }|w_{i}^{\top}x_{j}|\pi_{i|j}\Pb_{ \hat{\theta} }^{\pi}(j)\\
        & =  \sum_{j }  \sum_{i } (\pi_{i|j}\cdot\mathtt{sgn}(w_{i}^{\top}x_{j})\cdot w_{i})^{\top} x_{j}\cdot\Pb_{\theta}^{\pi}(j)\\
        & =  \sum_{j} \left( \sum_{i} \pi_{i|j} \cdot \mathtt{sgn}(w_{i}^{\top}x_{j} )\cdot w_{i} \right)^{\top} x_{j}\cdot\Pb_{ \hat{\theta} }^{\pi}(j)\\
        &\leq \mathop{\Eb}_{j\sim\Pb_{ \hat{\theta} }^{\pi}}\left[ \big\| x_j \big\|_{\left(\hat{U}_{h-1}^{k}\right)^{-1}}  \sqrt{ \left\|\sum_{i} \pi_{i|j} \cdot \mathtt{sgn}(w_{i}^{\top}x_{j} )\cdot w_{i} \right\|_{ \hat{U}_{h-1} }^2 } \right].
    \end{align*}

We fix an index $j = j_0$, and aim to analyze $\left\| \sum_{i} \pi_{i|j_0} \cdot \mathtt{sgn}(w_{i}^{\top}x_{j_0} )\cdot w_{i} \right\|_{\hat{U}_{h-1}^{k}}^2$, which can be written as
\begin{align*}
    &\left\| \sum_{i} \pi_{i|j_0} \cdot \mathtt{sgn}(w_{i}^{\top}x_{j_0} )\cdot w_{i} \right\|_{\hat{U}_{h-1}^{k}}^2\\
    &\quad =  \underbrace{\hat{\lambda} \left\| \sum_{i} \pi_{i|j_0} \cdot \mathtt{sgn}(w_{i}^{\top}x_{j_0} )\cdot w_{i} \right\|_2^2}_{I_1} + \underbrace{\sum_{ j\in\Dc_{h-1} }   \left[ \left(\sum_{i} \pi_{i|j_0} \cdot \mathtt{sgn}(w_{i}^{\top}x_{j_0} )\cdot w_{i}\right)^{\top} x_{j} \right]^2}_{I_2}  .
\end{align*}

For the first term $I_1$, we have
\begin{align*}
    \sqrt{I_1} & = \sqrt{\hat{\lambda } } \max_{x\in\mathbb{R}^{d_{h-1}}: \|x\|_2=1}   \left|\sum_{i} \pi_{i|j_0}\mathtt{sgn}(w_i^{\top}x_{j_0})w_i^{\top}  x\right|\\
    & \leq \sqrt{\hat{\lambda } } \max_{x\in\mathbb{R}^{d_{h-1}}:\|x\|_2=1} \sum_{\omega_{h-1}} \left| \mathbf{m}^*(\omega_h)^{\top} \left( \hat{\Mbf}_h (o_h,a_h) - \Mbf_h^*(o_h,a_h) \right)   x \right| \pi(\omega_{h-1}|j_0)\\
    & \leq \sqrt{\hat{\lambda } } \max_{x:\|x\|_2=1} \sum_{\omega_{h-1}}\left|\mathbf{m}^{*}(\omega_h)^{\top} \hat{\Mbf}_h (o_h,a_h)   x \right| \pi(\omega_{h-1}|j_0) \\
    &\quad + \sqrt{\hat{\lambda } } \max_{x:\|x\|_2=1} \sum_{\omega_{h-1}}\left|\mathbf{m}^*(\omega_h)^{\top}  \Mbf_h^*(o_h,a_h)   x \right| \pi(\omega_{h-1}|j_0) \\
    &\leq \frac{\sqrt{\hat{\lambda } }}{\gamma}\max_{x:\|x\|_2=1} \sum_{o_h,a_h}\left\| \hat{\Mbf}_h (o_h,a_h)   x \right\|_1 \pi(a_h|o_h,j_0)  + \frac{\sqrt{\hat{\lambda } }}{\gamma} \| x\|_1 \\
    &\leq \frac{ 2Q_A\sqrt{d\hat{\lambda}} }{\gamma^2}.
\end{align*}

For the second term $I_2$, we have
\begin{align*}
  I_2 & \leq \sum_{ \tau_{h-1}\in \Dc_{h-1} }   \left( \sum_{\omega_{h-1}} \left| \mathbf{m}^* (\omega_{h})^{\top}\left(\hat{\Mbf}_h (o_h,a_h) -  \Mbf_h^*(o_h,a_h) \right)\bar{\hat{\psi}} (\tau_{h-1}) \right| \pi(\omega_{h-1}|j_0) \right)^2   \\
    & \leq   \sum_{ \tau_{h-1}\in \Dc_{h-1} }   \left( \sum_{\omega_{h-1}} \left|\mathbf{m}^*(\omega_h)^{\top} \left( \hat{\Mbf}_h (o_h,a_h) \bar{\hat{\psi}} (\tau_{h-1}) - \Mbf_h^*(o_h,a_h)  \bar{\psi}^*(\tau_{h-1}) \right) \right| \pi(\omega_{h-1}|j_0) \right.   \\
    &\hspace{3cm} +    \left. \sum_{\omega_{h-1}}\left| \mathbf{m}^*(\omega_h)^{\top}  \Mbf_h^*(o_h,a_h) \left( \bar{\hat{\psi}} (\tau_{h-1}) -  \bar{\psi}^*(\tau_{h-1}) \right) \right| \pi(\omega_{h-1}|j_0) \right)^2     \\
    & \overset{(a)}\leq  \sum_{\tau_{h-1}\in \Dc_{h-1} }   \left( \frac{1}{\gamma} \sum_{o_h,a_h} \left\| \Pb_{ \hat{\theta} } (o_h|\tau_{h-1})\bar{\hat{\psi}} (\tau_{h}) - \Pb_{\theta}(o_h|\tau_{h-1}) \bar{\psi}^*(\tau_{h}) \right\|_1   \pi(a_h|o_h,j_0) \right. \\
    & \hspace{3cm} + \left. \frac{1}{\gamma}     \left\| \bar{\hat{\psi}}  (\tau_{h-1}) -  \bar{\psi}^*(\tau_{h-1})  \right\|_1  \right)^2   \\
    & =  \frac{1}{\gamma^2} \sum_{\tau_{h-1}\in \Dc_{h-1} }   \left(  \sum_{o_h,a_h}  \sum_{\ell=1}^{|\mathcal{Q}_h|} \left| \Pb_{ \hat{\theta} }(\mathbf{o}_h^{\ell},o_h|\tau_{h-1},a_h,\mathbf{a}_h^{\ell}) - \Pb_{ \theta^* }(\mathbf{o}_h^{\ell},o_h|\tau_{h-1},a_h,\mathbf{a}_h^{\ell}) \right|\pi(a_h|o_h,j_0) \right.    \\
    & \hspace{3cm} +  \left.   \sum_{\ell=1}^{|\mathcal{Q}_{h-1}|}   \left| \Pb_{ \hat{\theta} }(\mathbf{o}_{h-1}^{\ell} | \tau_{h-1}, \mathbf{a}_{h-1}^{\ell} ) - \Pb_{ \theta^* } (\mathbf{o}_{h-1}^{\ell} | \tau_{h-1}, \mathbf{a}_{h-1}^{\ell} )  \right|    \right)^2  \\
    &\leq \frac{1}{\gamma^2} \sum_{\tau_{h-1}\in \Dc_{h-1} }   \left( \ \sum_{\mathbf{a}_{h-1}\in \mathcal{Q}_{h-1}^{\exp} }   \sum_{\omega_{h-1}^o }\left|\Pb_{ \hat{\theta} }(\omega^o_{h-1}|\tau_{h-1}, \mathbf{a}_{h-1} ) - \Pb_{\theta^*}(\omega^o_{h-1}|\tau_{h-1}, \mathbf{a}_{h-1} ) \right|    \right)^2  \\
    &\overset{(b)}\leq \frac{ 1 }{ \iota^2 \gamma^2} \sum_{\tau_{h-1}\in\Dc_{h-1} } \mathtt{D}_{\TV}^2\left(    \Pb_{\hat{\theta} }^{ \pi^b }(\omega_{h-1} | \tau_{h-1}  ) , \Pb_{ \theta^* }^{ \pi^b } (\omega_{h-1} | \tau_{h-1}  )  \right),
\end{align*} 
where $(a)$ follows from \Cref{assmp:well-condition}, and $(b)$ follows from the condition $\pi^b(\mathbf{a}_h)\geq \iota$ for any $\mathbf{a}_h\in\mathcal{Q}_{h-1}^{\exp}$.

Thus, we conclude that
\begin{align*}
        \sum_{\tau_H}& \left| \mathbf{m}^*(\omega_h)^{\top} \left(\hat{\Mbf}_h (o_h,a_h) - \Mbf_h^*(o_h,a_h) \right) \hat{\psi} (\tau_{h-1}) \right| \pi(\tau_H) \\
        &\leq \Eb_{\tau_{h-1}\sim \Pb_{ \hat{\theta} }^{\pi}}\left[ \hat{\alpha}_{h-1} \left\|\bar{\hat{\psi}}  (\tau_{h-1})\right\|_{ \left( \hat{U}_{h-1} \right)^{-1}} \right],
    \end{align*}
where 
\begin{align*}
    ( \hat{\alpha}_{h-1} )^2 = \frac{  4\hat{\lambda}Q_A^2d}{\gamma^4} + \frac{1 }{ \iota^2 \gamma^2} \sum_{\tau_{h-1} \in\Dc_{h-1} } \mathtt{D}_{\TV}^2\left(    \Pb_{ \hat{\theta} }^{\pi^b}(\omega_{h-1} | \tau_{h-1}  ) , \Pb_{ \theta^* }^{ \pi^b } ( \omega_{h-1} | \tau_{h-1}  )  \right).
\end{align*}
\end{proof}
\begin{corollary}\label{coro:LCB}
Under event $\Ec^o$, for any reward $R$, we have,
\begin{align*}
    \left| V_{\hat{\theta}  , R }^{\pi} - V_{\theta^* , R}^{\pi} \right| \leq V_{ \hat{\theta}, \hat{b} }^{\pi},
\end{align*}
where $\hat{b}^k(\tau_H) = \min\left\{  \hat{\alpha} \sqrt{ \sum_{h }  \left\|\bar{\hat{\psi}}_h (\tau_h) \right\|^2_{ (\hat{U}_h )^{-1}} } , 1 \right\}$, and $\hat{\alpha} = \sqrt{\frac{ 4\lambda H Q_A^2d }{\gamma^4} + \frac{ 7\hat{\beta} }{ \iota^2 \gamma^2}}.$
\end{corollary}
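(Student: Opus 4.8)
The plan is to show that $V_{\hat\theta,\hat b}^{\pi}$ is a genuine upper confidence bound on the total variation distance between the estimated and true models, so that $V_{\hat\theta,R}^{\pi} - V_{\hat\theta,\hat b}^{\pi}$ underestimates $V_{\theta^*,R}^{\pi}$ for every reward. This corollary is the offline analogue of \Cref{corollary: valid UCB}, and it follows by chaining the two preceding lemmas. The skeleton is: (i) invoke the second inequality of \Cref{prop: TV distance less than estimation error} to dominate the model mismatch by a sum of per-step estimation errors of $\Mbf_h^*$ evaluated against the estimated prediction feature $\hat\psi(\tau_{h-1})$; (ii) bound each per-step term by \Cref{lemma:offline Estimation error of M less than feature score}; (iii) aggregate across $h$ by Cauchy--Schwarz; (iv) control the resulting coefficient $\sum_h \hat\alpha_{h-1}^2$ using the offline MLE guarantee; and (v) translate the total-variation bound into a value-function gap via $R\in[0,1]$.

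Concretely, first I would apply \Cref{prop: TV distance less than estimation error} in its second form (which pairs $\mbf^*(\omega_h)$ with the estimated feature) to get $\mathtt{D}_{\TV}(\Pb_{\hat\theta}^{\pi}, \Pb_{\theta^*}^{\pi}) \leq \sum_{h=1}^H \sum_{\tau_H}|\mbf^*(\omega_h)^\top(\hat\Mbf_h(o_h,a_h) - \Mbf_h^*(o_h,a_h))\hat\psi(\tau_{h-1})|\pi(\tau_H)$. Each summand is exactly the quantity bounded in \Cref{lemma:offline Estimation error of M less than feature score} by $\Eb_{\tau_{h-1}\sim\Pb_{\hat\theta}^{\pi}}[\hat\alpha_{h-1}\|\bar{\hat\psi}(\tau_{h-1})\|_{(\hat U_{h-1})^{-1}}]$. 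Since the marginal of $\tau_{h-1}$ under $\Pb_{\hat\theta}^{\pi}$ is the prefix marginal of the full-trajectory law, I would rewrite the sum over $h$ as a single expectation over $\tau_H\sim\Pb_{\hat\theta}^{\pi}$ and then apply Cauchy--Schwarz inside this expectation to separate $\sqrt{\sum_h \hat\alpha_{h-1}^2}$ from $\sqrt{\sum_h \|\bar{\hat\psi}(\tau_h)\|^2_{(\hat U_h)^{-1}}}$, the latter being precisely the quantity appearing in the bonus $\hat b$.

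The only genuine computation is then bounding $\sum_{h=1}^H \hat\alpha_{h-1}^2$. Summing the two terms of $\hat\alpha_{h-1}^2$ separately, the regularizer part contributes $\tfrac{4\hat\lambda H Q_A^2 d}{\gamma^4}$, while the conditional-TV part is $\tfrac{1}{\iota^2\gamma^2}\sum_h\sum_{\tau_h\in\Dc_h}\mathtt{D}_{\TV}^2(\Pb_{\hat\theta}^{\pi^b}(\omega_h|\tau_h), \Pb_{\theta^*}^{\pi^b}(\omega_h|\tau_h))$, which is at most $\tfrac{7\hat\beta}{\iota^2\gamma^2}$ by the first inequality of \Cref{lemma:offline MLE guarantee}. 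Adding these matches the stated $\hat\alpha^2 = \tfrac{4\hat\lambda H Q_A^2 d}{\gamma^4} + \tfrac{7\hat\beta}{\iota^2\gamma^2}$, yielding $\mathtt{D}_{\TV}(\Pb_{\hat\theta}^{\pi}, \Pb_{\theta^*}^{\pi}) \leq \hat\alpha\,\Eb_{\tau_H\sim\Pb_{\hat\theta}^{\pi}}\big[\sqrt{\sum_{h=0}^{H-1}\|\bar{\hat\psi}(\tau_h)\|^2_{(\hat U_h)^{-1}}}\big]$.

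Finally, because $R\in[0,1]$ and both laws are probability measures, centering $R$ gives $|V_{\hat\theta,R}^{\pi} - V_{\theta^*,R}^{\pi}| \leq \tfrac12 \mathtt{D}_{\TV}(\Pb_{\hat\theta}^{\pi}, \Pb_{\theta^*}^{\pi})$, and truncating the per-trajectory bonus at $1$ (exactly as in \Cref{corollary: valid UCB}) identifies the right-hand side with $V_{\hat\theta,\hat b}^{\pi} = \Eb_{\tau_H\sim\Pb_{\hat\theta}^{\pi}}[\hat b(\tau_H)]$. I expect the step requiring the most care to be this last passage between the aggregate expectation bound and the min-truncated bonus, together with keeping the constants consistent with the definition of $\hat\alpha$; everything upstream is a direct assembly of \Cref{prop: TV distance less than estimation error}, \Cref{lemma:offline Estimation error of M less than feature score}, and \Cref{lemma:offline MLE guarantee}, so the corollary carries essentially no new technical difficulty beyond those lemmas.
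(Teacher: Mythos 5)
Your proposal follows the paper's proof of \Cref{coro:LCB} essentially verbatim: the same chain of the second inequality in \Cref{prop: TV distance less than estimation error}, then \Cref{lemma:offline Estimation error of M less than feature score} per step, then Cauchy--Schwarz to pull out $\sqrt{\sum_h\hat{\alpha}_{h-1}^2}$, and finally \Cref{lemma:offline MLE guarantee} to bound that sum by $\frac{4\hat{\lambda}HQ_A^2d}{\gamma^4}+\frac{7\hat{\beta}}{\iota^2\gamma^2}$, matching the stated $\hat{\alpha}$. The only cosmetic difference is your factor-$\tfrac{1}{2}$ centering of $R$ (the paper bounds the value gap directly by the full $\mathtt{D}_{\TV}$, which suffices), and both arguments treat the passage from the aggregate expectation bound to the per-trajectory min-truncated bonus with the same level of informality.
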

\begin{proof}
By the definition of the total variation distance, we have
\begin{align*}
    \left| V_{\hat{\theta}  , R }^{\pi} - V_{\theta^* , R}^{\pi} \right| &\leq \mathtt{D}_{\TV}\left( \Pb_{\hat{\theta}}^{\pi}, \Pb_{\theta^*}^{\pi} \right)\\
    &\overset{(a)} \leq \sum_{h=1}^H \sum_{\tau_H} \left| \mathbf{m}^*(\omega_{h})^{\top} \left(\hat{\Mbf}_h (o_h,a_h) - \Mbf_h^*(o_h,a_h) \right) \hat{\psi} (\tau_{h-1}) \right| \pi(\tau_H)\\
    &\overset{(b)} \leq \min\left\{ \sum_{h=1}^H \Eb_{\tau_{h-1}\sim \Pb_{ \hat{\theta} }^{\pi}}\left[ \hat{\alpha}_{h-1} \left\|\bar{\hat{\psi}}  (\tau_{h-1})\right\|_{ \left( \hat{U}_{h-1} \right)^{-1}} \right] , 1 \right\}\\
    &\overset{(c)} \leq \min\left\{ \sqrt{\sum_{h=1}^H \hat{\alpha}_{h-1}^2}\sqrt{ \sum_{h = 0}^{H-1}  \left\| \bar{\hat{\psi}}_h (\tau_h) \right\|^2_{ (\hat{U}_h )^{-1}} }  , 1 \right \},
\end{align*}
where $(a)$ follows from \Cref{prop: TV distance less than estimation error}, $(b)$ follows from \Cref{lemma:offline Estimation error of M less than feature score} and because $R(\tau_H)\in[0,1]$, and $(c)$ follows from the Cauchy's inequality.

By \Cref{lemma:offline MLE guarantee}, we further have
\begin{align*}
    \sum_{h=1}^H \hat{\alpha}_{h-1}^2 
    & \leq \frac{ 4\hat{\lambda} H Q_A^2d  }{ \gamma^4 }  + \frac{1}{ \iota^2 \gamma^2} \sum_h\sum_{ \tau_{h-1}\in\Dc_{h-1} } \mathtt{D}_{\TV}^2\left(    \Pb_{\hat{\theta}^k }^{ \pi^b }(\omega_{h-1} | \tau_{h-1}  ) , \Pb_{ \theta^* }^{ \pi^b } (\omega_{h-1} | \tau_{h-1}  )  \right) \\
    &\leq \frac{ 4\lambda H Q_A^2d }{\gamma^4} + \frac{ 7 \hat{\beta} }{ \iota^2 \gamma^2},
\end{align*}
which concludes the proof.
\end{proof}

\subsection{Step 3: Relationship between Empirical Bonus and Ground-Truth Bonus}

\begin{lemma}[Offline empirical bonus and true bonus term]\label{lemma:offline empirical bonus and true bonus}
 Under event $\Ec^o$, for any $\pi$, we have
    \begin{align*}
        \mathop{\Eb}_{\tau_H\sim\Pb_{\theta^*}^{\pi}}& \left[\sqrt{ \sum_{h=0}^{H-1} \left\|\bar{\hat{\psi}}_h (\tau_h) \right\|^2_{\left(\hat{U}_h\right)^{-1}} } \right] \\
        &\leq \left(1 + \frac{2 \sqrt{7 r\hat{\beta}}}{ \iota \sqrt{\hat{\lambda}}} \right)  \sum_{h=0}^{H-1} \mathop{\Eb}_{\tau_h\sim\Pb_{\theta^*}^{\pi}} \left[\left\|\bar{\psi}_h^* (\tau_h) \right\|_{\left(U_h\right)^{-1}} \right] + \frac{2HQ_A}{ \sqrt{\hat{\lambda}}}  \mathtt{D}_{\TV}\left(  \Pb_{\theta^*}^{\pi}(\tau_H) , \Pb_{\hat{\theta}}^{\pi}(\tau_H)\right).
    \end{align*}
\end{lemma}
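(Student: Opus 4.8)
The plan is to transcribe the online argument of \Cref{lemma:empirical bouns less than true bonus}, with the behavior policy $\pi^b$ and the coverage parameter $\iota$ taking over the roles played there by the uniform exploration policy and the cardinality $2|\mathcal{A}|Q_A$. First I would introduce the ground-truth Gram matrix $U_h=\hat{\lambda}I+\sum_{\tau_h\in\Dc_h}\bar{\psi}^*(\tau_h)\bar{\psi}^*(\tau_h)^{\top}$ as the exact counterpart of $\hat{U}_h$, and apply the vector-transfer inequality (\Cref{lemma:transfer vector score}) together with $\sqrt{\sum_h x_h^2}\le\sum_h x_h$ to get, for each fixed $\tau_H$,
\begin{align*}
    \sqrt{\sum_{h=0}^{H-1}\left\|\bar{\hat{\psi}}(\tau_h)\right\|^2_{(\hat{U}_h)^{-1}}}
    &\le \frac{1}{\sqrt{\hat{\lambda}}}\sum_{h=0}^{H-1}\left\|\bar{\hat{\psi}}(\tau_h)-\bar{\psi}^*(\tau_h)\right\|_2 \\
    &\quad +\sum_{h=0}^{H-1}\left(1+\frac{\sqrt{r}\sqrt{\sum_{\tau_h\in\Dc_h}\|\bar{\hat{\psi}}(\tau_h)-\bar{\psi}^*(\tau_h)\|_2^2}}{\sqrt{\hat{\lambda}}}\right)\left\|\bar{\psi}^*(\tau_h)\right\|_{(U_h)^{-1}}.
\end{align*}
This reduces the lemma to controlling both the per-sample feature error $\|\bar{\hat{\psi}}(\tau_h)-\bar{\psi}^*(\tau_h)\|_2$ and its sum of squares over $\Dc_h$.

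The key offline-specific step is to bound the feature error by a conditional total-variation distance under $\pi^b$. Using the physical meaning $[\bar{\psi}(\tau_h)]_\ell=\Pb_\theta(\mathbf{o}_h^\ell\mid\tau_h,\mathbf{a}_h^\ell)$, I would group the coordinates by their core action sequence, enlarge the sum to all observation sequences, and invoke $\mathcal{Q}_h^A\subseteq\mathcal{Q}_h^{\exp}$ together with $\pi^b(\mathbf{a}_h)\ge\iota$ for every $\mathbf{a}_h\in\mathcal{Q}_h^{\exp}$ to divide by the behavior mass, obtaining
\begin{align*}
    \left\|\bar{\hat{\psi}}(\tau_h)-\bar{\psi}^*(\tau_h)\right\|_2\le\left\|\bar{\hat{\psi}}(\tau_h)-\bar{\psi}^*(\tau_h)\right\|_1\lesssim\frac{1}{\iota}\mathtt{D}_{\TV}\left(\Pb_{\hat{\theta}}^{\pi^b}(\omega_h\mid\tau_h),\Pb_{\theta^*}^{\pi^b}(\omega_h\mid\tau_h)\right).
\end{align*}
This is precisely the conversion already performed inside \Cref{lemma:offline Estimation error of M less than feature score}, where $\sum_{\mathbf{a}_h}\sum_{\omega_h^o}|\cdot|$ is turned into a conditional TV distance at the cost of a $1/\iota$ factor. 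Summing the squares over $\tau_h\in\Dc_h$ and plugging in the offline MLE guarantee (\Cref{lemma:offline MLE guarantee}), $\sum_h\sum_{\tau_h\in\Dc_h}\mathtt{D}_{\TV}^2(\cdot)\le7\hat{\beta}$, gives $\sqrt{\sum_{\tau_h\in\Dc_h}\|\bar{\hat{\psi}}(\tau_h)-\bar{\psi}^*(\tau_h)\|_2^2}\le\frac{\sqrt{7\hat{\beta}}}{\iota}$, which converts the prefactor of the second sum into $1+\frac{2\sqrt{7r\hat{\beta}}}{\iota\sqrt{\hat{\lambda}}}$ once the constants are tracked.

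Finally I would take the expectation $\Eb_{\tau_H\sim\Pb_{\theta^*}^{\pi}}$ of the displayed inequality and deal with the leftover first term $\frac{1}{\sqrt{\hat{\lambda}}}\sum_h\Eb_{\tau_h\sim\Pb_{\theta^*}^{\pi}}\big[\|\bar{\hat{\psi}}(\tau_h)-\bar{\psi}^*(\tau_h)\|_1\big]$. This step is identical to the online case and uses no MLE guarantee: decomposing $\bar{\hat{\psi}}(\tau_h)\Pb_{\theta^*}^{\pi}(\tau_h)=\bar{\hat{\psi}}(\tau_h)\big(\Pb_{\theta^*}^{\pi}(\tau_h)-\Pb_{\hat{\theta}}^{\pi}(\tau_h)\big)+\hat{\psi}(\tau_h)\pi(\tau_h)$, bounding $\|\bar{\hat{\psi}}(\tau_h)\|_1\le Q_A$, and using the physical meaning of $\psi$ give $\sum_h\Eb_{\tau_h}[\|\bar{\hat{\psi}}(\tau_h)-\bar{\psi}^*(\tau_h)\|_1]\le2HQ_A\,\mathtt{D}_{\TV}(\Pb_{\theta^*}^{\pi}(\tau_H),\Pb_{\hat{\theta}}^{\pi}(\tau_H))$. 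Combining the two contributions yields the claim. I expect the second paragraph to be the crux: unlike the online algorithm, which samples core action sequences uniformly so that the conversion factor is the deterministic $2|\mathcal{A}|Q_A$, the offline conversion must be charged to the smallest behavior mass $\iota$ on core action sequences, so the argument hinges on verifying that $\mathcal{Q}_h^A\subseteq\mathcal{Q}_h^{\exp}$ and hence that every relevant action sequence carries mass at least $\iota$; the remaining manipulations are a verbatim copy of the online proof with $(\hat{\theta}^k,\lambda)$ replaced by $(\hat{\theta},\hat{\lambda})$.
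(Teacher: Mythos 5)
Your proposal is correct and follows essentially the same route as the paper's proof: the vector-transfer inequality (\Cref{lemma:transfer vector score}) with the ground-truth Gram matrix $U_h$, the bound $\|\bar{\hat{\psi}}(\tau_h)-\bar{\psi}^*(\tau_h)\|_2\le\|\cdot\|_1\le\frac{1}{\iota}\mathtt{D}_{\TV}(\Pb_{\hat{\theta}}^{\pi^b}(\omega_h|\tau_h),\Pb_{\theta^*}^{\pi^b}(\omega_h|\tau_h))$ charged to the behavior mass $\iota$, the offline MLE guarantee of \Cref{lemma:offline MLE guarantee} to control the sum of squares, and the $2HQ_A\,\mathtt{D}_{\TV}$ decomposition for the residual first term. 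You also correctly identify the only offline-specific ingredient, namely replacing the uniform-exploration factor $2|\mathcal{A}|Q_A$ by $1/\iota$ via $\mathcal{Q}_h^A\subseteq\mathcal{Q}_h^{\exp}$.
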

\begin{proof}
Recall that 
\begin{align*}
    \hat{U}_h = \hat{\lambda} I + \sum_{\tau_h\in\Dc_h} \bar{\hat{\psi}}(\tau_h)\bar{\hat{\psi}}(\tau_h)^{\top}.
\end{align*}
We define the ground-truth counterpart of $\hat{U}_h$ as follows:
\begin{align*}
    U_h = \hat{\lambda} I + \sum_{\tau_h\in\Dc_h} \bar{ \psi }(\tau_h)\bar{ \psi }(\tau_h)^{\top}.
\end{align*}

Then, by \Cref{lemma:transfer vector score}, we have
\begin{align*}
     &\sqrt{ \sum_h \left\|\bar{\hat{\psi}} (\tau_h) \right\|^2_{\left( \hat{U}_h \right)^{-1}} }\\
     &\quad \leq \sum_h \left\|\bar{\hat{\psi}}_h (\tau_h) \right\|_{\left( \hat{U}_h \right)^{-1}} \\
     &\quad \leq \frac{1}{\sqrt{\hat{\lambda}}} \sum_h \left\| \bar{\hat{\psi}} (\tau_h) - \bar{\psi}_h^* (\tau_h) \right\|_2 + \sum_h \left(1 + \frac{ \sqrt{r} \sqrt{ \sum_{\tau_h\in\Dc_h } \left\| \bar{\hat{\psi}}(\tau_h) - \bar{\psi}^* (\tau_h) \right\|_2^2 } }{ \sqrt{\hat{\lambda}} } \right) \left\|\bar{\psi}^* (\tau_h) \right\|_{\left(U_h\right)^{-1}} .
\end{align*}

Furthermore, note that
\begin{align*}
    \left\| \bar{\hat{\psi}} (\tau_h) - \bar{\psi}^* (\tau_h) \right\|_2 & \leq \left\| \bar{\hat{\psi}}_h (\tau_h) - \bar{\psi}_h^* (\tau_h) \right\|_1 \\
    & \leq \frac{1}{\iota} \mathtt{D}_{\TV}\left( \Pb_{ \hat{\theta}  }^{ \pi^b }(\omega_h|\tau_h), \Pb_{   \theta^* }^{ \pi^b }(\omega_h|\tau_h) \right),
\end{align*}
where the second inequality follows because $\pi^b(\mathbf{a}_h) \geq \iota$ for all $\mathbf{a}_h\in\mathcal{Q}_h^{\exp}$. 
By \Cref{lemma:offline MLE guarantee}, we conclude that
\begin{align*}
    &\sqrt{ \sum_h \left\|\bar{\hat{\psi}}_h (\tau_h) \right\|^2_{\left(\hat{U}_h \right)^{-1}} }\\
    &\quad  \leq \frac{1}{\sqrt{\hat{\lambda}}} \sum_h \left\| \bar{\hat{\psi}}_h (\tau_h) - \bar{\psi}_h^* (\tau_h) \right\|_2 + \left(1 + \frac{  \sqrt{7r\hat{\beta}}   }{ \iota \sqrt{\hat{\lambda}} } \right) \sum_h  \left\|\bar{\psi}_h^* (\tau_h) \right\|_{\left(U_h^{k}\right)^{-1}}.
\end{align*}

For the first term, following an argument similar to those in \Cref{lemma:empirical bouns less than true bonus} and taking the expectation, we have
\begin{align*}
    \sum_h & \Eb_{\tau_h\sim \Pb_{\theta^*}^{\pi}} \left[  \left\| \bar{\hat{\psi}}(\tau_h) - \bar{\psi}(\tau_h) \right\|_1   \right] \\
    &\leq  \sum_h \sum_{\tau_h} \left( \left\| \bar{\hat{\psi}}(\tau_h)\left( \Pb_{\theta}^{\pi}(\tau_h) - \Pb_{\hat{\theta}}^{\pi}(\tau_h) \right) + \bar{\hat{\psi}}(\tau_h) \Pb_{\hat{\theta}}^{\pi}(\tau_h) - \bar{\psi}(\tau_h) \Pb_{\theta^*}^{\pi}(\tau_h) \right\|_1 \right)\\
    &\leq  \sum_h \sum_{\tau_h} \left( \left\| \bar{\hat{\psi}}(\tau_h)  \right\|_1 \left| \Pb_{\theta}^{\pi}(\tau_h) - \Pb_{\hat{\theta}}^{\pi}(\tau_h) \right|  + \left\|  \hat{\psi}(\tau_h)   - \bar{\psi}^*(\tau_h)   \right\|_1 \pi(\tau_h)\right)\\
    &\leq 2\sum_h |\mathcal{Q}_h^A| \mathtt{D}_{\TV}\left(  \Pb_{\theta^*}^{\pi}(\tau_h) , \Pb_{\hat{\theta}}^{\pi}(\tau_h)\right) \\
    &\leq 2HQ_A \mathtt{D}_{\TV}\left(  \Pb_{\theta^*}^{\pi}(\tau_H) , \Pb_{\hat{\theta}}^{\pi}(\tau_H)\right),
\end{align*}
which completes the proof.
\end{proof}

\subsection{Proof of \Cref{thm:offline PSR}}
Now, we are ready to prove \Cref{thm:offline PSR}.
\begin{theorem}[Restatement of \Cref{thm:offline PSR}]
   Suppose \Cref{assmp:well-condition} holds. Let $\iota = \min_{\mathbf{a}_h\in\mathcal{Q}_h^{\exp}} \pi^b(\mathbf{a}_h) $, $p_{\min} = O(\frac{\delta}{KH(|\mathcal{O}||\mathcal{A}|)^H})$, $\varepsilon=O(\frac{p_{\min}}{KH})$, $\hat{\beta} = O(\log|\bar{\Theta}_{\varepsilon}|)$, $\hat{\lambda} = \frac{\gamma C_{\pi^b,\infty}^{\pi}\hat{\beta} \max\{\sqrt{r}, Q_A\sqrt{H}/\gamma\}}{\iota^2Q_A\sqrt{dH}}$, and $\hat{\alpha} = O\left(\frac{  Q_A\sqrt{dH} }{\gamma^2}\sqrt{\lambda} + \frac{  \sqrt{\beta} }{ \iota \gamma} \right)$. Then, with probability at least $1-\delta$, the output $\bar{\pi}$ of \Cref{alg:offline PSR} satisfies that
   \begin{align*}
       \forall \pi,~~ V_{\theta^*, R }^{\pi} - V_{\theta^*,R}^{\bar{\pi}} \leq  \tilde{O}\left( \left(\sqrt{r} + \frac{ Q_A\sqrt{H}}{\gamma}\right)\frac{ C_{\pi^b,\infty}^{\pi}  Q_A H^2  }{  \iota  \gamma^2 }\sqrt{\frac{r d \hat{\beta}}{K}} \right).
   \end{align*}
\end{theorem}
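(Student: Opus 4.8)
The plan is to run the pessimism argument and reduce the suboptimality gap to a single bonus term evaluated under the target policy. First I would establish the suboptimality decomposition. Since $\bar\pi$ maximizes the lower confidence bound $V_{\hat\theta,R}^\pi - V_{\hat\theta,\hat b}^\pi$, for any target policy $\pi$ I would split $V_{\theta^*,R}^\pi - V_{\theta^*,R}^{\bar\pi}$ into the three differences $(V_{\theta^*,R}^\pi - (V_{\hat\theta,R}^\pi - V_{\hat\theta,\hat b}^\pi)) + ((V_{\hat\theta,R}^\pi - V_{\hat\theta,\hat b}^\pi) - (V_{\hat\theta,R}^{\bar\pi} - V_{\hat\theta,\hat b}^{\bar\pi})) + ((V_{\hat\theta,R}^{\bar\pi} - V_{\hat\theta,\hat b}^{\bar\pi}) - V_{\theta^*,R}^{\bar\pi})$. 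The middle difference is nonpositive by optimality of $\bar\pi$; the last is nonpositive because $V_{\hat\theta,R}^{\bar\pi} - V_{\hat\theta,\hat b}^{\bar\pi}$ is a valid LCB of $V_{\theta^*,R}^{\bar\pi}$ by \Cref{coro:LCB}; and the first is at most $2V_{\hat\theta,\hat b}^\pi$ by the same corollary. This reduces the theorem to bounding $V_{\hat\theta,\hat b}^\pi = \Eb_{\tau_H\sim\Pb_{\hat\theta}^\pi}[\hat b(\tau_H)]$.

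Next I would pass the bonus expectation from $\Pb_{\hat\theta}^\pi$ to $\Pb_{\theta^*}^\pi$. Since $\hat b\in[0,1]$, one has $V_{\hat\theta,\hat b}^\pi \le \Eb_{\tau_H\sim\Pb_{\theta^*}^\pi}[\hat b(\tau_H)] + \mathtt{D}_{\TV}(\Pb_{\hat\theta}^\pi,\Pb_{\theta^*}^\pi)$. I would then invoke \Cref{lemma:offline empirical bonus and true bonus} to replace the empirical feature norms inside $\Eb_{\tau_H\sim\Pb_{\theta^*}^\pi}[\hat b(\tau_H)]$ by the ground-truth feature norms $\|\bar\psi^*(\tau_h)\|_{U_h^{-1}}$ (plus a residual $\mathtt{D}_{\TV}$ term), and \Cref{lemma: offline TV distance less than true bonus} to control both $\mathtt{D}_{\TV}$ terms by $\sum_h\Eb_{\tau_{h-1}\sim\Pb_{\theta^*}^\pi}[\|\bar\psi^*(\tau_{h-1})\|_{\Lambda_{h-1}^{-1}}]$. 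After this, everything is expressed through sums $\sum_h\Eb_{\tau_h\sim\Pb_{\theta^*}^\pi}[\|\bar\psi^*(\tau_h)\|_{M_h^{-1}}]$, where $M_h$ is either the empirical Gram matrix $U_h$ or its population counterpart $\Lambda_h$, and $\bar\psi^*$ is a fixed feature map.

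The crux is to show each such sum is $O(C_{\pi^b,\infty}^\pi\sqrt{rH^3/K})$. I would first use the coverage assumption $\Pb_{\theta^*}^\pi(\tau_h)\le C_{\pi^b,\infty}^\pi\Pb_{\theta^*}^{\pi^b}(\tau_h)$ to replace the target-policy expectation by the behavior-policy expectation, paying one factor of $C_{\pi^b,\infty}^\pi$. Then Cauchy--Schwarz gives $\Eb_{\tau_h\sim\Pb_{\theta^*}^{\pi^b}}[\|\bar\psi^*\|_{M_h^{-1}}]\le\sqrt{\mathrm{tr}(M_h^{-1}\Sigma_h)}$ with $\Sigma_h=\Eb_{\tau_h\sim\Pb_{\theta^*}^{\pi^b}}[\bar\psi^*\bar\psi^{*\top}]$. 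The key structural fact is that the vectors $\psi^*(\tau_h)$, hence $\bar\psi^*(\tau_h)$, lie in an $r$-dimensional subspace (the row space of the rank-$r$ submatrix $\mathbb{D}_h[\mathcal Q_h]$), so $\Sigma_h$ has rank at most $r$; since $\Lambda_h=\lambda_0 I+\tfrac{K}{H}\Sigma_h$, the trace $\mathrm{tr}(\Lambda_h^{-1}\Sigma_h)$ sums at most $r$ eigenvalue terms each bounded by $H/K$, giving $\mathrm{tr}(\Lambda_h^{-1}\Sigma_h)\le rH/K$. The main obstacle is handling the empirical Gram matrix $U_h$ rather than the population $\Lambda_h$: because the offline dataset is fixed and $U_h$ is built from the $\approx K/H$ i.i.d.\ $\pi^b$-samples in $\Dc_h$, I would need a concentration step---either matrix Bernstein/Chernoff to show $U_h\succeq\tfrac12(\hat\lambda I+|\Dc_h|\Sigma_h)$ with high probability (as in Lemma 39 of \citet{zanette2021cautiously}, used for the POMDP corollary), or an Azuma--Hoeffding argument combined with the elliptical-potential identity $\sum_{\tau_h\in\Dc_h}\|\bar\psi^*(\tau_h)\|^2_{U_h^{-1}}\le r$---to transfer the fresh-sample expectation to the same $O(rH/K)$ trace bound. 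This is the delicate part, since it must simultaneously exploit the rank-$r$ feature structure (to get $r$ rather than $d$ in the trace) and respect the coupling between $U_h$ and the samples that define it.

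Finally I would assemble the pieces. Summing the per-step bound $O(C_{\pi^b,\infty}^\pi\sqrt{rH/K})$ over $h\in\{0,\dots,H-1\}$ yields $O(C_{\pi^b,\infty}^\pi\sqrt{rH^3/K})$, and substituting the prescribed regularizer $\hat\lambda$ and LCB coefficient $\hat\alpha$ (recall from \Cref{coro:LCB} that $\hat\alpha\asymp Q_A\sqrt{dH\hat\lambda}/\gamma^2+\sqrt{\hat\beta}/(\iota\gamma)$, so the ambient dimension enters only through $\hat\alpha$ as $\sqrt d$ while the elliptical sum contributes $\sqrt r$) produces the claimed rate $\tilde O((\sqrt r+Q_A\sqrt H/\gamma)\,C_{\pi^b,\infty}^\pi Q_A H^2\sqrt{rd\hat\beta/K}/(\iota\gamma^2))$. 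The value of $\hat\lambda$ is exactly the one that balances the regularization-induced terms (growing like $\sqrt{\hat\lambda}$) against the estimation-error terms (decaying like $1/\sqrt{\hat\lambda}$) and makes the coverage coefficient appear to the first power; verifying this balance is routine algebra.
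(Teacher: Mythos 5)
Your proposal follows essentially the same route as the paper's proof: the identical three-term pessimism decomposition with \Cref{coro:LCB} killing the last two differences, the same reduction via \Cref{lemma: offline TV distance less than true bonus} and \Cref{lemma:offline empirical bonus and true bonus} to sums of ground-truth feature norms, the same coverage-coefficient change of measure to $\pi^b$, and — for the empirical Gram matrix $U_h$ — the Azuma--Hoeffding-plus-elliptical-potential route you list as your second option is exactly what the paper does (it does not need matrix Bernstein), with the rank-$r$ feature structure entering through the elliptical potential lemma precisely as you describe. The argument is correct and matches the paper's.
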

\begin{proof}
Under event $\Ec^o$, the performance difference can be upper bounded as follows. 
    \begin{align*}
        V_{\theta^*,r}^{\pi} &- V_{\theta^*,r}^{\hat{\pi}}   \\
        & = V_{\theta^*,r}^{\pi} - (V_{\hat{\theta}, r}^{\pi}  - V_{\hat{\theta}, \hat{b}}^{\pi}) + \underbrace{( V_{\hat{\theta}, r}^{\pi} - V_{\hat{\theta}, \hat{b}}^{\pi} ) - ( V_{\hat{\theta}, r}^{\hat{\pi}} - V_{\hat{\theta}, \hat{b}}^{\hat{\pi}} )}_{I_1} +  \underbrace{V_{\hat{\theta}, r}^{\hat{\pi}} - V_{\hat{\theta}, \hat{b}}^{\hat{\pi}}  - V_{\theta^*,r}^{\hat{\pi}}}_{I_2}\\
        &\overset{(a)}\leq \frac{1}{2}\mathtt{D}_{\TV} \left(\Pb_{\hat{\theta}}^{\pi} , \Pb_{\theta^*}^{\pi} \right) +  V_{\hat{\theta}, \hat{b}}^{\pi} \\
        &\overset{(b)}\leq  \mathtt{D}_{\TV} \left(\Pb_{\hat{\theta}}^{\pi} , \Pb_{\theta^*}^{\pi} \right) +  V_{\theta^*, \hat{b}}^{\pi},
    \end{align*}
where $(a)$ follows from the design of $\bar{\pi}$ that results in $I_1\leq 0$, and from \Cref{coro:LCB} which implies $I_2\leq 0$, and $(b)$ follows from \Cref{coro:LCB}.

By \Cref{lemma: offline TV distance less than true bonus}, we have
\begin{align*}
    \mathtt{D}_{\TV} \left(\Pb_{\hat{\theta}}^{\pi} , \Pb_{\theta^*}^{\pi} \right) & \lesssim\sqrt{\frac{\hat{\beta}}{H\iota^2\gamma^2}} \sum_{h=1}^H \mathop{\Eb}_{\tau_{h-1}\sim\Pb_{\theta^*}^{\pi}} \left[ \     \left\|\bar{\psi}^* (\tau_{h-1})\right\|_{  \Lambda_{h-1}^{-1}}  \right] \\
    &\overset{(a)}\leq C_{\pi^b,\infty}^{\pi}\sqrt{\frac{\hat{\beta}}{H\iota^2\gamma^2}} \sum_{h=1}^H \mathop{\Eb}_{\tau_{h-1}\sim\Pb_{\theta^*}^{\pi^b}} \left[ \     \left\|\bar{\psi}^* (\tau_{h-1})\right\|_{  \Lambda_{h-1}^{-1}}  \right]  \\
    & \leq C_{\pi^b,\infty}^{\pi}H\sqrt{\frac{\hat{\beta}}{H\iota^2\gamma^2}} \sqrt{\frac{rH}{K}}  = C_{\pi^b,\infty}^{\pi}H\sqrt{\frac{r\hat{\beta}}{K\iota^2\gamma^2}},
\end{align*}
where $(a)$ follows from the definition of the coverage coefficient. 

By \Cref{lemma:offline empirical bonus and true bonus}, we have
\begin{align*}
    &V_{\theta^*, \hat{b}}^{\pi} \\
    & \leq \min\left\{ \hat{\alpha} \left(1 + \frac{2 \sqrt{7 r\hat{\beta}}}{ \iota \sqrt{\hat{\lambda}}} \right)  \sum_{h=0}^{H-1} \mathop{\Eb}_{\tau_h\sim\Pb_{\theta^*}^{\pi}} \left[\left\|\bar{\psi}^* (\tau_h) \right\|_{\left(U_h\right)^{-1}} \right] + \frac{2\hat{\alpha}HQ_A}{ \sqrt{\hat{\lambda}}}  \mathtt{D}_{\TV}\left(  \Pb_{\theta^*}^{\pi}(\tau_H) , \Pb_{\hat{\theta}}^{\pi}(\tau_H)\right), 1  \right\} \\
    &\leq    \min\left\{ \hat{\alpha} \left(1 + \frac{2 \sqrt{7 r\hat{\beta}}}{ \iota \sqrt{\hat{\lambda}}} \right)  \sum_{h=0}^{H-1} \mathop{\Eb}_{\tau_h\sim\Pb_{\theta^*}^{\pi}} \left[\left\|\bar{\psi}^* (\tau_h) \right\|_{\left(U_h\right)^{-1}} \right], 1  \right\}  + \frac{2\hat{\alpha}HQ_A}{ \sqrt{\hat{\lambda}}}  \mathtt{D}_{\TV}\left(  \Pb_{\theta^*}^{\pi}(\tau_H) , \Pb_{\hat{\theta}}^{\pi}(\tau_H)\right)\\
    &\leq   \underbrace{ \min\left\{ \hat{\alpha}C_{\pi^b,\infty}^{\pi} \left(1 + \frac{2 \sqrt{7 r\hat{\beta}}}{ \iota \sqrt{\hat{\lambda}}} \right)  \sum_{h=0}^{H-1} \mathop{\Eb}_{\tau_h\sim\Pb_{\theta^*}^{\pi^b}} \left[\left\|\bar{\psi}^* (\tau_h) \right\|_{\left(U_h\right)^{-1}} \right], 1  \right\} }_{I_3} + \frac{2\hat{\alpha}HQ_A}{ \sqrt{\hat{\lambda}}}  \mathtt{D}_{\TV}\left(  \Pb_{\theta^*}^{\pi}(\tau_H) , \Pb_{\hat{\theta}}^{\pi}(\tau_H)\right).
\end{align*}

Recall that $U_h = \hat{\lambda} I + \sum_{\tau_h\in\Dc_h} \bar{\psi}^*(\tau_h)\bar{\psi}^*(\tau_h)^{\top}$ and the distribution of $\tau_h\in\Dc_h$ follows $\Pb_{\theta^*}^{\pi^b}$. Therefore, by the Azuma-Hoeffding's inequality (\Cref{lemma:hoeffding}), with probability at least $1-\delta$, we have
\begin{align*}
    KI_3 &\leq \sqrt{2K\log(2/\delta)} +  \min\left\{ \hat{\alpha}C_{\pi^b,\infty}^{\pi} \left(1 + \frac{2 \sqrt{7 r\hat{\beta}}}{ \iota \sqrt{\hat{\lambda}}} \right)  \sum_{h=0}^{H-1} \sum_{\tau_h\in\Dc_h} \left\|\bar{\psi}^* (\tau_h) \right\|_{\left(U_h\right)^{-1}}  , 1  \right\} \\
    &\overset{(a)}\lesssim \sqrt{K\log(2/\delta)} + \hat{\alpha}C_{\pi^b,\infty}^{\pi}\left(1+\frac{ \sqrt{r\hat{\beta}} }{ \iota\sqrt{\hat{\lambda}} } \right) H \sqrt{\frac{rK}{H}} \\
    &\lesssim \hat{\alpha}C_{\pi^b,\infty}^{\pi}\left(1+\frac{ \sqrt{r\hat{\beta}} }{ \iota\sqrt{\hat{\lambda}} } \right)  \sqrt{rHK\log(2/\delta)},
\end{align*}
where $(a)$ follows from the Cauchy's inequality.

Combining the above results, we have
\begin{align*}
    V_{\theta^*,r}^{\pi} &- V_{\theta^*,r}^{\hat{\pi}}\\
    &\lesssim  \hat{\alpha}C_{\pi^b,\infty}^{\pi}\left(1+\frac{ \sqrt{r\hat{\beta}} }{ \iota\sqrt{\hat{\lambda}} } \right)  \sqrt{\frac{rH\log(2/\delta)}{K}} + \frac{\hat{\alpha}HQ_A}{\sqrt{\hat{\lambda}}} \frac{C_{\pi^b,\infty}^{\pi} H}{\iota\gamma}\sqrt{\frac{r\hat{\beta}}{K}} \\
    &\lesssim C_{\pi^b,\infty}^{\pi}\left(\frac{Q_A\sqrt{dH} }{\gamma^2}\sqrt{\hat{\lambda}} + \frac{\sqrt{\hat{\beta}}}{\iota\gamma}\right) \left( 1 + \frac{ \max\{ \sqrt{r} , \sqrt{H}Q_A/\gamma\}  }{\iota\sqrt{\hat{\lambda}}} \right) H\sqrt{\frac{rH\hat{\beta}\log(2/\delta)}{K}}\\
    & = C_{\pi^b,\infty}^{\pi}\left(\frac{\iota Q_A\sqrt{dH} }{\gamma\sqrt{\hat{\beta}}}\sqrt{\hat{\lambda}} + 1 \right) \left( 1 + \frac{ \max\{ \sqrt{r} , \sqrt{H}Q_A/\gamma\}  }{\iota\sqrt{\hat{\lambda}}} \right) \frac{H\sqrt{\hat{\beta}}}{\iota\gamma}\sqrt{\frac{rH\hat{\beta}\log(2/\delta)}{K}}\\
    &\overset{(a)}\lesssim \max\left\{ \sqrt{r} , \frac{\sqrt{H}Q_A}{\gamma} \right\} C_{\pi^b,\infty}^{\pi} \frac{Q_AH^2\sqrt{d}}{\iota\gamma^2} \sqrt{\frac{r\hat{\beta}\log(2/\delta)}{K}},
\end{align*}
where $(a)$ follows because
 \begin{align*}
        \lambda = \frac{\gamma  \sqrt{\hat{\beta}} \max\{\sqrt{r}, Q_A\sqrt{H}/\gamma\}}{\iota^2Q_A\sqrt{dH}}.
    \end{align*}
This completes the proof.
\end{proof}

\section{Auxiliary Lemmas}

\begin{lemma}[Azuma–Hoeffding inequality]\label{lemma:hoeffding}
    Consider a domain $\mathcal{X}$, and a filtration $\mathcal{F}_1\subset\ldots\subset\mathcal{F}_k\subset\ldots $ on the domain $\mathcal{X} $. Suppose that  $\{X_1,\ldots,X_k,\ldots\}\subset[-B,B]$ is adapted to the filtration $(\mathcal{F}_t)_{t=1}^{\infty}$, i.e., $X_k$ is $\mathcal{F}_k$-measurable. Then
\begin{align}
    \Pb\left( \left|\sum_{t=1}^{k-1}  X_t - \Eb[ X_t] \right| \geq \sqrt{2kB^2\log(2/\delta)} \right) \leq \delta.
\end{align}
\end{lemma}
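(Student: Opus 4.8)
The plan is to prove the Azuma--Hoeffding bound by the standard exponential-moment (Chernoff) method applied to a martingale difference sequence. First I would interpret the term $\Eb[X_t]$ in the statement as the conditional expectation $\Eb[X_t \mid \mathcal{F}_{t-1}]$, which is the quantity that makes the centered sequence a martingale difference sequence; this matches how the lemma is actually invoked in the paper (e.g.\ in \Cref{lemma:sublinear summation} and the offline analysis, where the centering is against the conditional mean given the past). Define $D_t = X_t - \Eb[X_t \mid \mathcal{F}_{t-1}]$ and $S_k = \sum_{t=1}^{k-1} D_t$. Since each $X_t$ is $\mathcal{F}_t$-measurable and lies in $[-B,B]$, each $D_t$ is $\mathcal{F}_t$-measurable, satisfies $\Eb[D_t \mid \mathcal{F}_{t-1}] = 0$, and has conditional range at most $2B$.

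The key step is the conditional moment generating function bound. For any $\lambda > 0$, Hoeffding's lemma applied conditionally on $\mathcal{F}_{t-1}$ gives $\Eb[e^{\lambda D_t} \mid \mathcal{F}_{t-1}] \leq \exp(\lambda^2 (2B)^2/8) = \exp(\lambda^2 B^2/2)$, using that $D_t$ has conditional mean zero and conditional range bounded by $2B$. I would then peel off the increments one at a time via the tower rule: conditioning on $\mathcal{F}_{k-2}$ and factoring out the last term yields $\Eb[e^{\lambda S_k}] \leq \exp(\lambda^2 B^2/2)\,\Eb[e^{\lambda S_{k-1}}]$, and iterating $k-1$ times produces $\Eb[e^{\lambda S_k}] \leq \exp((k-1)\lambda^2 B^2/2)$.

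Next I would apply the Chernoff bound: for any $s>0$, $\Pb(S_k \geq s) \leq e^{-\lambda s}\,\Eb[e^{\lambda S_k}] \leq \exp(-\lambda s + (k-1)\lambda^2 B^2/2)$. Optimizing over $\lambda$ with the choice $\lambda = s/((k-1)B^2)$ gives $\Pb(S_k \geq s) \leq \exp(-s^2/(2(k-1)B^2))$. Setting the right-hand side equal to $\delta/2$ solves to $s = \sqrt{2(k-1)B^2\log(2/\delta)} \leq \sqrt{2kB^2\log(2/\delta)}$. Running the identical argument on $-S_k$ (whose increments also have conditional mean zero and range $2B$) and combining the two one-sided tail events by a union bound over the factor of two in $\delta$ yields the claimed two-sided inequality.

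Since this is a classical concentration result, there is no genuine obstacle; the only points requiring care are (a) reading $\Eb[X_t]$ as the conditional mean so that $\{D_t\}$ is a martingale difference sequence rather than an arbitrary adapted sum, and (b) correctly tracking the range of the centered increments (namely $2B$, not $B$), since this is precisely what produces the factor $B^2$ inside the square root and keeps the constant consistent with the stated bound.
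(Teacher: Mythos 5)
Your proof is correct. The paper states this lemma in its auxiliary section without any proof, treating it as the classical Azuma--Hoeffding inequality, so there is no in-paper argument to compare against; your exponential-moment derivation (Hoeffding's lemma conditionally on $\mathcal{F}_{t-1}$, tower-rule peeling, Chernoff bound, optimization of $\lambda$, and a union bound over the two tails) is the standard proof and recovers the stated constant, including the correct handling of the range $2B$ of the centered increments and the harmless relaxation from $k-1$ to $k$ under the square root. Your reading of $\Eb[X_t]$ as the conditional expectation given $\mathcal{F}_{t-1}$ is also the right one: it is the only interpretation under which the centered sum is a martingale, and it matches how the lemma is invoked in the paper (e.g.\ in the proof of \Cref{lemma:sublinear summation}, where the sequence being centered explicitly depends on the past through $\pi^k$ and $U_h^k$).
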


The following lemma characterize the relationship between the total variation distance and the Hellinger-squared distance. Note that the result for probability measures has been proved in Lemma H.1 in \citet{zhong2022posterior}. Since we consider  more general bounded measures, we provide the full proof for completeness.
\begin{lemma}\label{lemma:TV and hellinger}
Given two bounded measures $P$ and $Q$ defined on the set $\mathcal{X}$. Let $|P| = \sum_{x\in\mathcal{X}} P(x)$ and $|Q| = \sum_{x\in\mathcal{X}}Q(x).$ We have
\[ \mathtt{D}_{\TV}^2(P,Q)  \leq 4(|P|+|Q|)\mathtt{D}_{\mathtt{H}}^2(P,Q)  \]
In addition, if $P_{Y|X}, Q_{Y|X}$ are two conditional distributions over a random variable $Y$, and $P_{X,Y} = P_{Y|X}P$, $Q_{X,Y}= Q_{Y|X}Q$ are the joint distributions when $X$ follows the distributions $P$ and $Q$, respectively, we have
\[\mathop{\Eb}_{X\sim P }\left[ \mathtt{D}_{\mathtt{H}}^2(P_{Y|X},Q_{Y|X})\right] \leq 8\mathtt{D}_{\mathtt{H}}^2 (P_{X,Y},Q_{X,Y}). \]
\end{lemma}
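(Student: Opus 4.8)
The plan is to treat the two claims separately. For the bound $\mathtt{D}_{\TV}^2(P,Q)\le 4(|P|+|Q|)\mathtt{D}_{\mathtt{H}}^2(P,Q)$, I would first factor each summand as $|P(x)-Q(x)|=|\sqrt{P(x)}-\sqrt{Q(x)}|\cdot(\sqrt{P(x)}+\sqrt{Q(x)})$, so that $\mathtt{D}_{\TV}(P,Q)=\sum_x|\sqrt{P(x)}-\sqrt{Q(x)}|\,(\sqrt{P(x)}+\sqrt{Q(x)})$. Applying the Cauchy--Schwarz inequality to this sum splits it into $\sqrt{\sum_x(\sqrt{P(x)}-\sqrt{Q(x)})^2}$ times $\sqrt{\sum_x(\sqrt{P(x)}+\sqrt{Q(x)})^2}$. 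The first factor equals $\sqrt{2\mathtt{D}_{\mathtt{H}}^2(P,Q)}$ by the definition of the Hellinger-squared distance, and the second is at most $\sqrt{2\sum_x(P(x)+Q(x))}=\sqrt{2(|P|+|Q|)}$ via $(a+b)^2\le 2(a^2+b^2)$. Multiplying these and squaring gives the first claim, with the slight subtlety that $P,Q$ are only bounded measures, which is exactly why the factor $|P|+|Q|$ (rather than a constant) appears.

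For the conditional inequality, I would start from the pointwise identity for the square roots of the joint densities, namely $\sqrt{P_{X,Y}(x,y)}-\sqrt{Q_{X,Y}(x,y)}=\sqrt{P(x)}\,(\sqrt{P_{Y|X}(y|x)}-\sqrt{Q_{Y|X}(y|x)})+\sqrt{Q_{Y|X}(y|x)}\,(\sqrt{P(x)}-\sqrt{Q(x)})$, writing the right-hand side as $u+v$, and apply the elementary inequality $(u+v)^2\ge\tfrac12 u^2-v^2$ (which holds because $\tfrac12 u^2+2uv+2v^2=\tfrac12(u+2v)^2\ge 0$). Summing over $(x,y)$ and using that $P_{Y|X},Q_{Y|X}$ are normalized over $Y$, the $u$-term contributes $\sum_x P(x)\sum_y(\sqrt{P_{Y|X}}-\sqrt{Q_{Y|X}})^2=2\,\Eb_{X\sim P}[\mathtt{D}_{\mathtt{H}}^2(P_{Y|X},Q_{Y|X})]$, the $v$-term contributes $\sum_x(\sqrt{P(x)}-\sqrt{Q(x)})^2\sum_y Q_{Y|X}(y|x)=2\mathtt{D}_{\mathtt{H}}^2(P,Q)$, and the left-hand side is $2\mathtt{D}_{\mathtt{H}}^2(P_{X,Y},Q_{X,Y})$. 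Rearranging yields $\Eb_{X\sim P}[\mathtt{D}_{\mathtt{H}}^2(P_{Y|X},Q_{Y|X})]\le 2\mathtt{D}_{\mathtt{H}}^2(P_{X,Y},Q_{X,Y})+2\mathtt{D}_{\mathtt{H}}^2(P,Q)$.

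To close the argument I would establish the marginalization (data-processing) bound $\mathtt{D}_{\mathtt{H}}^2(P,Q)\le\mathtt{D}_{\mathtt{H}}^2(P_{X,Y},Q_{X,Y})$. Writing both distances in the form $\tfrac12(|P|+|Q|)-\mathrm{BC}$, one checks that $|P_{X,Y}|=|P|$ and $|Q_{X,Y}|=|Q|$ (again because the conditionals are normalized), while the joint Bhattacharyya coefficient is $\sum_x\sqrt{P(x)Q(x)}\big(\sum_y\sqrt{P_{Y|X}Q_{Y|X}}\big)\le\sum_x\sqrt{P(x)Q(x)}$, the inner sum being at most $1$ by Cauchy--Schwarz. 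Substituting this into the previous display gives $\Eb_{X\sim P}[\mathtt{D}_{\mathtt{H}}^2(P_{Y|X},Q_{Y|X})]\le 4\mathtt{D}_{\mathtt{H}}^2(P_{X,Y},Q_{X,Y})\le 8\mathtt{D}_{\mathtt{H}}^2(P_{X,Y},Q_{X,Y})$, as required.

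The algebra is routine; the one point demanding care is that $P$ and $Q$ are bounded measures rather than probability measures, so the convenient affinity form $\mathtt{D}_{\mathtt{H}}^2=1-\mathrm{BC}$ is unavailable. Carrying the normalization $\tfrac12(|P|+|Q|)$ explicitly throughout is what makes both the cross-term cancellation (via $\sum_y P_{Y|X}=\sum_y Q_{Y|X}=1$) and the data-processing step valid in this generality. I note the constant is not tight — the argument in fact yields $4$ — but $8$ is all that is used downstream.
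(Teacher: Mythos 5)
Your proof is correct and follows essentially the same route as the paper's: the first inequality is the identical factor-and-Cauchy--Schwarz argument, and for the second you use the same decomposition of $\sqrt{P_{X,Y}}-\sqrt{Q_{X,Y}}$ into a conditional piece and a marginal piece together with the same Cauchy--Schwarz step $\sum_y\sqrt{P_{Y|X}Q_{Y|X}}\le 1$, merely applied in the mirror direction (lower-bounding the joint Hellinger distance via $(u+v)^2\ge\tfrac12u^2-v^2$ rather than upper-bounding the conditional expectation via $(u+v)^2\le 2u^2+2v^2$). Both arguments carry the unnormalized masses $|P|,|Q|$ through correctly, and your version in fact delivers the sharper constant $4$, which of course implies the stated bound with $8$.
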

%\jing{why we need a proof here?}\hrqc{slightly general}
\begin{proof}
We first prove the first inequality.  By the definition of total variation distance, we have
    \begin{align*}
        \mathtt{D}_{\TV}^2(P,Q) & = \left(\sum_x |P(x) - Q(x)| \right)^2\\
        & = \left(\sum_x \left(\sqrt{P(x)} - \sqrt{Q(x)}\right)\left(\sqrt{P(x)} + \sqrt{Q(x)} \right) \right)^2\\
        &\overset{(a)}\leq \left(\sum_x\left(\sqrt{P(x)} - \sqrt{Q(x)}\right)^2\right) \left( 2\sum_{x}\left( P(x) + Q(x)\right) \right)\\
        &\leq 4(|P| + |Q|) \mathtt{D}_{\mathtt{H}}^2(P,Q),
    \end{align*}
where $(a)$ follows from the Cauchy's inequality and because $(a+b)^2\leq 2a^2+2b^2$.

For the second inequality, we have,
\begin{align*}
    \mathop{\Eb}_{X\sim P } &\left[ \mathtt{D}_{\mathtt{H}}^2(P_{Y|X},Q_{Y|X})\right]\\
    &= \sum_{x}P(x)\left( \sum_y \left(\sqrt{P_{Y|X}(y)} - \sqrt{Q_{Y|X}(y)} \right)^2 \right) \\
    & = \sum_{x,y} \left( \sqrt{P_{X,Y}(x,y)} - \sqrt{Q_{X,Y}(x,y)} + \sqrt{Q_{Y|X}(y) Q(x)} - \sqrt{Q_{Y|X}(y) P(x)}  \right)^2 \\
    &\leq 2\sum_{x,y} \left( \sqrt{P_{X,Y}(x,y)} - \sqrt{Q_{X,Y}(x,y)} \right)^2 + 2\sum_{x,y} Q_{Y|X}(y)\left( \sqrt{  Q(x)} - \sqrt{  P(x)}  \right)^2 \\
    & = 4\mathtt{D}_{\mathtt{H}}^2(P_{X,Y},Q_{X,Y}) + 2(|P|+|Q| - 2\sum_x\sqrt{P(x)Q(x)}) \\
    &\overset{(a)}\leq 4\mathtt{D}_{\mathtt{H}}^2(P_{X,Y},Q_{X,Y}) + 2(|P|+|Q| - 2\sum_x\sum_y\sqrt{P_{Y|X}(y)P(x)Q_{Y|X}(y)Q(x)}) \\
    & = 8 \mathtt{D}_{\mathtt{H}}^2(P_{X,Y},Q_{X,Y}),
\end{align*}
where $(a)$ follows from the Cauchy's inequality that applies on $\sum_y\sqrt{P_{Y|X}(y)Q_{Y|X}(y)}$.
\end{proof}

\begin{lemma}\label{lemma:transfer vector score}
    Consider two vector sequences $\{x_i\}_{i\in\mathcal{I}}$ and $\{y_i\}_{i\in \mathcal{I}}$ and an index subset $\mathcal{J}\subset \mathcal{I}$. Suppose $A = \lambda I + \sum_{j\in\mathcal{J}} x_jx_j^{\top}$ and $B = \sum_{j\in\mathcal{J}}y_jy_j^{\top}$, and $\mathtt{rank} \left( \{x_i\}_{i\in\mathcal{I}} \right) = \mathtt{rank} \left( \{y_i\}_{i\in\mathcal{I}} \right) = r$. Then
    \begin{align*}
        \forall i\in\mathcal{I},~~ \|x_i\|_{A^{-1}} \leq \frac{1}{\sqrt{\lambda}} \| x_i - y_i \|_2 + \left(  1 + \frac{2\sqrt{r}\sqrt{\sum_{j\in\mathcal{J}} \|x_j-y_j\|_2^2 } }{\sqrt{\lambda}}\right) \|y_i\|_{B^{-1}}.
    \end{align*}
\end{lemma}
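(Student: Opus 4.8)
The plan is to bound $\|x_i\|_{A^{-1}}$ by first separating the two distinct sources of error — the discrepancy between the feature $x_i$ and its surrogate $y_i$, and the discrepancy between the two Gram matrices $A$ and $B$ — and then controlling each separately. First I would apply the triangle inequality for the norm $\|\cdot\|_{A^{-1}}$, writing $\|x_i\|_{A^{-1}} \le \|x_i - y_i\|_{A^{-1}} + \|y_i\|_{A^{-1}}$. The first summand is immediate: since $A = \lambda I + \sum_{j} x_j x_j^{\top} \succeq \lambda I$, we have $A^{-1} \preceq \lambda^{-1} I$, hence $\|x_i - y_i\|_{A^{-1}} \le \frac{1}{\sqrt\lambda}\|x_i - y_i\|_2$, which is exactly the first term of the claimed bound. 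Everything then reduces to the \emph{metric-change} estimate $\|y_i\|_{A^{-1}} \le \big(1 + \frac{2\sqrt r\sqrt{\sum_j\|x_j - y_j\|_2^2}}{\sqrt\lambda}\big)\|y_i\|_{B^{-1}}$.

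For the metric-change estimate I would work with the regularized ground-truth matrix $\bar B := \lambda I + \sum_{j} y_j y_j^{\top}$ (which dominates $B$, so that $\|y_i\|_{\bar B^{-1}} \le \|y_i\|_{B^{-1}}$) and compare $A$ to $\bar B$ through the perturbation $\Delta := A - \bar B = \sum_{j}(x_j x_j^{\top} - y_j y_j^{\top})$. Using the congruence $A = \bar B^{1/2}(I + \bar B^{-1/2}\Delta\bar B^{-1/2})\bar B^{1/2}$, the standard inequality $y_i^{\top} A^{-1} y_i \le \|(I + \bar B^{-1/2}\Delta\bar B^{-1/2})^{-1}\|_{\mathrm{op}}\, y_i^{\top}\bar B^{-1} y_i$ shows it suffices to prove that $\rho := \|\bar B^{-1/2}\Delta\bar B^{-1/2}\|_{\mathrm{op}}$ is small; once $\rho$ is controlled, $\|y_i\|_{A^{-1}} \le (1-\rho)^{-1/2}\|y_i\|_{\bar B^{-1}}$ yields the desired multiplicative factor after simplification.

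The crux — and the step I expect to be the main obstacle — is bounding $\rho$. Writing $\delta_j := x_j - y_j$, I would expand $\Delta = \sum_j(y_j\delta_j^{\top} + \delta_j y_j^{\top} + \delta_j\delta_j^{\top})$ and bound its $\bar B^{-1/2}$-conjugated operator norm by $\rho \le 2\sum_j\|y_j\|_{\bar B^{-1}}\|\delta_j\|_{\bar B^{-1}} + \sum_j\|\delta_j\|_{\bar B^{-1}}^2$. The rank-$r$ factor enters precisely here through a trace identity: since $\sum_j y_j y_j^{\top} \preceq \bar B$ and has rank at most $r$, we get $\sum_j\|y_j\|_{\bar B^{-1}}^2 = \mathrm{tr}(\bar B^{-1}\sum_j y_j y_j^{\top}) \le r$, while $\bar B \succeq \lambda I$ gives $\|\delta_j\|_{\bar B^{-1}} \le \|\delta_j\|_2/\sqrt\lambda$. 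Applying Cauchy–Schwarz to the cross term then yields $\sum_j\|y_j\|_{\bar B^{-1}}\|\delta_j\|_{\bar B^{-1}} \le \sqrt{r}\,\sqrt{\sum_j\|\delta_j\|_2^2}/\sqrt\lambda$ and $\sum_j\|\delta_j\|_{\bar B^{-1}}^2 \le \sum_j\|\delta_j\|_2^2/\lambda$, i.e. $\rho \lesssim \sqrt{r\sum_j\|\delta_j\|_2^2/\lambda}$. Substituting back produces the constant $1 + \frac{2\sqrt r\sqrt{\sum_j\|\delta_j\|_2^2}}{\sqrt\lambda}$, and combining with the first step completes the proof. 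The delicate points to get right are keeping the regularizer $\lambda$ inside the metric so that $\|\delta_j\|_{\bar B^{-1}}$ stays controlled even when $\delta_j$ leaves the column span of the $y_j$'s, and verifying the final passage from the $\bar B^{-1}$-norm back to the $B^{-1}$-norm.
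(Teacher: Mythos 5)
Your high-level decomposition (triangle inequality to peel off $\frac{1}{\sqrt{\lambda}}\|x_i-y_i\|_2$, then a metric-change estimate comparing $\|y_i\|_{A^{-1}}$ to $\|y_i\|_{B^{-1}}$) and your key ingredients (the rank-$r$ trace identity $\sum_j\|y_j\|_{\bar B^{-1}}^2\le r$ plus Cauchy--Schwarz on the cross terms) coincide with the paper's. The gap is in how you convert the perturbation into a multiplicative constant. Your step $\|y_i\|_{A^{-1}}^2\le \|(I+E)^{-1}\|_{\mathrm{op}}\,\|y_i\|_{\bar B^{-1}}^2$ with $E=\bar B^{-1/2}\Delta\bar B^{-1/2}$, followed by $\|(I+E)^{-1}\|_{\mathrm{op}}\le(1-\rho)^{-1}$ with $\rho=\|E\|_{\mathrm{op}}$, requires $\rho<1$, which is nowhere assumed and is false in the regime where the lemma is invoked: in \Cref{lemma:empirical bouns less than true bonus} one has $\sqrt{\sum_j\|x_j-y_j\|_2^2}\lesssim |\mathcal{A}|Q_A\sqrt{r\beta}$, and with the stated choice of $\lambda$ the factor $2\sqrt{r}\sqrt{\sum_j\|x_j-y_j\|_2^2}/\sqrt{\lambda}$ is a polynomially \emph{large} quantity, not a small one. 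Even where $\rho<1$, $(1-\rho)^{-1/2}$ diverges as $\rho\to1$ and already exceeds $1+2\sqrt{r}S/\sqrt{\lambda}$ for moderate $\rho$, so the argument cannot produce the claimed constant, which must grow only linearly in $S=\sqrt{\sum_j\|x_j-y_j\|_2^2}$. A secondary weakness is that $\|E\|_{\mathrm{op}}$ also counts the large \emph{positive} eigenvalues of $E$ (directions where $A$ dominates $\bar B$), which are harmless for upper-bounding $\|y_i\|_{A^{-1}}$; only $\lambda_{\min}(E)$ is relevant, and bounding it away from $-1$ is exactly what your estimate does not do.

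The paper avoids inverting $I+E$ altogether. It writes $\|y_i\|_{A^{-1}}-\|y_i\|_{B^{-1}}=\bigl(\|y_i\|_{A^{-1}}^2-\|y_i\|_{B^{-1}}^2\bigr)/\bigl(\|y_i\|_{A^{-1}}+\|y_i\|_{B^{-1}}\bigr)$, expands the numerator as $y_i^{\top}A^{-1}(B-A)B^{-1}y_i=\sum_j\bigl(y_i^{\top}A^{-1}x_j(y_j-x_j)^{\top}B^{-1}y_i+y_i^{\top}A^{-1}(y_j-x_j)y_j^{\top}B^{-1}y_i\bigr)$, and bounds it by $\frac{2\sqrt{r}S}{\sqrt{\lambda}}\|y_i\|_{A^{-1}}\|y_i\|_{B^{-1}}$ using precisely your trace and Cauchy--Schwarz estimates, applied in both the $A^{-1}$ and $B^{-1}$ geometries. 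Dividing by $\|y_i\|_{A^{-1}}+\|y_i\|_{B^{-1}}$ and using $ab/(a+b)\le b$ cancels the uncontrolled factor $\|y_i\|_{A^{-1}}$ and yields the additive, unconditional bound $\|y_i\|_{A^{-1}}\le\bigl(1+\tfrac{2\sqrt{r}S}{\sqrt{\lambda}}\bigr)\|y_i\|_{B^{-1}}$. Replacing your operator-norm inversion with this squared-norm/ratio device closes the gap; the rest of your proposal then goes through.
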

\begin{proof}
We first write
\begin{align*}
    & \|x_{i}\|_{A^{-1}} - \|y_{i}\|_{B^{-1}} \\
    & \quad =  \|x_{i}\|_{A^{-1}} - \|y_{i}\|_{A^{-1}} + \|y_{i}\|_{A^{-1}} - \|y_{i}\|_{B^{-1}}\\
    &\quad = \frac{ \overbrace{ \|x_{i}\|^2_{A^{-1}} -    \|y_{i}\|^2_{A^{-1}} }^{I_1} }{ \|x_{i}\|_{A^{-1}} +  |y_{i}\|_{A^{-1}} } + \frac{  \overbrace{  \|y_{i}\|^2_{A^{-1}}  - \|y_{i}\|^2_{B^{-1}} }^{I_2} }{ \|y_{i}\|_{A^{-1}} + \|y_{i}\|_{B^{-1}} }.
\end{align*}

For the first term $I_1$, we repeatedly apply the Cauchy's inequality, and have
\begin{align*}
    \|x_{i} &\|_{A^{-1}}^2 - \|y_{i}\|_{A^{-1}}^2\\
    & = x_{i}^{\top} A^{-1} (x_{i} - y_{i}) + y_{i}^{\top} A^{-1} (x_{i} - y_{i})\\
    & \leq \|x_{i}\|_{A^{-1}}\|x_{i} - y_{i}\|_{A^{-1}} + \|y_{i}\|_{A^{-1}}\|x_{i}-y_{i}\|_{A^{-1}}\\
    & \leq \frac{1}{\sqrt{\lambda}} \left(\|x_{i}\|_{A^{-1}} + \|y_{i}\|_{A^{-1}} \right) \left\| x_{i} - y_{i} \right\|_{2}.
\end{align*}

For the second term $I_2$, we repeatedly apply the Cauchy's inequality, and have
\begin{align*}
    \|y_{i} &\|_{A^{-1}}^2 - \|y_{i}\|_{B^{-1}}^2\\
    & = y_{i}^{\top}A^{-1}(B-A)B^{-1}y_{i}\\
    & = \sum_{j\in\mathcal{J}} \bigg(y_{i}^{\top}A^{-1}x_{j} (y_{j} - x_{j})^{\top}B^{-1}y_{i} +    y_{i}^{\top}A^{-1}(y_{j} -x_{j})y_{j}^{\top}B^{-1}y_{i} \bigg)\\
    &\leq  \sum_{j\in\mathcal{J} }   \|x_{j}\|_{A^{-1}}    \|y_{j}-x_{j}\|_{B^{-1}}    \|y_{i}\|_{A^{-1}}\|y_{i}\|_{B^{-1}} \\
    & \quad + \sum_{j\in\mathcal{J}}  \|y_{j}-x_{j}\|_{A^{-1}} \|y_{j}\|_{B^{-1}}   \|y_{i}\|_{A^{-1}} \|y_{i}\|_{B^{-1}}\\
    & \leq \frac{1}{\sqrt{\lambda}}\|y_{i}\|_{A^{-1}}\|y_{i}\|_{B^{-1}} \sqrt{ \sum_{j\in\mathcal{J}}   \|x_{j}-y_{j}\|_{2}^2  } \sqrt{ \sum_{j\in\mathcal{J}} \|x_{j}\|^2_{A^{-1}} }\\
    &\quad + \frac{1}{\sqrt{\lambda}} \|y_{i}\|_{A^{-1}} \|y_{i}\|_{B^{-1}} \sqrt{ \sum_{j\in\mathcal{J} } \|x_{j}-y_{j}\|_{2}^2  }\sqrt{ \sum_{j\in\mathcal{J}}   \|y_{j}\|_{B^{-1}}^2 } \\
    &\leq \frac{2\sqrt{r}}{\sqrt{\lambda}} \|y_{i}\|_{A^{-1}} \|y_{i}\|_{B^{-1}} \sqrt{ \sum_{j\in\mathcal{J} } \|x_{j_h}-y_{j_h}\|_{2}^2  }.
\end{align*}
Therefore, the lemma follows from the fact that $\|y_i\|_{A^{-1}} \leq \|y_i\|_{A^{-1}} + \|y_i\|_{B^{-1}}.$ 
\end{proof}

The following lemma and its variants has been developed in  \citet{dani2008stochastic,abbasi2011improved,carpentier2020elliptical}. We slightly generalize the padding term from $1$ to an arbitrary  positive number $B$ and provide the full proof for completeness.

\begin{lemma}[Elliptical potential lemma  ]\label{lemma:elliptical potential lemma}
    For any sequence of vectors $\mathcal{X} = \{x_1,\ldots,x_n,\ldots\}\subset\mathbb{R}^d$, let $U_k = \lambda I + \sum_{t<k}x_kx_k^{\top}$, where $\lambda $ is a positive constant, and $B>0$ is a real number.  If the rank of $\mathcal{X}$ is at most $r$, then, we have
    \[ 
    \begin{aligned}
        & \sum_{k=1}^K \min\left\{\|x_k\|_{U_k^{-1}}^2 , B\right\}\leq   (1+B)r\log(1+K/\lambda), \\
        &\sum_{k=1}^K \min\left\{\|x_k\|_{U_k^{-1}} , \sqrt{B}\right\} \leq \sqrt{(1+B)rK\log(1+K/\lambda)}.
    \end{aligned}
    \]
\end{lemma}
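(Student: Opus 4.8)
The plan is to prove this via the standard log-determinant potential argument, exploiting the fact that $U_k$ is built up by rank-one updates. Throughout I work under the standard normalization $\|x_k\|_2\le 1$ (which is implicit in the $\log(1+K/\lambda)$ bound). First I would reduce to the rank-$r$ case: letting $V=\mathrm{span}\{x_1,x_2,\ldots\}$ with $r'=\dim V\le r$ and choosing an orthonormal basis aligned with $V$, the matrix $U_k=\lambda I+\sum_{t<k}x_tx_t^{\top}$ becomes block-diagonal, acting as $\lambda I$ on $V^{\perp}$. Since every $x_k\in V$, the quantity $\|x_k\|_{U_k^{-1}}^2$ is unchanged if computed with the $r'\times r'$ restriction $\tilde U_k=\lambda I_{r'}+\sum_{t<k}\tilde x_t\tilde x_t^{\top}$, so without loss of generality all determinants live in dimension $r'$.

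The two algebraic ingredients I would assemble next are an elementary scalar inequality and the matrix determinant lemma. For the scalar inequality I claim that $\min\{z,B\}\le(1+B)\log(1+z)$ for all $z\ge 0$ and $B>0$; this follows from $\tfrac{u}{1+u}\le\log(1+u)$, which gives $z\le(1+z)\log(1+z)\le(1+B)\log(1+z)$ when $z\le B$, and $B\le(1+B)\log(1+B)\le(1+B)\log(1+z)$ when $z>B$. For the telescoping, the matrix determinant lemma yields $\det\tilde U_{k+1}=\det(\tilde U_k+\tilde x_k\tilde x_k^{\top})=\det\tilde U_k\,(1+\|x_k\|_{U_k^{-1}}^2)$, so that $\prod_{k=1}^K(1+\|x_k\|_{U_k^{-1}}^2)=\det\tilde U_{K+1}/\det\tilde U_1=\det\tilde U_{K+1}/\lambda^{r'}$.

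Combining these, I would write
\[
\sum_{k=1}^K\min\{\|x_k\|_{U_k^{-1}}^2,B\}\le(1+B)\sum_{k=1}^K\log(1+\|x_k\|_{U_k^{-1}}^2)=(1+B)\log\frac{\det\tilde U_{K+1}}{\lambda^{r'}}.
\]
To bound the determinant I would apply AM--GM to the eigenvalues: $\det\tilde U_{K+1}\le\big(\mathrm{tr}(\tilde U_{K+1})/r'\big)^{r'}=\big(\lambda+\tfrac1{r'}\sum_{t=1}^K\|x_t\|_2^2\big)^{r'}\le(\lambda+K/r')^{r'}$, using $\|x_t\|_2\le 1$. Dividing by $\lambda^{r'}$ and taking logs gives $r'\log(1+K/(r'\lambda))\le r'\log(1+K/\lambda)\le r\log(1+K/\lambda)$, where the last two steps just use $r'\ge 1$ and $r'\le r$; this establishes the first inequality. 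The second inequality then follows immediately from Cauchy--Schwarz together with the identity $\min\{a,\sqrt{B}\}^2=\min\{a^2,B\}$: indeed $\sum_{k}\min\{\|x_k\|_{U_k^{-1}},\sqrt{B}\}\le\sqrt{K}\,\big(\sum_k\min\{\|x_k\|_{U_k^{-1}}^2,B\}\big)^{1/2}\le\sqrt{(1+B)rK\log(1+K/\lambda)}$.

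\textbf{Main obstacle.} The calculations are routine; the only genuinely delicate points are (i) the dimension-reduction step, where one must verify that restricting to $V$ leaves $\|x_k\|_{U_k^{-1}}$ invariant and legitimately replaces the ambient dimension $d$ by the rank bound $r$ in the determinant estimate, and (ii) pinning down the correct truncation inequality $\min\{z,B\}\le(1+B)\log(1+z)$ so that the constant matches the claimed $(1+B)$ factor rather than a looser one.
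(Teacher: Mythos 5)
Your proof is correct and follows essentially the same log-determinant potential argument as the paper: the truncation inequality $\min\{z,B\}\le(1+B)\log(1+z)$, the rank-one determinant update to telescope the sum, a rank-based bound on the final determinant, and Cauchy--Schwarz for the second inequality. Your explicit restriction to the span of the $x_k$ and the AM--GM bound on $\det\tilde U_{K+1}$ are only cosmetic variants of the paper's final step $\log\det(I+\frac{1}{\lambda}\sum_k x_kx_k^{\top})\le r\log(1+K/\lambda)$, and you correctly flag that both arguments implicitly use the normalization $\|x_k\|_2\le 1$.
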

%\jing{why do we need the proof?}\hrqc{slightly general}
%\yl{the big left braket is not necessary}
\begin{proof}
    Note that the second inequality is an immediate result from the first inequality by the Cauchy's inequality. Hence, it suffices to prove the first inequality. To this end, we have
\begin{align*}
    \sum_{k=1}^K  \min\left\{\|x_k\|_{U_k^{-1}}^2 , B \right\} & \overset{(a)}\leq (1+B)\sum_{k=1}^K  \log\left(1 + \|x_k \|_{U_k^{-1}}^2 \right)\\
    & =   (1+B)\sum_{k=1}^K  \log \left( 1+ \mathtt{trace}\left( \left(U_{k+1} - U_k \right) U_k^{-1} \right) \right) \\
    &= (1+B)\sum_{k=1}^K \log\left( 1+ \mathtt{trace}\left( U_k^{-1/2}\left(U_{k+1} - U_k \right) U_k^{-1/2} \right) \right)\\
    &\leq (1+B)\sum_{k=1}^K \log\mathtt{det}\left(I_d + U_k^{-1/2}\left(U_{k+1} - U_k \right) U_k^{-1/2} \right)\\
    &=  (1+B)\sum_{k=1}^K \log \frac{\mathtt{det}\left(U_{k+1}\right)}{\mathtt{det}(U_k)}\\
    & = (1+B) \log \frac{\mathtt{det}(U_{K+1})}{\mathtt{det}(U_1)} \\
    & = (1+B)\log\mathtt{det}\left( I + \frac{1}{\lambda}\sum_{k=1}^{K}x_kx_k^{\top}  \right) \\
    &\overset{(b)}\leq (1+B)r\log(1+K/\lambda),
\end{align*}
where $(a)$ follows because $x\leq (1+B)\log(1+x)$ if $0<x\leq B$, and $(b)$ follows because $\mathtt{rank}(\mathcal{X}) \leq r.$
\end{proof}

\end{document}